\newtheorem{assum}{A\hspace{-2pt}}
\newtheorem{assumUE}{UE\hspace{-2pt}}
\newtheorem{assumSUP}{M\hspace{-2pt}}
\crefname{assum}{A\hspace{-2pt}}{A\hspace{-2pt}}
\crefname{assumb}{B\hspace{-2pt}}{B\hspace{-2pt}}
\crefname{assumUGE}{UGE\hspace{-1pt}}{UGE\hspace{-1pt}}
\crefname{assumUE}{UE\hspace{-1pt}}{UE\hspace{-1pt}}
\crefname{assumSUP}{M\hspace{-1pt}}{M\hspace{-1pt}}
\crefname{lemma}{lemma}{lemmas}
\Crefname{lemma}{Lemma}{Lemmas}
\crefname{proposition}{proposition}{propositions}
\Crefname{proposition}{Proposition}{Propositions}
\crefname{corollary}{corollary}{corollaries}
\Crefname{corollary}{Corollary}{Corollaries}
\newlist{renumerate}{enumerate}{3}
\setlist[renumerate]{wide, labelwidth=!, labelindent=0pt,label=(\roman*)}
\newlist{aenumerate}{enumerate}{3}
\setlist[aenumerate]{wide, labelwidth=!, labelindent=0pt,label=(\arabic*)}
\newlist{aaenumerate}{enumerate}{3}
\setlist[aaenumerate]{wide, labelwidth=!, labelindent=0pt,label=(\alph*)}
\newlist{aenumerateSpace}{enumerate}{3}
\setlist[aenumerateSpace]{wide, labelwidth=!,label=(\arabic*)}
\newlist{benumerate}{enumerate}{3}
\setlist[benumerate]{wide, labelwidth=!, labelindent=0pt,label=$\bullet$}
\newcommand{\PE}{\mathbb{E}}
\newcommand{\PP}{\mathbb{P}}
\def\Xset{\mathsf{X}}
\def\Xsigma{\mathcal{X}}
\def\Zset{\mathsf{Z}}
\def\Zsigma{\mathcal{Z}}
\def\LL{\mathsf{K}}
\def\rset{\mathbb{R}}
\def\nset{\ensuremath{\mathbb{N}}}
\def\nsets{\ensuremath{\mathbb{N}^*}}
\newcommand{\round}[1]{\ensuremath{\lfloor#1\rfloor}}
\newcommand{\msi}{\mathsf{I}}
\newcommand{\msj}{\mathsf{J}}
\newcommand{\mrl}{\mathrm{L}}
\newcommand{\Const}[1]{\operatorname{C}_{{#1}}}
\newcommand{\bConst}[1]{\bar{\operatorname{C}}_{{#1}}}
\newcommand{\smallConst}[1]{\operatorname{c}_{{#1}}}
\def\Constb{\mathrm{B}}
\newcommandx\sequence[3][2=,3=]
\newcommandx\sequencet[4]
\def\PE{\mathbb{E}}
\def\P{\mathbb{P}}
\def\ProdB{\Gamma}
\def\BlockDet{B}
\def\gPois{\hat{g}}
\def\BlockB{\overline{B}}
\newcommandx{\PVar}[1][1=]{\ensuremath{\operatorname{Var}_{#1}}}
\def\MK{{\rm P}}
\def\MKQ{{\rm Q}}
\newcommand{\abs}[1]{\vert #1\vert}
\newcommand{\absLigne}[1]{\vert #1\vert}
\newcommandx{\norm}[2][2=]{\Vert#1 \Vert_{{#2}}}
\newcommandx{\normop}[2][2=]{\Vert{#1}\Vert_{{#2}}}
\newcommandx{\normopLigne}[2][2=]{\Vert{#1}\Vert_{{#2}}}
\newcommandx{\normopLine}[2][2=]{\Vert{#1}\Vert_{{#2}}}
\newcommandx{\osc}[2][1=]{\mathrm{osc}_{#1}(#2)}
\newcommand{\iid}{i.i.d.}
\newcommandx{\as}[1][1=\PP]{\ensuremath{#1\, -\mathrm{a.s.}}}
\newcommand{\ie}{i.e.}
\newcommand{\eg}{e.g.}
\newcommand{\eqsp}{\;}
\newcommand{\Id}{\mathrm{I}}
\def\ttheta{\tilde{\theta}}
\def\utheta{\tilde{\theta}^{\sf (tr)}}
\def\vtheta{\tilde{\theta}^{\sf (fl)}}
\newcommand{\coint}[1]{\left[#1\right)}
\newcommand{\ocint}[1]{\left(#1\right]}
\newcommand{\ooint}[1]{\left(#1\right)}
\newcommand{\ccint}[1]{\left[#1\right]}
\def\LVset{\operatorname{L}_\infty^V}
\newcommand{\VnormD}[2]{\ensuremath{\left\Vert#1\right\Vert_{#2}}}
\newcommand{\rmi}{\mathrm{i}}
\newcommand{\rme}{\mathrm{e}}
\newcommand{\rmd}{\mathrm{d}}
\def\funcAw{\bar{A}}
\newcommand{\funcA}[1]{\funcAw(#1)}
\def\funcbw{\bar{b}}
\newcommand{\funcb}[1]{\funcbw(#1)}
\def\funnoisew{\bar{\varepsilon}}
\newcommand{\funnoise}[1]{\funnoisew(#1)}
\def\funcAtilde{\widetilde{A}}
\newcommand{\funcAt}[1]{\funcAtilde(#1)}
\newcommand{\frobnorm}[1]{\left\Vert #1 \right\Vert_{\mathrm{F}}}
\def\qcond{\kappa_{\mathsf{Q}}}
\def\myqcond{\kappa}
\def\State{Z}
\def\Stateset{{\sf \State}}
\def\eligibility{\varphi}
\newcommand{\1}{\mathbbm{1}}
\def\expo{\operatorname{t}}
\def\bb{\operatorname{b}}
\def\gain{\mathrm{R}}
\def\msz{\mathsf{Z}}
\def\mcz{\mathcal{Z}}
\def\msc{\mathsf{C}}
\def\msd{\mathsf{D}}
\def\plusinfty{+\infty}
\DeclareMathAlphabet{\mathpzc}{OT1}{pzc}{m}{it}
\def\lyapV{\mathpzc{V}}
\def\lyapW{\mathpzc{W}}
\def\Cf{C_f}
\def\CW{C_{\lyapW}}
\newcommand{\txts}{\textstyle}
\newcommandx\probaMarkovTilde[2][2=]
\newcommand{\expe}[1]{\PE \left[ #1 \right]}
\newcommand{\expeLigne}[1]{\PE [ #1 ]}
\newcommand{\expeMarkov}[2]{\PE_{#1} \left[ #2 \right]}
\newcommand{\expeMarkovLigne}[2]{\PE_{#1} [ #2 ]}
\newcommand{\Cros}[1]{C_{\mathrm{Ros},#1}}
\newcommand{\Dros}[1]{D_{\mathrm{Ros},#1}}
\def\hg{\gPois}
\def\mcf{\mathcal{F}}
\newcommand{\parenthese}[1]{\left(#1 \right)}
\newcommand{\parentheseLigne}[1]{(#1 )}
\newcommand{\parentheseDeux}[1]{\left[ #1 \right]}
\newcommand{\parentheseDeuxLigne}[1]{[ #1 ]}
\newcommand{\defEnsLigne}[1]{\lbrace #1 \rbrace }
\newcommand{\indi}[1]{\1_{#1}}
\def\cupgamma{c_{\upgamma}}
\def\Rupgamma{R_{\upgamma}}
\def\bupgamma{\bb_{\upgamma}}
\newcommand\cupgammaD[1]{c_{#1}}
\newcommand\RupgammaD[1]{R_{#1}}
\newcommand\tRupgammaD[1]{\tilde{R}_{#1}}
\newcommand\bupgammaD[1]{\bb_{#1}}
\def\tupgamma{\tilde{\upgamma}}
\def\tuptau{\tilde{\uptau}}
\def\Rtuptau{R_{\tuptau}}
\def\rmY{\mathrm{Y}}
\def\bvarepsilon{\bar{\varepsilon}}
\newcommand{\ceil}[1]{\left\lceil #1 \right\rceil}
\def\half{\nicefrac{1}{2}}
\def\tS{\tilde{S}}
\def\bA{\bar{A}}
\title[On the Stability of Random Matrix Product]{On the Stability of Random Matrix Product with Markovian Noise: Application to Linear Stochastic Approximation and TD Learning}
\thanks{Authors listed in alphabetical order.}
\begin{document}

\maketitle

\begin{abstract}
  This paper studies the exponential stability of  random matrix products driven by a general (possibly unbounded) state space Markov chain. It is a
  cornerstone in the analysis of stochastic algorithms in machine
  learning (\eg\ for parameter tracking in online-learning or reinforcement learning). The existing results impose strong
  conditions  such as uniform boundedness of the matrix-valued functions and
  uniform ergodicity of the Markov chains.  Our main contribution is an exponential stability result
  for the $p$-th moment of random matrix product, provided
  that {\sf (i)} the underlying Markov chain satisfies a
  super-Lyapunov drift condition, {\sf (ii)} the growth of the matrix-valued functions
  is controlled by an appropriately defined function (related to the drift condition).
 Using this result, we give finite-time $p$-th moment bounds for constant and decreasing stepsize linear stochastic approximation schemes with Markovian noise on general state space. We illustrate these findings for linear value-function estimation in reinforcement learning. We provide finite-time $p$-th moment bound for various members of temporal difference (TD) family of algorithms.
 \end{abstract}

\begin{keywords}
  stability of random matrix product, linear stochastic approximation, Markovian noise
\end{keywords}

\section{Introduction}
Consider the following linear stochastic approximation (LSA) recursion: for  $n \in \nset$,
\begin{equation} \label{eq:lsa}
    \theta_{n+1} = \theta_n + \alpha_{n+1} \{ - \funcA{Z_{n+1}} \theta_n + \funcb{Z_{n+1}} \}\eqsp,
\end{equation}
where $( \alpha_{i})_{i \in \nsets}$ is a sequence of positive step sizes, $\funcAw: \Stateset \rightarrow \rset^{d \times d}$,  $\funcbw: \Stateset \rightarrow \rset^d$ are measurable functions on the state space $\Stateset$, and $( Z_i)_{i \in \nsets}$ is a sequence of random variables on $\Stateset$. The LSA recursion \eqref{eq:lsa} encompasses a wide range of algorithms.  LSA  is central to the analysis of identification algorithms and control of linear systems. Early results have focused on these two applications and studied both the asymptotic behaviour of the sequence $(\theta_n)_{n \in\nset}$ and the tracking error;  see  \cite{eweda:macchi:1983,guo1994stability,guo1995performance,ljung2002recursive} and the references therein.

LSA is also a cornerstone in the analysis of linear value-function estimation (LVE) that are popular in reinforcement learning  \citep{sutton:td:1988,bertsekas:tsitsiklis:96}. Seminal works on this topic \citep{bertsekas:tsitsiklis:96,tsitsiklis:td:1997, benveniste2012adaptive} established conditions for asymptotic convergence. Finite-time bound for LVE (and more generally LSA) has attracted a renewed interest. In the case when $(Z_i)_{i \in \nsets}$ is an i.i.d.~sequence, \citep{lakshminarayanan2018linear,dalal:td0:2017} have investigated mean-squared error bounds for LSA. Recent developments \citep{bhandari2018finite, srikant:1tsbounds:2019,chen2020explicit} have considered the setting that $( Z_i)_{i \in \nsets}$ is a Markov chain, and provided finite-time analysis. On a related subject, \citep{gupta2019finite, xu2019two, doan2019finite, kaledin2020finite} considered linear two-timescale stochastic approximation that involves coupled LSA recursions.

Most of the existing results on LSA are limited by strong conditions such as {\sf (i)} uniform geometric ergodicity ($\mathsf{UGE}$) on the Markov chain and/or {\sf (ii)} uniformly bounded $\funcAw, \funcbw$, \ie~$\sup_{z \in \msz} \{\norm{\funcA{z}} + \norm{\funcbw(z)}\} < \plusinfty$. These conditions are restrictive since the {\sf UGE} condition typically requires the state space to be finite or compact and do not extend to general (unbounded) state space. This is of course a limitation because many applications involve general unbounded state space; see \eg\ \cite{ljung2002recursive} and \cite[p.~305]{bertsekas:tsitsiklis:96}.

In this paper, we aim to provide high-order moment bounds on the LSA with Markovian noise. Our results are applicable under the relaxed conditions: {\sf (i)} $(Z_i)_{i \in \nsets}$ is a  Markov chain on a general (possibly unbounded) state-space satisfying a super-Lyapunov drift condition, and {\sf (ii)} for some constant $C \geq 0$, for any $z \in \msz$, $\norm{\funcA{z}} \leq C \mathrm{W}_1(z)$, $ \norm{\funcb{z}} \leq C \mathrm{W}_2(z)$, with  $\mathrm{W}_1,\mathrm{W}_2 : \rset_+ \to \coint{1,\plusinfty}$  deduced from the drift condition in {\sf (i)}. They are strictly weaker than the conditions required in previously reported works. In particular, $\bA,\funcbw$ can be potentially unbounded.

For $m,n \in \nset$, $m < n$ and $z_{m+1:n} = (z_{m+1},\ldots,z_n)\in\msz^{n-m}$,  we define
\begin{equation} \label{eq:rand_prod}
 \txts \Gamma_{m+1:n}(z_{m+1:n}) = \prod\nonumber_{i=m+1}^n \{\Id_d - \alpha_i \funcA{z_i}\} \eqsp.
 \end{equation}
A key property used for deriving our bounds is an exponential stability result on the matrix product above, $\Gamma_{m+1:n}(Z_{m+1:n})$, for $m,n \in \nset$, $m< n$. To motivate why this is relevant to LSA, suppose that the Markov chain $( Z_n )_{n \in \nsets}$ is ergodic so that, for all $z \in \Zset$, the limits $A = \lim_{n \rightarrow \infty} \PE_z[ \funcA{Z_n} ]$, $b = \lim_{n \rightarrow \infty} \PE_z[ \funcb{Z_n} ]$ exist. Assume that there exists a unique solution $\theta^\star$ to the linear system $A \theta^\star = b$. The $n$-th error vector $\ttheta_{n} = \theta_{n} - \theta^\star$ may be expressed, for all $n \in \nset$, by 
\begin{equation}
\label{eq:LSA-recursion} \txts
\ttheta_{n} = \sum_{j=1}^{n} \alpha_{j} \Gamma_{j+1:n}(Z_{j+1:n}) \funnoise{Z_j} + \Gamma_{1:n} (Z_{1:n}) \ttheta_0  \eqsp,
\end{equation}
where $\funnoise{Z_{j}} = \funcb{Z_{j}} - b - \{ \funcA{Z_{j}} - A \} \theta^\star$.
Obtaining a bound on $p$-th moments for $\{\norm{\ttheta_{n}}\}_{n \in \nset}$ naturally requires that the sequence of random matrices
$\{ \funcA{Z_i} \}_{i \in \nsets}$
to be $(\operatorname{V},q)$-\emph{exponentially stable}.
Recall that for $q \geq 1$ and a function $\operatorname{V}: \Zset \to \coint{1,\infty}$, $\{ \funcA{Z_i} \}_{i \in \nsets}$ is said to be
$(\operatorname{V},q)$-exponentially stable if there exists $\mathsf{a}_q, \Const{q} > 0$ and $\alpha_{\infty,q} < \infty$ such that, for any sequence of positive step sizes $(\alpha_i)_{i \in \nset^*}$ satisfying $\sup_{i \in \nsets} \alpha_i \leq \alpha_{\infty,q}$,  $z \in \Zset$, $m,n \in \nset$, $m < n$,
\begin{equation}
\label{eq:L_V_q-exponential-stability} \txts
\PE_z[ \| \Gamma_{m+1:n}(Z_{m+1:n}) \|^q ] \leq \Const{q} \exp\left( - \mathsf{a}_q \sum_{i=m+1}^n \alpha_i \right) \operatorname{V}(z) \eqsp.
\end{equation}
Intuitively, $(\operatorname{V},q)$-exponential stability means that the $q$-th moment of the product of random matrices $\Gamma_{m+1:n}(Z_{m+1:n})$ behaves similarly to that of the product of \emph{deterministic} matrices $G_{m+1:n} = \prod_{i=m+1}^n (\Id_d - \alpha_i A)$, provided that the matrix $-A$ is Hurwitz (the real parts of its eigenvalues are strictly negative). 

Fix $p,q,r \in \nsets$ such that $p^{-1}= q^{-1} + r^{-1}$.
Assume that the sequence $\{ \funcA{Z_i} \}_{i \in \nsets}$ is
$(\operatorname{V},q)$-exponentially stable for some $q > 1$, the $r$-th moments of the noise term $\norm{\funnoise{Z_n} }$ and initialization error $\ttheta_0$ are bounded. Using \eqref{eq:LSA-recursion}, we can readily derive bounds for the $p$-th moment, $\PE^{1/p}_z[\norm{\ttheta_n}^p]$ by applying the Hölder's inequality. Note that the $r$-th moment bound for the "noise" terms may follow from classical Lyapunov drift conditions, which is implied by super-Lyapunov drift conditions.

\paragraph{Contributions and Organization} The contributions of this paper are three-fold:
\begin{itemize}[leftmargin=5mm, noitemsep]
    \item We establish $(\operatorname{V},q)$-exponential stability of the sequence of matrices $\{ \funcA{Z_k} \}_{k \in \nsets}$, and provide explicit expression for constants appearing in \eqref{eq:L_V_q-exponential-stability}; see \Cref{th:expconvproducts}. Compared to the prior works, our result can be applied to the settings where the function $\funcA{\cdot}$ is unbounded, not symmetric and $(Z_k)_{k \in \nset^*}$ is a Markov chain on a general (unbounded) state-space not constrained to be uniformly geometrically ergodic.
    A discussion of how our results relax the restrictive conditions in previously reported works is given after the statement of \Cref{th:expconvproducts}.
    \item We provide finite-time bound and  first-order expansion for  the $p$-th moment of the error $(\ttheta_n)_{n \in \nsets}$ for LSA recursion \eqref{eq:LSA-recursion}. More precisely, we show that $\PE_z^{1/p}[ \| \ttheta_n \|^p ] = {\cal O}(\alpha_n^{1/2}) \operatorname{V}_p(z)$ both for constant  $\alpha_n \equiv \alpha$ (where $\alpha$ is sufficiently small) or nonincreasing stepsizes under weak additional conditions including $\alpha_n = C / ( n + n_0)^{\expo}$, for any $\expo \in (0,1]$; see \Cref{th:approximation_error}. From our analysis on the LSA error $\ttheta_n$, we identify a leading term, denoted $J_n^{(0)}$, which is a weighted additive linear functional of the error process  $(\funnoise{Z_n})_{n \in \nset^*}$. Furthermore, the leading term $J_n^{(0)}$ and its remainder $H_n^{(0)}= \ttheta_n - J_n^{(0)}$ admit a  separation of scales. For example, when $\alpha_n = C / ( n + n_0)$, the leading term has a $p$-th moment bound of ${\cal O}(n^{-1/2}) \operatorname{V}_p(z)$, and the remainder has a $p$-th moment bound of ${\cal O}(n^{-1} \log(n)) \operatorname{V}_p(z)$; see \Cref{th:approximation_expansion}. 
     
    \item Finally, we apply our results to TD-learning for LVE. We give sufficient conditions for a Markov Reward Process on general (unbounded) state space  (with unbounded reward and feature functions) to satisfy the assumptions of \Cref{th:approximation_error} and \Cref{th:approximation_expansion}. Therefore, the convergence bounds  we derive hold for these algorithms.
\end{itemize}
The rest of this paper is organized as follows. \Cref{sec:main} introduces the formal conditions required for $(\mathrm{V},q)$-exponential stability on $\{\bA(Z_{k})\}_{k \in \nsets}$ and states our main theorem. \Cref{sec:proof} outlines the major steps in the proof. We use this result in \Cref{sec:lsa} to obtain upper bound on the $p$-th moments for the error vector \eqref{eq:LSA-recursion}; finally, we illustrate our results for LVE in TD learning framework.

\paragraph{Notations}
Denote $\nsets = \nset \setminus \{0\}$.
Let $d \in \nsets$ and $Q$ be a symmetric positive definite $d \times d$ matrix.  Denote by $\Id_d$ the $d$-dimensional identity matrix. For $x \in \rset^d$, we denote $\norm{x}[Q]= \{x^\top Q x\}^{\half}$. For brevity, we set $\norm{x}= \norm{x}[\Id_d]$. We denote $\normop{A}[Q]= \max_{\norm{x}[Q]=1} \norm{Ax}[Q]$, and the subscriptless norm $\normop{A} = \normop{A}[\Id]$ is the standard spectral norm.
Let $A_{1},\ldots,A_N$ be $d$-dimensional matrices. We denote $\prod_{\ell=i}^j A_\ell = A_j \ldots A_i$ if $i\leq j$ and with the convention $\prod_{\ell=i}^j A_\ell = \Id_d$ if $i >j$.

Throughout this paper, we let $\Zset$ be a Polish space equipped with sigma-algebra $\mcz$ and fix a measurable function $V: \Zset \to \coint{1,\infty}$. For a measurable function $g: \Zset \rightarrow \rset$, we define its $V$-norm as $\VnormD{g}{V} = \sup_{z \in \Zset} | g(z) | / V(z)$. Furthermore, $\LVset$ denotes the set of all measurable functions $g: \Zset \to \rset$ satisfying $\VnormD{g}{V} < \infty$. Let $\MK : \Zset \times \mcz \to \rset_+$ be a Markov kernel and $V : \Zset \to \rset_+$ be a measurable function, the function $\MK V : \Zset \to \rset_+$ is defined as $\MK V(z) = \int_{\Zset} V(z') \MK( z, {\rm d} z' )$. For a measure $\mu$ on $(\Zset,\mcz)$ and a function $V : \Zset \to \rset_+$ we define $\VnormD{\mu}{V} = \sup_{f: \VnormD{f}{V} \leq 1}\int_{\Zset}f(z)\mu(\rmd z)$.

\section{Main Results} \label[section]{sec:main}

Consider the Markov chain $(Z_k)_{k \in\nset}$.
We assume without loss of generality that $(Z_k)_{k \in\nset}$ is the canonical process corresponding to $\MK$ on $(\msz^{\nset},\mcz^{\otimes \nset})$. We denote by $\PP_{\mu}$ and $\PE_{\mu}$ the corresponding probability distribution and expectation with initial distribution $\mu$. In the case $\mu= \updelta_z$, $z \in \msz$, $\PP_{\mu}$ and $\PE_{\mu}$ are  denoted by $\PP_{z}$ and $\PE_{z}$. In addition, throughout this paper, we assume
\begin{assumUE}
\label{assum:drift}
The  Markov kernel $\MK: \Zset \times \mcz \to \rset_+$ is irreducible and aperiodic. There exist $c > 0, \bb > 0, \delta \in (1/2,1]$, $R_0 \geq 0$,  and $V: \Zset \to \coint{\rme,\infty}$ such that by setting $W = \log V$, $\msc_0 = \{z : W(z) \leq R_0\}$, $\msc_0^{\complement} = \{z : W(z) > R_0\}$, we have
\begin{equation}
\label{eq:drift-condition-improv}
\MK V(z) \le \exp\parentheseDeuxLigne{-c W^{\delta}(z)} V(z)  \indi{\msc_0^{\complement}}(z) + \bb \indi{\msc_0}(z) \eqsp,
\end{equation}
in addition, for any $R \geq 1$, the level sets $\{ z : W(z) \leq R\}$ are $(m_R,\varepsilon_R\nu)$-small for $\MK$, with $m_{R} \in \nsets$, $\varepsilon_R \in \ocint{0,1}$ and $\nu$ being a probability measure on $(\Zset,\mcz)$.
\end{assumUE}
Since $(\Zset,\Zsigma)$ is a general state-space, irreducibility here means that the Markov kernel $\MK$ admits an accessible small set; see \cite[Chapter~9]{douc:moulines:priouret:2018}.
The condition \eqref{eq:drift-condition-improv} in \Cref{assum:drift} is referred to as a multiplicative or super-Lyapunov drift condition and plays a key role in studying the large deviations of additive functionals of Markov chains; see \cite{varadhan:1984,kontoyiannis:meyn:2003,kontoyiannis:meyn:2005}  and the references therein. \Cref{assum:drift} is satisfied for Gaussian linear vector auto-regressive process and also non-linear auto-regressive process under exponential moment condition for innovation process, see e.g. \cite{priouret1998remark}.

Eq.~\eqref{eq:drift-condition-improv} implies the classical Foster-Lyapunov drift condition, $\MK V(z) \le \lambda V(z) + \bb \indi{\msc_0}(z)$ with
\begin{equation}
  \label{eq:def_lambda}
\txts  \lambda = \exp(-c \inf_{\msc^{\complement}_0}W^{\delta}) \leq  \exp(-c) < 1 \eqsp.
\end{equation}
It follows from~\citep[Theorem 15.2.4]{douc:moulines:priouret:2018} that under \Cref{assum:drift}
the Markov kernel $\MK$ is $V$-uniformly geometrically ergodic and admits a unique stationary distribution $\pi$, i.e. there exists $\rho \in \ooint{0,1}$ and $\Constb_{V} < \infty$ such that for each $z \in \Zset$ and $n \in \nset$, \vspace{-.1cm}
\begin{equation} \label{eq:drift_conseq}
\normop{\MK^n(z, \cdot) - \pi}[V]  \leq \Constb_{V}  \rho^n V(z) \eqsp.\vspace{-.1cm}
\end{equation}
We also impose some constraints on $\funcAw$.\vspace{-.1cm}
\begin{assum}\label{assum:almost_bounded}
Given $\varepsilon \in \ooint{0,1}$ there exists $\Const{A}>0$ such that for any $1 \leq i,j \leq d$, the $(i,j)$-th element of $\funcAw$ satisfies $\VnormD{[\funcAw]_{i,j}}{W^\beta} \le \Const{A}$, where $\beta < \min(2\delta-1,\delta/(1+\varepsilon))$ and $\delta$ is given in \Cref{assum:drift}. \vspace{-.2cm}
\end{assum}
\begin{assum}
\label{assum:Hurwitzmatrices}
The square  matrix $-A= -\PE_\pi[\bar A(Z_0)]$ is Hurwitz.\vspace{-.1cm}
\end{assum}
\Cref{assum:almost_bounded}, \Cref{assum:Hurwitzmatrices} are standard conditions on the parameter matrices in LSA.
Under \Cref{assum:Hurwitzmatrices}, there exists a positive definite matrix $Q$ satisfying the Lyapunov equation [cf.~\Cref{lem:lyapunov}]\vspace{-.05cm}
\begin{equation}\label{eq: Lyapunov eq}
A^\top Q + Q A =  \Id_d, \quad \text{and we define} \quad \qcond = \lambda_{\sf min}^{-1}( Q )\lambda_{\sf max}( Q ),~~a= \normop{Q}^{-1}/2.\vspace{-.1cm}
\end{equation}
Consequently, we have $\normop{\Id - \alpha A}[Q] \leq 1 - a \alpha/2$ for $\alpha \in [0, \normop{Q}^{-1} \normop{A}[Q]^{-2}/2]$ [cf.~\Cref{lem:Hurwitzstability}].

Our aim is to establish $(\operatorname{V},q)$-exponential stability of the sequence $\{\funcA{Z_k}\}_{k \in \nsets}$.
The following example illustrates that for the matrix product to be exponentially stable, it is necessary for the Markov chain $(\State_k)_{k \in \nset}$ to be geometrically ergodic.

\begin{example}
\label{ex:counterexample}
Set $\Stateset = \nset^\star$ and consider the forward recurrence time chain on $\Stateset$ starting from $\State_0 = 1$ and defined based on an \iid~sequence $(\rmY_i)_{i \in\nset}$, $\rmY_i \in \Stateset$ by $\State_{k+1}=   \State_k -1$,  if $\State_k >1$ and $  \State_{k+1}= \rmY_{k+1}$, if $\State_k=1$. \citet[Proposition 8.1.5]{douc:moulines:priouret:2018} shows that if
$\PP(\rmY_1=z) > 0$ for $z \in \Stateset$ and $m=\sum_{z \in \Stateset} z \PP(\rmY_1=z)< \plusinfty$, then $(\State_k)_{k \in\nset}$ admits a
unique stationary distribution $\pi$. For any  $\varepsilon >0$, set $A_{\varepsilon}(1)= 1$, and 
$A_{\varepsilon}(z)=-\varepsilon$ for $z \in \Zset \setminus \{1\}$. 
If $\varepsilon \in \ooint{0,\pi(1)}$ then $\sum_{z \in \Stateset} \pi(z) A_\varepsilon(z)= \pi(1) - \epsilon \{1-\pi(1)\} > 0$, so that both conditions \Cref{assum:almost_bounded}, \Cref{assum:Hurwitzmatrices} are satisfied.

Consider the sequence defined recursively as 
$\theta_{n+1}^{\varepsilon}=\{1-\alpha
A_{\varepsilon}(\State_{n+1})\}\theta_{n}^{\varepsilon}$ with $\theta_0^{\varepsilon} > 0$. We show in
\Cref{theo:counterexample}, \Cref{sec:proof:theo:counterexample}, that as $(\State_k)_{k \in \nset}$ is not
geometrically ergodic, for any $\varepsilon \in \ooint{0,\pi(1)}$ and $\alpha \in \ooint{0,1}$, the sequence
$u_n = \expeLigne{\absLigne{\theta_n^{\varepsilon}}} =
\theta_0\expeLigne{\prod_{k=0}^{n-1} \{1-\alpha A_{\varepsilon}(\State_{k+1})\}}$ is
not bounded.
\end{example}


The following theorem establishes the $({\rm V},p)$-exponential stability of the sequence $\{\funcA{Z_k}\}_{k \in \nsets}$. For ease of notation, we simply denote $\Gamma_{m+1:n} = \Gamma_{m+1:n}(Z_{m+1:n})$.
\begin{theorem}
\label[theorem]{th:expconvproducts}
Assume \Cref{assum:drift},  \Cref{assum:almost_bounded} and \Cref{assum:Hurwitzmatrices}.  Then for any $p \geq 1$, there exists $\alpha_{\infty,p} >0$, given in  \eqref{eq:alpha_infty_main}, such that for any  non-increasing sequence $(\alpha_k)_{k \in \nsets}$ satisfying $\alpha_1 \in (0, \alpha_{\infty,p})$,  $z_0 \in \Zset$ and $m,n \in \nset$, $m<n$, it holds
\begin{align}
\label{eq:exponential-stability}
\PE_{z_0}^{1/p}[ \normop{\ProdB_{m+1:n}}^{p} ] \leq \Const{\mathsf{st},p} \rme^{ - (a/4) \sum_{\ell=m+1}^{n}   \alpha_{\ell}}V^{1/2p}(z_0)\eqsp,
\end{align}
where $a$, $\Const{\mathsf{st},p}$, and $h$ are defined in \eqref{eq: Lyapunov eq},    \eqref{eq:C_st_p_proof}, and \eqref{eq:h_def}, respectively.
\end{theorem}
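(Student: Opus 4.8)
The plan is to pass from the operator-norm moment to a scalar recursion for the $Q$-weighted squared norm, and then run a supermartingale argument in which the deterministic contraction furnished by the Lyapunov matrix $Q$ of \eqref{eq: Lyapunov eq} dominates the Markovian fluctuations of $\funcAw$; the unboundedness of those fluctuations is to be absorbed by the super-Lyapunov drift of \Cref{assum:drift}. I would first fix a unit vector $v$, set $x_k = \ProdB_{m+1:k}v$ and $U_k = \norm{x_k}[Q]^2$, so that $x_k = (\Id_d - \alpha_k\funcA{Z_k})x_{k-1}$ and $U_m = \norm{v}[Q]^2$ is deterministic. Writing $\widetilde A(z) = \funcA{z} - A$ and expanding with the identity $A^\top Q + QA = \Id_d$ gives the exact one-step decomposition
\[
U_k = U_{k-1} - \alpha_k\norm{x_{k-1}}^2 - \alpha_k\,x_{k-1}^\top\big[\widetilde A(Z_k)^\top Q + Q\widetilde A(Z_k)\big]x_{k-1} + \alpha_k^2\norm{\funcA{Z_k}x_{k-1}}[Q]^2 .
\]
Since $\norm{x_{k-1}}^2 \ge 2a\,U_{k-1}$ by the definition of $a$ in \eqref{eq: Lyapunov eq}, the first correction is a genuine contraction of rate $2a$; the second is a fluctuation that is \emph{centred} under $\pi$ because $\PE_\pi[\widetilde A]=0$; and the third is a quadratic remainder which, by \Cref{assum:almost_bounded}, is at most $CW^{2\beta}(Z_k)\,U_{k-1}$ (this term also forces the one-step ratio to stay nonnegative). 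Using $\normop{\ProdB_{m+1:n}}^p \le \lambda_{\min}(Q)^{-p/2} d^{(p/2-1)_+}\sum_{j}U_n(e_j)^{p/2}$ reduces \eqref{eq:exponential-stability} to a bound on $\PE_{z_0}[U_n^{p/2}]$ for the process started from an arbitrary $v$.

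Next I would turn the recursion into a product. For $\alpha_1$ below a threshold $\alpha_{\infty,p}$ (which is precisely how $\alpha_{\infty,p}$ is selected) each ratio $U_k/U_{k-1}$ is positive, so $\log(1+x)\le x$ yields, after raising to the power $p/2$,
\[
U_n^{p/2} \le U_m^{p/2}\,\exp\!\Big(\tfrac p2\textstyle\sum_{k=m+1}^n\big(-2a\alpha_k - \alpha_k\eta_k + C\alpha_k^2 W^{2\beta}(Z_k)\big)\Big),
\]
where $\eta_k = x_{k-1}^\top[\widetilde A(Z_k)^\top Q + Q\widetilde A(Z_k)]x_{k-1}/U_{k-1}$ is the normalized fluctuation. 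The deterministic factor $\rme^{-ap\sum_k\alpha_k}$ already carries more than the required rate; what remains is to control the expectation of the two stochastic exponentials. To handle the fluctuation sum I would bring in the solution $\widehat A$ of the Poisson equation $\widehat A - \MK\widehat A = \widetilde A$, which is well defined and has $\VnormD{\cdot}{W^\beta}$-controlled growth since $\pi(\widetilde A)=0$ and $\MK$ is $V$-geometrically ergodic by \eqref{eq:drift_conseq}; Abel summation then rewrites $\sum_k\alpha_k\eta_k$ as a martingale transform plus telescoping boundary terms of order $W^\beta(Z_k)$.

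The crux is a single uniform estimate: to show that, for $\alpha_1 \le \alpha_{\infty,p}$,
\[
\PE_{z_0}\Big[\exp\!\big(-\tfrac p2\textstyle\sum_{k=m+1}^n\alpha_k\eta_k + \tfrac{Cp}2\textstyle\sum_{k=m+1}^n\alpha_k^2 W^{2\beta}(Z_k)\big)\Big] \le C_p\,\rme^{\,C'\alpha_1 p\sum_{k=m+1}^n\alpha_k}\,V^{1/2}(z_0),
\]
uniformly in $m,n$ and in the non-increasing step sizes. This is exactly where the super-Lyapunov form \eqref{eq:drift-condition-improv}, rather than a merely geometric drift, is indispensable: iterating \eqref{eq:drift-condition-improv} against a small power $V^{s}$ controls the exponential moments of additive functionals of $W^{\delta}(Z_k)$, and the constraint $\beta<\min(2\delta-1,\delta/(1+\varepsilon))$ of \Cref{assum:almost_bounded} guarantees that the exponents arising (order $W^{\beta}$ in the martingale and boundary terms, order $W^{2\beta}$ in the quadratic remainder, and the $(1+\varepsilon)$-power needed to bound the predictable bracket) are dominated by $W^{\delta}$, so the accumulated weights neither diverge nor overwhelm the contraction, producing only the factor $\rme^{C'\alpha_1 p\sum_k\alpha_k}V^{1/2}(z_0)$. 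I expect this exponential-moment bound to be the main obstacle; controlling the martingale part for merely non-increasing (not constant) step sizes, and then tracking every constant to produce the explicit $\Const{\mathsf{st},p}$, is the remaining, essentially bookkeeping, effort. Combining the two displays, the net exponent is $-\big(a - C'\alpha_1\big)p\sum_k\alpha_k$; choosing $\alpha_{\infty,p}$ small enough that $C'\alpha_1$ consumes at most part of the deterministic rate leaves a surviving per-step contraction of $U$ at rate $a/2$, i.e.\ rate $a/4$ for the operator norm. Thus $\PE_{z_0}[U_n^{p/2}] \le C\,\rme^{-(ap/4)\sum_k\alpha_k}V^{1/2}(z_0)$; summing over the basis, taking $p$-th roots, and collecting $\lambda_{\min}(Q)$, the dimensional constant and the Poisson constants into $\Const{\mathsf{st},p}$ yields \eqref{eq:exponential-stability} with the stated $V^{1/2p}(z_0)$.
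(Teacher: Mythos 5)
There is a genuine gap at the step you yourself identify as the crux, and it is not just bookkeeping. Your one-step expansion of $U_k=\norm{x_k}[Q]^2$ is correct, but after applying $1+x\le \rme^x$ to the ratio $U_k/U_{k-1}$ you are forced to control $\PE_{z_0}\bigl[\exp\bigl(Cp\sum_k\alpha_k^2W^{2\beta}(Z_k)\bigr)\bigr]$, and your justification rests on the claim that \Cref{assum:almost_bounded} makes $W^{2\beta}$ dominated by $W^\delta$. That claim is false: the constraint is $\beta<\min(2\delta-1,\delta/(1+\varepsilon))$, so with $\delta=1$ and $\varepsilon$ small one may take $\beta$ close to $1/(1+\varepsilon)$, giving $2\beta$ close to $2$, i.e.\ $2\beta>\delta$ and even $2\beta>1$. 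In that regime the super-Lyapunov drift \eqref{eq:drift-condition-improv} only controls expectations of $\rme^{W(Z_k)}=V(Z_k)$, while $\exp(c\alpha^2W^{2\beta})$ grows strictly faster than any $\rme^{KW}$; a one-step jump law with tails of order $\rme^{-t}t^{-2}$ is compatible with \Cref{assum:drift} yet makes $\PE_z[\exp(c\alpha^2W^{2\beta}(Z_1))]=+\infty$ for every $\alpha>0$. So the left-hand side of your key exponential-moment inequality can be infinite, and no choice of $\alpha_{\infty,p}$ repairs it. The deeper reason is that the exponentiation step destroys the multiplicative structure: the inequality $1+\alpha^2W^{2\beta}\le\exp\{2(1+\varepsilon)^{-1}\alpha^{1+\varepsilon}W^{\beta(1+\varepsilon)}\}\le\exp\{C\alpha^{1+\varepsilon}W^{\delta}\}$ (the paper's \Cref{lem:technical_lemma_1}, which is exactly why A\hspace{-2pt}1 imposes $\beta\le\delta/(1+\varepsilon)$) is valid for $1+\alpha^2W^{2\beta}$ but has no analogue for $\exp(\alpha^2W^{2\beta})$. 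This is why the paper never exponentiates the quadratic terms: it keeps the product form, splits each block $\BlockB_\ell-\BlockDet_\ell=S_\ell+R_\ell-\bar R_\ell$, bounds the higher-order factors $(1+\normop{\bar R_\ell})$ by $\exp\{C\alpha^{1+\varepsilon}\sum W^\delta\}$ via that lemma, and treats the linear term $S_\ell$ with \emph{polynomial} Rosenthal bounds combined with the conditional-product device (\Cref{lem:trick_2_main}) and supermartingales built from the drift.

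Two further points. First, your argument is structurally the trajectory-recursion approach of \eqref{eq:alt_LSA}, which the paper explicitly flags as not extendable to unbounded $\funcAw$ under \Cref{assum:drift}; the quadratic-remainder obstruction above is the concrete manifestation of that warning. Second, even setting the quadratic term aside, the exponential-moment control of your martingale transform (increments of order $\alpha_kW^\beta(Z_k)$, with past-dependent weights $w_{k-1}$ from the Poisson/Abel decomposition) is asserted rather than proved: under \Cref{assum:drift} the increments are only sub-exponential conditionally, so no sub-Gaussian conditioning is available, and a uniform-in-$(m,n)$ bound of the form $\rme^{C'\alpha_1 p\sum_k\alpha_k}V^{1/2}(z_0)$ would itself require an argument of the same difficulty as the paper's Steps~2--3. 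To salvage your route you would need to keep the per-step factors multiplicative, bound the quadratic factor by $\exp\{C\alpha_k^{1+\varepsilon}W^\delta(Z_k)\}$ before ever taking exponentials of sums, and replace the exponential martingale bound by block-wise polynomial moment bounds --- at which point you have essentially reconstructed the paper's proof.
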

The theorem shows that provided $(\alpha_k)_{k \in\nsets}$ satisfies $\sum_{k \in \nsets} \alpha_k = \plusinfty$, $\PE_z^{1/p}[ \normop{\ProdB_{m+1:n}}^{p} ] \rightarrow 0$ as $(n-m) \to \infty$ for any $p \geq 1$. Specifically, it has a similar convergence rate as the deterministic matrix product $\normop{  G_{m+1:n} } = \normop{  \prod_{i=m+1}^n (\Id_d - \alpha_i A) } \lesssim \rme^{-a \sum_{\ell=m+1}^n \alpha_\ell}$.

\Cref{th:expconvproducts} generalizes previously reported works. 
\citet{guo1994stability,guo1995exponential} used a slightly different definitions allowing to consider  non-Markovian processes satisfying more general mixing conditions (like $\phi$- or $\beta$-mixing). As we will see later, when specialized to Markov chains, the results we obtain significantly improve the results reported in these works. \citet{priouret1998remark} established $(\operatorname{V},q)$-exponential stability for general state-space Markov chain under a super-Lyapunov drift condition (similar to \Cref{assum:drift}). However,  the results in \citet{priouret1998remark} assume constant stepsize and $\bA(z)$ being symmetric and non-negative definite for any $z\in \msz$. Non-negative definiteness plays a key role in the arguments: in such case, for any $z \in \msz$, the spectral norm $\| \Id_d - \alpha \funcA{z} \| \leq 1$ provided that $\| \funcA{z} \| \leq \alpha^{-1}$ for $\alpha > 0$ which is no longer true for general matrix-valued function $\funcA{z}$. Similar results, also under the condition that  $\funcA{z}$ is symmetric for any $z \in \msz$, were obtained by \citet{Delyon_Yuditsky1999} based on perturbation theory for linear operators in Banach space and spectral theory. However, the bounds provided in \citet{Delyon_Yuditsky1999} are only qualitative.  The restrictions imposed on these prior works have limited their applications to more general algorithms, in particular to most RL algorithms. As we will see below, the application to linear value-function estimation in temporal difference learning involve non-symmetric matrix function $\funcAw$. In contrast, our result (cf.~\Cref{th:expconvproducts}) can be applied to the setting where for some $z \in \msz$, $\funcA{z}$ is not necessary non-negative symmetric but only Hurwitz. 

Notice that the case of uniformly geometric ergodic Markov chain is covered by \Cref{assum:drift}. In this case the set $\Zset$ is small and drift function $V$ can be chosen to be constant. Together with the assumption of bounded $\funcA{\cdot}$, the exponential stability of product of random matrices has been implicitly established in \citep{srikant:1tsbounds:2019, doan2019finite, kaledin2020finite, chen2020explicit}. In particular, their results on LSA can be applied on the recursion $y_0 = y$, $y_{n+1} = \{ \Id_d - \alpha_{n+1} \funcA{Z_{n+1}} \} y_n$, $n \in \nset$. Through studying the decomposition: 
\begin{equation} \label{eq:alt_LSA}
    y_{n+1} = \{ \Id_d - \alpha_{n+1} A \} y_n - \alpha_{n+1}  (\funcA{Z_{n+1}} - A) y_n,~\forall~n \in \nset,
\end{equation}
they derived bounds on $\PE_{z_0}[ \normop{ y_{n+1} }^p ] = \PE_{z_0}[ \normop{ \Gamma_{1:n+1} y }^p ]$.
However, generalizing this approach for other classes of Markov chains (e.g., \Cref{assum:drift}) or unbounded function appears to be impossible.
\subsection{Proof of \Cref{th:expconvproducts}} \label{sec:proof}

First note that for any $z_0 \in \msz$, by the Markov property,
\begin{equation}
  \label{proof:main_theo_1_markov_use}
\PE_{z_0}[ \normop{\ProdB_{m+1:n}}^{p}]  = \PE_{z_0}[ \normop{\ProdB_{m+1:n}(Z_{m+1:n})}^{p}]  = \PE_{z_0}[ \PE_{\State_m}[ \normop{\ProdB_{m+1:n}(Z_{1:n-m})}^{p} ] \eqsp.
\end{equation}
The first step is to fix some value $\State_m = z_m \in \msz$ and to derive a bound on $\PE_{z_m}[ \normopLigne{\ProdB_{m+1:n}(Z_{1:n-m})}^{p}]$. We denote by $\myqcond= \qcond^{\half}$ where $\qcond$ is defined in \eqref{eq: Lyapunov eq}.

\paragraph{Step 1: Extracting the deterministic matrix product and a block decomposition}

Consider a block length $h\in\nset$ [to be defined in \eqref{eq:alpha_infty_main}] and define the sequence $j_0 =m, \, j_{\ell+1} = \min(j_\ell + h, n)$ such that $j_{\ell+1} - j_{\ell} \le h$. Let $N= \ceil{(n-m)/h}$, where $\ceil{\cdot}$ is the ceiling function so that $j_{\ell} = j_N =n$ for any $\ell \geq N$.
Then, we introduce the decomposition
\begin{equation}
  \label{eq:decomp_Gamma_proof_main}
 \ProdB_{m+1:n}(Z_{1:n-m}) = \prod_{\ell=1}^N \BlockB_{\ell} \quad \text{where} \quad \BlockB_{\ell} :=
\prod_{i=j_{\ell-1}+1}^{j_\ell} (\Id_d - \alpha_i
\funcA{\State_{i-m}}), ~~\ell \in \{0,\ldots,N\} \eqsp.
  \end{equation}
Using that $(Z_k)_{k \in\nset}$ satisfies
\Cref{assum:drift}, it can be shown that if $m$ is sufficiently large,
then $\BlockB_{\ell}$ is  close in $\mathrm{L}^p$ to the deterministic matrix
$\BlockDet_{\ell} = \prod_{i=j_{\ell-1}+1}^{j_\ell} (\Id_d - \alpha_i
A)$. However, it is not sufficient to conclude because we need to deal
with the product of these terms in
\eqref{eq:decomp_Gamma_proof_main}.
 Therefore, we consider
\begin{equation} \textstyle
    \normop{ \ProdB_{m+1:n}(Z_{1:n-m}) } \leq \myqcond \norm{\ProdB_{m+1:n}(Z_{1:n-m})}[Q] \leq \myqcond \prod_{\ell=1}^N \{\norm{\BlockDet_{\ell}}[Q] + \norm{\BlockDet_{\ell}-\BlockB_{\ell}}[Q] \},
\end{equation}
where the last inequality follows from $\BlockB_\ell = \BlockDet_{\ell} + (\BlockB_{\ell} - \BlockDet_{\ell})$.  Using \Cref{assum:Hurwitzmatrices}, we have
\begin{align*}
    \normop{ \ProdB_{m+1:n}(Z_{1:n-m}) } & \txts \overset{(a)}{\leq} \myqcond \prod_{\ell=1}^N \{\prod_{i=j_{\ell-1}+1}^{j_{\ell}} (1-\alpha_i a/2) + \norm{\BlockDet_{\ell}-\BlockB_{\ell}}[Q] \} \\
    & \txts \overset{(b)}{\leq} \myqcond\{1+ \myqcond\norm{\BlockDet_{N}-\BlockB_{N}}\}  \prod_{\ell=1}^{N-1} \{ (1-\alpha_{j_{\ell}} a/2)^h + \myqcond \norm{\BlockDet_{\ell}-\BlockB_{\ell}} \} \eqsp,
\end{align*}
where (a) is due to \Cref{lem:Hurwitzstability} and  we assumed that $\sup_{i \in \nsets} \alpha_i \leq \alpha_{\infty,p} \leq (1/2)\normopLigne{A}[Q]^{-2}\normopLigne{Q}^{-1}$, and (b) is due to the assumption $\alpha_{i+1} \leq \alpha_i$. Assuming $a \alpha_{\infty,p} \leq 1$ and $h \alpha_{\infty,p} \leq 1$, we get $(1 - a \alpha_{j_\ell}/2)^{-h} \leq \rme^{a}$ since for any $t \in \ccint{0,1/2}$, $(1-t)^{-1} \leq 1+2t \leq \rme^{2t}$, therefore, we obtain
\begin{equation*}
\txts  \norm{  \ProdB_{m+1:n}(Z_{1:n-m}) } \leq \myqcond \prod_{\ell=1}^{N-1} (1 - a \alpha_{j_\ell}/2 )^h \prod_{\ell'=1}^N \{ 1 + \myqcond \rme^{a} \normopLigne{ \BlockB_{\ell'} - \BlockDet_{\ell'} } \}  \eqsp.
\end{equation*}
Taking expectation leads to
 \begin{align}
   \nonumber
&   \txts    \PE_{z_m}^{1/p} [ \norm{  \ProdB_{m+1:n}(Z_{1:n-m}) }^p ] \leq \myqcond \prod_{\ell=1}^{N-1} (1 - a \alpha_{j_\ell} /2)^h \, \PE_{z_m}^{1/p} \Big[ \prod_{\ell'=1}^N \{ 1 + \myqcond \rme^{a} \normopLigne{ \BlockB_{\ell'} - \BlockDet_{\ell'} } \}^p \Big] \\
   \label{eq:after_exp}
   & \txts \qquad \qquad \leq   \myqcond \rme^{a \alpha_{\infty,p}h}  \exp\parentheseLigne{- (a/2) \sum_{i=m+1}^n \alpha_i} \PE_{z_m}^{1/p} [ \prod_{\ell'=1}^N \{ 1 + \myqcond \rme^{a} \normopLigne{ \BlockB_{\ell'} - \BlockDet_{\ell'} } \}^p ]  \eqsp,
\end{align}
since  $\prod_{\ell=1}^{N-1} (1 - a \alpha_{j_\ell}/2 )^h \leq C \rme^{- (a/2) \sum_{i=m+1}^n \alpha_i}$, with $C = \rme^{a \alpha_{\infty,p}h}$, using $\sup_{i \in\nsets} \alpha_i \leq \alpha_{\infty,p}$ and $(\alpha_i)_{i \in \nsets}$ is non-increasing. In order to complete the proof, our next step is to show that the last term in \eqref{eq:after_exp} grows in the order ${\cal O}( \rme^{(a/4) \sum_{i=m+1}^n \alpha_i} )$.

\paragraph{Step 2: Bounding the product of differences $\BlockB_{\ell} - \BlockDet_{\ell}$:} We now tackle the last term in \eqref{eq:after_exp}. Note that for any sequence of square matrices  $\{ C_i \}_{i=1}^N$,
$\prod_{i=1}^n\{ \Id + C_i\} = \sum_{r=0}^N \sum_{(i_1,\ldots,i_r) \in \msj_r} \prod_{k=1}^r  C_{i_k}$, where $\msj_r = \{(i_1,\ldots,i_r) \in \{1,\ldots,N\}^r\, : \, i_1 < \cdots < i_r \}$, with the convention $\prod_{\emptyset} = 1$. Using this expansion, we may therefore decompose the difference $\BlockB_{\ell} - \BlockDet_{\ell}$  as follows:
\begin{equation} \label{eq:split_main}
    \BlockB_{\ell} - \BlockDet_{\ell} = S_\ell + R_\ell - \bar{R}_\ell \eqsp,
\end{equation}
where $S_{\ell} = \sum_{k = j_{\ell-1}+1}^{j_{\ell}}\alpha_{k}\bigl\{\funcA{\State_{k-m}} - A\bigr\}$ is  linear ($r=1$) and the  remainders collect the higher-order terms ($r \geq 2$) in the products
\vspace{-.1cm}
\begin{equation} \label{eq:RlRlbar_def}
    \bar{R}_{\ell} = \sum_{r=2}^{h}(-1)^{r}\sum_{(i_1,\dots,i_r)\in\msi_r^\ell}\prod_{u=1}^{r}\alpha_{i_u}\funcA{\State_{i_u-m}},~ R_{\ell} = \sum_{r=2}^{h}(-1)^{r}\sum_{(i_1,\dots,i_r)\in\msi_r^\ell} \prod_{u=1}^{r}\alpha_{i_u}  A^{r}, \vspace{-.1cm}
\end{equation}
where we have set  $\msi_r^{\ell} = \{(i_1,\ldots,i_r) \in \{j_{\ell-1}+1,\ldots,j_{\ell}\}^r\, : \, i_1 < \cdots < i_r \}$.
Since for $\{ a_i \}_{i=1}^N \subset \rset_+$, $(1 + \sum_{i=1}^N a_i) \leq \prod_{i=1}^N(1 + a_i)$,  the H\"{o}lder's inequality implies
\begin{align}
& \txts\PE_{z_m}^{1/p} \Big[ \prod_{\ell=1}^N \{ 1 + \myqcond \rme^{a} \normopLigne{ \BlockB_{\ell} - \BlockDet_{\ell} } \}^p \Big] \leq
\prod_{\ell=1}^N (1 + \myqcond \rme^{a} \normopLigne{ R_\ell } ) \label{eq:factorofthree_main}
\\
&\qquad \txts \times  \{\PE_{z_m}\bigl[\prod_{\ell=1}^{N}(1+ \myqcond \rme^{a} \normopLigne{\bar{R}_{\ell}})^{2p}\bigr]  \}^{1/(2p)} \{\PE_{z_m}\bigl[\prod_{\ell=1}^{N}(1+ \myqcond \rme^{a} \normopLigne{S_{\ell}})^{2p}\bigr]  \}^{1/(2p)} \eqsp. \notag
\end{align}
Consider first the two terms involving  $\{R_{\ell},\bar{R}_{\ell} \, : \, \ell \in \{1,\ldots,N\}\}$.
From \eqref{eq:RlRlbar_def}, we observe that the order of terms in $R_\ell, \bar{R}_\ell$ is at least quadratic in the step size. As such, a crude estimate suffices to establish that the relevant terms in \eqref{eq:factorofthree_main} grow slowly with $N$ as shown in \Cref{lem:deterministic_part_control,lem:remainder_bound} (postponed to the appendix):
\begin{equation} \label{eq:barRlbd_main}
\txts    \prod_{\ell=1}^N (1 + \myqcond \rme^{a} \normopLigne{ R_\ell } )^p \leq \exp \{ p C^{(0)}  h^2 \sum_{\ell=1}^N \alpha_{j_{\ell-1}+1}^2 \} \eqsp,
\end{equation}
\begin{equation} \label{eq:Rlbd_main}
\txts    \PE_{z_m}\Bigl[\prod_{\ell=1}^{N}(1+ \myqcond \rme^{a} \normopLigne{\bar{R}_{\ell}})^{2p}\Bigr] \leq \PE_{z_m}[ \exp \{ 2p C^{(1)} 2^{h} \sum_{\ell=1}^N \alpha_{j_{\ell-1}+1}^{1+\varepsilon} \sum_{k=j_{\ell-1}+1}^{j_\ell} W^{\delta} (\State_{k-m}) \} ] \eqsp,
\end{equation}
where $C^{(0)}, C^{(1)}$ are defined in \eqref{eq:c_0_definition}, \eqref{eq:c_1_definition}, respectively.
The exponents in \eqref{eq:barRlbd_main}, \eqref{eq:Rlbd_main} are of the order ${\cal O}( \sum_{\ell=1}^N \alpha_{j_{\ell-1}+1}^2 )$, ${\cal O}( \sum_{\ell=1}^N \alpha_{j_{\ell-1}+1}^{1+\varepsilon} )$, respectively, which are desirable for us.

However, similar crude estimates are not sufficient for controlling the last term of \eqref{eq:factorofthree_main} which involves the linear term $S_\ell$. We first apply the following useful bound (of independent interest):
\begin{lemma}
\label[lemma]{lem:trick_2_main}
(\Cref{lem:trick_2}) Let $(\mathfrak F_\ell)_{\ell \geq 0}$ be some filtration and a sequence of non-negative random variables $(\xi_\ell)_{\ell \geq 0}$ which is $(\mathfrak F_{\ell})_{\ell \geq 0}$-adapted.  For any $P \in \nset$, it holds\vspace{-.05cm}
\begin{equation}
\label{eq:cond_expectation_bound_product_main}
\textstyle
 \PE \big[ \prod_{\ell=1}^{P} \xi_{\ell} \big] \le \big\{ \PE \big[ \prod_{\ell=1}^{P} \PE[ \xi_\ell^{2} | \mathfrak F_{\ell-1} ] \big] \big\}^{\half}  \eqsp.\vspace{-.05cm}
\end{equation}
\end{lemma}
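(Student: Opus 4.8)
The plan is to bound the product \emph{backward} in time rather than peeling off one factor at a time and splitting with Cauchy--Schwarz. A direct split fails here: an early factor $\xi_k$ is correlated with every later conditional second moment $\PE[\xi_\ell^2\mid\mathfrak F_{\ell-1}]$, $\ell>k$ (these are $\mathfrak F_{\ell-1}$-measurable but still random relative to $\mathfrak F_k$), so separating them only produces the weaker product bound $\prod_{\ell=1}^P\PE^{1/2}[\xi_\ell^2]$ instead of the asserted $\PE^{1/2}[\prod_{\ell=1}^P\PE[\xi_\ell^2\mid\mathfrak F_{\ell-1}]]$. The remedy is to track the squares of the backward conditional products and let the conditional second moments accumulate into a single $\mathfrak F$-measurable weight that is collapsed by the tower property at each stage.

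Write $M_\ell=\PE[\xi_\ell^2\mid\mathfrak F_{\ell-1}]$ and define, for $1\le k\le P+1$,
\[
\Phi_k=\PE\Big[\prod_{\ell=k}^{P}\xi_\ell\,\Big|\,\mathfrak F_{k-1}\Big],\qquad \Phi_{P+1}=1 ,
\]
so that each $\Phi_k$ is nonnegative and $\mathfrak F_{k-1}$-measurable. The first step is a one-step recursion: conditioning on $\mathfrak F_k$ and extracting the $\mathfrak F_k$-measurable factor $\xi_k$ gives $\Phi_k=\PE[\xi_k\Phi_{k+1}\mid\mathfrak F_{k-1}]$ by the tower property, and conditional Cauchy--Schwarz then yields $\Phi_k\le M_k^{1/2}\,\PE^{1/2}[\Phi_{k+1}^2\mid\mathfrak F_{k-1}]$. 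Squaring,
\[
\Phi_k^2\le M_k\,\PE[\Phi_{k+1}^2\mid\mathfrak F_{k-1}].
\]

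Next I would iterate this inequality. Starting from $\PE[\prod_{\ell=1}^{P}\xi_\ell]=\PE[\Phi_1]\le\PE^{1/2}[\Phi_1^2]$ (Cauchy--Schwarz), an induction on $k$ establishes $\PE[\Phi_1^2]\le\PE[\prod_{\ell=1}^{k}M_\ell\,\Phi_{k+1}^2]$, starting from the trivial case $k=0$. In the step from $k-1$ to $k$ one substitutes the recursion $\Phi_k^2\le M_k\,\PE[\Phi_{k+1}^2\mid\mathfrak F_{k-1}]$ into $\PE[\prod_{\ell=1}^{k-1}M_\ell\,\Phi_k^2]$ and then invokes the key measurability fact that $\prod_{\ell=1}^{k}M_\ell$ is $\mathfrak F_{k-1}$-measurable (as $M_\ell$ is $\mathfrak F_{\ell-1}$-measurable and $\ell\le k$), so that
\[
\PE\Big[\prod_{\ell=1}^{k}M_\ell\,\PE[\Phi_{k+1}^2\mid\mathfrak F_{k-1}]\Big]=\PE\Big[\prod_{\ell=1}^{k}M_\ell\,\Phi_{k+1}^2\Big]
\]
collapses by the tower property. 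Taking $k=P$ and using $\Phi_{P+1}=1$ gives $\PE[\Phi_1^2]\le\PE[\prod_{\ell=1}^{P}M_\ell]$, and combining with the Cauchy--Schwarz bound above yields the claim.

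The main obstacle is conceptual: identifying that the right invariant to propagate is $\Phi_k^2$ together with the accumulating weight $\prod_{\ell=1}^{k}M_\ell$, and that this weight is $\mathfrak F_{k-1}$-measurable at precisely the stage where the tower property is applied --- this is exactly what defeats the naive factor-by-factor split. Once this is in place every step is a single use of the tower property, conditional Cauchy--Schwarz, or Jensen's inequality. Nonnegativity makes all conditional expectations well defined in $[0,\infty]$, so no separate integrability discussion is needed: if any right-hand side is infinite the bound is vacuous.
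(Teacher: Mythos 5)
Your proof is correct: the one-step identity $\Phi_k=\PE[\xi_k\Phi_{k+1}\mid\mathfrak F_{k-1}]$ is a valid use of the tower property (both $\xi_k$ and $\Phi_{k+1}$ are $\mathfrak F_k$-measurable), the conditional Cauchy--Schwarz step and the backward induction are sound, and the $\mathfrak F_{k-1}$-measurability of the accumulated weight $\prod_{\ell=1}^{k}M_\ell$ is exactly what legitimizes the collapse at each stage.

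The paper takes a different, one-shot route. It sets $y_{k}=\prod_{\ell\le k}\xi_\ell$ and introduces the inverse weights $B_{k}=\bigl(\prod_{\ell\le k}\PE[\xi_\ell^{2}\mid\mathfrak F_{\ell-1}]\bigr)^{-1}$, shows by a telescoping tower-property argument that $\PE[B_{N}\,y_{N}^2]=1$ (a multiplicative martingale normalization), and then applies a \emph{single} unconditional Cauchy--Schwarz inequality to $\PE[y_N]=\PE[(y_N^2 B_N)^{1/2}\,B_N^{-1/2}]\le\{\PE[y_N^2B_N]\}^{1/2}\{\PE[B_N^{-1}]\}^{1/2}$, where the last factor is precisely the right-hand side of the lemma. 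In the appendix this is carried out for general powers ($\xi_\ell^{p}$ versus $\xi_\ell^{2p}$), which the normalization handles with no extra work. Your backward induction replaces this normalizing weight with $P$ applications of conditional Cauchy--Schwarz; it is somewhat longer, but it never divides by the conditional moments, so it needs no care on events where $\PE[\xi_\ell^{2}\mid\mathfrak F_{\ell-1}]$ vanishes or is infinite --- a degenerate case that the paper's inverse weights $B_k$ gloss over --- and it works verbatim with conditional expectations valued in $[0,\infty]$. Conversely, the paper's argument is more compact and delivers the general-$p$ statement immediately; to get it from your scheme one would rerun the induction with $\xi_\ell^{p}$ in place of $\xi_\ell$.
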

By the Markov property, the previous Lemma allow us to write:
\begin{equation} \label{eq:Slbd_main0} \textstyle
\PE_{z_m}^{1/(2p)} \Bigl[\prod_{\ell=1}^{N}\bigl(1 + \myqcond \rme^{a} \normopLigne{S_\ell}\bigr)^{2p}\Bigr] \leq \PE_{z_m}^{1/(4p)} \Big[\prod_{\ell=1}^{N} \PE_{\State_{j_{\ell-1}}}[(1+ \myqcond \rme^{a}\normopLigne{S_\ell})^{4p}] \Big] \eqsp.
\end{equation}
Each of the conditional expectation on the r.h.s.~can be controlled through studying the $p$-th moment of the linear statistics $\PE_{Z_{j_{\ell-1}}} [ \normop{S_\ell}^{4p} ]$. A tight bound can be obtained through applying the Rosenthal's inequalities derived in \Cref{sec:Rosenthal}. Formally, this is done by \Cref{coro:rosenthal_part} in the appendix. Namely, for any $\ell=1,...,N$, it holds
\begin{equation} \label{eq:Slbd_main1}
  \PE_{\State_{j_{\ell-1}}}[(1+ \myqcond \rme^{a}\normopLigne{S_\ell})^{4p}] \leq \exp \big\{4p C_p^{(2)} h^{\half} \alpha_{j_{\ell-1}+1} W^\delta( \State_{j_{\ell-1}}) \big\} \eqsp,
\end{equation}
where $C^{(2)}_p$ is defined in \eqref{eq:c_2_p_definition}.
Note that the exponent on the r.h.s.~has a sublinear growth rate with respect to the block size $h$.
Combining \eqref{eq:barRlbd_main}-\eqref{eq:Rlbd_main}-\eqref{eq:Slbd_main0}-\eqref{eq:Slbd_main1}  lead to the upper bound:
\begin{align}
    \label{eq:conc_step_2_1}
    &\txts \PE^{1/(2p)}_{z_m}[\prod_{\ell=1}^{N} \PE_{\State_{j_{\ell-1}}}[(1+ \myqcond \rme^{a}\normopLigne{\BlockB_{\ell} - \BlockDet_{\ell}})^{2p}] ] \leq \exp \Big\{ C^{(0)} h^2 \sum_{\ell=1}^N \alpha_{j_{\ell-1}+1}^2 \Big\} \cdot T_1  \cdot T_2 \eqsp ,
\end{align}
where $T_1$, $T_2$ are defined as
\begin{align}
    \nonumber
    &\txts  T_1   = \PE_{z_m}^{1/(2p)}[ \exp \{2p  C^{(1)} 2^{h} \sum_{\ell=1}^N \alpha_{j_{\ell-1}+1}^{1+\varepsilon}   \sum_{k=j_{\ell-1}+1}^{j_\ell} W^\delta (\State_{k-m}) \} ] \eqsp , \\
    \nonumber
&\txts T_2 =  \PE_{z_m}^{1/(4p)} [ \exp\{4p C_p^{(2)} h^{\half} \sum_{\ell=1}^N \alpha_{j_{\ell-1}+1} W^\delta( \State_{j_{\ell-1}-m}) \} ]\eqsp.
\end{align}
Constructing an appropriately defined supermartingale (that we deduce from the super-Lyapunov drift condition)  and assuming that $2^{h+1}p C^{(1)}  \alpha_{\infty,p}^{1+\varepsilon} \leq c$, $4p C_p^{(2)} h^{\half} \alpha_{\infty,p}  \leq c$, in \Cref{lem:supermartingale,lem:sceleton_supermartingale} we show that $T_1$, $T_2$ can be bounded by
  \begin{equation}
          \label{eq:conc_step_2_2}
    \begin{aligned}
    T_1 & \textstyle \leq \exp\{ {C}^{(1)} 2^{h} ( \alpha_{\infty,p}^{1+\varepsilon}W(z_m) + \tilde{\bb} h \sum_{\ell=1}^{N}\alpha_{j_{\ell-1}+1}^{1+\varepsilon} ) \} \eqsp, \\
    T_2 & \textstyle \leq \exp\{ {C}_p^{(2)} h^{\half} ( \alpha_{\infty,p}W(z_m) + ( \tilde{\bb} - \log(1-\lambda) ) \sum_{\ell=1}^{N}\alpha_{j_{\ell-1}+1} ) \}\eqsp,
  \end{aligned}
\end{equation}
where $\tilde{\bb} = \log \bb + \sup_{r > 0}\{cr^{\delta}-r\}$.

\paragraph{Step 3: Collecting Terms} The proof is concluded by adjusting the block size and combining upper bounds on $\alpha_{\infty,p}$. The technical details are given in \Cref{subsec:complete-proof}.

\section{Application to Linear Stochastic Approximation}\label[section]{sec:lsa}

This section illustrates how to apply \Cref{th:expconvproducts} to analyze LSA schemes with Markovian noise. First, we state the assumptions on $\funcb{\cdot}$ and step sizes which can be either constant or diminishing. For $\LL \in \nset^*$, consider the following assumption:
\begin{assum}[$\LL$]
\label{assum:funcb}
There exists $\Const{b, \LL} > 0$ such that $\max_{ 1 \leq \ell \leq d} \VnormD{\bar b_\ell}{V^{1/\LL}} \le \Const{b,\LL}$, where $\bar{b}_\ell$ is the $\ell$-th component of $\bar{b}$.
\end{assum}
\begin{assum}
\label{assum:stepsize}
There exists a constant $0<\smallConst{\alpha} \le a/16$ such that for $k \in \nset$,
$ \alpha_{k}/\alpha_{k+1} \le 1 + \alpha_{k+1} \smallConst{\alpha}$.
\end{assum}
It is easy to check that \Cref{assum:stepsize} is satisfied by diminishing step sizes $\alpha_n = \Const{a} (n+n_0)^{-\expo}$, $\expo \in (0,1]$ and constant step sizes.

\begin{theorem}
\label{th:approximation_error}
Let $\LL \geq 8$. Assume \Cref{assum:drift}, \Cref{assum:almost_bounded}, \Cref{assum:Hurwitzmatrices} and \Cref{assum:funcb}($\LL$). 
For any $2 \leq p \leq K/4$, there exists $\alpha_{\infty, p}^{(0)}$ defined in \eqref{eq:definition-alpha-infty-0} such that for  any non-increasing sequence $(\alpha_k)_{k \in \nsets}$ satisfying $\alpha_1 \in (0, \alpha_{\infty, p}^{(0)})$ and~\Cref{assum:stepsize},  $z \in \Zset$, and $n \in \nset$, it holds
\begin{equation} \label{eq:approx_bound_main}
\textstyle{    \PE_z^{1/p}[ \| \ttheta_{n} \|^p ] \leq \operatorname{M}_0 \Const{\mathsf{st},2p} \rme^{ - (a/4) \sum_{\ell=1}^{n}   \alpha_{\ell}} V^{1/(4p)}(z) + (\Const{\mathsf{J},p}^{(0)} + \Const{\mathsf{H},p}^{(0)}) \sqrt{\alpha_n} V^{2/\LL+1/(4p)}(z),
}
\end{equation}
where $\operatorname{M}_0= \PE_z^{1/(2p)} [\| \ttheta_0 \|^{2p}]$ and  $\Const{\mathsf{J},p}^{(0)}, \Const{\mathsf{H},p}^{(0)}$ are defined in \eqref{eq:J0_main}, \eqref{eq:H0_main}, respectively.
\end{theorem}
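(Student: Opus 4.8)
The plan is to start from the closed-form error decomposition \eqref{eq:LSA-recursion}, namely $\ttheta_n = \Gamma_{1:n}\ttheta_0 + \sum_{j=1}^n\alpha_j\Gamma_{j+1:n}\funnoise{Z_j}$, and to bound the transient (initialization) contribution and the noise-driven fluctuation separately by the triangle inequality in $\operatorname{L}^p(\PP_z)$. For the transient term I would apply Cauchy--Schwarz at the level of $\PE_z[\cdot]$ to split $\PE_z[\normop{\Gamma_{1:n}}^p\norm{\ttheta_0}^p] \leq \PE_z^{1/2}[\normop{\Gamma_{1:n}}^{2p}]\,\PE_z^{1/2}[\norm{\ttheta_0}^{2p}]$, then take $1/p$-th powers and invoke \Cref{th:expconvproducts} with exponent $2p$ in place of $p$. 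This produces exactly the first summand $\operatorname{M}_0\,\Const{\mathsf{st},2p}\,\rme^{-(a/4)\sum_{\ell=1}^n\alpha_\ell}\,V^{1/(4p)}(z)$, with $\operatorname{M}_0 = \PE_z^{1/(2p)}[\norm{\ttheta_0}^{2p}]$ and the $V^{1/(4p)}$ coming from the $V^{1/2p}$ factor in \eqref{eq:exponential-stability} read at exponent $2p$; Cauchy--Schwarz avoids any independence assumption on $\ttheta_0$.

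The heart of the argument is the fluctuation term, which I would split into a leading part and a remainder. Writing $G_{j+1:n} = \prod_{i=j+1}^n(\Id_d - \alpha_i A)$ for the \emph{deterministic} product, set
\[
J_n^{(0)} = \sum_{j=1}^n\alpha_j\,G_{j+1:n}\funnoise{Z_j}, \qquad H_n^{(0)} = \sum_{j=1}^n\alpha_j\,(\Gamma_{j+1:n}-G_{j+1:n})\funnoise{Z_j},
\]
so that the fluctuation equals $J_n^{(0)}+H_n^{(0)}$. The term $J_n^{(0)}$ is a weighted additive functional of the chain with deterministic, exponentially decaying matrix weights $\normop{G_{j+1:n}}[Q]\leq\rme^{-(a/2)\sum_{i=j+1}^n\alpha_i}$; crucially $\PE_\pi[\funnoise{Z_0}]=b-b-(A-A)\theta^\star=0$, so $J_n^{(0)}$ is centered. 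I would bound its $p$-th moment with the Markov-chain Rosenthal inequalities of \Cref{sec:Rosenthal}, controlling $\PE_z^{1/p}[\norm{J_n^{(0)}}^p]$ by the $\ell^2$-norm of the weights $(\sum_{j=1}^n\alpha_j^2\normop{G_{j+1:n}}[Q]^2)^{1/2}$, up to higher-order-in-$\alpha$ corrections and a $V$-power arising from the unbounded-noise control (via \Cref{assum:funcb}($\LL$) and \Cref{assum:almost_bounded}, which give $\norm{\funnoise{z}}\lesssim V^{1/\LL}(z)$). Under \Cref{assum:stepsize} one has $\sum_{j=1}^n\alpha_j^2\rme^{-a\sum_{i=j+1}^n\alpha_i}\leq C\alpha_n$ — the exponential factor concentrates the mass on indices $j$ near $n$, over which the step-size regularity forces $\alpha_j\approx\alpha_n$ — which yields the $\Const{\mathsf{J},p}^{(0)}\sqrt{\alpha_n}\,V^{2/\LL+1/(4p)}(z)$ scaling.

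For the remainder $H_n^{(0)}$ I would use the variation-of-constants identity $\Gamma_{j+1:n}-G_{j+1:n} = -\sum_{k=j+1}^n\alpha_k\Gamma_{k+1:n}(\funcA{Z_k}-A)G_{j+1:k-1}$; after interchanging the order of summation this rewrites $H_n^{(0)} = -\sum_{k=1}^n\alpha_k\,\Gamma_{k+1:n}(\funcA{Z_k}-A)\,J_{k-1}^{(0)}$, with the convention $J_0^{(0)}=0$. Since $J_{k-1}^{(0)}$ is already $\mathcal{O}(\sqrt{\alpha_{k-1}})$ from the previous step, I would bound $H_n^{(0)}$ by a threefold Hölder inequality, separating the matrix product $\Gamma_{k+1:n}$ (controlled by \Cref{th:expconvproducts}), the centered factor $\funcA{Z_k}-A$ (controlled by \Cref{assum:almost_bounded}) and the leading term $J_{k-1}^{(0)}$; the geometric sum $\sum_k\alpha_k^{3/2}\rme^{-(a/4)\sum_{i>k}\alpha_i}=\mathcal{O}(\sqrt{\alpha_n})$ then transfers the square-root factor and gives $\Const{\mathsf{H},p}^{(0)}\sqrt{\alpha_n}\,V^{2/\LL+1/(4p)}(z)$. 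Throughout, the $V$-powers are tracked with the Foster--Lyapunov consequence $\PE_z[V^s(Z_j)]\lesssim V^s(z)$ for $s\in(0,1]$ implied by \Cref{assum:drift}, and the constraint $p\leq\LL/4$ guarantees finiteness of the fractional noise moments.

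I expect the main obstacle to be establishing the square-root scaling of $J_n^{(0)}$: a naive triangle-inequality bound on $\sum_j\alpha_j\Gamma_{j+1:n}\funnoise{Z_j}$ yields only $\mathcal{O}(1)$, because $\sum_j\alpha_j\rme^{-a\sum_{i>j}\alpha_i}=\Theta(1)$. Capturing the cancellation inherent in the centered Markovian sum — which requires the Rosenthal/Poisson-equation machinery of \Cref{sec:Rosenthal} to reduce $J_n^{(0)}$ to a weighted martingale and extract the $\ell^2$-of-weights bound — together with the step-size lemma $\sum_j\alpha_j^2\rme^{-a\sum_{i>j}\alpha_i}=\mathcal{O}(\alpha_n)$, is the crux; the remaining work of assembling the admissible range of $\alpha_1$ and collecting the $V$-exponents is routine bookkeeping.
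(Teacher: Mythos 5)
Your proposal is correct and follows essentially the same route as the paper: the same decomposition $\ttheta_n = \Gamma_{1:n}\ttheta_0 + J_n^{(0)} + H_n^{(0)}$ (your telescoping identity for $\Gamma_{j+1:n}-G_{j+1:n}$ recovers exactly the paper's expression $H_n^{(0)} = -\sum_k \alpha_k \Gamma_{k+1:n}\funcAt{Z_k}J_{k-1}^{(0)}$, which the paper instead obtains from the recursion \eqref{eq:jn0_main}), the same use of \Cref{th:expconvproducts} at exponent $2p$ for the transient and for $\Gamma_{k+1:n}$ inside $H_n^{(0)}$, the same Markov-chain Rosenthal inequality for $J_n^{(0)}$, and the same step-size lemmas giving the $\sqrt{\alpha_n}$ scaling. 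The only differences are cosmetic (Cauchy--Schwarz vs.\ H\"older naming, and deriving the $H^{(0)}$ identity by summation-by-parts rather than by recursion).
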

Most often, the distribution of the initial value $\ttheta_0$  does not depend on the initial value of the Markov chain $z$. In this case $\PE_z^{1/(2p)} [\| \ttheta_0 \|^{2p}]$ is a constant.  With a sufficiently small step size, \Cref{th:approximation_error} shows that the ${\rm L}_p$ norm of error vector converges under  \Cref{assum:drift} for the Markov chain.
Compared to \citep{srikant:1tsbounds:2019}, we consider relaxed conditions on the Markov chain and allow for diminishing step sizes in the LSA.

\paragraph{Finite-time $L_p$ error bound of LSA} \emph{[Proof of \Cref{th:approximation_error}]}\quad
Define the following constraint on the step size
\begin{equation}
\label{eq:definition-alpha-infty-0}
\alpha_{\infty, p}^{(0)}: =\alpha_{\infty, 2p} \wedge \rho \wedge \rme^{-1},
\end{equation}
where $\alpha_{\infty, 2p}$ and $\rho$ are defined in \eqref{eq:alpha_infty_main} and \eqref{eq:drift_conseq} respectively.
Below, we show that the finite-time ${\rm L}_p$ error bound can be derived through applying the stability of random matrix product (see  \Cref{th:expconvproducts}). We recall that the error vector $\ttheta_{n+1} = \theta_{n+1} - \theta^\star$ may be expressed as
\begin{equation} \textstyle
    \ttheta_{n+1} = \ProdB_{1:n+1} \ttheta_0 + \sum_{j=1}^{n+1} \alpha_j \ProdB_{j+1:n+1} \funnoise{Z_j} \equiv \utheta_{n+1} + \vtheta_{n+1} \eqsp.
\end{equation}
Using the H\"{o}lder's inequality and \Cref{th:expconvproducts}, the transient term $\utheta_{n+1}$ can be bounded as follows
{\small \begin{equation}
\label{eq: transient term for theorem}
   \PE_z^{1/p}[\norm{\utheta_{n+1}}^p] \le \PE_z^{1/(2p)}[\normop{\ProdB_{1:n+1}}^{2p}] \PE_z^{1/(2p)}[ \normop{ \ttheta_0 }^{2p} ] \leq \operatorname{M}_0 \Const{\mathsf{st},2p} \rme^{ - (a/4) \sum_{\ell=1}^{n+1}   \alpha_{\ell}} V^{1/(4p)}(z).
\end{equation}}As for the fluctuation term $\vtheta_{n+1}$, it can be verified that $\vtheta_{n+1} = J_{n+1}^{(0)} + H_{n+1}^{(0)}$, where the latter terms are defined by the following pair of recursions:
\begin{equation} \label{eq:jn0_main}
\begin{array}{ll}
J_{n+1}^{(0)} =\left(\Id_d - \alpha_{n+1} A\right) J_{n}^{(0)}+ \alpha_{n+1} \funnoise{\State_{n+1}}, & J_{0}^{(0)}=0, \\[.1cm]
H_{n+1}^{(0)} =\left( \Id_d - \alpha_{n+1} \funcA{\State_{n+1}} \right) H_{n}^{(0)} - \alpha_{n+1} \funcAt{\State_{n+1}} J_{n}^{(0)}, & H_{0}^{(0)}=0 ,
\end{array}
\end{equation}
and $\funcAt{z} = \funcA{z} - A$. Furthermore, we observe that
\begin{equation} \label{eq:jh_recur_main} \textstyle
    J_{n+1}^{(0)} = \sum_{j=1}^{n+1} \alpha_j G_{j+1:n+1} \funnoise{\State_j}, \quad H_{n+1}^{(0)} = - \sum_{j=1}^{n+1} \alpha_j \ProdB_{j+1:n+1} \funcAt{Z_j} J_{j-1}^{(0)} \eqsp.
\end{equation}
From \eqref{eq:jh_recur_main}, we observe that $J_{n+1}^{(0)}$ is an additive functional of $\{ \funnoise{Z_j} \}_{j=1}^{n+1}$ whose $L_p$ norm can be bounded using a Rosenthal-type inequality for Markov chains (see  \Cref{propo:rosenthal_adapt_fort_moulines}). We obtain the following estimate for the function $\bar \varepsilon(\cdot)$ and the coefficients $\alpha_k G_{k+1:n+1}$.
By \Cref{assum:almost_bounded}, \Cref{assum:funcb}($\LL$), we have
\begin{equation} \txts
    \max_{\ell \in \{1, \ldots, d\}} \VnormD{\bar \varepsilon_\ell}{V^{1/\LL}} \le \Const{\bar \varepsilon} = \sqrt d\Const{b, \LL} + 2 d (\beta \LL/\rme)^\beta\Const{A} \norm{\theta^\star}.
\end{equation}
From \Cref{assum:Hurwitzmatrices}, we recall that $\normop{G_{k+1:n+1}} \le \myqcond \prod_{\ell=k+1}^{n+1} \sqrt{1 - a \alpha_\ell}$ [cf.~\Cref{lem:Hurwitzstability}].
Together with \Cref{assum:stepsize}, this implies that \vspace{-.2cm}
\begin{equation} \txts
\normop{\alpha_k G_{k+1:n+1} - \alpha_{k+1} G_{k+2:n+1}} \le \myqcond (\smallConst{\alpha}  + 2 \normop{A} ) \alpha_{k+1}^2 \prod_{\ell=k+1}^{n+1}\sqrt{1 - a \alpha_\ell} \eqsp, \vspace{-.2cm}
\end{equation}
By \Cref{assum:stepsize}, we also have $\alpha_1 \normop{G_{2:n+1}} \le  \myqcond \alpha_{n+1} \prod_{j=2}^{n+1} (1 + \smallConst{\alpha} \alpha_{j})(1 - a \alpha_j/2) \le \myqcond \alpha_{n+1}$. We can now apply the Rosenthal inequality (see  \Cref{propo:rosenthal_adapt_fort_moulines}) to obtain the following estimate:
{\small \begin{multline*}
\PE_z^{1/p}[\norm{J_{n+1}^{(0)}}^p]  \leq d \Const{\bar\varepsilon} \Cros{p}^{1/p} V^{1/\LL}(z) \Big\{  \big[ \myqcond + 1 \big] \alpha_{n+1}\\
+  \Big( \myqcond^2 \sum_{k=1}^{n+1} \alpha_k^2 \prod_{\ell=k+1}^{n+1} (1-\alpha_\ell a) \Big)^{1/2}
+ \myqcond (\smallConst{\alpha} + 2 \normop{A} ) \sum_{k=1}^{n+1} \alpha_{k+1}^2 \prod_{\ell=k+1}^{n+1}\sqrt{1 - a \alpha_\ell} \Big\} \eqsp.
\end{multline*}}Using the inequality $\sum_{k=1}^{n+1} \alpha_{k+1}^2 \prod_{\ell=k+1}^{n+1}\sqrt{1 - a \alpha_\ell} \leq (4/a) \alpha_{n+1}$ [cf.~\Cref{lem:bsum2}] yields that
\begin{equation} \label{eq:J0_main}
    \PE_z^{1/p}[\norm{J_{n+1}^{(0)}}^p]  \le \Const{\mathsf{J},p}^{(0)} \sqrt{\alpha_{n+1}} V^{1/\LL}(z) \eqsp, 
    \Const{\mathsf{J},p}^{(0)}= d \myqcond \Const{\bar \varepsilon} (2 + 4(\smallConst{\alpha} + 2 \normop{A})/a + 2/\sqrt{a})) \Cros{p}^{1/p};
\end{equation}
Finally, to analyze $H_{n+1}^{(0)}$, from \eqref{eq:jh_recur_main} we apply the H\"older's inequality twice to get
\begin{equation} \label{eq:holder_Hn0} \txts
    \PE_z^{1/p} [ \normop{ H_{n+1}^{(0)} }^p ] \leq \sum_{j=1}^{n+1} \alpha_j \PE_z^{1/(2p)} [ \normop{ \ProdB_{j+1:n+1}}^{2p} ] \, \PE_z^{1/(4p)} [ \normop{ \funcAt{Z_j} }^{4p} ] \, \PE_z^{1/(4p)}[ \normop{ J_{j-1}^{(0)} }^{4p} ] .
\end{equation}
Notice that $\PE_z^{1/(4p)} [ \normop{ \funcAt{Z_j} }^{4p} ] \leq \bConst{A} V^{1/\LL}(z)$ where $\bConst{A}$ is defined in \eqref{eq:AepsBd2} [cf.~\Cref{lem: moments of vareps}]. Using \Cref{th:expconvproducts} and \eqref{eq:J0_main}, we obtain
\begin{equation} \label{eq:H0_main}
\begin{split}
    & \txts \PE_z^{1/p} [ \normop{ H_{n+1}^{(0)} }^p ] \leq \Const{\mathsf{st},2p} \Const{\mathsf{J},4p}^{(0)} \bConst{A}  \sum_{j=1}^{n+1} \alpha_j \sqrt{\alpha_{j-1}} \rme^{- (a/4) \sum_{\ell=j+1}^{n+1} \alpha_\ell} V^{2/\LL + 1/(4p)}(z) \\
    & \txts  \overset{(a)}{\leq} \sqrt{1 + \alpha_{\infty,p}^{(1)} \smallConst{\alpha}} \Const{\mathsf{st},2p} \Const{\mathsf{J},4p}^{(0)} \bConst{A}  \sum_{j=1}^{n+1} \alpha_j^{3/2} \prod_{\ell=j+1}^{n+1} \big( 1 - \alpha_\ell a/8 \big) V^{2/\LL + 1/(4p)}(z) \\
    & \overset{(b)}{\leq} \Const{\mathsf{H},p}^{(0)} \sqrt{ \alpha_{n+1} }  V^{2/\LL + 1/(4p)}(z),
\end{split}
\end{equation}
with $\Const{\mathsf{H},p}^{(0)} =   16 \sqrt{1 + \alpha_{\infty,p}^{(1)} \smallConst{\alpha}} \Const{\mathsf{st},2p} \Const{\mathsf{J},4p}^{(0)} \bConst{A} /a$. In the above, (a) is due to \Cref{assum:stepsize} and the inequality $\rme^{-\alpha_j a / 4} \leq 1 - \alpha_j a/8$ since $\alpha_j a / 4 \leq 1$, (b) is due to the inequality $\sum_{j=1}^{n+1} \alpha_j^{3/2} \prod_{\ell=j+1}^{n+1} \big( 1 - {\alpha_\ell a}/{8}) \leq (16/a) \sqrt{\alpha_{n+1}}$ [cf.~\Cref{lem:bsum2}].
By observing that
\begin{equation}
    \ttheta_{n+1} = \utheta_{n+1} + \vtheta_{n+1} = \utheta_{n+1} + J_{n+1}^{(0)} + H_{n+1}^{(0)} \eqsp,
\end{equation}
applying Minkowski's inequality yields the bound in \eqref{eq:approx_bound_main}.\vspace{.1cm}

\paragraph{Refining the error bound $\PE_z^{1/p}[\normop{\vtheta_n}^p]$} It is possible to obtain a bound on $\PE_z^{1/p}[ \normop{H_n^{(0)}}^p ]$ tighter than ${\cal O}( \sqrt{\alpha_n} )$  obtained in \eqref{eq:H0_main}. This establishes in particular that $J_n^{(0)}$ is the leading term in the decomposition of the fluctuation term $\vtheta_{n+1} = J_{n+1}^{(0)} + H_{n+1}^{(0)}$.
To this end, we rely on an extra decomposition step similar to \eqref{eq:jn0_main}. 
We may further decompose the error term $H_n^{(0)}$ as $H_n^{(0)} = J_n^{(1)} + H_n^{(1)}$ such that
\begin{equation}
\begin{array}{ll}
\label{eq:expansion_recur_gen}
J_{n+1}^{(1)} = (\Id_d - \alpha_{n+1} A) J_n^{(1)} - \alpha_{n+1} \funcAt{Z_{n+1}} J_n^{(0)}, & J_{0}^{(1)}=0, \\[.1cm]
H_{n+1}^{(1)} = (\Id_d - \alpha_{n+1} \funcA{Z_{n+1}} ) H_n^{(1)} - \alpha_{n+1} \funcAt{Z_{n+1}} J_n^{(1)}, & H_0^{(1)} = 0,
\end{array}
\end{equation}
where $J_n^{(0)}$ is defined in \eqref{eq:jn0_main}.
For diminishing step sizes, here we should strengthen the previous assumption \Cref{assum:stepsize} as:
\begin{assum}
\label{assum:stepsize_2}
We have $\mathcal A_0 < \infty$, where $\mathcal A_n= \sum_{\ell=n}^\infty \alpha_\ell^2$. There exists a constant $0<\smallConst{\alpha} \le a/32$ such that for $k \in \nset$,
$ \alpha_{k}/\alpha_{k+1} \le 1 + \alpha_{k+1} \smallConst{\alpha}$ and $\alpha_k/\mathcal A_{k+1} \le (2/3)\smallConst{\alpha}$.
\end{assum}
It is easy to check that \Cref{assum:stepsize_2} is satisfied by diminishing step sizes $\alpha_n = \Const{a} (n+n_0)^{-\expo}$, $\expo \in (\frac{1}{2},1]$.

Using the decomposition in \eqref{eq:expansion_recur_gen}, we obtain the the following result:
\begin{theorem}
\label{th:approximation_expansion}
(\Cref{th:approximation_expansion_supp})
Let $\LL \geq 32$ and assume \Cref{assum:drift}, \Cref{assum:almost_bounded}, \Cref{assum:Hurwitzmatrices}, and \Cref{assum:funcb}($\LL$). For any $2 \le p \le \LL/16$ and any non-increasing sequence $(\alpha_k)_{k \in \nset}$ satisfying $\alpha_0 \in (0, \alpha_{\infty, p}^{(1)})$ such that $\alpha_k \equiv \alpha$ or \Cref{assum:stepsize_2} holds.
For any $z \in \Zset$, $n \in \nset$, it holds
\begin{equation} \label{eq:hn0_tight_main}
\textstyle{ \PE_z^{1/p}[ \| H_{n}^{(0)} \|^p ] \leq V^{3/\LL + 9/(16p)}(z)
    \begin{cases}
     \Const{p}^{(\mathsf{f})} \alpha \sqrt{\log(1/\alpha)}, & \text{if}~\alpha_n \equiv \alpha, \\
      \Const{p}^{(\mathsf{d})} \sqrt{\alpha_{n} {\cal A}_n \log(1/\alpha_n)}, & \text{if under \Cref{assum:stepsize_2},}
    \end{cases} }
\end{equation}
where $\alpha_{\infty,p}^{(1)}$, $\Const{p}^{(\mathsf{f})}, \Const{p}^{(\mathsf{d)}}$ are given in \eqref{eq:alpha_infty_1_def}, \eqref{eq:ConstH0fixed_improved}, respectively.
\end{theorem}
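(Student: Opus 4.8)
The plan is to start from the exact splitting $H_n^{(0)}=J_n^{(1)}+H_n^{(1)}$ provided by \eqref{eq:expansion_recur_gen}, to treat $J_n^{(1)}$ as the leading term and $H_n^{(1)}$ as a strictly higher-order remainder, and to run the constant-step and decreasing-step cases in parallel, since they differ only through the step-size sums they produce. Unrolling the first line of \eqref{eq:expansion_recur_gen}, which is driven by the \emph{deterministic} matrix $A$, gives $J_{n}^{(1)}=-\sum_{j=1}^{n}\alpha_j\, G_{j+1:n}\,\funcAt{Z_j}\,J_{j-1}^{(0)}$. After Minkowski's inequality it then suffices to bound $\PE_z^{1/p}[\norm{J_n^{(1)}}^p]$ and $\PE_z^{1/p}[\norm{H_n^{(1)}}^p]$ separately, while keeping track of the accumulated powers of $V$.

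The heart of the argument is a sharp bound on $J_n^{(1)}$. Its driving increment $\funcAt{Z_j}J_{j-1}^{(0)}$ is neither centred nor a martingale difference: it multiplies the \emph{centred Markovian} matrix $\funcAt{\cdot}$ by the history-dependent functional $J_{j-1}^{(0)}$. To decouple them I would use the solution $\widehat A$ of the Poisson equation $\widehat A-\MK\widehat A=\funcAt{\cdot}$, whose existence and $W^{\beta}$-type growth follow from the $V$-geometric ergodicity \eqref{eq:drift_conseq} granted by \Cref{assum:drift} together with \Cref{assum:almost_bounded}. This yields the martingale--coboundary decomposition $\funcAt{Z_j}=m_j+\bigl(g(Z_{j-1})-g(Z_j)\bigr)$, where $g=\MK\widehat A$ and $(m_j)$ is a martingale-difference sequence for the natural filtration. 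Substituting it splits $J_n^{(1)}$ into a martingale part $-\sum_j\alpha_j G_{j+1:n} m_j J_{j-1}^{(0)}$ and a telescoping part. Because $J_{j-1}^{(0)}$ is measurable with respect to the past, the martingale part is controlled by the Rosenthal/Burkholder inequality for Markov chains (\Cref{propo:rosenthal_adapt_fort_moulines}), fed with the bound $\PE_z^{1/p}[\norm{J_{j-1}^{(0)}}^p]\lesssim\sqrt{\alpha_{j-1}}\,V^{1/\LL}(z)$ from \eqref{eq:J0_main} and the moment bound $\PE_z^{1/(4p)}[\normop{\funcAt{Z_j}}^{4p}]\le\bConst{A}V^{1/\LL}(z)$ of \eqref{eq:AepsBd2}; the telescoping part is rearranged by summation by parts, so that it only involves the increments of the smooth coefficients $\alpha_j G_{j+1:n}$, of order $\alpha_j^2$ by \Cref{assum:stepsize_2} and \Cref{lem:Hurwitzstability}, together with residual random products estimated through \Cref{th:expconvproducts}.

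Both parts then reduce to elementary step-size sums such as $\sum_{j\le n}\alpha_j^2\prod_{\ell=j+1}^{n}(1-a\alpha_\ell)$ and square-root-weighted analogues, bounded by the calculus behind \Cref{lem:bsum2}. For a constant step these behave like $\alpha$ times a convergent geometric series, giving the $\alpha\sqrt{\log(1/\alpha)}$ rate; for decreasing steps the product $\prod_{\ell>j}(1-a\alpha_\ell)$ decays only polynomially once $\sum_\ell\alpha_\ell$ diverges slowly, so these sums acquire the extra factor $\log(1/\alpha_n)$ while the effective quadratic scale is precisely $\mathcal A_n=\sum_{\ell\ge n}\alpha_\ell^2$, producing $\sqrt{\alpha_n\mathcal A_n\log(1/\alpha_n)}$. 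The remainder $H_n^{(1)}$ is disposed of exactly as $H_n^{(0)}$ was in \eqref{eq:holder_Hn0}: Hölder's inequality, \Cref{th:expconvproducts} for $\PE_z^{1/(2p)}[\normop{\ProdB_{j+1:n}}^{2p}]$, the $\funcAt{\cdot}$ moments of \eqref{eq:AepsBd2}, and the bound just obtained for $J^{(1)}$. Since each such pass costs an additional $\sqrt{\alpha_j}$, $H_n^{(1)}$ is of strictly smaller order and contributes only to the constants. Propagating the $V$-exponents through the nested Hölder splits yields the stated power $V^{3/\LL+9/(16p)}(z)$ and fixes the admissible range $2\le p\le\LL/16$.

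I expect the main obstacle to be the telescoping (coboundary) contribution to $J_n^{(1)}$, equivalently the slowly-decaying, \emph{non-centred} correlation between $\funcAt{Z_j}$ and the most recent increments of $J_{j-1}^{(0)}$. A crude moment bound on $\funcAt{Z_j}J_{j-1}^{(0)}$ degrades the rate from $\sqrt{\mathcal A_n}$ to $\sqrt{\alpha_n}$, so the improvement genuinely requires both the Poisson-equation centring and a delicate summation by parts that exhibits the cancellations responsible for the fluctuation scale $\sqrt{\alpha_n\mathcal A_n}$ and for the logarithmic factor; keeping this drift term from dominating is precisely where \Cref{assum:stepsize_2} is used. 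A secondary, purely bookkeeping difficulty is to check that every Hölder split and every application of \Cref{th:expconvproducts} stays within the moment budget set by \Cref{assum:funcb}$(\LL)$, so that all the $V$-powers remain admissible and the constants $\Const{p}^{(\mathsf{f})},\Const{p}^{(\mathsf{d})}$ are finite.
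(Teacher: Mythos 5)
Your route is genuinely different from the paper's. The paper never decouples $\funcAt{Z_j}$ from $J_{j-1}^{(0)}$ via a Poisson equation: in \Cref{lem: J1est} it reverses the order of summation, writing $J_{n+1}^{(1)}=\sum_j \alpha_j S_{j+1:n+1}\funnoise{Z_j}$ with $S_{j+1:n+1}$ depending only on the \emph{future} states, then splits the indices into blocks of length $m$, replaces the $Z_j$'s by independent copies via Berbee's coupling lemma (using that $V$-uniformly ergodic chains are $\beta$-mixing), applies a Burkholder-type bound (\Cref{lem: vectRos}) to the coupled sum, and pays a coupling error $\rho^{m}$ that forces $m\asymp\log(1/\alpha_n)$. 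That block length is the \emph{only} source of the $\sqrt{\log(1/\alpha_n)}$ factor, and the $\mathcal A_n$ scale enters through the Rosenthal estimate of $S_{j+1:n+1}$ (\Cref{lem:S1est}); so your closing claim that the step-size sums themselves "acquire" these factors is wrong. In fact your forward martingale--coboundary plan, executed honestly, produces neither factor: with $m_j=\widehat A(Z_j)-\MK\widehat A(Z_{j-1})$, the martingale part is a true martingale-difference sum because $J_{j-1}^{(0)}$ is $\mathcal{F}_{j-1}$-measurable, and Burkholder (Hall--Heyde, not \Cref{propo:rosenthal_adapt_fort_moulines}, which only handles deterministic weights) combined with \eqref{eq:J0_main} gives a square function of order $\sum_j\alpha_j^3\prod_{\ell>j}(1-a\alpha_\ell)=O(\alpha_n^2)$, while the coboundary part telescopes by your summation by parts into $O(\alpha_j^2)$ brackets plus an $O(\alpha_n^{3/2})$ boundary term. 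This yields $\PE_z^{1/p}[\|J_n^{(1)}\|^p]=O(\alpha_n)$, which is \emph{stronger} than the paper's bound and still implies \eqref{eq:hn0_tight_main}, since $\alpha_n\lesssim\mathcal A_n$ under \Cref{assum:stepsize_2} and $\log(1/\alpha_n)\geq 1$. In short: the paper's method buys freedom from Poisson equations at the price of coupling and logarithmic losses; yours avoids coupling and, if completed, is sharper.

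The one step that fails as written is your justification of the growth of $\widehat A$. $V$-geometric ergodicity \eqref{eq:drift_conseq} applied to an entry dominated by $W^\beta$ gives only $|[\widehat A]_{ij}(z)|\lesssim V(z)$, not "$W^\beta$-type" growth; and with $V$-growth your moment bookkeeping collapses, because the Burkholder/Cauchy--Schwarz step needs $\PE_z[\|m_j\|^{2p}]$, i.e.\ moments of $V^{2p}(Z_j)$, which \Cref{assum:drift} does not control (the drift condition is not stable under powers $V^s$, $s>1$). The repair is standard but must be made explicit: either interpolate the ergodicity bound through $\VnormD{\mu}{V^s}\le \VnormD{\mu}{V}^{s}\,\tvnorm{\mu}^{1-s}$ to get $V^{s}$-growth of $\widehat A$ for arbitrarily small $s>0$, or invoke the paper's own subgeometric machinery (\Cref{lem:poly_drift_condition}, \Cref{propo:subgeo_convergence_fort_moulines}, \Cref{propo:application_rosenthal}), which shows the Poisson solution of a $W^\beta$-dominated function is dominated by $W^{\beta+1-\delta}$ and hence has moments of every order. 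With that substitution, and with the moment budget tracked so that \eqref{eq:J0_main} is used at order $8p\le\LL/2$ inside the $H_n^{(1)}$ step (this is exactly why $p\le\LL/16$), your argument goes through.
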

The theorem shows that the previous bound of $\PE_z^{1/p}[ \normop{H_n^{(0)}}^p ] = {\cal O}( \sqrt{\alpha_n} )$ can be improved to ${\cal O}( \sqrt{ \alpha_{n} {\cal A}_n \log(1/\alpha_n)} )$. Take for example a diminishing step size as $\alpha_n = \Const{a} (n+n_0)^{-1}$,
our result shows that the fluctuation term admits a \emph{clear separation of scales} as
\begin{equation*}
    \vtheta_n = J_n^{(0)} + H_n^{(0)} ~~\text{with}~~ \PE_z^{1/p}[ \normop{J_n^{(0)}}^p ] = {\cal O}( n^{-1 / 2} ),~~\PE_z^{1/p}[ \normop{H_n^{(0)}}^p ] = {\cal O}( n^{-1 } \sqrt{ \log n } ) .
\end{equation*}
\paragraph{Proof Sketch}
We study $J_{n+1}^{(1)}$ first. By \eqref{eq:expansion_recur_gen} and the definition of $J_n^{(0)}$ in \eqref{eq:jn0_main}, we obtain
\begin{equation} \label{eq:Sdef_expansion_main} \textstyle
    J_{n+1}^{(1)} = \sum_{j=1}^{n} \alpha_j S_{j+1:n+1} \funnoise{Z_j}, ~~\text{with} ~~ S_{j+1:n+1} = \sum_{k=j+1}^{n+1} \alpha_k G_{k+1:n} \funcAt{Z_k} G_{j+1:k-1} \eqsp.
\end{equation}
For illustrative purpose, in this proof sketch we will only consider the case when $\{ Z_i \}_{i \geq 1}$ are i.i.d.. Here, we have $\PE[S_{j+1:n+1} \funnoise{Z_j}|Z_{j+1}, \ldots Z_{n+1}] = 0$ and therefore $J_{n+1}^{(1)}$ is a Martingale. It follows:
{\small \begin{equation} \label{eq:jn1_rough}
\PE^{1/p}[\norm{J_{n+1}^{(1)}}^p] \overset{(a)}{\lesssim}   \sqrt{\sum_{j=1}^{n} \alpha_{j}^2 \PE^{2/p}[\norm{S_{j+1:n+1}}^p]} \overset{(b)}{\lesssim}   \sqrt{\sum_{j=1}^{n+1} \alpha_{j}^2 \mathcal A_{j} \prod_{\ell=j+1}^{n+1}(1 - a \alpha_\ell)} \lesssim \sqrt{\alpha_{n+1} \mathcal A_{n+1}},
\end{equation}}where (a) applied the Burkholder inequality~\citep[Theorem 2.10]{hallheydebook} for Martingales, and (b) can be obtained by applying the Rosenthal inequality for i.i.d. random variables to the expectation $\PE^{1/p}[\norm{S_{j+1:n+1}}^p]$ \citep[Theorem 2.12]{hallheydebook}.

Furthermore, we observe that $H_{n+1}^{(1)} = \sum_{j=1}^{n+1} \alpha_j \ProdB_{j+1:n+1} \funcAt{Z_j} J_{j-1}^{(1)}$. Similar to \eqref{eq:holder_Hn0}, we can apply \eqref{eq:jn1_rough} and the H\"{o}lder inequality to obtain $\PE^{1/p}[ \normop{ H_{n+1}^{(1)} }^p ] = {\cal O}(\sqrt{\alpha_{n+1} \mathcal A_{n+1}})$. Combining both bounds yields the conclusion of the theorem.

Unfortunately, in the Markovian case we cannot apply the same arguments directly since $J_{n+1}^{(1)}$ is no longer a martingale. Instead, we first decouple the dependent random variables $\funnoise{Z_j}$ and $S_{j+1:n+1}$. This is done in Lemma~\ref{lem: J1est} in the appendix by using the Berbee's coupling construction exploiting the fact that $V$-uniformly ergodic Markov chains are special cases of $\beta$-mixing processes \citep{riobook}.
We leave the detailed derivations in the appendix for interested readers.

\subsection{Temporal Difference Learning Algorithms}
Following the notation from \cite[Chapter~12]{sutton:book:2018}, we
consider a discounted Markov Reward Process (MRP) denoted by the tuple $( \Xset, \MKQ, \gain, \gamma )$, where $\MKQ$ is the state transition kernel defined on a general state space $(\Xset,\Xsigma)$. We do not assume that $\Xset$ is finite and countable, the only requirement being that $\Xsigma$ is countably generated: we may assume for example that $\Xset=\rset^d$.  For any given state $x \in \Xset$, the scalar $\gain(x)$ represents the reward of being at the state $x$. The reward function is possibly unbounded. Finally, $\gamma \in (0,1)$ is the discount factor.
The value function $V^\star : \Xset \to \rset$ is defined as the expected discounted reward $V^\star(x) = \PE_x [ \sum_{k=0}^\infty \gamma^k \gain( X_k ) ]$.

Let $d \in \nsets$, we associate with every state $x \in \Xset$ a \emph{feature vector} $\psi(x) \in \rset^d$ and approximate $V^\star(x)$ by a linear combination $V_\theta(x)= \psi(x)^{\top} \theta$ (see \cite{tsitsiklis:td:1997,sutton:book:2018}).
Temporal difference learning algorithms may be expressed as
\begin{equation}
\label{eq:TD-lambda}
\theta_{k+1} = \theta_k + \alpha_{k+1} \eligibility_k \{ \gain(X_k) + \gamma \psi(X_{k+1})^\top \theta_k - \psi(X_k)^\top \theta_k \} , \\
\end{equation}
where $\{\eligibility_k \}_{k \in \nset}$ is a sequence of eligibility vectors. For the TD(0) algorithm, $\eligibility_k = \psi(X_k)$. For the TD($\lambda$) algorithm,  $\varphi_k= (\lambda \gamma) \varphi_{k-1} + \psi(X_k)$.
Note that for TD($\lambda$), \eqref{eq:TD-lambda} corresponds to \eqref{eq:lsa} with the extended Markov chain $Z_k = (X_k,X_{k+1},\varphi_k)$ and $\bA(Z_k)=-\eligibility_k(\psi(X_k)^\top - \gamma \psi(X_{k+1})^\top)$, $b(Z_k) =\eligibility_k \gain(X_k)$.
\cite{srikant:1tsbounds:2019} were able to study TD$(\lambda)$ while that $(Z_k)_{k \in \nsets}$ is not necessary uniformly ergodic. Indeed, a core argument in their application is the use of \cite[Lemma 6.7]{bertsekas:tsitsiklis:96} which implies that if $\msz$ is a finite state space and $(X_k)_{k \in\nset}$ is uniformly ergodic, then $\norm{\PE_z[\bA(Z_k)] -A } \leq C \rho^k$ and $\norm{\PE_z[b(Z_k)] -b} \leq C \rho^k$, for any $z \in \Zset$, $k \in \nsets$ and for some $C \geq 0$, $\rho \in \ooint{0,1}$. This is precisely the condition considered by \cite{srikant:1tsbounds:2019} to derive their bounds.  Obviously \cite[Lemma 6.7]{bertsekas:tsitsiklis:96} does not extend to general (unbounded) state space. 

As a replacement, to verify our assumption \Cref{assum:drift}, we consider here a $\tau$-truncated version of the eligibility trace
\begin{equation}
\label{eq:tau-truncation} \txts
\eligibility_k=  \phi_\tau(X_{k-\tau+1:k})   \quad \text{where} \quad  \phi_\tau(x_{0:\tau-1} )  = \sum_{s=0}^{ \tau-1} (\lambda \gamma)^s \psi (x_{\tau-1-s}) \eqsp.
\end{equation}
TD(0) algorithm is a special case of \eqref{eq:tau-truncation} with $\tau=1$ and we recover the TD($\lambda$) algorithm by letting $\tau \rightarrow \infty$. 
The recursion \eqref{eq:TD-lambda} with eligibility vector  defined in \eqref{eq:tau-truncation} is  a special case of \eqref{eq:lsa}. To see this, we define $\State_k= [X_{k-\tau},\dots,X_{k}]^{\top}$ and observe that \eqref{eq:TD-lambda} can be obtained by using in \eqref{eq:lsa} the following matrix/vector, for $z = [x_0,\dots,x_\tau]^{\top}= x_{0:\tau} \in \Xset^{\tau+1}$,
\begin{equation}
\label{eq: TD0 matrix}
    \funcA{z} = \phi_\tau(x_{0:\tau-1}) \{ \psi(x_{\tau-1}) - \gamma \psi(x_\tau) \}^\top, \quad \funcb{z} = \phi_\tau(x_{0:\tau-1}) \gain(x_{\tau-1}) \eqsp.
\end{equation}
Consider the following assumptions.
\begin{assumSUP}
\label{assum:markovQ}
The  Markov kernel $\MKQ: \Xset \times \mathcal X \to \rset_+$ is irreducible and strongly aperiodic. There exist $c > 0, \bb > 0, \delta \in (1/2,1]$, $R_0 \geq 0$,  and $\tilde V: \Xset \to \coint{\rme,\infty}$ such that by setting $\tilde W = \log \tilde V$, $\msc_0 = \{x : \tilde W(x) \leq R_0\}$, $\msc_0^{\complement} = \{x : \tilde W(x) > R_0\}$, we have
\begin{equation}
\label{eq:drift-condition-improv_Q}
\MKQ \tilde V(x) \le \exp\parentheseDeuxLigne{-c \tilde W^{\delta}(x)} \tilde V(x)  \indi{\msc_0^{\complement}}(x) + \bb \indi{\msc_0}(x) \eqsp,
\end{equation}
in addition, for any $R \geq 1$, the level sets $\{ x : \tilde W(x) \leq R\}$ are $(1,\varepsilon_R\nu)$-small for $\MKQ$, with $\varepsilon_R \in \ocint{0,1}$ and $\nu$ being a probability measure on $(\Xset,\mathcal{X})$.\vspace{-.2cm}
\end{assumSUP}
It follows from~\citep[Theorem 15.2.4]{douc:moulines:priouret:2018} that
the Markov kernel $\MKQ$ admits a unique stationary distribution $\pi_0$.
\begin{assumSUP}
\label{assum:markov2}
$\pi_0(\psi \psi^\top)$ is positive definite.
\end{assumSUP}
In the following, we show that under \Cref{assum:markovQ}, \Cref{assum:markov2}, the TD($\lambda$) algorithm with truncated eligibility trace \eqref{eq:TD-lambda} satisfies the assumptions in \Cref{sec:lsa}. In this case, the state-space is set to be $\Zset=\Xset^{\tau+1}$
and the Markov kernel $\MK$ is given, for any $z=x_{0:\tau} \in  \Xset^{\tau+1}$, by
\begin{equation}
\label{eq: trans kernel for TD} \txts
 \MK(x_{0:\tau}; \rmd x'_{0:\tau})= \prod_{\ell=1}^\tau \delta_{x_\ell}(\rmd x'_{\ell-1})\MKQ(x_{\tau}, \rmd x'_{\tau}) \eqsp,
\end{equation}
where $\delta_{x}$ denotes the Dirac measure at $x\in \Xset$.
\begin{enumerate}[noitemsep, leftmargin=5mm]
 \item It follows from \Cref{lem: tdlambda irredu} that $\MK$ is irreducible, aperiodic and has a unique invariant distribution $\pi(\rmd x_{0:\tau})= \pi_0(\rmd x_0) \prod_{\ell=1}^{\tau} \MKQ(x_{\ell-1}, \rmd x_\ell)$. By \Cref{lem: tdlambda drift}, the super-Lyapunov drift condition \eqref{eq:drift-condition-improv} is satisfied with
\begin{align*} \txts
    V(x_{0:\tau}) = \exp \left( c_0 \sum_{i=0}^{\tau-1}(i+1)\tilde{W}^{\delta}(x_i) + \tilde{W}(x_\tau)\right),
\end{align*}
where $c_0$ is defined in \eqref{eq:const_c_0_def_td_lambda}. Hence, \Cref{assum:drift} is verified.
    \item
    Let $\norm{\psi(x)} \le \Const{\psi} W^{\beta/2}(x)$ and for $\LL \geq 1$, $|\gain(x)|\le \Const{\gain, \LL} V^{1/2\LL}(x)$, where $\Const{\psi}, \Const{\gain, \LL}>0$ are some constants.  Then \Cref{assum:almost_bounded} and \Cref{assum:funcb}($\LL$) are satisfied with
\begin{align}
  \bConst{A} = (1 + \gamma) \Const{\psi}^2 / (1-\lambda \gamma), \quad  \bConst{b, \LL} = \Const{\gain, \LL} \Const{\psi} (\beta \LL / \rme)^{\beta/2}/(1-\lambda \gamma).
\end{align}
\item Eq.~\eqref{eq: TD0 matrix} implies
\[ \txts
A = \sum_{\ell=0}^{\tau-1}\PE_{\pi_0}[\psi(X_{\tau-1-\ell}) \{ \psi(X_{\tau-1}) - \gamma \psi(X_{\tau}) \}^{\top}] \eqsp.
\]
Assumption \Cref{assum:Hurwitzmatrices} follows from \Cref{lem: hurwitz TD} in the appendix.
\end{enumerate}
Collecting the above results shows that the assumptions required by \Cref{th:approximation_error} are satisfied, thereby proving that the ${\rm L}_p$ error of TD($\lambda$) algorithm \eqref{eq:TD-lambda} (with truncated eligibility trace) converges according to the rate specified in \eqref{eq:approx_bound_main}.

\paragraph{Conclusions} We have established the $({\rm V},q)$-exponential stability of the sequence of random matrices $\{ \funcA{Z_k} \}_{k \in \nsets}$ under relaxed conditions on the Markov chain and the matrix functions. The results are applied to obtain finite-time $p$-th moment bounds of LSA error, and a family of TD learning algorithms.

\newpage
\bibliography{references,moulines}

\newpage
\appendix

\section{Formal statement and Proof for \Cref{ex:counterexample}}
\label{sec:proof:theo:counterexample}
\begin{proposition}
  \label[proposition]{theo:counterexample}
Consider the Markov chain $(\State_k)_{k \in\nset}$ defined by $\State_{k+1}=   \State_k -1$,  if $\State_k >1$ and $  \State_{k+1}= \rmY_{k+1}$, if $\State_k=1$ with $\State_0 = 1$, where  $(\rmY_k)_{k \in\nset}$ is an \iid\ sequence and  $\rmY_k \in \Stateset$. Consider the sequence $(\theta_n^{\varepsilon})_{n\in\nset}$ starting from $\theta_0 >0$ and defined by the recursion $\theta_{n+1}^{\varepsilon} = \{1-\alpha A_{\varepsilon}(\State_{n+1})\} \theta_n^{\varepsilon}$ with $\alpha, \varepsilon >0$ and $A_{\varepsilon}$ given by 
\begin{equation}
  \label{eq:def:A_vareps}
    A_{\varepsilon}(z)= \begin{cases}
    1, & \text{ if $z = 1$ \eqsp,} \\
    -\varepsilon, & \text{ otherwise}  \eqsp.
    \end{cases}
\end{equation} 
Then there exists $\bvarepsilon \in \ooint{0,1}$ such that for any $\alpha \in \ooint{0,1}$, $\limsup_{n \to \plusinfty} \expeLigne{\absLigne{\theta_n^{\bvarepsilon}}} = \plusinfty$.
\end{proposition}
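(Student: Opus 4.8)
The plan is to instantiate the forward recurrence time chain with a heavy, non-geometric innovation law, so that the chain is positive recurrent (and hence admits a stationary $\pi$) yet fails to be geometrically ergodic, and then to show that a \emph{single} long excursion already forces the expectation to diverge. Concretely, I would fix some $s > 2$ and take $\PP(\rmY_1 = z) = c_s z^{-s}$ for $z \in \Stateset$ with $c_s = 1/\zeta(s) > 0$. Then $\PP(\rmY_1 = z) > 0$ for every $z$, and $m = \sum_{z} z\, \PP(\rmY_1 = z) = c_s \zeta(s-1) < \plusinfty$, so by \citet[Proposition 8.1.5]{douc:moulines:priouret:2018} there is a unique stationary distribution $\pi$ with $\pi(1) = 1/m \in \ooint{0,1}$. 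I then set $\bvarepsilon$ to be any value in $\ooint{0,\pi(1)} \subset \ooint{0,1}$, the range for which \Cref{assum:almost_bounded} and \Cref{assum:Hurwitzmatrices} hold, as noted in \Cref{ex:counterexample}.

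The first step is to record the iterate as a product and to exploit the positivity of its factors. Unrolling the recursion gives $\theta_n^{\varepsilon} = \theta_0 \prod_{j=1}^{n}\{1 - \alpha A_{\varepsilon}(\State_j)\}$, and since $\theta_0 > 0$ while for $\alpha \in \ooint{0,1}$ and $\varepsilon > 0$ each factor lies in $\{1-\alpha,\, 1 + \alpha\varepsilon\} \subset \ooint{0,\plusinfty}$, the product is positive for every realization. Consequently $\absLigne{\theta_n^{\varepsilon}} = \theta_n^{\varepsilon}$, and restricting the expectation to any single event yields a valid lower bound with no risk of cancellation; this positivity is precisely where $\alpha < 1$ is used.

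The key step is the analysis of one excursion. Starting from $\State_0 = 1$, on the event $\{\rmY_1 = n\}$ the trajectory is deterministic, $\State_1 = n,\, \State_2 = n-1,\, \ldots,\, \State_{n-1} = 2$ and $\State_n = 1$, so exactly $n-1$ factors equal $1 + \alpha\bvarepsilon$ and a single factor equals $1 - \alpha$. Hence on this event the product equals $(1 + \alpha\bvarepsilon)^{n-1}(1-\alpha)$, and keeping only this contribution gives
\[
\expeLigne{\absLigne{\theta_n^{\bvarepsilon}}} \ge \theta_0\,(1-\alpha)\,(1+\alpha\bvarepsilon)^{n-1}\,\PP(\rmY_1 = n) = \theta_0\,(1-\alpha)\,c_s\,(1+\alpha\bvarepsilon)^{n-1}\, n^{-s}\eqsp.
\]
For every fixed $\alpha \in \ooint{0,1}$ the base $1 + \alpha\bvarepsilon$ exceeds $1$, so the exponential factor dominates the polynomial decay $n^{-s}$ and the right-hand side diverges; in particular $\limsup_{n \to \plusinfty} \expeLigne{\absLigne{\theta_n^{\bvarepsilon}}} = \plusinfty$, which is the claim.

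I do not anticipate a serious obstacle: the argument is elementary once the excursion structure is identified. The only points requiring attention are the calibration of the tail exponent — it must be heavy enough that the chain is not geometrically ergodic, yet summable against $z$ so that $m < \plusinfty$ and $\pi$ exists (any $s \in \ooint{2,\plusinfty}$, indeed any subexponential tail, works) — and the positivity observation that lets us discard all excursion patterns other than $\{\rmY_1 = n\}$. It is worth remarking that the same one-excursion bound clarifies the role of geometric ergodicity: had $\rmY_1$ possessed exponential tails, $\PP(\rmY_1 = n)$ would decay geometrically and this lower bound would vanish, consistently with \Cref{th:expconvproducts}.
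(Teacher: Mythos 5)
Your one-excursion lower bound is correct, and it is essentially the same mechanism the paper uses: the paper restricts to the event $\{\rmY_1 > n+1\}$, on which all $n$ factors equal $1+\alpha\bvarepsilon$, giving $\expeLigne{\absLigne{\theta_n^{\bvarepsilon}}} \geq \theta_0 (1+\alpha\bvarepsilon)^n \PP(\rmY_1 > n+1)$; your event $\{\rmY_1 = n\}$ differs only in bookkeeping. The genuine gap is one of scope. In the proposition the chain is \emph{given}: the iid sequence $(\rmY_k)_{k\in\nset}$ is arbitrary, subject to the hypothesis (left implicit in the statement but explicit in the paper's proof and in \Cref{ex:counterexample}) that $(\State_k)_{k\in\nset}$ is \emph{not geometrically ergodic}; the conclusion must then hold for every such chain. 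You instead choose the innovation law yourself, $\PP(\rmY_1=z)=c_s z^{-s}$, so you prove only that \emph{some} chain in the family is unstable. That existence statement shows the geometric ergodicity hypothesis of \Cref{th:expconvproducts} cannot simply be deleted, but it does not deliver what \Cref{ex:counterexample} asserts, namely that failure of geometric ergodicity \emph{always} destroys exponential stability within this family --- the "necessity" direction that motivates the example.

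The idea you are missing is the characterization that makes the general argument possible: by \cite[Theorem~15.1.5]{douc:moulines:priouret:2018}, for the forward recurrence time chain, non-geometric ergodicity forces $\expeLigne{(1+\upeta)^{\rmY_1}} = \plusinfty$ for every $\upeta>0$. The paper then argues by contradiction: if $\sup_n\,(1+\alpha\bvarepsilon)^n \PP(\rmY_1 \geq n) = M < \plusinfty$, then for any $\varepsilon \in \ooint{0,\alpha\bvarepsilon}$ a geometric-series comparison gives $\expeLigne{(1+\varepsilon)^{\rmY_1}} \leq M \sum_{k\geq 1} [(1+\varepsilon)/(1+\alpha\bvarepsilon)]^k < \plusinfty$, contradicting the absence of exponential moments; hence $\limsup_n (1+\alpha\bvarepsilon)^n \PP(\rmY_1 \geq n) = \plusinfty$, and the excursion bound concludes for \emph{any} admissible law of $\rmY_1$. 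Note that your own bound upgrades the same way: boundedness of $(1+\alpha\bvarepsilon)^{n}\PP(\rmY_1 = n)$ would equally imply exponential moments, so your argument combined with this contrapositive step (in place of the hard-coded power law) yields the full statement.
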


\begin{proof}
  Assume that $(\State_k)_{k \in\nset}$ is not geometrically ergodic and let $\alpha >0$.
  First for any $\varepsilon >0$,
  $\int_{\Stateset} A_{\bvarepsilon}(x) \rmd \pi(x) = -\varepsilon
  [1-\pi(\{1\})] + \pi(\{1\})$. Then for any
  $\varepsilon \in\ooint{0,\bvarepsilon}$, setting $\bvarepsilon =\pi(\{1\})$, we get that   $\int_{\Stateset} A_{\bvarepsilon}(z) \rmd \pi(z) >0$.
  In addition, we have by definition of $(\theta_n^{\bvarepsilon})_{n \in\nset}$,
  \begin{equation}
    \label{eq:eq:proof:counterexample}
    \expe{\absLigne{\theta_n^{\bvarepsilon}}} \geq \theta_0 (1+\alpha \bvarepsilon)^n \PP(\rmY_1 > n+1)\eqsp.
  \end{equation}

  By \cite[Theorem~15.1.5]{douc:moulines:priouret:2018}, 
  $(\State_k)_{k \in\nset}$ is not geometrically ergodic and for any
  $\upeta >0$, $\expe{(1+\upeta)^{\rmY_1}} = \plusinfty$. Therefore,
  $\limsup_{n\to \plusinfty} [(1+\alpha\bvarepsilon)^n \PP(\rmY_1 \geq n)] =
  \plusinfty$, otherwise we would obtain that for any $\varepsilon \in \ooint{0, \alpha \bvarepsilon}$, $\expe{(1+\varepsilon)^{\rmY_1}} \leq  \sup_{n\in \nset} [(1+\alpha\bvarepsilon)^n \PP(\rmY_1 \geq n)] \sum_{k=1}^{\plusinfty} [(1+\varepsilon)/(1+\alpha\bvarepsilon)]^k < \plusinfty$, which is absurd. Applying this result to  \eqref{eq:eq:proof:counterexample} completes the proof.
\end{proof} 

\section{Super-Lyapunov drift conditions \Cref{assum:drift}}
We gather the technical results needed for the proof of our main theorems.
Define 
\begin{align}
  \label{eq:def_msc_R}
  \msc_{R} &= \{ z \in \msz \ , : \, W(z) \geq R \} \eqsp, \text{ for any $R \geq 0$} \eqsp, \\
  \label{eq:def_varphi_delta}
  \varphi_{\delta}&: z \mapsto  cW^{\delta}(z) \eqsp. 
\end{align}

\begin{lemma}
  \label[lemma]{lem:drift_conditions_technical}
  Assume \Cref{assum:drift}. Then for any $n \in \nset$, we have
  \begin{equation}
      \label{eq:lem:drift_conditions_technical}
    \MK^n V(z)  \txts \leq \lambda^n V(z) + \bb/(1-\lambda)\eqsp, \quad 
    \MK^n V(z)   \leq   \rme^{-\varphi_{\delta}(z)} V(z) + [\bb/(1-\lambda)] \1_{\msc_{R_1}}(z) \eqsp,
  \end{equation}
  where $\lambda$ is defined in \eqref{eq:def_lambda}, $\msc_R$ in \eqref{eq:def_msc_R} and
  \begin{equation}
    \label{eq:def_R_1}
R_1 =     \inf\defEnsLigne{R\geq R_0 \, : \,  \exp\parentheseLigne{R-cR^{\delta}} > [\bb/(1-\lambda)^2]} \eqsp. 
  \end{equation}
\end{lemma}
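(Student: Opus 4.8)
The plan is to establish the two estimates in \eqref{eq:lem:drift_conditions_technical} separately, deriving the geometric one first and then bootstrapping it into the multiplicative one.

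\emph{First bound.} I would start from the classical Foster--Lyapunov inequality recorded below \eqref{eq:def_lambda}, namely $\MK V \le \lambda V + \bb \1_{\msc_0}$ with $\msc_0=\{W\le R_0\}$ and $\lambda = \rme^{-c\inf_{\msc^{\complement}_0}W^{\delta}}<1$. Bounding $\1_{\msc_0}\le 1$ gives $\MK V \le \lambda V + \bb$, and then a one-line induction on $n$ does the job: assuming the bound for $n-1$ and applying $\MK$ (which preserves constants, being a Markov kernel) gives $\MK^n V \le \lambda^{n-1}\MK V + \bb\sum_{k=0}^{n-2}\lambda^k \le \lambda^n V + \bb\sum_{k=0}^{n-1}\lambda^k$. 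Summing the geometric series and bounding $\sum_{k=0}^{n-1}\lambda^k\le (1-\lambda)^{-1}$ yields the first inequality. This step is entirely routine.

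\emph{Second bound.} For $n=1$ the claim is just the one-step super-Lyapunov drift \eqref{eq:drift-condition-improv}: on $\msc^{\complement}_0=\{W>R_0\}$ it reads $\MK V(z)\le \rme^{-\varphi_\delta(z)}V(z)$, while on $\{W\le R_0\}$ it gives $\MK V(z)\le \bb\le \bb/(1-\lambda)$, and since $R_0\le R_1$ this is covered by the additive term allowed there. For $n\ge 2$ the idea is to peel off a single step, writing $\MK^n V = \MK(\MK^{n-1}V)$, controlling the inner factor by the first bound as $\MK^{n-1}V \le \lambda^{n-1}V + \bb/(1-\lambda)$, and then applying $\MK$ once more so that the \emph{super}-drift \eqref{eq:drift-condition-improv} can be invoked on this last application, producing the factor $\rme^{-\varphi_\delta(z)}$ in terms of the starting point $z$. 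This yields $\MK^n V(z) \le \lambda^{n-1}\MK V(z) + \bb/(1-\lambda)$, hence, after the super-drift, $\MK^n V(z)\le \lambda^{n-1}\rme^{-\varphi_\delta(z)}V(z)\1_{\{W>R_0\}}(z) + \lambda^{n-1}\bb\1_{\{W\le R_0\}}(z)+\bb/(1-\lambda)$.

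It remains to split over three regions and decide when the additive constant may be dropped. On $\{W\le R_0\}$ I would instead use the first bound directly, noting $\rme^{-\varphi_\delta(z)}=\rme^{-cW^{\delta}(z)}\ge \rme^{-cR_0^{\delta}}=\lambda\ge\lambda^n$, so that $\lambda^n V(z)\le \rme^{-\varphi_\delta(z)}V(z)$ and the constant $\bb/(1-\lambda)$ is simply retained. On the annulus $\{R_0<W\le R_1\}$, bounding $\lambda^{n-1}\le 1$ in the displayed estimate already gives $\rme^{-\varphi_\delta(z)}V(z)+\bb/(1-\lambda)$, again with the constant retained. The crux is the strict tail $\{W>R_1\}$, where the constant must disappear: here I would use $1-\lambda^{n-1}\ge 1-\lambda$ (valid for $n\ge 2$) together with the defining property of $R_1$ in \eqref{eq:def_R_1}, namely $\rme^{-\varphi_\delta(z)}V(z)=\rme^{W(z)-cW^{\delta}(z)}\ge \bb/(1-\lambda)^2$ for $W(z)>R_1$, so that $(1-\lambda^{n-1})\rme^{-\varphi_\delta(z)}V(z)\ge \bb/(1-\lambda)$ absorbs the additive term. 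This absorption is the main obstacle, and it is exactly what dictates the threshold $\bb/(1-\lambda)^2$ in \eqref{eq:def_R_1}; it also requires that $R\mapsto R-cR^{\delta}$ be nondecreasing beyond $R_1$ (its derivative tends to $1$ as $R\to\infty$ for $\delta\in(1/2,1]$), so that the inequality guaranteed at $R_1$ propagates to all larger values of $W(z)$. Collecting the three regions gives the second inequality, the additive constant surviving precisely on the sublevel set $\msc_{R_1}=\{W(z)\le R_1\}$ and being dropped on the tail $\{W(z)>R_1\}$.
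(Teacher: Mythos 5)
Your proposal is correct and follows essentially the same route as the paper's proof: the identical induction for the geometric bound, and for the multiplicative bound the same key step $\MK^n V(z)\le \lambda^{n-1}\MK V(z)+\bb/(1-\lambda)$ followed by the one-step super-drift at $z$, with the additive constant absorbed on the tail using $1-\lambda^{n-1}\ge 1-\lambda$ and the defining property of $R_1$ (your explicit three-region split and the separate $n=1$ case are just a more detailed presentation of what the paper does implicitly). The only harmless imprecision is the claim $\rme^{-cR_0^{\delta}}=\lambda$: since $\lambda=\exp(-c\inf_{\msc_0^{\complement}}W^{\delta})$ one only has $\rme^{-cR_0^{\delta}}\ge\lambda$, but this is the direction your chain of inequalities actually needs.
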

\begin{proof}
  We first show the left-hand side inequality in \eqref{eq:lem:drift_conditions_technical}. First, \Cref{assum:drift} and \eqref{eq:def_lambda} shows that $\MK V \leq \lambda V + \bb$ which implies by a straightforward induction that for any $n \in \nset$,
  \begin{equation}
\label{eq:lem:drift_conditions_technical_proof}
    \MK^n V \leq \lambda^n V(z) + \bb \sum_{k=0}^{n-1} \lambda^k \eqsp. 
  \end{equation}
  Using $\sum_{k=0}^{n-1} \lambda^k \leq (1-\lambda)^{-1}$ completes the proof.

  We now show the right-hand side inequality of \eqref{eq:lem:drift_conditions_technical}. \eqref{eq:lem:drift_conditions_technical_proof}  applied for $n-1 \in \nset$ and \eqref{eq:drift-condition-improv} implies that $\MK^n V(z) \leq \lambda^{n-1} \MK V(z) + \bb \sum_{k=0}^{n-2} \lambda^k \leq \rme^{-\varphi_{\delta}(z)} \lambda^{n-1} V(z) + \bb \sum_{k=0}^{n-1} \lambda^k\leq \rme^{-\varphi_{\delta}(z)} V(z) - \rme^{-\varphi_{\delta}(z)} (1-\lambda) V(z) + \bb[1-\lambda]^{-1}$. Then, using by definition of $R_1$ that for any $z \in \msc_{R_1}^{\complement}$, $\rme^{-\varphi_{\delta}(z)} V(z)  \geq \bb[1-\lambda]^{-2}$ completes the proof.
\end{proof}

\begin{lemma}
\label{lem:poly_drift_condition}
Assume \Cref{assum:drift}. Then, for any $\upgamma > 0$, 
\begin{equation}
\label{eq:poly_drift_condition}
\MK W^{\upgamma + 1-\delta}(z) \leq  W^{\upgamma + 1-\delta}(z) - \cupgamma W^{\upgamma}(z)  + \bupgamma \1_{\msc_{\Rupgamma}}(z)\eqsp,
\end{equation}
where the constants $\cupgamma, \Rupgamma$ and $\bupgamma$ are given by: if $ \upgamma \leq \delta$,
\begin{equation}
\Rupgamma = R_0\eqsp,  \qquad \cupgamma = 1\wedge[(\upgamma + 1-\delta) c]  \eqsp, \qquad \bupgamma = \log^{\upgamma + 1-\delta}{\bb}\eqsp,
\end{equation}
and if $\upgamma  > \delta$,
\begin{equation}
\begin{aligned}
\label{eq:constants_poly_drift_large_beta}
&\Rupgamma = R_0 \vee (2(\upgamma + 1-\delta)/c)^{1/\delta} \vee c^{1/(\delta-1)} \eqsp, \quad \bupgamma = \log^{\upgamma+1-\delta}\parentheseDeux{(\bb +\rme^{\upgamma-\delta}) \vee (\exp(\Rupgamma + \rme^{\upgamma-\delta}) )} \\
&\cupgamma = 1\wedge[(\upgamma + 1-\delta) (1-c\Rupgamma^{\delta-1}/2)^{\upgamma-\delta}(c/2)] \eqsp.  
\end{aligned}
\end{equation}
\end{lemma}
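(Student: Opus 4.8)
The plan is to transfer the multiplicative drift of \Cref{assum:drift} onto the polynomial test function $W^{\upgamma+1-\delta}$ by first extracting a \emph{first-order} drift for $W=\log V$ and then composing it with the power map, distinguishing the regimes in which $w\mapsto w^{\upgamma+1-\delta}$ is concave and convex. Throughout write $\eta=\upgamma+1-\delta$, so the target drift coefficient multiplies $W^{\upgamma}=W^{\eta+\delta-1}$. The common starting point is Jensen's inequality for the concave map $\log$, which gives $\MK W(z)=\PE_z[\log V(Z_1)]\le\log\MK V(z)$. Combined with \Cref{assum:drift} (equivalently \Cref{lem:drift_conditions_technical}) this yields the first-order drift $\MK W(z)\le W(z)-cW^{\delta}(z)$ on $\{W>R_0\}$, while on the sublevel set $\{W\le R_0\}$ the bound $\MK V(z)\le\bb$ gives the crude estimate $\MK W(z)\le\log\bb$. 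These two facts drive both cases.

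For $\upgamma\le\delta$ we have $\eta\le1$ and $w\mapsto w^{\eta}$ is concave, hence lies below its tangent: $a^{\eta}\le b^{\eta}+\eta b^{\eta-1}(a-b)$ for $a,b\ge1$. Taking $a=W(Z_1)$, $b=W(z)$ and integrating against $\MK(z,\cdot)$ (the coefficient $\eta W^{\eta-1}(z)\ge0$ is deterministic given $z$) gives $\MK W^{\eta}(z)\le W^{\eta}(z)+\eta W^{\eta-1}(z)\,(\MK W(z)-W(z))$. On $\{W>R_0\}$ the first-order drift then produces the clean bound $\MK W^{\eta}(z)\le W^{\eta}(z)-\eta c\,W^{\eta+\delta-1}(z)=W^{\eta}(z)-\eta c\,W^{\upgamma}(z)$, so $\cupgamma=\eta c=(\upgamma+1-\delta)c$; capping it by $1$ (the $1\wedge$) ensures $W^{\eta}-\cupgamma W^{\upgamma}\ge0$ for $W\ge1$. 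On $\{W\le R_0\}$, Jensen applied once more to the concave power gives $\MK W^{\eta}(z)\le(\MK W(z))^{\eta}\le(\log\bb)^{\eta}$, absorbed into the additive term with $\bupgamma=\log^{\upgamma+1-\delta}\bb$ and $\Rupgamma=R_0$, as claimed.

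The case $\upgamma>\delta$ (so $\eta>1$) is the main obstacle, because the power is now convex and the tangent inequality points the wrong way. Here I would instead apply Jensen in the $V$ variable to $h(t)=(\log t)^{\eta}$, whose second derivative shows it is concave exactly for $\log t\ge\eta-1=\upgamma-\delta$ and convex below; this threshold is the source of the $\rme^{\upgamma-\delta}$ factors in $\bupgamma$. Splitting $\PE_z[W^{\eta}(Z_1)]$ at the concavity threshold, the non-concave part $\indiacc{W(Z_1)\le\upgamma-\delta}$ contributes at most $(\upgamma-\delta)^{\eta}$, while on the concave part one majorizes $h$ by a globally concave function and uses Jensen to obtain, on $\{W>R_0\}$, a bound of the form $(\log\MK V(z))^{\eta}\le(W(z)-cW^{\delta}(z))^{\eta}=W^{\eta}(z)\,(1-cW^{\delta-1}(z))^{\eta}$. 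The elementary inequality $(1-x)^{\eta}\le1-\eta(1-x_0)^{\eta-1}x$, valid for $0\le x\le x_0<1$ and $\eta\ge1$, applied with $x=cW^{\delta-1}(z)$ and $x_0=c\Rupgamma^{\delta-1}/2$, then extracts a drift $-\eta c(1-c\Rupgamma^{\delta-1}/2)^{\upgamma-\delta}W^{\upgamma}$, matching the stated $\cupgamma$.

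It remains to collect terms and fix $\Rupgamma$. The clean drift on $\{W>\Rupgamma\}$ demands that the threshold be large enough that (i) $cW^{\delta-1}$ stays below $x_0$, (ii) the truncation constant $(\upgamma-\delta)^{\eta}$ and the convexity remainder are each dominated by half of the principal drift $\tfrac12\eta cW^{\upgamma}$, and (iii) $\MK V(z)\ge\rme^{\upgamma-\delta}$ so that the concave branch of $h$ is in force; these three requirements yield the three terms in $\Rupgamma=R_0\vee(2(\upgamma+1-\delta)/c)^{1/\delta}\vee c^{1/(\delta-1)}$ and explain why only the fraction $c/2$ of the drift survives. On the complementary sublevel set $\{W\le\Rupgamma\}$ one bounds $\MK W^{\eta}$ crudely through $\log^{\eta}\MK V$ together with \Cref{lem:drift_conditions_technical}, producing the additive constant $\bupgamma$ with the stated $\exp(\Rupgamma+\rme^{\upgamma-\delta})$ factor. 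The delicate point throughout the convex regime is precisely this balancing of the truncation constant and the convexity remainder against a fixed fraction of the genuine $W^{\upgamma}$ drift, which is what forces both the loss of a factor $2$ and the large threshold $\Rupgamma$ that are absent when $\upgamma\le\delta$.
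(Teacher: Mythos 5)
Your treatment of the case $\upgamma \le \delta$ is correct and is essentially the paper's own argument: the paper applies Jensen's inequality once to the globally concave map $t \mapsto \log^{\upgamma+1-\delta} t$ and then the tangent bound $(1-x)^{\upgamma+1-\delta} \le 1-(\upgamma+1-\delta)x$; your two-step variant (Jensen for $\log$ to get the first-order drift $\MK W \le W - cW^{\delta}$ on $\msc_0^{\complement}$, then the tangent inequality for the concave power at $W(z)$) is an equivalent reshuffling and recovers the stated constants.

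For $\upgamma > \delta$ you have identified the paper's key device — restoring concavity by the shift, i.e.\ working with $\psi_{\upgamma}(t)=\log^{\upgamma+1-\delta}(t+\rme^{\upgamma-\delta})$ — but the execution has genuine gaps. First, after Jensen you obtain $\log^{\upgamma+1-\delta}\bigl(\MK V(z)+\rme^{\upgamma-\delta}\bigr)$, \emph{not} $(\log \MK V(z))^{\upgamma+1-\delta}\le (W-cW^{\delta})^{\upgamma+1-\delta}$ as you write: the additive shift inside the logarithm cannot be dropped. The paper absorbs it multiplicatively, via $a+b\le a(b+1)$ and $1+\rme^{\upgamma-\delta}\le \rme^{\upgamma+1-\delta}$, so the exponent becomes $W-cW^{\delta}+(\upgamma+1-\delta)$, and then the requirement $(c/2)W^{\delta}\ge \upgamma+1-\delta$ on $\{W>\Rupgamma\}$ is what simultaneously produces the factor $c/2$ and the middle term $(2(\upgamma+1-\delta)/c)^{1/\delta}$ of $\Rupgamma$. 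Your accounting instead attributes the loss of $c/2$ to absorbing a truncation constant $(\upgamma-\delta)^{\upgamma+1-\delta}$ into half the drift; carried out, that mechanism yields a \emph{different} threshold (of order $((\upgamma-\delta)^{\upgamma+1-\delta}/c)^{1/\upgamma}$, which is not a term of the stated $\Rupgamma$) and a further-halved $\cupgamma$, so the lemma's explicit constants are not recovered. Second, the splitting at the concavity threshold is unnecessary: since $\log^{\upgamma+1-\delta}$ is increasing, $\psi_{\upgamma}$ dominates $\log^{\upgamma+1-\delta}$ on all of $\coint{1,\infty}$, so no truncation constant arises, and your requirement (iii) that $\MK V(z)\ge\rme^{\upgamma-\delta}$ is spurious. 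Third, your tangent-line step is internally inconsistent: with $x=cW^{\delta-1}$ and $x_0=c\Rupgamma^{\delta-1}/2$, the needed condition $x\le x_0$ fails on $\{W>\Rupgamma\}$ (the paper applies it with $x=(c/2)W^{\delta-1}$, after the shift has already cost the factor $2$), and the drift coefficient you then claim, $(\upgamma+1-\delta)\,c\,(1-c\Rupgamma^{\delta-1}/2)^{\upgamma-\delta}$, carries $c$ where the statement has $c/2$. These are fixable, but they are exactly the bookkeeping that the stated $\cupgamma$, $\bupgamma$, $\Rupgamma$ encode, so as written the convex case does not prove the lemma.
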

\begin{proof}
  We consider separately the cases $\upgamma \leq \delta$ and $\upgamma > \delta$.
  
If $\upgamma \leq \delta$, the function $z \mapsto \log^{\upgamma + 1-\delta}{z}$ is concave. Using Jensen's inequality and \Cref{assum:drift}, we get that 
\begin{align*}
\MK W^{\upgamma + 1-\delta}(z) &\leq ( \MK W(z))^{\upgamma + 1-\delta} \leq (W(z) - c W^{\delta}(z))^{\upgamma + 1-\delta}\1_{\msc_0^{\complement}}(z) + \log^{\upgamma + 1-\delta}(\bb) \1_{\msc_0}(z)  \\
&= W^{\upgamma + 1-\delta}(z)(1 - cW^{\delta-1}(z))^{\upgamma + 1-\delta}\1_{\msc_0^{\complement}}(z) + \log^{\upgamma + 1-\delta}(\bb) \1_{\msc_0}(z)\eqsp.
\end{align*}
Note that \Cref{assum:drift} implies that for any
$z \in \msc_0^{\complement}$,
$1 \leq \MK V(z) \leq V(z) \rme^{-c W^{\delta}(z)}$ and therefore, 
$c W^{\delta-1}(z) \leq 1$ since $\delta \leq 1$. Then,
 Using that $(1-x)^{\upgamma + 1-\delta} < 1-(\upgamma + 1-\delta)
x$ for all $x \in [0,1]$ since $\upgamma +1 - \delta
\leq 1$ and $c W^{\delta-1}(z) \leq 1$ on $\msc_0^{\complement}$,  we get that
\begin{equation*}
  W^{\upgamma + 1-\delta}(z)(1 - cW^{\delta-1}(z))^{\upgamma + 1-\delta}\1_{\msc_0^{\complement}}(z) \leq W^{\upgamma + 1-\delta}(z) - (\upgamma+1-\delta)c W^{\upgamma}(z) \eqsp,
\end{equation*}
which completes the proof for $\upgamma \leq \delta$.

Consider now the case $\upgamma > \delta$ and note that the function $z \mapsto \log^{\upgamma + 1-\delta}{z}$ is concave on $\coint{\exp\parenthese{\upgamma-\delta}, \plusinfty}$ and therefore $\psi_\upgamma : z \mapsto \log^{\upgamma + 1-\delta}(z + \rme^{\upgamma-\delta})$ is concave on $\rset_+$. Using Jensen's inequality, we obtain 
\begin{equation}
  \label{eq:proof:lem:poly_drift_condition_psi}
  \MK W^{\upgamma + 1-\delta}(z) = \MK \log^{\upgamma + 1-\delta}(V(z)) \leq \MK \psi_{\gamma} \circ V(z)  \leq \psi_{\gamma}[\MK V(z)] \eqsp.
\end{equation}
Now by \Cref{assum:drift} and $a+b \leq a(b+1)$ for $a,b \geq 1$, $\rme^c +1 \leq \rme^{c+1}$, we get 
\begin{align*}
  \MK V(z) + \rme^{\upgamma-\delta}& \leq 
  (\exp\parentheseLigne{W(z) - cW^{\delta}(z)} + \rme^{\upgamma-\delta} )\1_{\msc_0^{\complement}}(z) +(\bb +\rme^{\upgamma-\delta}) \1_{\msc_0}(z)\\
                                   & \exp\parentheseLigne{W(z) - cW^{\delta}(z) +\upgamma+1-\delta }\1_{\msc_0^{\complement}}(z) +(\bb +\rme^{\upgamma-\delta}) \1_{\msc_0}(z)\\
  & \leq \exp\parentheseLigne{W(z) - (c/2)W^{\delta}(z) }\1_{\msc_{\Rupgamma}^{\complement}}(z) +(\bb +\rme^{\upgamma-\delta}) \vee (\exp(\Rupgamma + \rme^{\upgamma-\delta}) ) \1_{\msc_{\Rupgamma}}(z) \eqsp,
\end{align*}
where we used for the last inequality that for any $z \not \in \msc_{\Rupgamma}$ and the definitions \eqref{eq:def_msc_R}, \eqref{eq:constants_poly_drift_large_beta}, $W^{\delta}(z) \geq 2\rme^{\upgamma - \delta}/c$ and $\Rupgamma \geq R_0$. Using the previous result in \eqref{eq:proof:lem:poly_drift_condition_psi}, we get that

\begin{equation}
  \label{eq:proof:lem:poly_drift_condition_psi_1}
  \MK W^{\upgamma + 1-\delta}(z) \leq  \parenthese{ W(z) - (c/2)W^{\delta}(z)}^{\upgamma+1-\delta} + \bupgamma\1_{\msc_{\Rupgamma}}(z) \eqsp,
\end{equation}
where $\bupgamma$ is given in \eqref{eq:constants_poly_drift_large_beta}. 
Note that $(1-x)^{\upgamma + 1-\delta} \leq 1 - (\upgamma + 1-\delta)(1- c\Rupgamma^{\delta-1}/2)^{\upgamma -\delta} x$ for all $x \in [0, c\Rupgamma^{\delta-1}/2]$ since $c\Rupgamma^{\delta-1}/2 \leq 1/2$ by definition of $\Rupgamma$ \eqref{eq:constants_poly_drift_large_beta}. Therefore, using that on $\msc_{\Rupgamma}$, we have $0 < cW^{\delta-1}(z)/2 \leq c\Rupgamma^{\delta-1}/2 $, we get
$ \parenthese{ W(z) - (c/2)W^{\delta}(z)}^{\upgamma+1-\delta}= W^{\upgamma+1-\delta}(z)\{1-(c/2)W^{\delta-1}\}^{\upgamma+1-\delta} \leq W^{\upgamma+1-\delta}(z)-\cupgamma W^{\upgamma}(z)$. Plugging this result in \eqref{eq:proof:lem:poly_drift_condition_psi_1} concludes the proof of \eqref{eq:poly_drift_condition} for $\upgamma >\delta$.
\end{proof}

\begin{corollary}
  \label{coro:moment:W}
  Assume \Cref{assum:drift}. Then, for any $\upgamma >0$, it holds that $    \pi(W^{\upgamma}) \leq \bupgamma/\cupgamma$ and $\pi(V) \leq b/(1-\lambda)$.
  where $\bupgamma,\cupgamma$ are given in \Cref{lem:poly_drift_condition}. 
\end{corollary}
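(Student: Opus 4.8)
The plan is to obtain both inequalities by integrating a Foster--Lyapunov drift bound against the invariant measure $\pi$ and exploiting the invariance identity $\pi(\MK f) = \pi(f)$, which holds for every nonnegative measurable $f$ by Tonelli's theorem (interchanging the integrals in $\int (\int f(z')\MK(z,\rmd z'))\pi(\rmd z)$ and using $\pi\MK = \pi$). The only delicate point is integrability: subtracting $\pi(\MK f)$ from $\pi(f)$ is legitimate only once $\pi(f)$ is known to be finite. Hence the first task is to secure $\pi(V) < \infty$, which the $V$-uniform geometric ergodicity \eqref{eq:drift_conseq} already guarantees, since finiteness of $\normop{\MK^n(z,\cdot) - \pi}[V]$ (together with $\MK^n V(z) < \infty$) forces $\pi(V) < \infty$.

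For the bound on $\pi(V)$, I would integrate the one-step Foster--Lyapunov drift $\MK V \le \lambda V + \bb \1_{\msc_0}$ from \eqref{eq:def_lambda} against $\pi$. Using $\pi(\MK V) = \pi(V)$ and $\pi(\msc_0) \le 1$ gives $\pi(V) \le \lambda\,\pi(V) + \bb$, and since $\pi(V) < \infty$ this rearranges to $\pi(V) \le \bb/(1-\lambda)$. If one prefers a self-contained derivation that simultaneously yields finiteness, the same bound follows by a truncation argument: apply the $V$-norm convergence to the bounded function $V \wedge M$ to get $\pi(V \wedge M) = \lim_n \MK^n(V\wedge M)(z) \le \lim_n \{\lambda^n V(z) + \bb/(1-\lambda)\} = \bb/(1-\lambda)$ via \Cref{lem:drift_conditions_technical}, then let $M \to \infty$ by monotone convergence.

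For the bound on $\pi(W^\upgamma)$, I would invoke the polynomial drift \eqref{eq:poly_drift_condition} of \Cref{lem:poly_drift_condition}, rearranged as $\MK W^{\upgamma+1-\delta} + \cupgamma W^\upgamma \le W^{\upgamma+1-\delta} + \bupgamma \1_{\msc_{\Rupgamma}}$. Since $V \ge \rme$, the function $W^{\upgamma+1-\delta} = (\log V)^{\upgamma+1-\delta}$ is dominated by $C V$ for some finite $C$ (a positive power of the logarithm is bounded by the identity on $\coint{\rme,\infty}$), so that $\pi(W^{\upgamma+1-\delta}) \le C\,\pi(V) < \infty$ by the previous step. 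Integrating the rearranged inequality against $\pi$—all terms being nonnegative—and using $\pi(\MK W^{\upgamma+1-\delta}) = \pi(W^{\upgamma+1-\delta})$ together with $\pi(\msc_{\Rupgamma}) \le 1$ gives $\pi(W^{\upgamma+1-\delta}) + \cupgamma \pi(W^\upgamma) \le \pi(W^{\upgamma+1-\delta}) + \bupgamma$; cancelling the finite quantity $\pi(W^{\upgamma+1-\delta})$ yields $\pi(W^\upgamma) \le \bupgamma/\cupgamma$.

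The main obstacle is the integrability bookkeeping rather than any inequality: the cancellation that produces each bound is invalid if the subtracted integral is infinite, so the argument must first certify $\pi(V) < \infty$ (from ergodicity or the truncation argument) and then propagate this to $\pi(W^{\upgamma+1-\delta}) < \infty$ through the elementary domination $(\log V)^{\upgamma+1-\delta} \le C V$ on $\coint{\rme,\infty}$. Once finiteness is in place, the remaining steps are the routine integration-against-$\pi$ computations described above.
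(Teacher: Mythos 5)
Your proposal is correct and takes essentially the same route as the paper: the paper likewise secures $\pi(V)<\infty$ from the $V$-uniform geometric ergodicity \eqref{eq:drift_conseq} and then integrates the drift inequalities \eqref{eq:poly_drift_condition} and \eqref{eq:lem:drift_conditions_technical} against $\pi$, rearranging to conclude. Your only deviations---using the one-step bound $\MK V \le \lambda V + \bb\1_{\msc_0}$ in place of the $n$-step bound of \Cref{lem:drift_conditions_technical}, and making explicit the integrability bookkeeping (domination of $W^{\upgamma+1-\delta}$ by $V$) that the paper leaves implicit---are cosmetic.
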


\begin{proof}
  As mentioned previously (see \eqref{eq:drift_conseq}), $\MK$ has a unique stationary distribution satisfying $\pi(V) < \plusinfty$. Therefore, since $\VnormD{W}{V} < \plusinfty$, we can take the integral in \eqref{eq:poly_drift_condition} and \eqref{eq:lem:drift_conditions_technical} with respect to $\pi$. Rearranging terms completes the proof. 
\end{proof}


\section{Rosenthal inequality for Markov chains}\label{sec:Rosenthal}
In this section, we state a general weighted Rosenthal inequality for $f$-ergodic Markov chain. This result is a simple adaptation of \cite[Proposition 12]{fort:moulines:2003}. In addition, we apply this result to obtain bounds which will be useful in the proof of our main results.

In all this section, $(Z_k)_{k \in \nset}$ is the canonical Markov chain corresponding to the Markov kernel $\MK$ on the filtered canonical space $(\msz^{\nset}, \mcz^{\otimes \nset}, (\mathcal{F}_n)_{n \in \nset})$, where $\mathcal{F}_n = \sigma(Z_0,\ldots,Z_n)$ for $n \in\nset$. We still denote by $\PP_{\mu}$ and $\PE_{\mu}$ the corresponding probability distribution and expectation with initial distribution $\mu$. In the case $\mu= \updelta_z$, $z \in \msz$, $\PP_{\mu}$ and $\PE_{\mu}$ are  denoted by $\PP_{z}$ and $\PE_{z}$.

\begin{proposition}[Rosenthal's inequality]
  \label[proposition]{propo:rosenthal_adapt_fort_moulines}
  Let $p \geq 2$ and $f,\lyapW,\lyapV : \msz \to \ccint{1,\plusinfty}$ such that $\VnormD{f}{\lyapW} \leq 1$ and $\VnormD{\lyapW^p}{\lyapV} \leq 1$. Let $(\beta_k)_{k \in\nset}$ be a real sequence.
  Assume that $\MK$ has a unique stationary distribution $\pi$ and satisfies for any $z \in \msz$,
  \begin{equation}
    \label{eq:hyp_rosenthal_fort_moulines}
    \sum_{n\in\nset} \VnormD{\updelta_z \MK^n - \pi}{f} \leq \Cf \lyapW(z) \eqsp, \quad
          \sum_{n\in\nset} \VnormD{\updelta_z \MK^n - \pi}{\lyapW^p} \leq \CW \lyapV(z) \eqsp,
        \end{equation}
        for some constants $\Cf,\CW < \plusinfty$.
        Then, for any $g \in \mrl_{\infty}^{f}$, it holds that for any $z \in \msz$,
    \begin{equation}
    \label{eq:rosenthal_fort_moulines}
    \begin{split}
    &\PE_{z}\bigg[\bigg|\sum\nolimits_{k=1}^n \beta_k \{g(Z_k) - \pi(g)\}\bigg|^p\bigg] \\
    &\leq \VnormD{g}{f}^p\Cros{p}\left[ \bigg\{\sum\nolimits_{k=1}^{n} \beta_{k}^2\bigg\}^{p/2} + \bigg\{\sum\nolimits_{k=1}^{n-1}|\beta_k-\beta_{k+1}|\bigg\}^{p} + \beta_1^p + \beta_n^p \right] \lyapV(z)\eqsp,
    \end{split}
\end{equation}
where
\begin{equation}
  \label{eq:def_C_ros}
  \Cros{p} = 6^p \Cf^p\{\CW+\pi(\lyapW^p)\}(p^p +2) \eqsp. 
\end{equation}
\end{proposition}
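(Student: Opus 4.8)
The plan is to reduce the weighted additive functional to a martingale via the Poisson equation and then control that martingale with a Rosenthal-type moment inequality for martingale differences. First I would use the first summability hypothesis in \eqref{eq:hyp_rosenthal_fort_moulines} to define the solution of the Poisson equation $\hat g = \sum_{n \geq 0}(\MK^n g - \pi(g))$, which is well defined with $\VnormD{\hat g}{\lyapW} \leq \Cf \VnormD{g}{f}$, since $|(\updelta_z \MK^n - \pi)(g)| \leq \VnormD{g}{f}\,\VnormD{\updelta_z\MK^n - \pi}{f}$. By construction $\hat g - \MK\hat g = g - \pi(g)$.

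Next I would perform the martingale--coboundary decomposition. With $D_k = \hat g(Z_k) - \MK\hat g(Z_{k-1})$, which is an $(\mathcal{F}_k)$-martingale-difference sequence under $\PP_z$, one has $g(Z_k) - \pi(g) = D_k + \{\MK\hat g(Z_{k-1}) - \MK\hat g(Z_k)\}$, so Abel summation gives
\begin{equation*}
\sum_{k=1}^n \beta_k\{g(Z_k) - \pi(g)\} = \sum_{k=1}^n \beta_k D_k + \beta_1 \MK\hat g(Z_0) - \beta_n \MK\hat g(Z_n) + \sum_{k=1}^{n-1}(\beta_{k+1}-\beta_k)\MK\hat g(Z_k).
\end{equation*}
I would treat the last three (remainder) terms by Minkowski's inequality, using the uniform bound $\PE_z[|\MK\hat g(Z_k)|^p] \leq \Cf^p \VnormD{g}{f}^p \MK^{k+1}\lyapW^p(z) \leq \Cf^p\VnormD{g}{f}^p\{\CW+\pi(\lyapW^p)\}\lyapV(z)$, which follows from $|\MK\hat g|^p \leq \MK|\hat g|^p \leq \Cf^p\VnormD{g}{f}^p\MK\lyapW^p$ (Jensen), together with $\MK^{k+1}\lyapW^p(z) \leq \VnormD{\updelta_z\MK^{k+1}-\pi}{\lyapW^p} + \pi(\lyapW^p)$ and the second hypothesis in \eqref{eq:hyp_rosenthal_fort_moulines}. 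This produces the $\beta_1^p$, $\beta_n^p$ and $\{\sum_k|\beta_k-\beta_{k+1}|\}^p$ contributions, up to a $3^{p-1}$ factor from $(a+b+c)^p$.

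For the martingale part I would invoke a martingale version of Rosenthal's (Burkholder--Davis--Gundy) inequality with explicit constant, contributing the factor $p^p$, to reduce $\PE_z[|\sum_k \beta_k D_k|^p]$ to the quadratic-variation moment $\PE_z[(\sum_k \beta_k^2 D_k^2)^{p/2}]$. Setting $S = \sum_k \beta_k^2$, convexity of $x \mapsto x^{p/2}$ with weights $\beta_k^2/S$ yields the pointwise bound $(\sum_k \beta_k^2 D_k^2)^{p/2} \leq S^{p/2-1}\sum_k \beta_k^2 |D_k|^p$. It then remains to bound $\PE_z[|D_k|^p] \leq 2^p\Cf^p\VnormD{g}{f}^p\MK^k\lyapW^p(z)$ exactly as above, and to sum: $\sum_k \beta_k^2 \MK^k\lyapW^p(z) \leq S\{\CW+\pi(\lyapW^p)\}\lyapV(z)$, where the crucial step is $\beta_k^2 \leq S$ for every $k$, so that $\sum_k \beta_k^2 \VnormD{\updelta_z\MK^k-\pi}{\lyapW^p} \leq S\,\CW\lyapV(z)$ by summability. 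This produces the desired $S^{p/2} = \{\sum_k\beta_k^2\}^{p/2}$ factor. Collecting the martingale and remainder estimates through $|a+b|^p \leq 2^{p-1}(|a|^p+|b|^p)$ and absorbing the numerical factors ($2^p$, $3^{p-1}$, $p^p$) into $6^p(p^p+2)$ yields $\Cros{p}$ as in \eqref{eq:def_C_ros}.

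The main obstacle is the quadratic-variation estimate: because we have only moment (summable-ergodicity) control of the chain and no pathwise bound on $\lyapW$, the increments $D_k$ cannot be bounded uniformly, and one must pass through $p$-th conditional moments and the convexity/power-mean reduction rather than a trajectorial argument. Obtaining a clean, uniform-in-$k$ bound on $\MK^k\lyapW^p(z)$ proportional to $\lyapV(z)$ (and not growing in $k$) is precisely what forces the structural conditions $f \leq \lyapW$, $\lyapW^p \leq \lyapV$ and the two separate summability hypotheses in \eqref{eq:hyp_rosenthal_fort_moulines}; the elementary $\beta_k^2 \leq S$ step is the device that converts per-step ergodicity into the single aggregated factor $\{\sum_k\beta_k^2\}^{p/2}$.
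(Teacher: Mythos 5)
Your proposal follows essentially the same route as the paper's proof: the Poisson-equation solution $\hg$ with $\VnormD{\hg}{\lyapW}\leq \Cf\VnormD{g}{f}$, the identical martingale--coboundary (Abel summation) decomposition into $\sum_k\beta_k D_k$ plus the $\beta_1$, $\beta_n$ and $\sum_k(\beta_{k+1}-\beta_k)$ remainder terms, a Burkholder/Rosenthal martingale inequality with constant $p^p$ followed by the convexity reduction $(\sum_k\beta_k^2 D_k^2)^{p/2}\leq S^{p/2-1}\sum_k\beta_k^2|D_k|^p$, and the $\beta_k^2\leq S$ device to convert the summable ergodicity hypotheses into the factor $\{\sum_k\beta_k^2\}^{p/2}\{\CW+\pi(\lyapW^p)\}\lyapV(z)$. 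The argument is correct and the constants assemble into $\Cros{p}$ exactly as in \eqref{eq:def_C_ros}.
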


\begin{proof}
  Let $g \in \mrl_{\infty}^{f}$ and $z \in \msz$. Without loss of generality, we assume
  that $\VnormD{g}{f} \leq 1$. Denote by $S_n = \sum_{k=1}^n \beta_k \{g(Z_k) - \pi(g)\}$. By
  \eqref{eq:hyp_rosenthal_fort_moulines}, the function
  $\hg(x) = \sum_{n \in \nset} \{\MK^ng(x) - \pi(g)\}$ is well
  defined, $\hg \in \mrl_{\infty}^{\lyapW}$,
  \begin{equation}
    \label{eq:bound_norm_Poisson}
    \VnormD{\hg}{\lyapW} \leq \Cf
  \end{equation}
  and is a solution of the
  Poisson equation $\gPois - \MK \gPois = g - \pi(g)$.
Then, we have 
\begin{align*}
  S_n &= M_n +R_{1,n}+ R_{2,n} \eqsp,\\
  M_n &=  \sum_{ k=0}^{n-1}\beta_{k+1}\{\gPois(\State_{k+1}) - \MK \gPois(\State_{k})\} \\
  R_{1,n} &=\sum_{ k=1}^{n-1}(\beta_{k+1}-\beta_{k})\MK \gPois(\State_{k})\eqsp, \quad 
            R_{2,n} = \beta_{1}\MK \gPois(\State_{0}) - \beta_{n}\MK \gPois(\State_{n}) \eqsp. 
\end{align*}
Therefore, by Young inequality, we get that
\begin{equation}
 \label{eq:proof_ros_fort_moulines_S} 
  \expeMarkov{z}{\abs{S_n}^p} \leq 3^{p-1}\{  \expeMarkov{z}{\abs{M_n}^p}+   \expeMarkov{z}{\abs{R_{1,n}}^p}+   \expeMarkov{z}{\abs{R_{1,_n}}^p}\} \eqsp. 
\end{equation}
We now bound each term on the right-hand side. 

First, since $\hg \in \mrl_{\infty}^{\lyapW}$ and
\eqref{eq:hyp_rosenthal_fort_moulines}, note that
$(M_k)_{k \in \nset}$ is a $(\mcf_n)_{n \in\nset}$-martingale with
martingale increment
$(\Delta M_k  = \beta_{k+1}\{\gPois(\State_{k+1}) - \MK \gPois(\State_{k})\} )_{k
  \in\nset}$. Therefore, using  \cite[Theorem 8.6]{osekowski:2012} and Jensen inequality, we have
\begin{align*}
 \expeMarkovLigne{z}{\abs{M_n}^{p}} &\leq \textstyle{ p^p  \expeMarkovLigne{z}{\abs{\sum_{k=0}^{n-1} \Delta M_k^2}^{p/2}}} \\
    & \leq  \txts p^p \{\sum_{k=0}^{n-1} \beta_{k+1}^2\}^{p/2-1}\sum_{k=0}^{n-1} \beta_{k+1}^2 \expeMarkovLigne{z}{\abs{\gPois(\State_{k+1}) - \MK \gPois(\State_{k})}^p} \eqsp. 
\end{align*}
Using \eqref{eq:hyp_rosenthal_fort_moulines} and Jensen inequality, we get
\begin{align}
  \nonumber
  \expeMarkovLigne{z}{\abs{M_n}^{p}} &\leq
                                       \txts 2^{p-1} p^p \{\sum_{k=0}^{n-1} \beta_{k+1}^2\}^{p/2-1}\sum_{k=0}^{n-1} \beta_{k+1}^2 \expeMarkovLigne{z}{\abs{\gPois(\State_{k+1})}^p +\abs{ \MK \gPois(\State_{k})}^p}\\
  \nonumber
  & \leq
    \txts 2^{p} p^p \{\sum_{k=0}^{n-1} \beta_{k+1}^2\}^{p/2-1}\VnormD{\hg}{\lyapW}^p\sum_{k=0}^{n-1} \beta_{k+1}^2 [\abs{\expeMarkovLigne{z}{\lyapW(\State_{k+1})^p} - \pi(\lyapW^p)} + \pi(\lyapW^p) ] \\
    \label{eq:proof_ros_fort_moulines_M}
&\txts   \leq 2^{p} p^p\VnormD{\hg}{\lyapW}^p \{\sum_{k=0}^{n-1} \beta_{k+1}^2\}^{p/2}\{\CW \lyapV(z) + \pi(\lyapW^p)\}  \eqsp. 
\end{align}
Using \eqref{eq:hyp_rosenthal_fort_moulines} and  Jensen inequality, we get that
\begin{align}
  \nonumber
  \txts \expeMarkovLigne{z}{\abs{R_{1,n}}^{p}}  & \txts \leq \{\sum_{k=1}^{n-1}|\beta_k-\beta_{k+1}|\}^{p-1} \sum_{k=1}^{n-1} |\beta_{k} - \beta_{k+1}| \expeMarkov{z}{\abs{\MK \hg(Z_k)}^p} \\
    \label{eq:proof_ros_fort_moulines_R_1}
& \txts  \leq \{\sum_{k=1}^{n-1}|\beta_k-\beta_{k+1}|\}^{p} \VnormD{\hg}{\lyapW}^p   \{\CW \lyapV(z) + \pi(\lyapW^p)\}\eqsp. 
\end{align}
Finally, by \eqref{eq:hyp_rosenthal_fort_moulines} and Young inequality, we get
\begin{equation}
  \label{eq:proof_ros_fort_moulines_R_2}
  \begin{split}
  \PE_{z}[\abs{R_{2,n}}^{p}]   &\leq  2^{p-1} \beta_1^p \abs{\MK \hg(z)}^p +  2^{p-1} \beta_n^p\expeMarkov{z}{\abs{\MK \hg(Z_n)}^p}  \\
  &\leq 2^{p-1} \{\beta_1^p + \beta_n^p \} \VnormD{\hg}{\lyapW}^p   \{\CW \lyapV(z) + \pi(\lyapW^p)\} \eqsp. 
  \end{split}
\end{equation}
Combining \eqref{eq:bound_norm_Poisson}, \eqref{eq:proof_ros_fort_moulines_M}, \eqref{eq:proof_ros_fort_moulines_R_1} and \eqref{eq:proof_ros_fort_moulines_R_2} in  \eqref{eq:proof_ros_fort_moulines_S} completes the proof. 

\end{proof}

\begin{proposition}[Proposition 13, \cite{fort:moulines:2003}]
  \label[proposition]{propo:subgeo_convergence_fort_moulines}
  Assume that $\MK$ is irreducible and aperiodic and satisfies for $\mathbf{W}, \mathbf{f} : \msz \to \coint{1,\plusinfty}$, $\VnormD{\mathbf{f}}{\mathbf{W}} \leq 1$, $b \in \rset_+$ and $\msc \in \mcz$,
  \begin{equation*}
    \MK \mathbf{W} \leq \mathbf{W} - \mathbf{f} + b \1_{\msc} \eqsp.
  \end{equation*}
  Assume in addition that $\msc \cup \{\mathbf{f} \leq 2b\} \subset \msd$, where $\msd$ is a $(m,\epsilon)$-small set and $\sup_{\msd} \mathbf{W} < \plusinfty$. Then, for any distribution $\lambda, \mu$ on $\msz$, $\lambda(\mathbf{f}),\mu(\mathbf{f}) < \plusinfty$, we have
  \begin{equation*}
\txts    \sum_{n\in\nset} \VnormD{\lambda \MK^n - \mu \MK^n }{\mathbf{f}} \leq 8\epsilon^{-1}\{bm+\sup_{\msd} \mathbf{W}\} +2\{\lambda(\mathbf{W})+\mu(\mathbf{W})\} \eqsp, 
  \end{equation*}
\end{proposition}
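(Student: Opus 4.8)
The plan is to combine a coupling inequality with the drift condition, the latter being exploited through the comparison (Dynkin) theorem.

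\textbf{Coupling inequality.} First I would put on a common probability space a coupling $(X_n,X_n')_{n\in\nset}$ whose marginal laws are $\lambda\MK^n$ and $\mu\MK^n$, with the property that the two trajectories coincide for all $n\geq T$, where $T=\inf\{n:X_n=X_n'\}$. For any measurable $g$ with $\VnormD{g}{\mathbf{f}}\leq 1$ one has $|g|\leq\mathbf{f}$, hence
\[
|(\lambda\MK^n-\mu\MK^n)(g)|=|\PE[g(X_n)-g(X_n')]|\leq \PE[(\mathbf{f}(X_n)+\mathbf{f}(X_n'))\indin{T>n}].
\]
Taking the supremum over such $g$ and summing over $n$ collapses the indicator sum into an excursion integral,
\[
\sum_{n\in\nset}\VnormD{\lambda\MK^n-\mu\MK^n}{\mathbf{f}}\leq \PE\Big[\textstyle\sum_{n=0}^{T-1}\{\mathbf{f}(X_n)+\mathbf{f}(X_n')\}\Big],
\]
so the whole problem reduces to bounding the expected accumulated $\mathbf{f}$-mass of both chains up to coalescence.

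\textbf{Drift bound on one excursion.} Next I would use $\MK\mathbf{W}\leq\mathbf{W}-\mathbf{f}+b\1_\msc$ to see that $\mathbf{W}(X_{n\wedge\tau})+\sum_{k<n\wedge\tau}\{\mathbf{f}(X_k)-b\1_\msc(X_k)\}$ is a supermartingale for any stopping time $\tau$. The comparison theorem then yields, for any starting point $x$,
\[
\PE_x\Big[\textstyle\sum_{k=0}^{\tau-1}\mathbf{f}(X_k)\Big]\leq \mathbf{W}(x)+b\,\PE_x\Big[\textstyle\sum_{k=0}^{\tau-1}\1_\msc(X_k)\Big].
\]
Taking $\tau$ to be the return time to $\msd$ and using $\msc\subset\msd$, a single chain cannot visit $\msc$ while performing an excursion strictly outside $\msd$; the visits to $\msc$ that do contribute are confined to the $m$-step windows governed by the small-set structure, so the last expectation is at most $m$. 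Since $\sup_\msd\mathbf{W}<\infty$, I may re-initialize $\mathbf{W}$ at the start of each new excursion by $\sup_\msd\mathbf{W}$, so each excursion after the first contributes at most $\sup_\msd\mathbf{W}+bm$ in $\mathbf{f}$-mass, while the very first excursion of each chain contributes at most $\lambda(\mathbf{W})$ resp. $\mu(\mathbf{W})$ (finite because $\VnormD{\mathbf{f}}{\mathbf{W}}\leq1$ and $\lambda(\mathbf{f}),\mu(\mathbf{f})<\infty$).

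\textbf{Coalescence via the small set and assembly.} Finally, I would organize the coupling in $m$-step blocks: whenever both chains sit in $\msd$ at a block boundary, the minorization $\MK^m(x,\cdot)\geq\epsilon\nu(\cdot)$ lets them coalesce with probability at least $\epsilon$ (drawing the next state from $\nu$), the two chains otherwise evolving through the coupled residual kernels. The number of such attempts before success is dominated by a geometric variable of mean $\epsilon^{-1}$, and each failed attempt costs one excursion's worth of $\mathbf{f}$-mass on each chain, namely $\sup_\msd\mathbf{W}+bm$. Multiplying the per-excursion cost by the mean number $\epsilon^{-1}$ of excursions, summing the contributions of the two chains, and adding the two initial-excursion terms yields the stated bound $8\epsilon^{-1}\{bm+\sup_\msd\mathbf{W}\}+2\{\lambda(\mathbf{W})+\mu(\mathbf{W})\}$. \textbf{The main obstacle} is precisely this last bookkeeping: synchronizing the two chains so that both are simultaneously in $\msd$, handling the $m$-step (rather than one-step) minorization through a block coupling, and tracking the accumulated $\mathbf{f}$ across residual-kernel excursions without losing the explicit constants. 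This is the content of the argument of \citet{fort:moulines:2003}.
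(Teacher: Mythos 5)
First, a point of comparison: the paper itself does not prove this proposition --- it is imported verbatim (hence the name ``Proposition 13'') from \citet{fort:moulines:2003} and used as a black box. So there is no in-paper proof to measure your attempt against; what can be assessed is whether your blind sketch would actually close the statement. Your Steps 1--2 are sound and standard: the coalescent-coupling reduction $\sum_{n}\VnormD{\lambda\MK^n-\mu\MK^n}{\mathbf{f}}\leq \PE\big[\sum_{n<T}\{\mathbf{f}(X_n)+\mathbf{f}(X_n')\}\big]$ is correct, and so is the supermartingale/comparison bound $\PE_x\big[\sum_{k<\tau}\mathbf{f}(X_k)\big]\leq \mathbf{W}(x)+b\,\PE_x\big[\sum_{k<\tau}\1_{\msc}(X_k)\big]$. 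This is indeed the same coupling route as the cited argument.

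The gap is in Step 3, which is exactly where the whole difficulty lives, and your write-up defers it rather than carries it out. Concretely: (i) you never invoke the hypothesis $\{\mathbf{f}\leq 2b\}\subset\msd$, which is the crux --- it is what lets one run the drift on the \emph{bivariate} chain: off $\msd\times\msd$ at least one coordinate satisfies $\mathbf{f}>2b$ while $\1_{\msc}$ vanishes there (since $\msc\subset\msd$), so $\mathbf{W}(x)+\mathbf{W}(x')$ drifts down by at least $\tfrac12\{\mathbf{f}(x)+\mathbf{f}(x')\}$ outside $\msd\times\msd$; without this, your claim that the two chains repeatedly meet in $\msd$ \emph{simultaneously}, with per-excursion $\mathbf{f}$-cost $\sup_{\msd}\mathbf{W}+bm$, has no quantitative basis (irreducibility alone gives no bound). (ii) After a failed coalescence attempt the chains evolve under the residual kernel $(1-\epsilon)^{-1}(\MK^m-\epsilon\nu)$, for which the drift inequality must be re-derived with modified constants; both the uniform per-attempt cost and the Wald-type multiplication by the geometric mean $\epsilon^{-1}$ need this (cost of an excursion and success of the attempt are dependent, so one cannot simply multiply expectations). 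This is where the slack in $8\epsilon^{-1}\{bm+\sup_{\msd}\mathbf{W}\}$ and the factor $2$ in $2\{\lambda(\mathbf{W})+\mu(\mathbf{W})\}$ are consumed, and your assembly asserts rather than verifies that the bookkeeping lands below these constants. (iii) A minor but telling slip: with $\tau$ the return time to $\msd$ and $\msc\subset\msd$, your bound $\PE_x\big[\sum_{k<\tau}\1_{\msc}(X_k)\big]\leq m$ should read $\leq 1$ (only $k=0$ can contribute); the factor $m$ enters through the $m$-step coupling blocks, not through excursion visits to $\msc$. In short: right strategy, correct first half, but the part you label ``the main obstacle'' and hand off to \citet{fort:moulines:2003} is the proof, not a technicality.
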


\begin{proposition}
  \label[proposition]{propo:application_rosenthal}
  Assume \Cref{assum:drift}.
  \begin{enumerate}[leftmargin=5mm, label=\alph*)]
\item \label{propo:application_rosenthal_a} For any $\upgamma >0$,   the inequality \eqref{eq:hyp_rosenthal_fort_moulines} holds   with $f \leftarrow W^{\upgamma}$, $\lyapW \leftarrow W^{\upgamma +1 - \delta}/\cupgamma$ and $\lyapV \leftarrow W^{p(\upgamma+1-\delta)+1-\delta}/[\cupgamma^p \cupgammaD{p(\upgamma+1-\delta)}]$ and
    \begin{equation}
      \label{eq:def_cf_cw_ue}
      \begin{aligned}
                \Cf(\upgamma) & = \uppsi(\upgamma) \eqsp, \qquad  \CW(\upgamma) = \uppsi(p(\upgamma+1-\delta)) \eqsp
\end{aligned}
\end{equation}
where for any $\tupgamma >0$, $\uppsi(\tupgamma) = 8 \varepsilon_{\tRupgammaD{\tupgamma}} \defEnsLigne{\bupgammaD{\tupgamma}/\cupgammaD{\tupgamma} m_{\tRupgammaD{\tupgamma}} + \tRupgammaD{\tupgamma}^{\upgamma+1-\delta}} +2[ \bupgammaD{\tupgamma+1-\delta}/\cupgammaD{\tupgamma+1-\delta} +1]$, 
 $\tRupgammaD{\tupgamma} = \{2 \bupgammaD{\tupgamma}/\cupgammaD{\tupgamma}\}^{1/\tupgamma}\vee \RupgammaD{\tupgamma}$, and  $\RupgammaD{\tupgamma},\bupgammaD{\tupgamma},\cupgammaD{\tupgamma}$ are given in \Cref{lem:poly_drift_condition}.

\item  \label{propo:application_rosenthal_b} For any $\upgamma >0$ and $p \geq 1$, \eqref{eq:rosenthal_fort_moulines} holds with $f \leftarrow W^{\upgamma}$, $\lyapV \leftarrow W^{p(\upgamma+1-\delta)+1-\delta}$ and
    \begin{equation}
  \label{eq:def_C_ros_ue}
  \Cros{p} = 6^p \Cf^p(\upgamma)\{\CW(\upgamma)+\bupgammaD{p(\upgamma+1-\delta)}/[\cupgamma^p \cupgammaD{p(\upgamma+1-\delta)}]\}(p^p +2)/[\cupgamma \cupgammaD{p(\upgamma+1-\delta)}] \eqsp. 
\end{equation}
where $\Cf(\upgamma),\CW(\upgamma)$ are defined in \eqref{eq:def_cf_cw_ue}. 
  \end{enumerate}
\end{proposition}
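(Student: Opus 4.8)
Part \ref{propo:application_rosenthal_b} will be an immediate consequence of the master Rosenthal inequality \Cref{propo:rosenthal_adapt_fort_moulines}, once the two subgeometric-summability conditions \eqref{eq:hyp_rosenthal_fort_moulines} have been verified in part \ref{propo:application_rosenthal_a}. The plan for part \ref{propo:application_rosenthal_a} is to turn the single super-Lyapunov condition \Cref{assum:drift} into two \emph{polynomial} Foster--Lyapunov drift inequalities by applying \Cref{lem:poly_drift_condition} at two different exponents, and then to feed each of them into the subgeometric ergodicity estimate \Cref{propo:subgeo_convergence_fort_moulines}. The moment bounds of \Cref{coro:moment:W} will supply the finiteness of the $\pi$-integrals appearing in the constants.

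\textbf{First bound of \eqref{eq:hyp_rosenthal_fort_moulines}.} First I would apply \Cref{lem:poly_drift_condition} with exponent $\upgamma$ and divide \eqref{eq:poly_drift_condition} by $\cupgamma$, which puts the drift in the form $\MK \mathbf{W} \le \mathbf{W} - \mathbf{f} + b\,\1_{\msc}$ with $\mathbf{W} = W^{\upgamma+1-\delta}/\cupgamma = \lyapW$, $\mathbf{f} = W^{\upgamma} = f$, $b = \bupgammaD{\upgamma}/\cupgammaD{\upgamma}$, and $\msc$ the level set carrying the indicator in \eqref{eq:poly_drift_condition}. Since $\delta \le 1$, $W \ge \rme$ and $\cupgamma \le 1$, one has $\mathbf{f} \le \mathbf{W}$, hence $\VnormD{\mathbf{f}}{\mathbf{W}} \le 1$ as required. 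The next step is to exhibit a dominating small set: as $\{\mathbf{f} \le 2b\} = \{W \le (2\bupgammaD{\upgamma}/\cupgammaD{\upgamma})^{1/\upgamma}\}$, one gets $\msc \cup \{\mathbf{f}\le 2b\} \subset \msd := \{W \le \tRupgammaD{\upgamma}\}$ with $\tRupgammaD{\upgamma} = (2\bupgammaD{\upgamma}/\cupgammaD{\upgamma})^{1/\upgamma} \vee \RupgammaD{\upgamma}$; by \Cref{assum:drift} this set is $(m_{\tRupgammaD{\upgamma}},\varepsilon_{\tRupgammaD{\upgamma}}\nu)$-small and $\sup_{\msd}\mathbf{W} = \tRupgammaD{\upgamma}^{\upgamma+1-\delta}/\cupgamma < \infty$. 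Then I would invoke \Cref{propo:subgeo_convergence_fort_moulines} with $\lambda = \updelta_z$ and $\mu = \pi$ (using $\pi\MK^n = \pi$), bounding $\pi(\mathbf{W})$ via \Cref{coro:moment:W}. Because $\lyapW \ge 1$, the additive constants produced by \Cref{propo:subgeo_convergence_fort_moulines} can all be absorbed into a multiple of $\lyapW(z)$, giving the first bound of \eqref{eq:hyp_rosenthal_fort_moulines} with $\Cf(\upgamma) = \uppsi(\upgamma)$.

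\textbf{Second bound of \eqref{eq:hyp_rosenthal_fort_moulines}.} This is the same argument run with $\upgamma$ replaced by the exponent $p(\upgamma+1-\delta)$. Applying \Cref{lem:poly_drift_condition} at this exponent yields a drift with descent function $W^{p(\upgamma+1-\delta)}$ and Lyapunov function $W^{p(\upgamma+1-\delta)+1-\delta}/\cupgammaD{p(\upgamma+1-\delta)}$, so that \Cref{propo:subgeo_convergence_fort_moulines} controls $\sum_n \VnormD{\updelta_z\MK^n-\pi}{W^{p(\upgamma+1-\delta)}}$ by $\uppsi(p(\upgamma+1-\delta))$ times that Lyapunov function. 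Since the target test function is $\lyapW^p = W^{p(\upgamma+1-\delta)}/\cupgamma^p$, dividing through by $\cupgamma^p$ reproduces exactly the claimed $\lyapV = W^{p(\upgamma+1-\delta)+1-\delta}/[\cupgamma^p\cupgammaD{p(\upgamma+1-\delta)}]$ and gives $\CW(\upgamma) = \uppsi(p(\upgamma+1-\delta))$, matching \eqref{eq:def_cf_cw_ue}.

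\textbf{Part \ref{propo:application_rosenthal_b} and collecting constants.} With both conditions \eqref{eq:hyp_rosenthal_fort_moulines} in hand, \Cref{propo:rosenthal_adapt_fort_moulines} applies directly with $f = W^{\upgamma}$, $\lyapW = W^{\upgamma+1-\delta}/\cupgamma$ and the $\lyapV$ from part \ref{propo:application_rosenthal_a}; its conclusion \eqref{eq:rosenthal_fort_moulines} then holds with $\Cros{p}$ as in \eqref{eq:def_C_ros}, where the term $\pi(\lyapW^p) = \pi(W^{p(\upgamma+1-\delta)})/\cupgamma^p$ is bounded through \Cref{coro:moment:W} by $\bupgammaD{p(\upgamma+1-\delta)}/[\cupgamma^p\cupgammaD{p(\upgamma+1-\delta)}]$. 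Finally, to state the bound against the cleaner weight $\lyapV = W^{p(\upgamma+1-\delta)+1-\delta}$, I would rescale $\lyapV$ by the constant $\cupgamma^p\cupgammaD{p(\upgamma+1-\delta)}$ and carry this factor into $\Cros{p}$, producing \eqref{eq:def_C_ros_ue}. The substantive content lies entirely in the two auxiliary results; the main (if modest) obstacle here is the bookkeeping, namely correctly pinning down the dominating small set $\msd$ and the threshold $\tRupgammaD{\upgamma}$, and carefully tracking the powers of $\cupgamma$ through the two rescalings so that the final constants line up with \eqref{eq:def_cf_cw_ue} and \eqref{eq:def_C_ros_ue}.
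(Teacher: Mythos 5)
Your proposal is correct and follows essentially the same route as the paper: part \ref{propo:application_rosenthal_a} by feeding the polynomial drift of \Cref{lem:poly_drift_condition} into \Cref{propo:subgeo_convergence_fort_moulines}, with the dominating small set $\{W \leq \tRupgammaD{\upgamma}\}$ provided by the smallness of level sets in \Cref{assum:drift} and the $\pi$-moments controlled by \Cref{coro:moment:W}, and part \ref{propo:application_rosenthal_b} by plugging the resulting bounds into \Cref{propo:rosenthal_adapt_fort_moulines}. The only cosmetic difference is that the paper runs this argument once for a generic exponent $\tupgamma$ and then specializes to $\tupgamma \leftarrow \upgamma$ and $\tupgamma \leftarrow p(\upgamma+1-\delta)$, whereas you treat the two exponents separately; your constant bookkeeping matches \eqref{eq:def_cf_cw_ue} and \eqref{eq:def_C_ros_ue} to the same level of precision as the paper's own proof.
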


\begin{proof}
  First note that \ref{propo:application_rosenthal_b} is an easy consequence of \ref{propo:application_rosenthal_a}, \Cref{propo:rosenthal_adapt_fort_moulines} and \Cref{coro:moment:W}.

  We now show \ref{propo:application_rosenthal_a}. Let $\tupgamma >0$. \Cref{lem:poly_drift_condition} shows that 
  \begin{equation*}
\cupgammaD{\tupgamma}^{-1} \MK W^{\tupgamma+1-\delta} \leq  \cupgammaD{\tupgamma}^{-1} W^{\tupgamma+1-\delta}  - W^{\tupgamma} + \bupgammaD{\tupgamma}/\cupgammaD{\tupgamma} \1_{\msc_{\RupgammaD{\tupgamma}}} \eqsp.
  \end{equation*}
  Then, using that for any $R \geq 0$, $\{ W \leq R\}$ is an
  $(\epsilon_R,m_R)$-small set for $\MK$ under \Cref{assum:drift}, $\msc_{\RupgammaD{\tupgamma}} \cap \{ W^{\tupgamma} \leq 2 \bupgammaD{\tupgamma}/\cupgammaD{\tupgamma} \} \subset \msc_{\RupgammaD{\tupgamma}}$, \Cref{propo:subgeo_convergence_fort_moulines} and \Cref{coro:moment:W}, we get that for any $z \in \msz$,
  \begin{equation*}
    \sum_{n\in\nset} \VnormD{\updelta_z \MK^n - \pi}{W^{\tupgamma}} \leq \uppsi(\upgamma) W^{\tupgamma +1-\delta}(z)/\cupgammaD{\tupgamma} \eqsp, 
  \end{equation*}
  where $\uppsi$ is defined by \eqref{eq:def_cf_cw_ue}.
  Applying this result for $\tupgamma \leftarrow \upgamma$ and $\tupgamma \leftarrow p(\upgamma+1-\delta)$ completes the proof. 
\end{proof}
\begin{proposition}
  \label[proposition]{propo:application_rosenthal_V}
  Assume \Cref{assum:drift}.
  \begin{enumerate}[leftmargin=5mm, label=\alph*)]
\item \label{propo:application_rosenthal_V_a} For any $\uptau \geq 1$,   the inequality \eqref{eq:hyp_rosenthal_fort_moulines} holds   with $f \leftarrow V^{1/\uptau}$, $\lyapW \leftarrow V^{1/\uptau}/(1-\lambda^{1/\uptau})$ and $\lyapV \leftarrow V/[(1-\lambda)(1-\lambda^{1/\uptau})]$ and
    \begin{equation}
      \label{eq:def_cf_cw_ue_V}
      \begin{aligned}
        \Cf(\uptau) & = \upphi(\uptau) \eqsp, \qquad  \CW(\uptau) = \upphi(1) \eqsp
\end{aligned}
\end{equation}
with for any $\tuptau >0$, $\upphi(\tuptau) = 8 \varepsilon_{\Rtuptau} \defEnsLigne{\bb^{1/\tuptau}/(1-\lambda^{1/\tuptau}) m_{\Rtuptau} + 2 \bb^{1/\tuptau}/(1-\lambda^{1/\tuptau})} +2[ \bb/(1-\lambda) +1]$, $\Rtuptau  = \log(R_0) \vee \log[2^{\tuptau} \bb/(1-\lambda^{1/\tuptau})^{\tuptau}]$ and $\lambda$ is defined by \eqref{eq:def_lambda}.
\item  \label{propo:application_rosenthal_V_b} For any $p \geq 1$, \eqref{eq:rosenthal_fort_moulines} holds with $f \leftarrow V^{1/p}$, $\lyapV \leftarrow V$ and
    \begin{equation}
  \label{eq:def_C_ros_ue_V}
  \Dros{p} = 6^p \Cf^p(p)\{\CW(p)+b/(1-\lambda)\}(p^p +2)/[(1-\lambda)(1-\lambda^{1/p})] \eqsp,
\end{equation}
where $\Cf(p),\CW(p)$ are defined in \eqref{eq:def_cf_cw_ue_V}. 
  \end{enumerate}
\end{proposition}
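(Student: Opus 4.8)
The architecture of the argument mirrors that of \Cref{propo:application_rosenthal}: part~\ref{propo:application_rosenthal_V_b} will be an immediate consequence of part~\ref{propo:application_rosenthal_V_a} together with the Rosenthal inequality \Cref{propo:rosenthal_adapt_fort_moulines} and the stationary moment bound $\pi(V) \le \bb/(1-\lambda)$ from \Cref{coro:moment:W}, so essentially all of the work lies in verifying the two convergence series in \eqref{eq:hyp_rosenthal_fort_moulines} for the functions $V^{1/\tuptau}$. The one structural difference from \Cref{propo:application_rosenthal} is that there \Cref{lem:poly_drift_condition} already furnished a \emph{localized} (small-set) drift for the polynomial test functions $W^{\upgamma}$, whereas here no such lemma is available and I must build the geometric drift for $V^{1/\tuptau}$ from scratch; this is the only genuinely delicate step.

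To produce that drift I start from the classical Foster--Lyapunov inequality $\MK V \le \lambda V + \bb$ recorded just after \eqref{eq:def_lambda}. Since $V \ge \rme$, the functions $V^{1/\tuptau}$ are valid Lyapunov functions (bounded below by $1$), and applying Jensen's inequality to the concave map $t \mapsto t^{1/\tuptau}$ followed by the subadditivity $(a+b)^{1/\tuptau} \le a^{1/\tuptau} + b^{1/\tuptau}$ (valid because $1/\tuptau \le 1$) gives, for every $\tuptau \ge 1$,
\[
\MK V^{1/\tuptau}(z) \le (\MK V(z))^{1/\tuptau} \le (\lambda V(z) + \bb)^{1/\tuptau} \le \lambda^{1/\tuptau} V^{1/\tuptau}(z) + \bb^{1/\tuptau} \eqsp.
\]
Writing $\lyapW_{\tuptau} = V^{1/\tuptau}/(1-\lambda^{1/\tuptau})$ and dividing through by $1-\lambda^{1/\tuptau}$ recasts this as the Foster form $\MK \lyapW_{\tuptau} \le \lyapW_{\tuptau} - V^{1/\tuptau} + \bb^{1/\tuptau}/(1-\lambda^{1/\tuptau})$, valid on all of $\Zset$.

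Next I localize the additive constant so that \Cref{propo:subgeo_convergence_fort_moulines} becomes applicable. On the sublevel set $\msd = \{W \le \Rtuptau\}$ with $\Rtuptau = \log R_0 \vee \log[2^{\tuptau}\bb/(1-\lambda^{1/\tuptau})^{\tuptau}]$ --- which is $(m_{\Rtuptau}, \varepsilon_{\Rtuptau}\nu)$-small by \Cref{assum:drift} and on which $\lyapW_{\tuptau}$ is bounded --- one checks that $\{V^{1/\tuptau} \le 2\bb^{1/\tuptau}/(1-\lambda^{1/\tuptau})\} \subset \msd$, so the containment hypothesis $\msc \cup \{\mathbf{f} \le 2b\} \subset \msd$ of \Cref{propo:subgeo_convergence_fort_moulines} holds with $\msc = \msd$. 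Off $\msd$ the additive constant is dominated by half of $V^{1/\tuptau}$, so the everywhere-valid drift above upgrades to a localized drift with test function $\tfrac12 V^{1/\tuptau}$ and indicator $\1_{\msd}$ (the factor $\tfrac12$ being harmless and absorbed into the final constants). Applying \Cref{propo:subgeo_convergence_fort_moulines} with $\lambda \leftarrow \updelta_z$, $\mu \leftarrow \pi$, and using $\pi(\lyapW_\tuptau) < \infty$ from \Cref{coro:moment:W}, yields $\sum_{n} \VnormD{\updelta_z\MK^n - \pi}{V^{1/\tuptau}} \le \upphi(\tuptau)\lyapW_{\tuptau}(z)$, where $\upphi(\tuptau)$ is precisely the combination of $m_{\Rtuptau}$, $\varepsilon_{\Rtuptau}$, $\sup_{\msd}\lyapW_{\tuptau}$ and the stationary mass of $\lyapW_\tuptau$ recorded in the statement. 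Taking $\tuptau = \uptau$ gives the first series in \eqref{eq:hyp_rosenthal_fort_moulines} with $\Cf(\uptau) = \upphi(\uptau)$; taking $\tuptau = 1$ (so that $\lyapW^p \propto V$ for the choice $\uptau = p$ used in part~\ref{propo:application_rosenthal_V_b}) and re-normalizing gives the second series with $\lyapV = V/[(1-\lambda)(1-\lambda^{1/\uptau})]$ and $\CW(\uptau) = \upphi(1)$, which establishes part~\ref{propo:application_rosenthal_V_a}.

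Part~\ref{propo:application_rosenthal_V_b} then follows by inserting these two bounds into \Cref{propo:rosenthal_adapt_fort_moulines} with $f \leftarrow V^{1/p}$ (i.e.\ $\uptau = p$), $\lyapW \leftarrow V^{1/p}/(1-\lambda^{1/p})$ and $\lyapV \leftarrow V$, and using $\pi(V) \le \bb/(1-\lambda)$ from \Cref{coro:moment:W} to control the term $\pi(\lyapW^p)$ appearing in the Rosenthal constant \eqref{eq:def_C_ros}; collecting the resulting factors produces \eqref{eq:rosenthal_fort_moulines} with the displayed $\Dros{p}$. As indicated above, the hard part is the first step --- manufacturing and then localizing the $V^{1/\tuptau}$ drift while carefully tracking the $(1-\lambda^{1/\tuptau})$ normalization factors and verifying the small-set containment --- since everything afterwards is a direct reuse of \Cref{propo:subgeo_convergence_fort_moulines}, \Cref{propo:rosenthal_adapt_fort_moulines}, and \Cref{coro:moment:W}.
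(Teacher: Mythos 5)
Your proof follows the same route as the paper's: part (b) is reduced to part (a) together with \Cref{propo:rosenthal_adapt_fort_moulines} and \Cref{coro:moment:W}, and part (a) is proved by turning the drift of \Cref{assum:drift} into a geometric drift for $V^{1/\tuptau}$ via Jensen's inequality and subadditivity of $t \mapsto t^{1/\tuptau}$, invoking \Cref{propo:subgeo_convergence_fort_moulines}, and applying the resulting series bound twice, for $\tuptau \leftarrow \uptau$ and $\tuptau \leftarrow 1$. The one point where you depart from the paper is the starting inequality, and it creates a concrete mismatch rather than just cosmetic constant slack.

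The paper does not start from the de-localized bound $\MK V \le \lambda V + \bb$; it keeps the indicator already present in \eqref{eq:drift-condition-improv}, which gives $\MK V \le \lambda V + \bb\1_{\msc_0}$ and hence $\MK V^{1/\tuptau} \le \lambda^{1/\tuptau} V^{1/\tuptau} + \bb^{1/\tuptau}\1_{\msc_0}$. The drift is therefore localized from the outset, with full test function $\mathbf{f} = V^{1/\tuptau}$ and $b = \bb^{1/\tuptau}/(1-\lambda^{1/\tuptau})$, and the containment required by \Cref{propo:subgeo_convergence_fort_moulines}, namely $\msc_0 \cup \{V^{1/\tuptau} \le 2b\} \subset \{W \le \Rtuptau\}$, holds exactly with the stated threshold $2^{\tuptau}\bb/(1-\lambda^{1/\tuptau})^{\tuptau}$. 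In your version, the re-localization forces you to halve the test function: off $\msd$ you can only absorb the additive constant into $\tfrac{1}{2}(1-\lambda^{1/\tuptau})V^{1/\tuptau}$, so you end up applying \Cref{propo:subgeo_convergence_fort_moulines} with $\mathbf{f} = \tfrac{1}{2}V^{1/\tuptau}$ and $b = \bb^{1/\tuptau}/(1-\lambda^{1/\tuptau})$, and its hypothesis then requires $\{\mathbf{f} \le 2b\} = \{V^{1/\tuptau} \le 4\bb^{1/\tuptau}/(1-\lambda^{1/\tuptau})\} \subset \msd$. What you verified is the inclusion at level $2\bb^{1/\tuptau}/(1-\lambda^{1/\tuptau})$, which is the right level for the paper's normalization but the wrong one for yours; with $\Rtuptau$ as defined in the statement the needed inclusion can fail. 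This is fixable in two ways: enlarge $\Rtuptau$ (replace $2^{\tuptau}$ by $4^{\tuptau}$), at the price of constants that no longer match \eqref{eq:def_cf_cw_ue_V} beyond the factors you flagged as harmless; or, better, retain the indicator $\1_{\msc_0}$ from \Cref{assum:drift} in the very first step, which removes the re-localization entirely and recovers the paper's argument and its constants verbatim.
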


\begin{proof}
  First note that \ref{propo:application_rosenthal_V_b} is an easy consequence of \ref{propo:application_rosenthal_V_a}, \Cref{propo:rosenthal_adapt_fort_moulines} and \Cref{coro:moment:W}.
  
  Let $\tuptau \geq 1$.  First, Jensen inequality, the fact
  that $t \mapsto t^{1/\tuptau}$ is sub-additive on $\rset_+$ and
  \Cref{assum:drift} and the definition of $\lambda$
in   \eqref{eq:def_lambda} imply that
  $\MK V^{1/\tuptau} \leq \lambda^{1/\tuptau} V^{1/\tuptau} +
  \bb^{1/\tuptau}\1_{\msc_0} = V^{1/\tuptau} - (1-\lambda^{1/\tuptau})
  V^{1/\tuptau} + \bb^{1/\tuptau}\1_{\msc_0} $. Therefore, 
  since $\lambda \in \coint{0,1}$,
  $\msc_0 \cup \{ V^{1/\tuptau} \leq 2
  \bb^{1/\tuptau}/(1-\lambda^{1/\tuptau})\} \subset \msc_{\Rtuptau}$,
 using \Cref{propo:subgeo_convergence_fort_moulines}  and by \Cref{coro:moment:W}, $\pi(V^{1/\tuptau})\leq \pi(V) \leq b/(1-\lambda)$ we obtain that $\sum_{n \in \nset} \VnormD{\updelta_z \MK^n - \pi}{V^{1/\tuptau}} \leq \upphi(\tau) V^{1/\tuptau}(z)/(1-\lambda^{1/\uptau})$ for any $z \in \msz$. Applying this result  twice for $\tuptau \leftarrow \uptau$ and $\tuptau \leftarrow 1$ completes the proof.  
\end{proof}

\begin{lemma}
\label[lemma]{lem: moments of vareps}
Under assumptions of \Cref{th:approximation_error} for any $1 \le q \le \LL$, $z\in \Zset$ and $j \in \nset$
\begin{equation}
    \begin{aligned}
       \PE_z^{1/q}[\norm{\funcAt{Z_j}}^{q}] & \le \bConst{A} V^{1/\LL}(z),  \\
       \PE_z^{1/q}[\norm{\funcb{Z_{j}} - b}^q] & \le \bConst{b} V^{1/\LL}(z) \eqsp,
      \end{aligned}
\end{equation}
where 
\begin{equation}
\label{eq:AepsBd2}
    \bConst{A}: = (\normop{A} +d  \Const{A} (\beta \LL/\rme)^\beta \{1 + b/(1-\lambda)\}^{1/\LL})
\end{equation}
and
\begin{equation}
\label{eq:BepsBd2}
    \bConst{b}: = (\normop{b} +d  \Const{b} \{1 + b/(1-\lambda)\}^{1/\LL}). 
\end{equation}
\end{lemma}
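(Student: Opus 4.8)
The plan is to prove both inequalities in parallel, reducing each to a single application of the drift bound. First I would strip off the centering constants by Minkowski's inequality, writing
\[
\PE_z^{1/q}[\norm{\funcAt{Z_j}}^q] \le \PE_z^{1/q}[\norm{\funcA{Z_j}}^q] + \normop{A}, \qquad \PE_z^{1/q}[\norm{\funcb{Z_j}-b}^q] \le \PE_z^{1/q}[\norm{\funcb{Z_j}}^q] + \normop{b},
\]
so that, using $V^{1/\LL}(z) \ge 1$, the terms $\normop{A}$ and $\normop{b}$ become the leading summands of $\bConst{A}$ and $\bConst{b}$. It then remains to bound the two uncentered moments, and the common mechanism is: (i) dominate the norm of the matrix/vector by its entries, which are controlled through \Cref{assum:almost_bounded} and \Cref{assum:funcb}($\LL$); (ii) dominate the resulting growth function by a fractional power of the drift function $V$; and (iii) transport $\PE_z[\cdot]$ through one application of the Foster--Lyapunov bound $\MK^j V(z) \le (1 + \bb/(1-\lambda))V(z)$, which follows from \Cref{lem:drift_conditions_technical} and $\lambda^j \le 1$.

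For the matrix term I would use $\norm{\funcA{z}} \le \frobnorm{\funcA{z}} \le d \max_{1 \le i,j \le d} |[\funcAw]_{i,j}(z)| \le d\Const{A} W^\beta(z)$, the last inequality being \Cref{assum:almost_bounded}. The key deterministic estimate is the pointwise domination of the power of $W = \log V$ by a fractional power of $V$, namely
\[
W^{\beta q}(z) = (\log V(z))^{\beta q} \le (\beta\LL/\rme)^{\beta q}\, V^{q/\LL}(z),
\]
which follows from maximizing $t \mapsto t^{\beta q}\rme^{-tq/\LL}$ over $t \ge 0$ (optimum at $t = \beta\LL$). This yields $\PE_z[\norm{\funcA{Z_j}}^q] \le (d\Const{A})^q (\beta\LL/\rme)^{\beta q}\,\PE_z[V^{q/\LL}(Z_j)]$. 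Since $q \le \LL$, the map $x \mapsto x^{q/\LL}$ is concave, so Jensen's inequality followed by the drift bound gives $\PE_z[V^{q/\LL}(Z_j)] \le (\PE_z[V(Z_j)])^{q/\LL} \le ((1+\bb/(1-\lambda))V(z))^{q/\LL}$; taking the $q$-th root produces exactly the factor $(\beta\LL/\rme)^\beta(1+\bb/(1-\lambda))^{1/\LL}V^{1/\LL}(z)$, and hence $\bConst{A}$ after reinstating $\normop{A}$. The vector term is identical but skips step (ii): from \Cref{assum:funcb}($\LL$), $\norm{\funcb{z}} \le \sum_{\ell=1}^d |\bar b_\ell(z)| \le d\Const{b,\LL}V^{1/\LL}(z)$, and the same Jensen-plus-drift computation delivers the factor $(1+\bb/(1-\lambda))^{1/\LL}V^{1/\LL}(z)$, giving $\bConst{b}$.

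The argument is essentially mechanical, so I expect no serious obstacle; the only point requiring care is the bookkeeping of exponents so that the final right-hand side is exactly proportional to $V^{1/\LL}(z)$ uniformly in $q$. This is precisely where the hypotheses $1 \le q \le \LL$ enter: the lower bound legitimizes Minkowski's inequality, while $q \le \LL$ is exactly the concavity threshold that lets a single drift step collapse $\PE_z[V^{q/\LL}(Z_j)]$ to a power of $V(z)$ whose $q$-th root is $V^{1/\LL}(z)$, independently of $q$.
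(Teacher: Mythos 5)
Your proof is correct and follows essentially the same route as the paper's (very terse) argument: Minkowski's inequality to split off $\normop{A}$ and $\normop{b}$, the entrywise bounds from \Cref{assum:almost_bounded} and \Cref{assum:funcb}($\LL$), the domination $W^{\beta}(z) \le (\beta\LL/\rme)^{\beta} V^{1/\LL}(z)$ obtained by maximizing $\log^{\beta}x / x^{1/\LL}$, and finally Jensen's inequality together with the one-step Foster--Lyapunov consequence $\MK^j V \le \{1+\bb/(1-\lambda)\}V$ of \Cref{lem:drift_conditions_technical}. The exponent bookkeeping you flag is handled exactly as the paper does it, so there is nothing to add.
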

\begin{proof}
We first note that using 
\begin{align*}
\PE_z^{1/q}[\norm{\funcAt{Z_j}}^{q}] &\le \normop{A} +\PE_z^{1/q}[\norm{\funcA{Z_j}}^{q}] \le \normop{A} +d \Const{A} \sup_{x \geq 1} \frac{\log^\beta x}{x^{1/\LL}} \{\MK^j V^{q/\LL}(z) \}^{1/q} \\
&  \le (\normop{A} +d  \Const{A} (\beta \LL)^\beta \rme^{-\beta}\{1 + b/(1-\lambda)\}^{1/\LL}) V^{1/\LL}(z) .
\end{align*}
Similarly, one may prove the second statement of the lemma. 
\end{proof}\vspace{-.4cm}


\section{Proofs for \Cref{th:expconvproducts}}
\label{sec:detailed_proof_product_mat}
In this section, we provide the core lemmas that are employed for the proof of \Cref{th:expconvproducts} in \Cref{sec:proof}.

\subsection{Technical and preliminary results}

\begin{lemma}[Lyapunov Lemma]
\label[lemma]{lem:lyapunov}
A matrix $A$ is Hurwitz if and only if for any positive symmetric matrix $P=P^\top \succ 0$ there is  $Q =Q^\top \succ 0$ that satisfies the Lyapunov equation
\[
A^\top Q + Q A = -P \,.
\]
In addition, $Q$ is unique.
\end{lemma}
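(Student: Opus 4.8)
The plan is to prove the two implications separately, with an explicit integral representation of $Q$ doing most of the work. For the forward direction, assume $A$ is Hurwitz and let $P=P^\top \succ 0$ be arbitrary. Because $A$ is Hurwitz there exist $C \geq 1$ and $\omega > 0$ with $\normop{\rme^{At}} \leq C\rme^{-\omega t}$ (this can be read off the Jordan form, or extracted from the kind of weighted-norm estimate already used in \Cref{lem:Hurwitzstability}). I would then propose the candidate
\[
Q = \int_0^\infty \rme^{A^\top t}\, P\, \rme^{A t}\, \rmd t \eqsp,
\]
whose integrand is bounded in norm by $C^2 \normop{P}\,\rme^{-2\omega t}$, so the improper integral converges absolutely. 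Symmetry $Q=Q^\top$ is immediate from $P=P^\top$, and positive definiteness follows since for $x\neq 0$ the integrand $(\rme^{At}x)^\top P\,(\rme^{At}x)$ is continuous and strictly positive ($\rme^{At}x\neq 0$ and $P\succ 0$), giving $x^\top Q x>0$.

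To verify the equation I would differentiate $t\mapsto \rme^{A^\top t} P \rme^{At}$ and integrate, using the exponential bound to kill the boundary term at infinity:
\[
A^\top Q + Q A = \int_0^\infty \frac{\rmd}{\rmd t}\Bigl(\rme^{A^\top t}\, P\, \rme^{A t}\Bigr)\,\rmd t = \Bigl[\rme^{A^\top t}\, P\, \rme^{A t}\Bigr]_{t=0}^{t=\infty} = -P \eqsp,
\]
where $\tfrac{\rmd}{\rmd t}(\rme^{A^\top t} P \rme^{At}) = A^\top \rme^{A^\top t} P \rme^{At} + \rme^{A^\top t} P \rme^{At} A$. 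For uniqueness I would take two solutions $Q_1,Q_2$ and set $\Delta = Q_1-Q_2$, so that $A^\top \Delta + \Delta A = 0$; then $\tfrac{\rmd}{\rmd t}(\rme^{A^\top t}\Delta\,\rme^{At}) = \rme^{A^\top t}(A^\top\Delta+\Delta A)\rme^{At}=0$, so $\rme^{A^\top t}\Delta\,\rme^{At}$ is constant, equal to $\Delta$ at $t=0$ and tending to $0$ as $t\to\infty$, whence $\Delta=0$.

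For the converse it suffices to use the hypothesis for a single $P\succ 0$ admitting a solution $Q\succ 0$, and show this forces every eigenvalue of $A$ to lie in the open left half-plane. Let $\lambda\in\cset$ be an eigenvalue with eigenvector $v\neq 0$, so $Av=\lambda v$, and since $A$ is real, $v^* A^\top = \bar\lambda\, v^*$. Then
\[
-\,v^* P v = v^*(A^\top Q + Q A)v = (\lambda+\bar\lambda)\,v^* Q v = 2\operatorname{Re}(\lambda)\,v^* Q v \eqsp.
\]
As $v^* P v>0$ and $v^* Q v>0$, this gives $\operatorname{Re}(\lambda)<0$; since $\lambda$ was arbitrary, $A$ is Hurwitz.

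There is no deep obstacle here, as the result is classical; the only points needing care are the analytic justifications rather than any conceptual difficulty. Concretely, the main technical step is establishing the norm bound $\normop{\rme^{At}}\leq C\rme^{-\omega t}$ and using it both to guarantee absolute convergence of the defining integral and to make the boundary term at $t=\infty$ vanish. The second delicate point is the correct handling of (possibly complex) eigenvectors in the converse, where one must work with the conjugate transpose $v^*$ and exploit that $A$ is real so $A^* = A^\top$; the interchange of $A^\top(\cdot)+(\cdot)A$ with the integral is then routine given absolute convergence.
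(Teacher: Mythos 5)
Your proof is correct, but there is nothing in the paper to compare it against step by step: the paper does not prove this lemma at all, it simply cites \cite[Lemma 9.1, p.~140]{poznyak:control}. What you have written is the standard classical argument that such a reference contains, and it is complete. In the forward direction, the integral candidate $Q=\int_0^\infty \rme^{A^\top t} P \rme^{At}\,\rmd t$ is justified properly: absolute convergence via the bound $\normop{\rme^{At}}\leq C\rme^{-\omega t}$, symmetry and positive definiteness of $Q$, the Lyapunov identity via the fundamental theorem of calculus with the boundary term at infinity killed by the same bound, and uniqueness via constancy of $\rme^{A^\top t}\Delta\,\rme^{At}$ for $\Delta=Q_1-Q_2$. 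In the converse you handle the delicate point correctly: working with the conjugate transpose $v^*$ of a possibly complex eigenvector, using that $A$ is real so $A^*=A^\top$, and using that a real symmetric positive definite matrix is also positive definite as a Hermitian form on $\cset^d$, so that $v^*Qv$ and $v^*Pv$ are real and strictly positive, forcing $\operatorname{Re}(\lambda)<0$ for every eigenvalue. The net effect of your route is a self-contained proof where the paper relies on an external textbook; nothing is missing.
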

\begin{proof}
See \cite[Lemma 9.1, p. 140]{poznyak:control}.
\end{proof}

\begin{lemma}
\label[lemma]{lem:Hurwitzstability}
Assume that $-A$ is a Hurwitz matrix. Let $Q$ be the unique solution of the Lyapunov equation
$A^\top Q + Q A =  \Id$.
Then, for any $\alpha \in [0, (1/2) \normop{A}[Q]^{-2} \normop{Q}^{-1}]$,
we get $\normop{\Id - \alpha A}[Q]^2 \leq (1 - a \alpha)$
with $a = (1/2) \normop{Q}^{-1}$. In particular, for any $\alpha \in [0, (1/2) \normop{A}[Q]^{-2} \normop{Q}^{-1}]$, $\normop{\Id - \alpha A}[] \leq \sqrt{\qcond}(1 - a \alpha/2)$, where $\qcond = \lambda_{\sf min}^{-1}( Q )\lambda_{\sf max}( Q )$. If in addition $\alpha \le \normop{Q}^2$ then
$ 1 - a \alpha \geq 1/2$.
\end{lemma}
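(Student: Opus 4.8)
The plan is to reduce all three assertions to a single Rayleigh-type computation for the squared $Q$-operator norm. First I would write, for any $x \neq 0$,
\[
\frac{\norm{(\Id - \alpha A)x}[Q]^2}{\norm{x}[Q]^2} = \frac{x^\top (\Id - \alpha A)^\top Q (\Id - \alpha A) x}{x^\top Q x},
\]
and expand the numerator matrix. Using the Lyapunov identity $A^\top Q + Q A = \Id$ from the hypothesis (which is exactly \Cref{lem:lyapunov} applied to the Hurwitz matrix $-A$, and which also furnishes $Q = Q^\top \succ 0$), the cross term collapses:
\[
(\Id - \alpha A)^\top Q (\Id - \alpha A) = Q - \alpha (A^\top Q + Q A) + \alpha^2 A^\top Q A = Q - \alpha \Id + \alpha^2 A^\top Q A.
\]
Hence the quadratic form equals $\norm{x}[Q]^2 - \alpha \norm{x}^2 + \alpha^2 \norm{Ax}[Q]^2$, which is the key simplification that drives the whole proof.

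Next I would replace the two $x$-dependent ratios by constants. From $\norm{x}[Q]^2 = x^\top Q x \le \normop{Q}\,\norm{x}^2$ I get $\norm{x}^2/\norm{x}[Q]^2 \ge \normop{Q}^{-1}$, so the (negative) linear term is bounded above by $-\alpha \normop{Q}^{-1}\norm{x}[Q]^2$; and by the very definition of the operator norm, $\norm{Ax}[Q]^2 \le \normop{A}[Q]^2\,\norm{x}[Q]^2$. Dividing through by $\norm{x}[Q]^2$ and taking the supremum over $x$ yields
\[
\normop{\Id - \alpha A}[Q]^2 \le 1 - \alpha \normop{Q}^{-1} + \alpha^2 \normop{A}[Q]^2.
\]
For $\alpha \le (1/2)\normop{A}[Q]^{-2}\normop{Q}^{-1}$ the quadratic term satisfies $\alpha^2\normop{A}[Q]^2 \le (\alpha/2)\normop{Q}^{-1}$, so the right-hand side is at most $1 - (\alpha/2)\normop{Q}^{-1} = 1 - a\alpha$, which is the first assertion. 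Since the left-hand side is nonnegative, this also forces $a\alpha \le 1$ on the admissible range, a fact I will reuse.

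For the spectral-norm statement I would pass between the two norms via the eigenvalue bounds $\lambda_{\mathsf{min}}(Q)\norm{y}^2 \le \norm{y}[Q]^2 \le \lambda_{\mathsf{max}}(Q)\norm{y}^2$. Applied to $y=(\Id-\alpha A)x$ and to $x$, this gives for every matrix $M$ the comparison $\normop{M}[] \le \sqrt{\lambda_{\mathsf{max}}(Q)/\lambda_{\mathsf{min}}(Q)}\,\normop{M}[Q] = \sqrt{\qcond}\,\normop{M}[Q]$. Combining with the first bound and the elementary inequality $\sqrt{1-t}\le 1-t/2$ (valid for $t\in[0,1]$, which applies since $a\alpha \le 1$) yields $\normop{\Id - \alpha A}[] \le \sqrt{\qcond}\,(1 - a\alpha/2)$. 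Finally, the last claim is an immediate substitution: with $a = (1/2)\normop{Q}^{-1}$ one has $a\alpha = \alpha/(2\normop{Q})$, so the extra bound on $\alpha$ forces $a\alpha \le 1/2$ and hence $1 - a\alpha \ge 1/2$.

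This is essentially a routine linear-algebra computation, so I do not expect a serious obstacle. The only points demanding care are bookkeeping the directions of the two norm comparisons (the Euclidean norm lower-bounds, while the $Q$-norm upper-bounds, the relevant quantities) and verifying that the admissible range of $\alpha$ keeps $1 - a\alpha$ nonnegative, which is what legitimizes the square-root step in the passage to the spectral norm.
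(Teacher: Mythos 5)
Your proof of the two main assertions is correct and follows essentially the same route as the paper: the same expansion of the Rayleigh quotient $x^\top(\Id-\alpha A)^\top Q(\Id-\alpha A)x / x^\top Qx$ via the Lyapunov identity, the same two bounds ($\norm{x}^2 \geq \normop{Q}^{-1}\norm{x}[Q]^2$ and $\norm{Ax}[Q] \leq \normop{A}[Q]\norm{x}[Q]$), and the same finish via $\sqrt{1-t}\leq 1-t/2$ and the norm equivalence $\normop{M}\leq\sqrt{\qcond}\,\normop{M}[Q]$. Your explicit remark that $0 \leq \normop{\Id-\alpha A}[Q]^2 \leq 1-a\alpha$ forces $a\alpha\leq 1$, legitimizing the square-root step, is a detail the paper leaves implicit, and is a welcome addition.

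One caveat on the final claim: your "immediate substitution" does not actually go through with the hypothesis as written. Under $\alpha \leq \normop{Q}^2$ and $a=(1/2)\normop{Q}^{-1}$ you get $a\alpha \leq \normop{Q}/2$, which is $\leq 1/2$ only when $\normop{Q}\leq 1$; for instance, with $A=\epsilon\Id$ one has $Q=\Id/(2\epsilon)$, and taking $\alpha$ at the top of the admissible range gives $a\alpha=1$, so $1-a\alpha=0<1/2$ even though $\alpha\leq\normop{Q}^2$. The arithmetic you describe is exactly the one that works under the condition $\alpha\leq\normop{Q}$, which is almost certainly what the lemma intends (the exponent $2$ appears to be a typo); note the paper's own proof silently omits this last claim altogether. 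So your argument is fine modulo that correction of the hypothesis, but as a verification of the statement literally as printed it has a gap — one inherited from the statement, not introduced by you.
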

\begin{proof}
For any  $x \in \rset^d\setminus\{0\}$, we get
\[
\frac{x^\top (\Id - \alpha A)^\top Q (\Id - \alpha A) x}{x^\top Q x}
=1 - \alpha \frac{\norm{x}^2}{x^\top Q x} + \alpha^2 \frac{x^\top A^\top Q A x}{x^\top Q x}
\]
Hence, we get that for all $\alpha \in [0, (1/2) \normop{A}[Q]^{-2} \normop{Q}^{-1}]$,
\begin{align}
1 - \alpha \frac{\norm{x}^2}{x^\top Q x} + \alpha^2
\frac{x^\top A^\top Q A x}{x^\top Qx}
&\leq 1 - \alpha  \normop{Q}^{-1} + \alpha^2 \normop{A}[Q]^2 \leq 1 - (1/2) \normop{Q}^{-1} \alpha \,. \nonumber
\end{align}
The proof is completed using that for any $t \in \ccint{0,1}$, $(1-t)^{1/2} \leq 1-t/2$ and that for any matrix $A \in \rset^{d \times d}$, $\norm{A}[Q] \leq \qcond^{1/2} \norm{A}$.
\end{proof}


\begin{lemma}
\label{lem:technical_lemma_1}
Let $\upalpha > 0$ and $(u_i)_{i \geq 1}$ be a sequence of non-negative numbers. Then, for any $n \in \nset$, $n \geq 2$, and any $\epsilon \in \ooint{0,1}$,  
\[
\txts 1 + \upalpha^{n}\prod_{i=1}^{n}u_{i} \leq \exp{\{\upalpha^{1+\epsilon}(1+\epsilon)^{-1}\sum_{i=1}^{n}u_{i}^{1+\epsilon}\}} \eqsp. 
\]
\end{lemma}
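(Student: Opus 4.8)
The plan is to reduce the claim to a one-dimensional inequality that can be checked by elementary calculus. First I would absorb $\upalpha$ into the sequence by setting $w_i = \upalpha u_i \ge 0$, so that $\upalpha^{n}\prod_{i=1}^n u_i = \prod_{i=1}^n w_i$ and $\upalpha^{1+\epsilon}\sum_{i=1}^n u_i^{1+\epsilon} = \sum_{i=1}^n w_i^{1+\epsilon}$; it therefore suffices to prove $1 + \prod_{i=1}^n w_i \le \exp\{(1+\epsilon)^{-1}\sum_{i=1}^n w_i^{1+\epsilon}\}$ for an arbitrary non-negative sequence. Taking logarithms and writing $P = \prod_{i=1}^n w_i$, the goal becomes $\log(1+P) \le (1+\epsilon)^{-1}\sum_{i=1}^n w_i^{1+\epsilon}$.

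The next step is to decouple the individual $w_i$ from the product $P$ using the arithmetic-geometric mean inequality applied to the numbers $w_i^{1+\epsilon}$: this gives $\tfrac1n \sum_{i=1}^n w_i^{1+\epsilon} \ge \bigl(\prod_{i=1}^n w_i^{1+\epsilon}\bigr)^{1/n} = P^{(1+\epsilon)/n}$. Setting $q = (1+\epsilon)/n$ and using the hypotheses $n \ge 2$ and $\epsilon \in \ooint{0,1}$, which force $1 + \epsilon < 2 \le n$ and hence $q \in \ooint{0,1}$, the claim reduces to the scalar bound $\log(1+P) \le q^{-1} P^{q}$, valid for every $P \ge 0$.

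Finally I would establish this scalar inequality by monotonicity: the function $g(P) = q^{-1}P^{q} - \log(1+P)$ satisfies $g(0)=0$, and $g'(P) = P^{q-1} - (1+P)^{-1} \ge 0$ because $(1+P) \ge P^{1-q}$ for all $P \ge 0$ when $1 - q \in \ooint{0,1}$ (split into the cases $P \le 1$ and $P \ge 1$). Hence $g \ge 0$ on $\rset_+$, and chaining the three estimates and exponentiating yields the statement. The main obstacle, and the reason the result is not entirely trivial, is that the crude bounds $\log(1+x) \le x$ or $1 + \prod_i w_i \le \prod_i (1 + w_i)$ are too lossy: the latter produces a spurious linear term $\sum_i w_i$ that dominates when the $w_i$ are small. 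The essential idea is that AM-GM converts the product into the single power $P^{q}$ with exponent $q<1$, which is exactly what makes $\log(1+P) \le q^{-1}P^{q}$ hold uniformly; securing $q < 1$ is precisely where the assumptions $n \ge 2$ and $\epsilon < 1$ enter.
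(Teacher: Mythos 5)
Your proof is correct and follows essentially the same route as the paper: both arguments rest on the scalar inequality $\log(1+P)\le q^{-1}P^{q}$ for $q\in\ooint{0,1}$ (the paper states it as $1+t\le\exp\{\beta t^{1/\beta}\}$, $\beta = 1/q \geq 1$, justified by the same derivative comparison $1/(1+t)\le t^{1/\beta-1}$ you use) combined with the arithmetic–geometric mean inequality. The only cosmetic difference is that you apply AM--GM directly to the powers $w_i^{1+\epsilon}$, which merges into one step what the paper does in two (AM--GM on the $u_i$ followed by Jensen's inequality for $x\mapsto x^{1+\epsilon}$).
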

\begin{proof}
  First note that for any $\beta \geq 1$ and $t \geq 0$, $1 \leq t^{1/\beta}(t^{-1}+1)$ which implies that $1 + t \leq \exp{\parentheseLigne{\beta t^{1/\beta}}}$.
Using this inequality for $\beta = n/(1+\epsilon)$ and the inequality of arithmetic and geometric means, we get
\begin{align*}
\txts 1 + \upalpha^{n}\prod_{i=1}^{n}u_{i} 
&\txts \leq  \exp{\parentheseLigne{\frac{n}{1+\epsilon}\upalpha^{1+\epsilon}(\prod_{i=1}^{n}u_{i})^{(1+\epsilon)/n}\}} }
 \leq \exp\parentheseLigne{\frac{n}{1+\epsilon}\upalpha^{1+\epsilon}(\frac{1}{n}\sum_{i=1}^{n}u_{i})^{1+\epsilon}} \\
&\txts  \leq \exp{\parentheseLigne{\frac{\upalpha^{1+\epsilon}}{1+\epsilon}\sum_{i=1}^{n}u_{i}^{1+\epsilon}}}\eqsp,
\end{align*}
where the last inequality follows from Jensen's inequality.
\end{proof}

\begin{lemma}
\label[lemma]{lem:trick_2}
Let $(\mathfrak F_\ell)_{\ell \geq 0}$ be some filtration and a sequence of non-negative random variables $(\xi_\ell)_{\ell \geq 0}$ is adopted to this filtration.
Then, for any $N \in \nset$, $p \in \nset$, it holds
\begin{equation}
\label{eq:cond_expectation_bound_product}
 \PE \left[ \prod_{\ell=1}^{N} \xi_{\ell}^p \right] \le   \bigg\{\PE \bigg[\prod_{\ell=1}^{N} \PE[ \xi_\ell^{2p} | \mathfrak F_{\ell-1} ] \bigg]\bigg\}^{1/2} \eqsp. 
\end{equation}
\end{lemma}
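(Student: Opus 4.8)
The plan is to first reduce to the case $p=1$. Setting $\eta_\ell = \xi_\ell^p$, each $\eta_\ell$ is non-negative and $(\mathfrak{F}_\ell)_{\ell\geq 0}$-adapted, and $\PE[\eta_\ell^2\mid\mathfrak{F}_{\ell-1}] = \PE[\xi_\ell^{2p}\mid\mathfrak{F}_{\ell-1}]$, so \eqref{eq:cond_expectation_bound_product} is precisely the $p=1$ inequality applied to $(\eta_\ell)_{\ell\geq 0}$. Thus it suffices to show, for any non-negative adapted sequence $(\eta_\ell)_{\ell\geq 0}$ and $C_\ell := \PE[\eta_\ell^2\mid\mathfrak{F}_{\ell-1}]$, that $\PE[\prod_{\ell=1}^N \eta_\ell] \le \{\PE[\prod_{\ell=1}^N C_\ell]\}^{1/2}$. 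I note in advance that a direct forward induction on $N$ that peels off the last factor and applies an unconditional Cauchy--Schwarz does \emph{not} close: splitting $\PE[\prod_{\ell<N}\eta_\ell\cdot C_N^{1/2}]$ produces the term $\PE[\prod_{\ell<N}\eta_\ell^{2}]$, i.e. higher moments that the induction hypothesis cannot control.

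The device that fixes this is to prove a \emph{conditional} statement by \emph{backward} induction on $k$: for every $k\in\{0,\dots,N\}$,
\begin{equation}
\label{eq:trick2_invariant}
\PE\Big[\prod_{\ell=1}^N \eta_\ell \,\Big|\, \mathfrak{F}_k\Big] \le \Big(\prod_{\ell=1}^k \eta_\ell\Big)\, \PE\Big[\prod_{\ell=k+1}^N C_\ell \,\Big|\, \mathfrak{F}_k\Big]^{1/2} \eqsp.
\end{equation}
The base case $k=N$ is an equality, since the tail product in \eqref{eq:trick2_invariant} is empty. The case $k=0$, where the prefix product is empty, gives $\PE[\prod_{\ell=1}^N\eta_\ell\mid\mathfrak{F}_0] \le \PE[\prod_{\ell=1}^N C_\ell\mid\mathfrak{F}_0]^{1/2}$; taking full expectations and applying Jensen's inequality to the concave map $t\mapsto t^{1/2}$ yields $\PE[\prod_{\ell=1}^N\eta_\ell] \le \{\PE[\prod_{\ell=1}^N C_\ell]\}^{1/2}$, as desired.

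For the inductive step from $k$ to $k-1$, I would start from the tower property $\PE[\,\cdot\mid\mathfrak{F}_{k-1}] = \PE[\PE[\,\cdot\mid\mathfrak{F}_k]\mid\mathfrak{F}_{k-1}]$, insert the hypothesis \eqref{eq:trick2_invariant} at level $k$, and pull the $\mathfrak{F}_{k-1}$-measurable factor $\prod_{\ell=1}^{k-1}\eta_\ell$ out of the conditional expectation. Applying \emph{conditional} Cauchy--Schwarz given $\mathfrak{F}_{k-1}$ to the pair $\eta_k$ and $\PE[\prod_{\ell>k}C_\ell\mid\mathfrak{F}_k]^{1/2}$ produces the factor $\PE[\eta_k^2\mid\mathfrak{F}_{k-1}]^{1/2}=C_k^{1/2}$, together with $\PE[\PE[\prod_{\ell>k}C_\ell\mid\mathfrak{F}_k]\mid\mathfrak{F}_{k-1}]^{1/2}$, which flattens to $\PE[\prod_{\ell>k}C_\ell\mid\mathfrak{F}_{k-1}]^{1/2}$ by the tower property. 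Since $C_k$ is $\mathfrak{F}_{k-1}$-measurable, the two combine as $C_k^{1/2}\,\PE[\prod_{\ell>k}C_\ell\mid\mathfrak{F}_{k-1}]^{1/2} = \PE[\prod_{\ell\geq k}C_\ell\mid\mathfrak{F}_{k-1}]^{1/2}$, which is exactly \eqref{eq:trick2_invariant} at level $k-1$.

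The only genuine obstacle is identifying the correct inductive invariant in \eqref{eq:trick2_invariant}: one must carry the tail product of conditional second moments $\prod_{\ell>k}C_\ell$ \emph{inside} its own conditional expectation $\PE[\,\cdot\mid\mathfrak{F}_k]$ rather than as a bare product, so that the conditional Cauchy--Schwarz step and the tower property mesh and the outer square-root exponent is preserved at each level. A pleasant feature of this route is that it never divides by $C_\ell$, so no degeneracy case (where some $C_\ell$ vanishes) arises. Once the invariant is in place, every remaining step is a routine application of the tower property, conditional Cauchy--Schwarz, and the $\mathfrak{F}_{k-1}$-measurability of $C_k$.
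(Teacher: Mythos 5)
Your proof is correct, but it takes a genuinely different route from the paper's. The paper's argument is a multiplicative normalization trick: it sets $B_{N,2p} = \bigl(\prod_{\ell=1}^N \PE[\xi_\ell^{2p}\mid\mathfrak F_{\ell-1}]\bigr)^{-1}$ and $y_{N,2p}=\prod_{\ell=1}^N\xi_\ell^{2p}$, shows by iterating the tower property that the compensated product has mean one, $\PE[B_{N,2p}\,y_{N,2p}]=1$, and then concludes with a \emph{single unconditional} Cauchy--Schwarz applied to the factorization $y_{N,p}= \bigl(y_{N,p}B_{N,2p}^{1/2}\bigr)\cdot B_{N,2p}^{-1/2}$, using $y_{N,p}^2=y_{N,2p}$ and $B_{N,2p}^{-1}=\prod_{\ell=1}^N\PE[\xi_\ell^{2p}\mid\mathfrak F_{\ell-1}]$. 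Your backward induction replaces that global normalization by a \emph{conditional} Cauchy--Schwarz at every step, with the invariant that carries the tail product $\prod_{\ell>k}C_\ell$ inside $\PE[\,\cdot\mid\mathfrak F_k]$. The price is a longer, more mechanical argument; the payoffs are real: (i) you never divide by the conditional moments $C_\ell$, so no positivity convention is needed in the degenerate case where $\PE[\xi_\ell^{2p}\mid\mathfrak F_{\ell-1}]$ vanishes on a set of positive probability, whereas the paper's $B_{k,p}$ is ill-defined there and strictly speaking requires a separate remark; and (ii) your statement at level $k=0$ is a conditional version of the lemma given $\mathfrak F_0$, slightly stronger than the stated unconditional bound. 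Your preliminary reduction to $p=1$ via $\eta_\ell=\xi_\ell^p$ is harmless and mirrors how the paper juggles the exponents $p$ and $2p$ simultaneously. Both proofs rest only on the tower property and Cauchy--Schwarz; the paper's is the shorter one, yours the more robust one.
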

\begin{proof}
Denote for any $k \in \nset$, $p \in \nset$,
$$
B_{k,p}= ( \prod_{\ell=1}^k \PE[ \xi_\ell^p | \mathfrak F_{\ell-1} ] )^{-1}, \, B_{0,p} : = 1, \,
y_{k,p} = \xi_{k}^p y_{k-1}, \,  y_{0,p}: = 1 \eqsp.
$$
It is straightforward to check that for any $p \in \nset$,
\begin{align*}
\PE[B_{k+1,p} y_{k+1,p}] & = \PE[B_{k+1,p} \PE[ \xi_{k+1}^p | \mathfrak F_k] y_{k,p} ] = \PE[B_{k,p} y_{k,p}] = \cdots = \PE[ B_{0,p} y_{0,p} ] = 1\eqsp.
\end{align*}
This fact implies that
\begin{align*}
\PE\left[ \prod_{\ell=1}^{N} \xi_{\ell} \right] &= \PE[y_{N}] = \PE[y_{N,p} B_{N,2p}^{1/2} B_{N,2p}^{-1/2}] \\
&\le \{\PE [y_{N,2p} B_{N,2p}]\}^{1/2} \{\PE [B_{N,2p}^{-1}] \}^{1/2} = \bigg\{\PE \bigg[\prod_{\ell=1}^{N} \PE[ \xi_\ell^{2p} | \mathfrak F_{\ell-1} ] \bigg] \bigg\}^{1/2} \eqsp.
\end{align*}
Hence, ~\eqref{eq:cond_expectation_bound_product} is proved.
\end{proof}

\subsection{Core Lemmas}
\begin{lemma}
\label[lemma]{lem:deterministic_part_control} Assume that the conditions of  \Cref{th:expconvproducts} holds. Then, for any $\ell\in\{1,\ldots,N\}$, and $p \geq 1$,
\begin{equation*}
\bigl(1+ \qcond^{\half} \rme^{a}\normop{R_{\ell}}\bigr)^p \leq \exp\{pC^{(0)} h^2 \alpha_{j_{\ell-1}+1}^2 \}\eqsp,
\end{equation*}
where  $R_\ell$ is given in \eqref{eq:RlRlbar_def} and 
\begin{equation}
\label{eq:c_0_definition}
C^{(0)} = (1/2)\qcond^{\half} \normop{A}^2 \exp\parentheseLigne{\normop{A}+a}\eqsp.
\end{equation}
\end{lemma}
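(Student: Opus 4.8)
The plan is to establish a quadratic-in-stepsize bound on $\normop{R_\ell}$ and then convert it to the claimed exponential bound via $1+x \le \rme^x$. Since the statement only requires a crude estimate (as noted in \Cref{sec:proof}), no sharp constants beyond those appearing in $C^{(0)}$ are needed, so the argument is purely deterministic and does not invoke the Markov chain at all.

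First I would bound $\normop{R_\ell}$ directly from its definition in \eqref{eq:RlRlbar_def}. Using the triangle inequality, submultiplicativity of the operator norm, and the fact that $(\alpha_i)_i$ is non-increasing (so that $\alpha_{i_u} \le \alpha_{j_{\ell-1}+1}$ for every index $i_u$ in the $\ell$-th block), each summand indexed by $(i_1,\dots,i_r)\in\msi_r^\ell$ is bounded by $\alpha_{j_{\ell-1}+1}^r \normop{A}^r$. Since the number of such multi-indices is $\#\msi_r^\ell = \binom{j_\ell-j_{\ell-1}}{r} \le \binom{h}{r}$, this yields
\[
\txts \normop{R_\ell} \le \sum_{r=2}^h \binom{h}{r} \parentheseLigne{\alpha_{j_{\ell-1}+1}\normop{A}}^r \eqsp.
\]

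Next, writing $t = \alpha_{j_{\ell-1}+1}\normop{A}$ and using $\binom{h}{r}\le h^r/r!$, I would dominate the right-hand side by the tail of an exponential series, $\sum_{r\ge 2}(ht)^r/r! = \rme^{ht}-1-ht$. The key elementary inequality is $\rme^x - 1 - x \le (x^2/2)\rme^x$ for $x\ge 0$ (which follows termwise from $1/s! \le 1/(2(s-2)!)$ for $s\ge 2$), giving $\normop{R_\ell} \le (1/2)(ht)^2\rme^{ht} = (1/2)h^2\alpha_{j_{\ell-1}+1}^2\normop{A}^2\rme^{h\alpha_{j_{\ell-1}+1}\normop{A}}$. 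To control the exponential prefactor I would invoke the standing assumption $h\alpha_{\infty,p}\le 1$ from the proof of \Cref{th:expconvproducts}, together with $\alpha_{j_{\ell-1}+1}\le\alpha_1\le\alpha_{\infty,p}$, so that $h\alpha_{j_{\ell-1}+1}\normop{A}\le\normop{A}$ and hence $\rme^{h\alpha_{j_{\ell-1}+1}\normop{A}}\le\rme^{\normop{A}}$.

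Multiplying through by $\qcond^{\half}\rme^a$ then produces exactly $\qcond^{\half}\rme^a\normop{R_\ell}\le C^{(0)}h^2\alpha_{j_{\ell-1}+1}^2$ with $C^{(0)}$ as in \eqref{eq:c_0_definition}, and applying $1+x\le\rme^x$ followed by raising to the $p$-th power yields the claim. There is no genuine obstacle here; the only points requiring care are the combinatorial count $\#\msi_r^\ell\le\binom{h}{r}$ and the choice of the inequality $\rme^x-1-x\le(x^2/2)\rme^x$, which is precisely what turns the binomial/exponential tail into the desired quadratic factor while isolating a bounded exponential constant.
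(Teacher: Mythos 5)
Your proposal is correct and follows essentially the same route as the paper's proof: both bound $\normop{R_\ell}\le\sum_{r=2}^h\binom{h}{r}(\alpha_{j_{\ell-1}+1}\normop{A})^r$, reduce this to the common intermediate estimate $\tfrac{1}{2}h^2\alpha_{j_{\ell-1}+1}^2\normop{A}^2\rme^{h\alpha_{j_{\ell-1}+1}\normop{A}}$, invoke $h\alpha_{\infty,p}\le 1$ to absorb the exponential into $\rme^{\normop{A}}$, and finish with $1+t\le\rme^t$. The only (immaterial) difference is the combinatorial step: the paper uses $\binom{h}{r+2}\le\binom{h-2}{r}h^2/2$ and the binomial theorem, whereas you use $\binom{h}{r}\le h^r/r!$ together with $\rme^x-1-x\le(x^2/2)\rme^x$; both are valid and yield the same constant.
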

\begin{proof}
  Let $\ell\in\{1,\ldots,N\}$, and $p \geq 1$.
Using the definition of $R_\ell$ and since $(\alpha_i)_{i \in \nset}$ is non-increasing, we get 
\begin{align*}
\normop{R_{\ell}} &\leq \sum_{r=2}^{h}\binom{h}{r}\alpha_{j_{\ell-1}+1}^{r}\normop{A}^{r} \leq \alpha_{j_{\ell-1}+1}^2\normop{A}^2\sum_{r=0}^{h-2}\binom{h}{r+2}\alpha_{j_{\ell-1}+1}^{r}\normop{A}^{r} \\
&\leq 2^{-1}\alpha_{j_{\ell-1}+1}^2h^2\normop{A}^2(1+\alpha_{j_{\ell-1}+1}\normop{A})^{h-2} \leq 2^{-1}\alpha_{j_{\ell-1}+1}^2h^2\normop{A}^2 \exp\{\alpha_{j_{\ell-1}+1}h\normop{A}\}\eqsp,
\end{align*}
where we have used for the last two inequalities, the upper bounds $\binom{h}{r+2} \leq \binom{h-2}{r}(h^2/2)$ for any $r \in \{0,\ldots,h-2\}$ and $(1+t) \leq \rme^t$ for any $t \geq 0$.
It yields using that $\alpha_{\infty,p}h \leq 1$ that $\normop{R_{\ell}} \leq 2^{-1}\alpha_{j_{\ell-1}+1}^2h^2\normop{A}^2 \rme^{\norm{A}}$. 
The proof is then completed using the bound $(1+t) \leq \rme^t$ for any $t \geq 0$ again.
\end{proof}

\begin{lemma}
  \label[lemma]{lem:remainder_bound}
  Assume that the conditions of  \Cref{th:expconvproducts} holds. Then, for any $\ell\in\{1,\ldots,N\}$, and $p \geq 1$, almost surely it holds 
\begin{equation*}
\txts \bigl(1 + \qcond^{\half} \rme^{a}\normop{\bar{R}_{\ell}}\bigr)^{2p} \leq \exp{\{2^{h+1}p C^{(1)}  \alpha^{1+\varepsilon}_{j_{\ell-1}+1} \sum_{k=j_{\ell-1}+1}^{j_{\ell}}W^{\delta}(\State_{k-m})\}}\eqsp,
\end{equation*}
where $\bar{R}_{\ell}$ is defined by \eqref{eq:RlRlbar_def} and 
\begin{equation}
\label{eq:c_1_definition}
C^{(1)} = (\qcond^{\half} d \rme^a \Const{A})^{1+\varepsilon}/(1+\varepsilon)\eqsp.
\end{equation}
\end{lemma}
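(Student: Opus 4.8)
The plan is to reduce everything to the subset-expansion inequality of \Cref{lem:technical_lemma_1}, whose hypothesis $n \geq 2$ matches exactly the fact that $\bar{R}_{\ell}$ collects only the terms of order $r \geq 2$ in the step size.

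First I would control the spectral norm of the matrix-valued function. By \Cref{assum:almost_bounded} every entry satisfies $|[\funcAw]_{i,j}(z)| \leq \Const{A} W^{\beta}(z)$, hence $\normop{\funcA{z}} \leq \frobnorm{\funcA{z}} \leq d \Const{A} W^{\beta}(z)$. Applying the triangle inequality to the definition \eqref{eq:RlRlbar_def} (the signs $(-1)^{r}$ drop out), using submultiplicativity of the spectral norm, and the fact that $(\alpha_i)_{i}$ is non-increasing so that $\alpha_{i_u} \leq \alpha_{j_{\ell-1}+1}$ for every index $i_u \geq j_{\ell-1}+1$, I obtain
\[
\normop{\bar{R}_{\ell}} \leq \sum_{r=2}^{h} (\alpha_{j_{\ell-1}+1} d \Const{A})^{r} \sum_{(i_1,\dots,i_r) \in \msi_r^{\ell}} \prod_{u=1}^{r} W^{\beta}(\State_{i_u-m}).
\]

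Next, write $v_k = W^{\beta}(\State_{k-m})$ and $\mu_0 = \qcond^{\half}\rme^{a}\alpha_{j_{\ell-1}+1}d\Const{A}$. Since $\qcond \geq 1$ and $a>0$ give $\qcond^{\half}\rme^{a} \geq 1$, and since every summand has $r \geq 2$, multiplying the previous display by $\qcond^{\half}\rme^{a}$ and absorbing this factor into the $r$-th powers yields $\qcond^{\half}\rme^{a}\normop{\bar{R}_{\ell}} \leq \sum_{S}\mu_0^{|S|}\prod_{k \in S} v_k$, where $S$ ranges over the subsets of $\{j_{\ell-1}+1,\dots,j_{\ell}\}$ with $|S| \geq 2$. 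Adding $1$, using $1 + \sum_{S} a_S \leq \prod_{S}(1+a_S)$ for non-negative $a_S$, and then applying \Cref{lem:technical_lemma_1} to each factor with $n = |S| \geq 2$, $\upalpha = \mu_0$ and $\epsilon = \varepsilon$, I get
\[
1 + \qcond^{\half}\rme^{a}\normop{\bar{R}_{\ell}} \leq \prod_{S}\Big(1 + \mu_0^{|S|}\prod_{k\in S} v_k\Big) \leq \exp\Big\{\tfrac{\mu_0^{1+\varepsilon}}{1+\varepsilon}\sum_{S}\sum_{k \in S} v_k^{1+\varepsilon}\Big\}.
\]

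Finally I would collect the exponent. Each index $k$ lies in at most $2^{h}$ of the subsets $S$, so $\sum_{S}\sum_{k\in S}v_k^{1+\varepsilon} \leq 2^{h}\sum_{k=j_{\ell-1}+1}^{j_{\ell}} v_k^{1+\varepsilon}$. Because $v_k^{1+\varepsilon} = W^{\beta(1+\varepsilon)}(\State_{k-m}) \leq W^{\delta}(\State_{k-m})$ — here I use $\beta(1+\varepsilon) < \delta$ from \Cref{assum:almost_bounded} together with $W \geq 1$ (as $V \geq \rme$) — and since $\mu_0^{1+\varepsilon}/(1+\varepsilon) = C^{(1)}\alpha_{j_{\ell-1}+1}^{1+\varepsilon}$ by the definition \eqref{eq:c_1_definition}, raising the last display to the power $2p$ produces exactly the claimed bound. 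The only genuine subtlety is the second step: one must keep the pure $r \geq 2$ part $\bar{R}_{\ell}$ rather than the whole block, since the term-by-term exponential bound of \Cref{lem:technical_lemma_1} fails for a single linear factor (for small $\mu_0$ one has $1 + \mu_0 v \not\leq \exp\{\mu_0^{1+\varepsilon}v^{1+\varepsilon}/(1+\varepsilon)\}$); the quadratic-or-higher structure is precisely what makes the step-size power $1+\varepsilon$ admissible.
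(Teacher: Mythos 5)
Your proof is correct and follows essentially the same route as the paper's: expand $\bar{R}_{\ell}$ over tuples/subsets of size $r\geq 2$, use $1+\sum_S a_S \leq \prod_S(1+a_S)$, apply \Cref{lem:technical_lemma_1} factor by factor (absorbing $\qcond^{\half}\rme^{a}\geq 1$ into the $r$-th power), bound the subset multiplicity by $2^h$, and convert $W^{\beta(1+\varepsilon)}\leq W^{\delta}$ via $W\geq 1$ and \Cref{assum:almost_bounded}. The only cosmetic difference is that you substitute the entrywise bound $\normop{\funcA{z}}\leq d\Const{A}W^{\beta}(z)$ at the start rather than at the end, and your closing remark about why the argument must exclude the linear ($r=1$) term is exactly the reason the paper treats $S_\ell$ separately via Rosenthal's inequality.
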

\begin{proof}
    Let $\ell\in\{1,\ldots,N\}$, and $p \geq 1$.
Using the definition of $\bar{R}_\ell$, we consider the following decomposition 
\[
\txts \bar{R}_{\ell} = \sum_{r=2}^{h}\bar{R}^{(r)}_{\ell}, \quad \bar{R}^{(r)}_{\ell} = (-1)^{r}\sum_{(i_1,\dots,i_r)\in\msi_r^\ell}(\prod_{u=1}^{r}\alpha_{i_u})A^{r}\eqsp. 
\]
Then, using that $(1+a+b) \leq (1+a)(1+b)$ for $a,b \geq 0$ and $(\alpha_i)_{i \in \nset}$ is non-increasing yields
\begin{align*}
\txts  (1 + \qcond^{\half} \rme^{a}\normop{\bar{R}_{\ell}})^{2p} & \txts\leq \prod_{r=2}^{h}(1+\qcond^{\half} \rme^{a}\normop{\bar{R}^{(r)}_{\ell}})^{2p} \\
  &\txts\leq \prod_{r=2}^{h}\prod_{(i_1,\dots,i_r) \in \msi_r^\ell}(1 + \qcond^{\half} \rme^{a} \alpha^{r}_{j_{\ell-1}+1}\prod_{k=1}^{r}\normop{ \funcA{\State_{i_k-m}}})^{2p}    
\end{align*}
 Using \Cref{lem:technical_lemma_1}, $r \geq 2$ and $\varepsilon \in \ooint{0,1}$ in \Cref{assum:almost_bounded}, 
\begin{align*}
(1 + \qcond^{\half} \rme^{a}\normop{\bar{R}_{\ell}})^{2p} 
  & \txts \leq \prod_{r=2}^{h}\prod_{(i_1,\dots,i_r)\in\msi_r^{\ell}}\exp\parenthese{\frac{2p(\qcond^{\half} \rme^{a})^{1+\varepsilon}\alpha^{1+\varepsilon}_{j_{\ell-1}+1}}    {1+\varepsilon}\sum_{u=1}^{r}\normop{\funcA{\State_{i_u-m}}}^{1+\varepsilon}} \\
  & \txts \leq \exp\parenthese{\frac{2 p (\qcond^{\half} \rme^{a})^{1+\varepsilon} \alpha^{1+\varepsilon}_{j_{\ell-1}+1}}{1+\varepsilon}\sum_{k=j_{\ell-1}+1}^{j_{\ell}}\normop{\funcA{\State_{k-m}}}^{1+\varepsilon} \sum_{r=2}^{h}\sum_{(i_1,\dots,i_r)\in\msi_r^{\ell}}} \\
  &\leq
\exp\parenthese{\frac{2^{h+1} p (\qcond^{\half} \rme^{a})^{1+\varepsilon} \alpha^{1+\varepsilon}_{j_{\ell-1}+1}}{1+\varepsilon}\sum_{k=j_{\ell-1}+1}^{j_{\ell}}\normop{\funcA{\State_{k-m}}}^{1+\varepsilon}}
    \eqsp.
\end{align*}
The proof follows from \Cref{assum:almost_bounded} which implies that $\normop{\funcA{z}}^{1+\varepsilon} \leq \frobnorm{\funcA{z}}^{1+\varepsilon} \leq d^{1+\varepsilon}\Const{A}^{1+\varepsilon}W^{\delta}(z)$, for any $z\in\msz$.
\end{proof}

Under \Cref{assum:almost_bounded}, define for $n \in \nset$, $n \geq 1$, $(\upalpha_i)_{i \in \nset}$ a non-increasing positive sequence,
\begin{equation}
  \label{eq:def_r_a}
  r_A = \min\{ s \geq 0 \, : \, \beta \leq 2\delta - 1 - (1-\delta)/s\} \eqsp, \qquad \tS_n = \sum_{k = 1}^{n} \upalpha_{k}(\funcA{\State_{k}} - A) \eqsp. 
\end{equation}
\begin{lemma}
\label[lemma]{lem:rosenthal_part}
Assume that the conditions of  \Cref{th:expconvproducts} holds. For any $n \in \nsets$,   $p \geq 1 \vee (r_A/4)$, 
\begin{align*}
\PE_{z}^{\nicefrac{1}{4p}}[(1 + \qcond^{\half} \rme^{a}\normop{\tS_n})^{4p}] \leq \exp\{C^{(2)}_{p}\upalpha_{1} n^{1/2}W^{\delta}(z)\}\eqsp,
\end{align*}
where $r_A,\tS_n$ are defined in \eqref{eq:def_r_a}, and  
\begin{equation}
\label{eq:c_2_p_definition_0}
C^{(2)}_{p} = \qcond^{\half} \rme^{a} d \Const{A}(4\Cros{4p})^{\nicefrac{1}{4p}}\eqsp,
\end{equation}
with $\Cros{4p}$ given in \eqref{eq:def_C_ros_ue}.
\end{lemma}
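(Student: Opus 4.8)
The plan is to reduce the spectral-norm moment $\PE_z[\normop{\tS_n}^{4p}]$ to scalar weighted-additive-functional bounds on the individual entries of $\tS_n$, to which the Markov-chain Rosenthal inequality \Cref{propo:application_rosenthal} applies, and then to pass from the moment of $\normop{\tS_n}$ to that of $(1+\qcond^{\half}\rme^{a}\normop{\tS_n})^{4p}$ by Minkowski's inequality together with the elementary bound $1+t\le\rme^t$. Write $c=\qcond^{\half}\rme^{a}$ for brevity. By \Cref{assum:Hurwitzmatrices} one has $A=\PE_\pi[\funcA{Z_0}]$, so entrywise $A_{i,j}=\pi([\funcAw]_{i,j})$ and hence $[\tS_n]_{i,j}=\sum_{k=1}^n\upalpha_k\{[\funcAw]_{i,j}(\State_k)-\pi([\funcAw]_{i,j})\}$, which is precisely the form of the statistic controlled by the Rosenthal bound.

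First I would control the matrix norm entrywise while keeping track of the dimensional factor. Using $\normop{\tS_n}\le\frobnorm{\tS_n}$ together with the power-mean (Jensen) inequality $(\sum_{i,j}a_{i,j}^2)^{2p}\le d^{4p-2}\sum_{i,j}a_{i,j}^{4p}$, I get $\frobnorm{\tS_n}^{4p}\le d^{4p-2}\sum_{i,j}|[\tS_n]_{i,j}|^{4p}$; taking expectations and bounding the sum of $d^2$ terms by $d^2$ times their maximum yields
\begin{equation*}
\PE_z^{1/(4p)}[\normop{\tS_n}^{4p}]\le d\,\max_{1\le i,j\le d}\PE_z^{1/(4p)}[|[\tS_n]_{i,j}|^{4p}].
\end{equation*}
The point of this route is that it produces the factor $d$ (rather than the $d^2$ that a crude triangle inequality would give), matching the $d$ appearing in $C^{(2)}_p$.

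Next I would invoke the weighted Rosenthal inequality \Cref{propo:application_rosenthal}, part \ref{propo:application_rosenthal_b}, with $\upgamma\leftarrow\beta$, $g\leftarrow[\funcAw]_{i,j}$, $\beta_k\leftarrow\upalpha_k$ and moment order $4p$ (admissible since $4p\ge4\ge2$). Because $\VnormD{[\funcAw]_{i,j}}{W^\beta}\le\Const{A}$ by \Cref{assum:almost_bounded}, this gives
\begin{equation*}
\PE_z[|[\tS_n]_{i,j}|^{4p}]\le\Const{A}^{4p}\Cros{4p}\Big[\Big(\sum_{k=1}^n\upalpha_k^2\Big)^{2p}+\Big(\sum_{k=1}^{n-1}|\upalpha_k-\upalpha_{k+1}|\Big)^{4p}+\upalpha_1^{4p}+\upalpha_n^{4p}\Big]W^{4p(\beta+1-\delta)+1-\delta}(z).
\end{equation*}
Since $(\upalpha_k)$ is non-increasing and positive, I would bound $\sum_{k=1}^n\upalpha_k^2\le n\upalpha_1^2$, $\sum_{k=1}^{n-1}|\upalpha_k-\upalpha_{k+1}|=\upalpha_1-\upalpha_n\le\upalpha_1$ and $\upalpha_n\le\upalpha_1$, so the bracket is at most $4n^{2p}\upalpha_1^{4p}$ (using $n\ge1$); taking the $1/(4p)$-th root collapses the constant to $\Const{A}(4\Cros{4p})^{1/(4p)}$ and the $n$-dependence to $\upalpha_1 n^{1/2}$.

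The crux is the Lyapunov exponent: raising $W^{4p(\beta+1-\delta)+1-\delta}(z)$ to the power $1/(4p)$ gives $W^{\beta+1-\delta+(1-\delta)/(4p)}(z)$, and since $W\ge\rme\ge1$ this is at most $W^\delta(z)$ exactly when $\beta\le2\delta-1-(1-\delta)/(4p)$. By the definition of $r_A$ in \eqref{eq:def_r_a} and the monotonicity of $s\mapsto2\delta-1-(1-\delta)/s$, the hypothesis $p\ge1\vee(r_A/4)$ (equivalently $4p\ge r_A$) is precisely what secures this inequality. Combining the three displays gives $\PE_z^{1/(4p)}[\normop{\tS_n}^{4p}]\le d\Const{A}(4\Cros{4p})^{1/(4p)}\upalpha_1 n^{1/2}W^\delta(z)=c^{-1}C^{(2)}_p\upalpha_1 n^{1/2}W^\delta(z)$. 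Finally Minkowski's inequality yields $\PE_z^{1/(4p)}[(1+c\normop{\tS_n})^{4p}]\le1+c\,\PE_z^{1/(4p)}[\normop{\tS_n}^{4p}]\le1+C^{(2)}_p\upalpha_1 n^{1/2}W^\delta(z)$, and $1+t\le\rme^t$ closes the proof. The main obstacle is the bookkeeping that simultaneously produces the sharp dimensional factor $d$ and reduces the Rosenthal Lyapunov weight to exactly $W^\delta$; once \Cref{propo:application_rosenthal} is granted, the remaining estimates are routine.
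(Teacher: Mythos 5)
Your proposal is correct and follows essentially the same route as the paper's proof: Minkowski's inequality to strip off the additive $1$, the Frobenius-norm/Jensen entrywise reduction producing the factor $d$, the weighted Rosenthal inequality of \Cref{propo:application_rosenthal}-\ref{propo:application_rosenthal_b} with $\upgamma\leftarrow\beta$ applied to each entry, the exponent comparison $4p(\beta+1-\delta)+1-\delta\le 4p\delta$ guaranteed by $4p\ge r_A$, and finally $1+t\le\rme^t$. The only (cosmetic) difference is that you state the exponent inequality in the correct direction, whereas the paper's text contains a sign typo at that step.
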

\begin{proof}
  First by Minkowski's inequality, we get
  \begin{equation}
    \label{eq:proof_rosenthal_part_1}
    \PE_{z}^{\nicefrac{1}{4p}}[(1 + \qcond^{\half} \rme^{a}\normop{\tS_n})^{4p}] \leq 1+ \qcond^{\half} \rme^{a} \PE_{z}^{\nicefrac{1}{4p}}[\normop{\tS_n}^{4p}] \eqsp.
  \end{equation}
  In addition, note that denoting by $[\tS_n]_{i,j}$, the $(i,j)$-th component of $\tS_n$, using the Jensen inequality, we get
  \begin{align}
    \nonumber
\txts \PE_{z}[\normop{\tS_n}^{4p}] & \txts \leq \PE_{z}[(\sum_{i_1,i_2=1}^{d}[\tS_n]^2_{i_1,i_2})^{4p/2}]
                                                \leq d^{4p-2}\PE_{z}[\sum_{i_1,i_2=1}^{d}\lvert[\tS_n]_{i_1,i_2}\rvert^{4p}] \\
\label{eq:proof_rosenthal_part_2}  
&\txts  \leq d^{4p}\max_{i_1,i_2 \in \{ 1, \cdots, d\}}\PE_{z}[\lvert[\tS_n]_{i_1,i_2}\rvert^{4p}]\eqsp.
\end{align}
  
Using \Cref{assum:drift} and applying \Cref{propo:application_rosenthal}-\ref{propo:application_rosenthal_b} with $\upgamma \leftarrow \beta$ and using that $(\upalpha_i)_{i \in \nset}$ is non-increasing, we obtain that 
for any $i_1,i_2\in\{1,\ldots,N\}$,
\begin{equation}
\label{eq:proof_rosenthal_part_3} 
  \PE_{z}[\lvert[\tS_n]_{i_1,i_2}\rvert^{4p}] \leq \Cros{4p}C_A^{4p}(\upalpha_1^{4p}n^{4p/2} + 3 \upalpha_1^{4p}) W^{4p(\beta+1-\delta)+1-\delta} \leq 4 \Cros{4p}C_A^{4p}\upalpha_1^{4p}n^{2p}  W^{4p \delta}(z) \eqsp,
\end{equation}
using for the last inequality that $W(z) \geq 1$ and $4p(\beta+1-\delta)+1-\delta \geq 4p \delta$ since $4p \geq r_A$, $\beta \leq 2\delta - 1 - (1-\delta)/r_A$ by \eqref{eq:def_r_a}.
Combining \eqref{eq:proof_rosenthal_part_1}-\eqref{eq:proof_rosenthal_part_2}-\eqref{eq:proof_rosenthal_part_3}, we get 
  \begin{equation*}
    \PE_{z}^{\nicefrac{1}{4p}}[(1 + \qcond^{\half} \rme^{a}\normop{\tS_n})^{4p}] \leq 1+ \qcond^{\half} \rme^{a} [4\Cros{4p}]^{1/4p} d C_A\upalpha_1n^{\half}  W^{\delta}(z) \eqsp.
  \end{equation*}
Using that $1+t \leq \rme^t$ completes the proof. 
\end{proof}

\begin{corollary}
\label[corollary]{coro:rosenthal_part}
Assume that the conditions of  \Cref{th:expconvproducts} holds. For any $n \in \nsets$,   $p \geq 1$, 
\begin{align*}
\PE_{z}^{\nicefrac{1}{4p}}[(1 + \qcond^{\half} \rme^{a}\normop{\tS_n})^{4p}] \leq \exp\{C^{(2)}_{p}\upalpha_{1} n^{1/2}W^{\delta}(z)\}\eqsp,
\end{align*}
where $r_A,\tS_n$ are defined in \eqref{eq:def_r_a}, and  
\begin{equation}
\label{eq:c_2_p_definition}
C^{(2)}_{p} = \qcond^{\half} \rme^{a} d \Const{A}(4\Cros{4\tilde{p}})^{\nicefrac{1}{4\tilde{p}}}\eqsp, \quad \tilde{p} = \max(p,r_A/4) \eqsp,
\end{equation}
with $\Cros{4\tilde{p}}$ given in \eqref{eq:def_C_ros_ue}.
\end{corollary}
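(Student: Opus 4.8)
The plan is to deduce the corollary from \Cref{lem:rosenthal_part} by a moment-monotonicity argument; the only new ingredient is the observation that setting $\tilde p = \max(p, r_A/4)$ removes the restriction $p \geq 1 \vee (r_A/4)$ uniformly over all $p \geq 1$. First I would note that for every $p \geq 1$ one has $\tilde p \geq 1 \vee (r_A/4)$, so \Cref{lem:rosenthal_part} applies \emph{verbatim} at the exponent $\tilde p$ and yields
\begin{equation*}
\PE_{z}^{\nicefrac{1}{4\tilde p}}[(1 + \qcond^{\half} \rme^{a}\normop{\tS_n})^{4\tilde p}] \leq \exp\{C^{(2)}_{\tilde p}\upalpha_{1} n^{1/2}W^{\delta}(z)\}\eqsp,
\end{equation*}
with $C^{(2)}_{\tilde p} = \qcond^{\half} \rme^{a} d \Const{A}(4\Cros{4\tilde p})^{\nicefrac{1}{4\tilde p}}$, which is precisely the constant appearing in \eqref{eq:c_2_p_definition}.

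The second step is to pass from the exponent $4\tilde p$ down to the desired $4p$. Since $p \leq \tilde p$ and $\PP_z$ is a probability measure, the map $q \mapsto \PE_{z}^{1/q}[Y^q]$ is non-decreasing in $q$ for any non-negative random variable $Y$ (Lyapunov's inequality, itself a consequence of Jensen's inequality applied to the convex map $t \mapsto t^{\tilde p / p}$). Applying this with $Y = 1 + \qcond^{\half} \rme^{a}\normop{\tS_n} \geq 0$ I would obtain
\begin{equation*}
\PE_{z}^{\nicefrac{1}{4p}}[(1 + \qcond^{\half} \rme^{a}\normop{\tS_n})^{4p}] \leq \PE_{z}^{\nicefrac{1}{4\tilde p}}[(1 + \qcond^{\half} \rme^{a}\normop{\tS_n})^{4\tilde p}]\eqsp,
\end{equation*}
and chaining the two displays would complete the proof.

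I do not expect a genuine obstacle here: the analytic content---the Rosenthal bound for the linear statistic $\tS_n$, the reduction to the entrywise moments via \eqref{eq:proof_rosenthal_part_2}, and the choice of weights---is already carried out in \Cref{lem:rosenthal_part}. The corollary is a purely formal strengthening, and its only subtlety is recognizing that $L^q$-norm monotonicity lets one trade the unwanted constraint $p \geq r_A/4$ for the harmless enlargement of the constant from $\Cros{4p}$ to $\Cros{4\tilde p}$. As a sanity check, when $p \geq r_A/4$ one has $\tilde p = p$, the second step is vacuous, and the statement reduces exactly to \Cref{lem:rosenthal_part}; the substance is thus confined to the regime $1 \leq p < r_A/4$, where the lemma would otherwise be unavailable.
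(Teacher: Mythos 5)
Your proposal is correct and follows exactly the paper's route: the paper also proves the corollary by invoking \Cref{lem:rosenthal_part} at the exponent $\tilde p = \max(p, r_A/4)$ and then applying Jensen's inequality (moment monotonicity) to pass from the $4\tilde p$-th moment down to the $4p$-th, which is precisely your Lyapunov-inequality step. The identification of the corollary's constant $C^{(2)}_p$ with the lemma's constant evaluated at $\tilde p$ is also exactly as in the paper.
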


\begin{proof}
  The proof is a simple consequence of  \Cref{lem:rosenthal_part} and  Jensen's inequality. 
\end{proof}

Note that \Cref{assum:drift} implies for any $z \in \msz$,
\begin{equation}
\label{eq:PV_exp_bound}
\MK V(z) \leq \exp\parenthese{W(z) - cW^{\delta}(z) + \tilde{\bb}\1_{\msc_0}(z)}\eqsp,
\end{equation}
where
\begin{equation}
\label{eq:bb_tilde_def}
\tilde{\bb} = \log{\bb} + \sup_{r > 0}\{cr^{\delta} - r\}\eqsp.
\end{equation}
Similarly, \eqref{eq:lem:drift_conditions_technical} implies for any $h \in\nset$ and $z \in\msz$,
\begin{equation}
\label{eq:PV_nsteps_exp_bound}
\MK^{h} V(z) \leq \exp\parenthese{W(z) - cW^{\delta}(z) + \bb^{\prime}\1_{\msc_{R_1}}(z)}\eqsp,
\end{equation}
where 
\begin{equation}
\label{eq:bb_prime_def}
\bb^{\prime} = \log{\{\bb/(1-\lambda)\}} + \sup_{r > 0}\{cr^{\delta}-r\}\eqsp.
\end{equation}

\begin{lemma}
\label[lemma]{lem:supermartingale}
Assume \Cref{assum:drift}. Let $(\upalpha_i)_{i \in \nsets}$ be a non-increasing sequence, such that $0 < \upalpha_i \leq 1$ for any $i \geq 1$.  Then, for any $z \in \Stateset$ and $n \in \nset$,
\begin{equation*}
\txts\PE_{z}[\exp\{c\sum_{k=0}^{n-1}\upalpha_{k}W^{\delta}(\State_{k})\}] \leq 
\exp{\{\tilde{\bb}\sum_{k=0}^{n-1}\upalpha_{k}\}} \exp{\{\upalpha_{1} W(z)\}}\eqsp,
\end{equation*}
where $\tilde{\bb}$ is given in \eqref{eq:bb_tilde_def} and $c$ in \Cref{assum:drift}.
\end{lemma}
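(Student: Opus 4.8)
The plan is to exhibit a nonnegative supermartingale whose value at time $0$ equals $V^{\upalpha_1}(z)$ and which dominates $\exp\{c\sum_{k=0}^{n-1}\upalpha_k W^{\delta}(\State_k)\}$ up to the deterministic factor $\exp\{\tilde{\bb}\sum_{k=0}^{n-1}\upalpha_k\}$. The engine is a one-step estimate read off from \eqref{eq:PV_exp_bound}: for any $\upalpha\in(0,1]$, Jensen's inequality (concavity of $t\mapsto t^{\upalpha}$) gives $\MK V^{\upalpha}\le(\MK V)^{\upalpha}$, so raising \eqref{eq:PV_exp_bound} to the power $\upalpha$ and using $\indi{\msc_0}\le 1$ yields
\[
\exp\{c\upalpha W^{\delta}(z)\}\,\MK V^{\upalpha}(z)\le \rme^{\upalpha\tilde{\bb}}\,V^{\upalpha}(z),\qquad z\in\msz .
\]
Here I use $\tilde{\bb}>0$: since $V\ge\rme$ forces $\MK V\ge\rme$, hence $\bb\ge\rme$ on $\msc_0$, one has $\tilde{\bb}=\log\bb+\sup_{r>0}\{cr^{\delta}-r\}\ge 1$, so discarding the indicator is legitimate.

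Next I would set, with the convention $\upalpha_0:=\upalpha_1$ (so $(\upalpha_k)_{k\ge0}$ stays non-increasing and bounded by $1$) and $\mcf_n=\sigma(\State_0,\dots,\State_n)$, for $n\ge 1$,
\[
M_n=\exp\Big\{c\sum_{k=0}^{n-1}\upalpha_k W^{\delta}(\State_k)\Big\}\,V^{\upalpha_{n-1}}(\State_n)\,\exp\Big\{-\tilde{\bb}\sum_{k=0}^{n-1}\upalpha_k\Big\},\qquad M_0:=V^{\upalpha_0}(\State_0).
\]
The crucial bookkeeping choice is that the carried power of $V$ at time $n$ is $\upalpha_{n-1}$, i.e.\ it matches the coefficient of the most recent $W^{\delta}$-term in the running sum. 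Conditioning on $\mcf_{n-1}$ turns $V^{\upalpha_{n-1}}(\State_n)$ into $\MK V^{\upalpha_{n-1}}(\State_{n-1})$, and applying the one-step estimate with $\upalpha=\upalpha_{n-1}$ produces a factor $\exp\{-c\upalpha_{n-1}W^{\delta}(\State_{n-1})\}$ that \emph{exactly} cancels the $k=n-1$ term of the sum, while $\rme^{\upalpha_{n-1}\tilde{\bb}}$ cancels the last increment of the deterministic correction. What remains is the time-$(n-1)$ expression but carrying $V^{\upalpha_{n-1}}(\State_{n-1})$ in place of $V^{\upalpha_{n-2}}(\State_{n-1})$; since $V\ge 1$ and $\upalpha_{n-1}\le\upalpha_{n-2}$, we get $V^{\upalpha_{n-1}}(\State_{n-1})\le V^{\upalpha_{n-2}}(\State_{n-1})$, whence $\PE_{z}[M_n|\mcf_{n-1}]\le M_{n-1}$ (the case $n=1$ reduces directly to $M_0$). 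Thus $(M_n)_{n\ge0}$ is a supermartingale and, since $\State_0=z$ under $\PP_z$, $\PE_{z}[M_n]\le M_0=V^{\upalpha_1}(z)=\exp\{\upalpha_1 W(z)\}$.

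Finally, I would use $V\ge 1$, hence $V^{\upalpha_{n-1}}(\State_n)\ge 1$, to bound
\[
\exp\Big\{c\sum_{k=0}^{n-1}\upalpha_k W^{\delta}(\State_k)\Big\}\le M_n\,\exp\Big\{\tilde{\bb}\sum_{k=0}^{n-1}\upalpha_k\Big\},
\]
take $\PE_{z}$, and invoke the supermartingale bound to conclude $\PE_{z}[\exp\{c\sum_{k=0}^{n-1}\upalpha_k W^{\delta}(\State_k)\}]\le \exp\{\tilde{\bb}\sum_{k=0}^{n-1}\upalpha_k\}\exp\{\upalpha_1 W(z)\}$, which is the claim.

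I expect the main obstacle to be precisely this index-matching in the definition of $M_n$. A naive choice — carrying $V^{\upalpha_n}(\State_n)$ instead — leaves an uncancelled, \emph{positive} remainder $c(\upalpha_{n-1}-\upalpha_n)W^{\delta}(\State_{n-1})$, because the super-Lyapunov decay then carries the ``wrong'' (too small) coefficient relative to the accumulated growth, and the resulting process is not a supermartingale and cannot be rescued by a bounded deterministic correction in general. Carrying $V^{\upalpha_{n-1}}(\State_n)$ makes the decay cancel the growth term for term, after which monotonicity of $(\upalpha_k)$ and $V\ge1$ finish the argument. The only other care points are the validity of the one-step estimate (needs $\upalpha_k\in(0,1]$ for Jensen and $\tilde{\bb}\ge0$ to drop the indicator) and the boundary convention $\upalpha_0=\upalpha_1$, which is exactly what produces the factor $\exp\{\upalpha_1 W(z)\}$ on the right-hand side.
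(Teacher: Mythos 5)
Your proof is correct and follows essentially the same route as the paper: a multiplicative supermartingale built from the one-step bound \eqref{eq:PV_exp_bound}, Jensen's inequality for $t \mapsto t^{\upalpha}$ with $\upalpha \le 1$, monotonicity of the step sizes, and $V \geq 1$, with only bookkeeping differences (you carry $V^{\upalpha_{n-1}}(\State_n)$ and discard the indicator $\1_{\msc_0}$ at the outset, while the paper carries $V^{\upalpha_n}(\State_n)$ and keeps the term $\tilde{\bb}\1_{\msc_0}(\State_k)$ inside the supermartingale until the final step). One small correction to your closing remark: carrying $V^{\upalpha_n}(\State_n)$ is \emph{not} fatal — it is exactly the paper's choice, rescued by first writing $\MK V^{\upalpha_n} \leq \MK V^{\upalpha_{n-1}}$ (valid since $V \geq 1$ and $\upalpha_n \leq \upalpha_{n-1}$) \emph{before} invoking Jensen and the drift, so the decay term still carries the coefficient $\upalpha_{n-1}$ and cancels the accumulated growth exactly.
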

\begin{proof}
Define, for $n \geq 0$,
\begin{equation}
\label{eq:M_n_def} 
\txts M_n = \exp\{\upalpha_n W(\State_n) + \sum_{k=0}^{n-1}\upalpha_{k}(cW^{\delta}(\State_{k}) - \tilde{\bb}\1_{\msc_0}(\State_k))\}\eqsp,
\end{equation}
with the convention $\sum_{k=0}^{-1}= 0$. Consider $(\mcf_n)_{n \in\nset}$, the canonical filtration:  $\mathcal{F}_n = \sigma(\State_0,\dots,\State_n)$. Then, we have  for $n \geq 1$,
\begin{align*}
\PE[M_n | \mathcal{F}_{n-1}] = M_{n-1} \exp\{-\upalpha_{n-1}W(\State_{n-1}) + \upalpha_{n-1}(cW^{\delta}(\State_{n-1}) - \tilde{\bb}\1_{\msc_0}(\State_{n-1}))\}\PE[\rme^{\upalpha_n W(\State_n)} | \mathcal{F}_{n-1}]\eqsp.
\end{align*}
Using the Markov property, $\alpha_{n} \leq \alpha_{n-1} \leq 1$, $V \geq 1$, \eqref{eq:PV_exp_bound} and Jensen's inequality, for $n \geq 1$,
\begin{align*}
\PE[\rme^{\upalpha_n W(\State_n)} | \mathcal{F}_{n-1}] &= \MK V^{\upalpha_n}(\State_{n-1}) \leq \MK V^{\upalpha_{n-1}}(\State_{n-1}) \leq (\MK V(\State_{n-1}))^{\upalpha_{n-1}} \\
&\leq \exp\{\upalpha_{n-1}(W(\State_{n-1}) - cW^{\delta}(\State_{n-1}) + \tilde{\bb}\1_{\msc_0}(\State_{n-1}))\}\eqsp.
\end{align*}
Therefore, $(M_n)_{n \geq 0}$ is $(\mathcal{F}_{n})_{n \geq 0}$-supermartingale, and $\PE_{z}[M_n] \leq \PE_{z}[M_0] \leq \rme^{\upalpha_1 W(z)}$.
We conclude the proof upon noting that $\PE_{z}[\exp\{c\sum_{k=0}^{n-1}\upalpha_{k}W^{\delta}(\State_{k})\}] \leq \exp{\{\tilde{\bb}\sum_{k=0}^{n-1}\upalpha_{k}\}}\PE_{z}[M_n]$.
\end{proof}

\begin{lemma}
\label[lemma]{lem:sceleton_supermartingale}
Assume \Cref{assum:drift}. Let $(\upalpha_i)_{i \in \nsets}$ be a non-increasing sequence, such that $0 < \upalpha_i \leq 1$ for any $i \geq 1$.  Then, for any $z \in \Stateset$ and $n \in \nset$, $h \in \nset$,
\begin{equation*}
\txts \PE_{z}[\exp\{c\sum_{k=0}^{n-1}\upalpha_{k}W^{\delta}(\State_{hk})\}] \leq 
\exp\{\bb^{\prime}\sum_{k=0}^{n-1}\upalpha_{k}\} \exp\{\upalpha_{1} W(z)\}\eqsp,
\end{equation*}
where $\bb^{\prime}$ is given in \eqref{eq:bb_prime_def} and $c$ in \Cref{assum:drift}.
\end{lemma}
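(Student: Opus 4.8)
The plan is to reproduce the argument of \Cref{lem:supermartingale} almost verbatim, the only change being that the full chain $(\State_k)_{k \geq 0}$ is replaced by its $h$-skeleton $(\State_{hk})_{k \geq 0}$, and the one-step bound \eqref{eq:PV_exp_bound} is replaced by the $h$-step bound \eqref{eq:PV_nsteps_exp_bound}. Concretely, I would set, for $n \geq 0$,
\[
M_n = \exp\Big\{\upalpha_n W(\State_{hn}) + \sum_{k=0}^{n-1}\upalpha_k\big(cW^{\delta}(\State_{hk}) - \bb^{\prime}\1_{\msc_{R_1}}(\State_{hk})\big)\Big\},
\]
with the convention $\sum_{k=0}^{-1} = 0$, so that $M_0 = \exp\{\upalpha_0 W(\State_0)\}$, and claim that $(M_n)_{n \geq 0}$ is a supermartingale for the sub-filtration $(\mcf_{hn})_{n \geq 0}$ of the canonical filtration, where $\bb^{\prime}$ is the constant from \eqref{eq:bb_prime_def}.

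The heart of the proof is the one-step (in skeleton time) supermartingale check. Conditioning on $\mcf_{h(n-1)}$ and pulling out the $\mcf_{h(n-1)}$-measurable factors, the only term requiring work is $\PE[\rme^{\upalpha_n W(\State_{hn})}\mid \mcf_{h(n-1)}] = \MK^h V^{\upalpha_n}(\State_{h(n-1)})$, by the Markov property. I would then chain three elementary estimates, exactly as in \Cref{lem:supermartingale}: since $V \geq \rme \geq 1$ and $\upalpha_n \leq \upalpha_{n-1}$ we have $V^{\upalpha_n} \leq V^{\upalpha_{n-1}}$; since $\upalpha_{n-1} \leq 1$, Jensen's inequality gives $\MK^h V^{\upalpha_{n-1}} \leq (\MK^h V)^{\upalpha_{n-1}}$; and the $h$-step drift bound \eqref{eq:PV_nsteps_exp_bound} dominates $(\MK^h V(\State_{h(n-1)}))^{\upalpha_{n-1}}$ by $\exp\{\upalpha_{n-1}(W - cW^{\delta} + \bb^{\prime}\1_{\msc_{R_1}})(\State_{h(n-1)})\}$. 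Multiplying back the factor $\exp\{-\upalpha_{n-1}W(\State_{h(n-1)}) + \upalpha_{n-1}(cW^{\delta}(\State_{h(n-1)}) - \bb^{\prime}\1_{\msc_{R_1}}(\State_{h(n-1)}))\}$ coming from $M_{n-1}$, all the $\upalpha_{n-1}W$, $\upalpha_{n-1}cW^{\delta}$ and $\upalpha_{n-1}\bb^{\prime}\1_{\msc_{R_1}}$ contributions cancel and I obtain $\PE[M_n \mid \mcf_{h(n-1)}] \leq M_{n-1}$.

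To finish, the supermartingale property yields $\PE_z[M_n] \leq \PE_z[M_0] \leq \rme^{\upalpha_1 W(z)}$ (under the same indexing convention as in \Cref{lem:supermartingale}). Rearranging the definition of $M_n$ and discarding the nonpositive exponent $-\upalpha_n W(\State_{hn}) \leq 0$ together with bounding $\1_{\msc_{R_1}} \leq 1$ gives $\exp\{c\sum_{k=0}^{n-1}\upalpha_k W^{\delta}(\State_{hk})\} \leq M_n\, \rme^{\bb^{\prime}\sum_{k=0}^{n-1}\upalpha_k}$; taking $\PE_z$ and inserting the supermartingale bound delivers the stated inequality. I do not expect a genuine obstacle here, since the proof is a direct transcription of \Cref{lem:supermartingale}. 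The one point deserving care — and the sole substantive difference — is that the $h$-step transition forces the use of \eqref{eq:PV_nsteps_exp_bound} rather than the one-step drift, which is precisely why the constant $\tilde{\bb}$ is upgraded to $\bb^{\prime}$; this extra slack is the $h$-step contraction supplied by \Cref{lem:drift_conditions_technical}.
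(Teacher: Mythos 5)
Your proposal is correct and is exactly the paper's own proof: the paper simply states that the argument "follows the same lines as \Cref{lem:supermartingale}, using \eqref{eq:PV_nsteps_exp_bound} in place of \eqref{eq:PV_exp_bound}", which is precisely the skeleton-time supermartingale construction you carry out in detail. Your cancellation check, the use of the sub-filtration $(\mcf_{hn})_{n\geq 0}$, and the replacement of $\tilde{\bb}$ by $\bb^{\prime}$ via the $h$-step drift bound all match the intended argument.
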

\begin{proof}
The proof follows the same lines as \Cref{lem:supermartingale}, using \eqref{eq:PV_nsteps_exp_bound} in place of  \eqref{eq:PV_exp_bound}.
\end{proof}

\subsection{Proof of \Cref{th:expconvproducts}}
\label{subsec:complete-proof}
\paragraph{Details on the Step 3.} 
We have all the elements to conclude the proof of the theorem. It is essentially a question of adjusting the constants and combining the different bounds obtained above. To simplify notations, we first introduce the auxiliary quantities:
\begin{align*}
    D_p^{(1)} &=  C^{(1)} 2^h \alpha_{\infty,p}^{1 + \varepsilon} + C_p^{(2)} h^{\half}\alpha_{\infty,p} \\
    D_p^{(2)} &= C^{(0)} h \alpha_{\infty,p} + C^{(1)}  2^h   \alpha_{\infty,p}^\varepsilon \tilde{\bb} + C_p^{(2)} h^{-\half} (\tilde{\bb} - \log(1-\lambda)).
\end{align*}
Substituting \eqref{eq:conc_step_2_1}, \eqref{eq:conc_step_2_2} into \eqref{eq:after_exp} and using that $\sup_{i \in \nsets} \alpha_i \leq \alpha_{\infty,p}$, we get
\begin{align*}
    & \txts \PE_{z_m}^{1/p} [ \norm{  \ProdB_{m+1:n}(Z_{1:n-m}) }^p ] \\
    & \qquad \txts\leq \myqcond \exp( a \alpha_{\infty,p}h )  \exp \big\{ - (a/2) \sum_{i=m+1}^n \alpha_i + D_p^{(2)} h \sum_{\ell=1}^N \alpha_{j_{\ell-1}+1} \big\} \exp [ D_p^{(1)}  W(z_m) ] \eqsp.
\end{align*}
We set the block size and the upper bound to the step sizes as
{\small
\begin{gather}
\label{eq:h_def}
h = \big\lceil \big( 12 C_p^{(2)} ( \tilde{\bb} - \log(1-\lambda) ) / a \big)^2 \big\rceil \eqsp, \\
\label{eq:alpha_infty_main}
\alpha_{\infty,p} = \min \parentheseDeux{\frac{1}{a}, \frac{1}{h},  \frac{1}{2 \normopLigne{A}[Q]^2 \normopLigne{Q}}, \frac{a}{12 h C^{(0)}}, \Big( \frac{a}{12 C^{(1)} 2^h} \Big)^{\frac{1}{\varepsilon}}, \parenthese{\frac{ c \wedge \half }{ 2p C^{(1)}2^{h} }}^\frac{1}{1+\varepsilon}, \frac{ c \wedge 1 }{ 4p C_p^{(2)} h^{\half} } } \eqsp.
\end{gather}}
This yields $D_p^{(1)}  \leq 1/(2p)$, $D_p^{(2)} \leq a/4$. Together with $h \sum_{\ell=1}^N \alpha_{j_{\ell-1}+1} \leq \alpha_{\infty,p} + \sum_{i=m+1}^n \alpha_i$, we get
\begin{equation} \label{eq:final_bd_main} \txts
    \PE_{z_m}^{1/p} [ \norm{  \ProdB_{m+1:n}(Z_{1:n-m}) }^p ] \leq \myqcond \rme^{5 a \alpha_{\infty,p}h/4 }  \exp\{ - (a/4) \sum_{i=m+1}^n \alpha_i \} V^{1/(2p)}(z_m) \eqsp.
\end{equation}
Combining \eqref{eq:final_bd_main}, \eqref{proof:main_theo_1_markov_use}, and Jensen's inequality yields the statement of the theorem with the constant
\begin{equation}
\label{eq:C_st_p_proof}
\Const{\mathsf{st},p} =\qcond^{\half} \exp\parenthese{5 a \alpha_{\infty,p}h/4}\bigl(\lambda^{m/(2p)} + [\bb/(1-\lambda)]^{1/(2p)}\bigr)\eqsp.
\end{equation}


\section{Proofs of \Cref{sec:lsa}}

This section provides the missing lemmas and proofs that were required in \Cref{sec:lsa}. 

\subsection{Technical lemmas}

\begin{lemma} \label[lemma]{lem:bsum}
Let $a > 0$ and $(\alpha_k)_{k \geq 0}$ be a non-increasing sequence such that $\alpha_0 < 1/a$. Then
\[
\sum_{j=0}^{n+1} \alpha_j \prod_{l=j+1}^{n+1} (1 - \alpha_l a) = \frac{1}{a} \left\{1  - \prod_{l=1}^{n+1} (1 - \alpha_l a) \right\}
\]
\end{lemma}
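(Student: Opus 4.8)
The plan is to prove this as a purely algebraic (telescoping / discrete summation-by-parts) identity; no probabilistic or analytic input is needed. The hypothesis $\alpha_0 < 1/a$, combined with the monotonicity of $(\alpha_k)_{k\ge 0}$, only guarantees that every factor $1 - \alpha_l a$ lies in $(0,1]$. This positivity is irrelevant to the identity itself, but it is what makes the lemma useful when it is invoked later (e.g.\ in the bounds of \Cref{lem:bsum2}), so it is worth recording at the outset.

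The engine of the proof is a single per-term identity. Writing $b_l := 1 - \alpha_l a$ and using the empty-product convention $\prod_{l=j+1}^{n+1} b_l = 1$ when $j = n+1$, one observes that
\begin{equation*}
a\,\alpha_j \prod_{l=j+1}^{n+1} b_l = (1 - b_j)\prod_{l=j+1}^{n+1} b_l = \prod_{l=j+1}^{n+1} b_l - \prod_{l=j}^{n+1} b_l .
\end{equation*}
Introducing the tail products $T_j := \prod_{l=j}^{n+1} b_l$ (so $T_{n+2} = 1$ by the empty-product convention), the right-hand side above is precisely $T_{j+1} - T_j$. The plan is then to multiply the asserted left-hand side by $a$, sum the per-term identity over $j$, and let the sum collapse telescopically:
\begin{equation*}
a \sum_{j} \alpha_j \prod_{l=j+1}^{n+1} b_l = \sum_{j}\bigl(T_{j+1} - T_j\bigr) = T_{n+2} - T_{j_{\min}} = 1 - \prod_{l=j_{\min}}^{n+1} b_l ,
\end{equation*}
so that only the two boundary products survive. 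Dividing by $a$ and substituting back $b_l = 1 - \alpha_l a$ returns the claimed closed form.

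I expect no genuine obstacle here: once the per-term identity is in hand the result is essentially a one-line telescoping. The only thing that requires care is the index bookkeeping at the two endpoints, namely matching the lower index of the surviving product with the first index of summation, and handling the top summand (where the empty product leaves the isolated contribution $\alpha_{n+1}$). If one wishes to sidestep the telescoping altogether, an equivalent induction on $n$ works just as well: the base case is immediate, and in the inductive step one peels the common factor $(1-\alpha_{n+1}a)$ out of every product with $l \le n$ and appends the new top term, which reproduces the right-hand side.
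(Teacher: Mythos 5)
Your argument is correct and is essentially identical to the paper's proof: your per-term identity $a\alpha_j T_{j+1}=T_{j+1}-T_j$ is precisely the paper's observation $u_{j+1:n+1}-u_{j:n+1}=a\alpha_j u_{j+1:n+1}$ with $u_{j:n+1}=\prod_{l=j}^{n+1}(1-\alpha_l a)$, followed by the same telescoping. One substantive point: your caveat about matching the lower index of the surviving product with the first index of summation is not mere bookkeeping, because in the lemma as printed the endpoints do \emph{not} match --- the sum starts at $j=0$ while the surviving product starts at $l=1$ --- and in that form the identity is false: already for $n=0$ the left-hand side equals $\alpha_0(1-\alpha_1 a)+\alpha_1$ whereas the right-hand side equals $\alpha_1$. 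What the telescoping actually proves is the matched-endpoint version, $\sum_{j=1}^{n+1}\alpha_j\prod_{l=j+1}^{n+1}(1-\alpha_l a)=a^{-1}\{1-\prod_{l=1}^{n+1}(1-\alpha_l a)\}$ (equivalently, the sum from $j=0$ against the product from $l=0$). The paper's own proof contains the same slip --- its first equality silently restarts the sum at $j=1$ --- and it is the matched-endpoint version that is used downstream (e.g.\ in \Cref{lem:bsum2}), so you should state and prove that version explicitly rather than assert that the computation returns the claimed closed form with $j_{\min}=0$.
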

\begin{proof}
Let us denote $u_{j:n+1} = \prod_{l = j}^{n+1} (1 - \alpha_l a)$. Then, for $j \in\{1,\dots,n+1\}$,
$u_{j+1:n+1} - u_{j:n+1} = a \alpha_j u_{j+1:n+1}$. Hence,
\[
\sum_{j=0}^{n+1} \alpha_j \prod_{l=j+1}^{n+1} (1 - \alpha_l a) = \frac{1}{a} \sum_{j=1}^{n+1} (u_{j+1:n+1} - u_{j:n+1}) = a^{-1} ( 1 - u_{1:n+1} ) \,.
\]
\end{proof}

\begin{lemma}
\label[lemma]{lem:bsum2}
Let $b > 0$ and $(\alpha_k)_{k \geq 0}$ be a non-increasing sequence such that $\alpha_0 < 1/(2b)$.
\begin{itemize}
    \item Assume $\alpha_k - \alpha_{k+1} \le \smallConst{\alpha} \alpha_{k+1}^2$ with $\smallConst{\alpha} \le b/2$. Then for $p \in (1, 2]$,
    \[
\sum_{k=1}^{n+1} \alpha_k^{p} \prod_{j=k+1}^{n+1}(1 - b \alpha_j) \le (2/b) \alpha_{n+1}^{p-1}.
\]
    \item 
    Assume $\alpha_k - \alpha_{k+1} \le \smallConst{\alpha} \alpha_{k+1}^2$,   $\alpha_k/\mathcal A_{k+1} \le (2/3)\smallConst{\alpha}$  with $\smallConst{\alpha} \le b/4$. We additionally assume that $\alpha_0 \le (2 \smallConst{\alpha})^{-1}$. Then for any $p \in (1,2], q \in [0,1]$
\[
\sum_{k=1}^{n+1} \alpha_k^p \mathcal A_{k}^q \prod_{j=k+1}^{n+1}(1 - b) \alpha_j) \le (2/b) \alpha_{n+1}^{p-1}\mathcal A_{n+1}^{q}.
\]
\end{itemize}

\end{lemma}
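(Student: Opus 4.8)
The plan is to prove both displays by induction on $n$, exploiting the one-step recursion obtained by peeling off the last summand. Write $S_n = \sum_{k=1}^{n+1}\alpha_k^p \prod_{j=k+1}^{n+1}(1-b\alpha_j)$ for the first part and $T_n = \sum_{k=1}^{n+1}\alpha_k^p \mathcal A_k^q\prod_{j=k+1}^{n+1}(1-b\alpha_j)$ for the second. Splitting off the $k=n+1$ term and factoring $(1-b\alpha_{n+1})$ out of the remaining products gives
\begin{equation*}
S_n = \alpha_{n+1}^p + (1-b\alpha_{n+1}) S_{n-1}, \qquad T_n = \alpha_{n+1}^p \mathcal A_{n+1}^q + (1-b\alpha_{n+1}) T_{n-1}.
\end{equation*}
Note $1-b\alpha_j \ge 1 - b\alpha_0 > 1/2 > 0$, so all factors are positive. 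The base case $n=0$ reduces in both parts to $\alpha_1 \le 2/b$, which holds since $\alpha_1 \le \alpha_0 < 1/(2b)$.

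For the inductive step of the second part, I would assume $T_{n-1} \le (2/b)\alpha_n^{p-1}\mathcal A_n^q$; substituting into the recursion, it suffices to verify
\begin{equation*}
\alpha_{n+1}^p \mathcal A_{n+1}^q \le (2/b)\bigl[\alpha_{n+1}^{p-1}\mathcal A_{n+1}^q - (1-b\alpha_{n+1})\alpha_n^{p-1}\mathcal A_n^q\bigr].
\end{equation*}
The key manoeuvre is to factor out the \emph{larger} term $\alpha_n^{p-1}\mathcal A_n^q$ (recall $\alpha$ and $\mathcal A$ are non-increasing). Writing $W = (\alpha_{n+1}/\alpha_n)^{p-1}(\mathcal A_{n+1}/\mathcal A_n)^q \in (0,1]$ and $\delta = 1-W$, the displayed inequality rearranges, after dividing by $\alpha_n^{p-1}\mathcal A_n^q$ and using $\alpha_{n+1}^p\mathcal A_{n+1}^q = \alpha_{n+1} W \alpha_n^{p-1}\mathcal A_n^q$, to $\delta(1-b\alpha_{n+1}/2) \le (b/2)\alpha_{n+1}$; since $b\alpha_{n+1}/2 < 1$ it is therefore enough to show $\delta \le (b/2)\alpha_{n+1}$.

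To bound $\delta$ I would first use $t^s \ge t$ for $t\in[0,1]$, $s\in[0,1]$ to discard the exponents $p-1\in(0,1]$ and $q\in[0,1]$, giving $W \ge (\alpha_{n+1}/\alpha_n)(\mathcal A_{n+1}/\mathcal A_n)$. Then $\alpha_n \le \alpha_{n+1}(1+\smallConst{\alpha}\alpha_{n+1})$ yields $\alpha_{n+1}/\alpha_n \ge 1-\smallConst{\alpha}\alpha_{n+1}$, while $\mathcal A_n = \mathcal A_{n+1}+\alpha_n^2$ together with $\alpha_n/\mathcal A_{n+1} \le (2/3)\smallConst{\alpha}$ yields $\mathcal A_{n+1}/\mathcal A_n \ge 1-(2/3)\smallConst{\alpha}\alpha_n$. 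Multiplying these and dropping the positive cross term gives $\delta \le \smallConst{\alpha}\alpha_{n+1} + (2/3)\smallConst{\alpha}\alpha_n$; finally $\alpha_0 \le (2\smallConst{\alpha})^{-1}$ forces $\alpha_n \le (3/2)\alpha_{n+1}$, so the two contributions combine to $\delta \le 2\smallConst{\alpha}\alpha_{n+1} \le (b/2)\alpha_{n+1}$ under $\smallConst{\alpha}\le b/4$, closing the induction. The first part is the special case $q=0$: the identical computation gives $\delta \le \smallConst{\alpha}\alpha_{n+1}$, which needs only $\smallConst{\alpha}\le b/2$.

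The main obstacle is calibrating the constants so that the induction genuinely closes. A naive expansion that factors out the smaller term $\alpha_{n+1}^{p-1}\mathcal A_{n+1}^q$ and applies Bernoulli in the form $(1+x)^s\le 1+sx$ produces a spurious quadratic cross term together with an extra $3/2$ factor, and only closes under the stronger requirement $\smallConst{\alpha}\lesssim b/5$. Factoring out the larger consecutive term and invoking $t^s\ge t$ (rather than Bernoulli in the wrong direction) is precisely what makes the hypotheses $\smallConst{\alpha}\le b/4$ and $\alpha_k/\mathcal A_{k+1}\le (2/3)\smallConst{\alpha}$ exactly sufficient, the factor $2/3$ cancelling against the $3/2$ coming from $\alpha_n\le (3/2)\alpha_{n+1}$.
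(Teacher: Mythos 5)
Your proof is correct: the recursion $T_n = \alpha_{n+1}^p\mathcal A_{n+1}^q + (1-b\alpha_{n+1})T_{n-1}$ is valid, the base case holds since $\alpha_1 \le \alpha_0 < 1/(2b)$, and the inductive step genuinely closes — the reduction to $\delta(1-b\alpha_{n+1}/2)\le (b/2)\alpha_{n+1}$ and the chain $\delta \le \smallConst{\alpha}\alpha_{n+1} + (2/3)\smallConst{\alpha}\alpha_n \le 2\smallConst{\alpha}\alpha_{n+1} \le (b/2)\alpha_{n+1}$ all check out under exactly the stated hypotheses, with part 1 correctly recovered as the $q=0$ case under the weaker condition $\smallConst{\alpha}\le b/2$. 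Your route differs structurally from the paper's. The paper does not use induction: it normalizes the whole sum by $\alpha_{n+1}^{p-1}\mathcal A_{n+1}^q$, writes each ratio $\alpha_k^{p-1}\mathcal A_k^q/(\alpha_{n+1}^{p-1}\mathcal A_{n+1}^q)$ as a telescoping product $\prod_{j=k+1}^{n+1}(\alpha_{j-1}/\alpha_j)^{p-1}(\mathcal A_{j-1}/\mathcal A_j)^q$, bounds each factor by $(1+\smallConst{\alpha}\alpha_j)^2$ (the mirror image of your $t^s\ge t$ step, namely $t^s\le t$ for $t\ge 1$), absorbs these into the product to replace $(1-b\alpha_j)$ by $(1-(b/2)\alpha_j)$, and then invokes the exact summation identity of \Cref{lem:bsum}, $\sum_{k}\alpha_k\prod_{j=k+1}^{n+1}(1-(b/2)\alpha_j)\le 2/b$. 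The underlying estimates are the same in both arguments — control of the consecutive ratios $\alpha_{j-1}/\alpha_j$ and $\mathcal A_{j-1}/\mathcal A_j$ by $1+\smallConst{\alpha}\alpha_j$ — but your induction needs only the single ratio at index $n\to n+1$ per step and avoids \Cref{lem:bsum} entirely, making the proof self-contained; the paper's version handles all indices simultaneously and is shorter given that \Cref{lem:bsum} is already available for reuse elsewhere. Your accounting of where each hypothesis enters (in particular that $\alpha_0\le(2\smallConst{\alpha})^{-1}$ is needed only in part 2, to get $\alpha_n\le(3/2)\alpha_{n+1}$) is also consistent with how the paper uses it.
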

\begin{proof}
For the first part 
\begin{align*}
&\sum_{k=1}^{n+1} \alpha_k^{p} \prod_{j=k+1}^{n+1}(1 - b \alpha_j) = \alpha_{n+1}^{p-1} \sum_{k=1}^{n+1} \alpha_k \prod_{j=k+1}^{n+1} \bigg(\frac{\alpha_{j-1}}{\alpha_j} \bigg)^{p-1}  (1 - b \alpha_j) \\
&\qquad\qquad\qquad \le \sqrt{\alpha_{n+1}} \sum_{k=1}^{n+1} \alpha_k \prod_{j=k+1}^{n+1} (1 + \smallConst{\alpha}\alpha_j) (1 - b \alpha_j) \\
&\qquad\qquad\qquad \le \alpha_{n+1}^{p-1} \sum_{k=1}^{n+1} \alpha_k \prod_{j=k+1}^{n+1} (1 - (b/2) \alpha_j) \le (2/b) \alpha_{n+1}^{p-1},
\end{align*}
where on the last step we used \Cref{lem:bsum}. For the second part, we first note that
$$
\frac{\mathcal A_{j-1}}{\mathcal A_j} \le 1 + \frac{\alpha_{j-1}^2}{\mathcal A_j} \le 1 + (2/3)\smallConst{\alpha} \alpha_{j-1} \le 1 + (2/3) \smallConst{\alpha} \alpha_j + (2/3) \smallConst{\alpha}^2 \alpha_j^2 \le 1 + \smallConst{\alpha} \alpha_j.
$$
Similarly to the first part,
\begin{align*}
&\sum_{k=1}^{n+1} \alpha_k^p \mathcal A_{k}^q\prod_{j=k+1}^{n+1}(1 - b \alpha_j) = \alpha_{n+1}^{p-1} \mathcal A_{n+1}^q\sum_{k=1}^{n+1} \alpha_k \prod_{j=k+1}^{n+1} \bigg(\frac{\alpha_{j-1}}{\alpha_j} \bigg)^{p-1} \bigg(\frac{\mathcal A_{j-1}}{\mathcal A_j} \bigg)^q  (1 - b \alpha_j) \\
&\qquad\qquad\qquad \le \alpha_{n+1}^{p-1} \mathcal A_{n+1}^q \sum_{k=1}^{n+1} \alpha_k \prod_{j=k+1}^{n+1} (1 + \smallConst{\alpha}\alpha_j)^2 (1 - b \alpha_j) \\
&\qquad\qquad\qquad \le \alpha_{n+1}^{p-1} \mathcal A_{n+1}^q\sum_{k=1}^{n+1} \alpha_k \prod_{j=k+1}^{n+1} (1 - (b/2) \alpha_j) \le (2/b) \alpha_{n+1}^{p-1}\mathcal A_{n+1}^{q},
\end{align*}
where we also used Lemma~\ref{lem:bsum}.
\end{proof}

To estimate moments of $\normop{S_{j+1:n+1}}^p$ that was defined in \eqref{eq:Sdef_expansion_main}, we first derive an alternative expression for the term. For this aim we prove the following lemma. 
Define
$$
D_{j:k} : = \sum_{\ell=j}^k \alpha_\ell \funcAt{\State_{\ell}}.
$$
Here we also assume that $D_{j:k}  = 0$ if $j > k$. 
Recall that $S_{j:k} = 0$ if $j > k$ and $G_{j:k} = 0$ if $j > k+1$.  

\begin{lemma}
\label[lemma]{lem:S2representation}
For any $0 \le k \le n$
\begin{align*}
S_{k+1:n+1} = - \sum_{\ell=k+1}^{n+1} \alpha_\ell G_{\ell+1:n+1} A D_{\ell:n+1} G_{k+1:\ell-2} + \sum_{\ell=k+1}^{n+1} \alpha_{\ell-1} G_{\ell+1:n+1}  D_{\ell:n+1} A G_{k+1:\ell-2} .
\end{align*}
\end{lemma}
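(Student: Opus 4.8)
The identity is a discrete summation-by-parts (Abel summation) performed on the single index $\ell$, whose purpose is to trade the ``one-insertion'' sum defining $S_{k+1:n+1}$ for an expression written through the forward partial sums $D_{\ell:n+1}$. The starting point is to rewrite each elementary increment as a forward difference of cumulative sums: since $D_{\ell:n+1} - D_{\ell+1:n+1} = \alpha_\ell \funcAt{\State_\ell}$, I would substitute this into the definition $S_{k+1:n+1} = \sum_{\ell=k+1}^{n+1} \alpha_\ell G_{\ell+1:n+1} \funcAt{\State_\ell} G_{k+1:\ell-1}$, obtaining $S_{k+1:n+1} = \sum_{\ell=k+1}^{n+1} G_{\ell+1:n+1}(D_{\ell:n+1} - D_{\ell+1:n+1}) G_{k+1:\ell-1}$. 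This recasts the object in a form where the telescopic structure of the $D_{\ell:n+1}$ can be exploited.

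The second step is the Abel summation proper. I would split the sum into a $D_{\ell:n+1}$-part and a $D_{\ell+1:n+1}$-part, shift the index $\ell \mapsto \ell-1$ in the latter, and use $D_{n+2:n+1}=0$ to discard the top telescopic boundary. After realigning the two sums over a common range, each summand is a difference of two sandwiched blocks, $G_{\ell+1:n+1} D_{\ell:n+1} G_{k+1:\ell-1}$ and $G_{\ell:n+1} D_{\ell:n+1} G_{k+1:\ell-2}$. I would then peel off the outer factors with the two deterministic recursions $G_{\ell:n+1} = G_{\ell+1:n+1}(\Id - \alpha_\ell A)$ (removing the smallest-index factor from the left block) and $G_{k+1:\ell-1} = (\Id - \alpha_{\ell-1}A) G_{k+1:\ell-2}$ (removing the largest-index factor from the right block). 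The crucial cancellation is that the two ``identity'' contributions, both equal to $G_{\ell+1:n+1} D_{\ell:n+1} G_{k+1:\ell-2}$, drop out, leaving precisely the two $A$-weighted families $\alpha_\ell G_{\ell+1:n+1} A\, D_{\ell:n+1} G_{k+1:\ell-2}$ and $\alpha_{\ell-1} G_{\ell+1:n+1} D_{\ell:n+1}\, A\, G_{k+1:\ell-2}$ (with signs fixed by which recursion generated each factor), i.e.\ the two sums on the right-hand side. This cancellation is also what explains the shape of the claim: the leading-order part of $S$ collapses, and the surviving terms each carry one extra factor of $A$ together with a partial sum $D_{\ell:n+1}$ — exactly the structure needed to later apply the Rosenthal bound of \Cref{propo:application_rosenthal}.

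The main obstacle is the bookkeeping of the boundary contributions together with the off-by-one index shifts, where the special conventions $D_{j:k}=0$ for $j>k$ and $G_{j:k}=0$ for $j>k+1$ must be invoked carefully. At the extremal values ($\ell=k+1$ at the lower end, $\ell=n+1$ at the upper end) the factors $G_{k+1:\ell-2}$ and $G_{\ell+1:n+1}$ degenerate to $0$ or $\Id$, and I would have to check that these endpoint terms reorganise consistently rather than leaving spurious leftover pieces; in particular the recursion $G_{\ell+1:n+1} = (\Id - \alpha_{n+1}A) G_{\ell+1:n}$ fails at the top index under the stated convention, so that term needs separate treatment. As an independent cross-check I would verify the statement by a downward induction on $n-k$: using the one-step relation for $G_{\ell+1:n+1}$ and the definition of $D_{\ell:n+1}$, both sides should obey the same recursion, and matching them reduces to confirming the endpoint terms handled above. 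I expect essentially all of the genuine work to sit in this endpoint/convention verification, the algebraic core being the mechanical telescoping and the identity-part cancellation described above.
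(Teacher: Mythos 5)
Your route is the same as the paper's: substitute $\alpha_\ell\funcAt{\State_\ell}=D_{\ell:n+1}-D_{\ell+1:n+1}$ into the definition of $S_{k+1:n+1}$, shift the index in the second sum, and peel off the extremal factors via $G_{\ell:n+1}=G_{\ell+1:n+1}(\Id_d-\alpha_\ell A)$ and $G_{k+1:\ell-1}=(\Id_d-\alpha_{\ell-1}A)G_{k+1:\ell-2}$; the cancellation of the two ``identity'' pieces you describe is exactly the regrouping the paper performs.

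However, the endpoint verification you deferred is not mere bookkeeping: it fails, and it fails in the paper's proof as well. The top end is actually harmless --- the recursion you worried about there, $G_{\ell+1:n+1}=(\Id_d-\alpha_{n+1}A)G_{\ell+1:n}$, is never needed, because the peeling at the top uses $G_{\ell:n+1}=G_{\ell+1:n+1}(\Id_d-\alpha_\ell A)$, valid for every $\ell\le n+1$, and the shifted term at $\ell=n+2$ dies since $D_{n+2:n+1}=0$. The problem is the bottom end $\ell=k+1$: under the stated conventions one has $G_{k+1:k}-G_{k+1:k-1}=\Id_d-0=\Id_d$, whereas the peeled expression $-\alpha_{k}AG_{k+1:k-1}$ equals $0$, so the telescoping leaves a genuine boundary term. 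Carrying the computation through gives
\begin{equation*}
S_{k+1:n+1}=-G_{k+2:n+1}D_{k+1:n+1}-\sum_{\ell=k+1}^{n+1}\alpha_\ell G_{\ell+1:n+1}AD_{\ell:n+1}G_{k+1:\ell-2}+\sum_{\ell=k+1}^{n+1}\alpha_{\ell-1}G_{\ell+1:n+1}D_{\ell:n+1}AG_{k+1:\ell-2}\eqsp,
\end{equation*}
i.e.\ the identity as stated is off by the term $-G_{k+2:n+1}D_{k+1:n+1}$. A one-term sanity check confirms this: for $k=n$ every summand on the stated right-hand side contains $G_{n+1:n-1}=0$, so that side vanishes, yet $S_{n+1:n+1}=-\alpha_{n+1}\funcAt{\State_{n+1}}$, which is precisely the missing boundary term. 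So your proposal reproduces the paper's proof, gap included: the ``spurious leftover piece'' you flagged really is left over, and the paper's ``simple algebraic manipulations'' step silently drops it. A complete write-up must either add this term to the statement and track it through \Cref{lem:S1est} (it carries no small factor $\alpha_\ell$, only $D_{k+1:n+1}$, so it can be controlled by \Cref{lem:D1Ros} but alters the resulting bound), or modify the definition of $S$ so that the $\ell=k+1$ term is absent.
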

\begin{proof}
By definition of $D_{k:n+1}$
\begin{align*}
S_{k+1:n+1} = -\sum_{\ell=k+1}^{n+1} G_{\ell+1:n+1}  (D_{\ell:n+1}- D_{\ell+1:n+1}) G_{k+1:\ell-1}. 
\end{align*}
Simple algebraic manipulations lead to 
\begin{align*}
S_{k+1:n+1} &= -\sum_{\ell=k+1}^{n+1} G_{\ell+1:n+1}  D_{\ell:n+1} G_{k+1:\ell-1} + \sum_{\ell=k+1}^{n+1} G_{\ell:n+1}  D_{\ell:n+1} G_{k+1:\ell-2} \\
& = -\sum_{\ell=k+1}^{n+1} (G_{\ell+1:n+1} - G_{\ell:n+1}) D_{\ell:n+1} G_{k+1:\ell-2} \\
& \quad - \sum_{\ell=k+1}^{n+1} G_{\ell:n+1}  D_{\ell:n+1} (G_{k+1:\ell-1} - G_{k+1:\ell-2}).
\end{align*}
Calculating the difference in the brackets we obtain the statement of this lemma. 
\end{proof}

\begin{lemma}
\label[lemma]{lem:D1Ros}
Under assumptions of \Cref{th:approximation_error} for any $2 \le p \le \LL$ and $z \in \Zset$,
$$
\PE_z^{1/p}[\normop{D_{\ell:n+1}}^p] \le 4 d \Cros{p}^{1/p}  (\Const{A} + \normop{A}) V^{1/\LL}(z) \mathcal A_{\ell:n+1}^{1/2}.
$$
\end{lemma}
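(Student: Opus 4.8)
The plan is to control the spectral norm $\normop{D_{\ell:n+1}}$ entrywise and then invoke the weighted Rosenthal inequality for Markov chains from \Cref{propo:application_rosenthal}. The key structural observation is that, by \Cref{assum:Hurwitzmatrices}, $A = \PE_\pi[\funcA{Z_0}]$, so every entry $[\funcAt{\cdot}]_{i,j} = [\funcAw]_{i,j} - A_{i,j}$ is $\pi$-centered, i.e.\ $\pi([\funcAt{\cdot}]_{i,j}) = 0$. Hence $[D_{\ell:n+1}]_{i,j} = \sum_{k=\ell}^{n+1}\alpha_k\{[\funcAw]_{i,j}(Z_k) - \pi([\funcAw]_{i,j})\}$ is precisely a weighted additive functional of a centered observable along the chain, which is the object \Cref{propo:application_rosenthal} is designed for. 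First I would reduce to entries exactly as in \eqref{eq:proof_rosenthal_part_2}, using $\normop{M}\le\frobnorm{M}$ and the power-mean inequality, to get $\PE_z[\normop{D_{\ell:n+1}}^p] \le d^p\max_{i,j}\PE_z[|[D_{\ell:n+1}]_{i,j}|^p]$, so that after a $p$-th root only a factor $d$ and a single scalar moment survive.

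Next I would apply \Cref{propo:application_rosenthal}-\ref{propo:application_rosenthal_b} with $\upgamma \leftarrow \beta$, observable $g \leftarrow [\funcAt{\cdot}]_{i,j}$, and weights $\beta_k = \alpha_k\indi{\ell \le k \le n+1}$ (padding by zeros outside the window, so the chain still starts at $z$ and the bound carries the factor $\lyapV(z)$ directly). Two inputs are required. The observable norm: by \Cref{assum:almost_bounded} and $W\ge 1$, $\VnormD{[\funcAt{\cdot}]_{i,j}}{W^\beta} \le \VnormD{[\funcAw]_{i,j}}{W^\beta} + |A_{i,j}| \le \Const{A} + \normop{A}$, which is exactly where the $\normop{A}$ term of the stated constant originates. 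The bracket terms of \eqref{eq:rosenthal_fort_moulines}: since $(\alpha_k)$ is non-increasing, the quadratic term equals $\mathcal{A}_{\ell:n+1}^{p/2}$, the total-variation term telescopes to at most $(2\alpha_\ell)^p$, and the boundary terms are at most $\alpha_\ell^p$ and $\alpha_{n+1}^p$; using $\alpha_\ell^2\le\mathcal{A}_{\ell:n+1}$ and $\alpha_{n+1}^2\le\mathcal{A}_{\ell:n+1}$, the bracket is dominated by $(2+2^p)\mathcal{A}_{\ell:n+1}^{p/2}$, so after the $p$-th root the numerical factor is $(2+2^p)^{1/p}\le 4$.

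It then remains to convert the ergodicity factor $\lyapV(z)=W^{p(\beta+1-\delta)+1-\delta}(z)$ supplied by \Cref{propo:application_rosenthal}-\ref{propo:application_rosenthal_b} into the stated $V^{1/\LL}(z)$. This follows from the elementary bound $W^{s}(z)\le (s\LL/\rme)^{s}V^{1/\LL}(z)$, valid for all $z$ because $\sup_{w\ge1}w^{s}\rme^{-w/\LL}<\infty$ (the same device as in \Cref{lem: moments of vareps}); the resulting constant, together with the $6^p$-type and drift constants, is absorbed into $\Cros{p}$ as defined in \eqref{eq:def_C_ros_ue}. Collecting the factor $d$ from the norm reduction, $(\Const{A}+\normop{A})$ from the observable, $\Cros{p}^{1/p}$, the numerical factor $4$, and the $\mathcal{A}_{\ell:n+1}^{1/2}$ scaling, and taking $p$-th roots yields the claim.

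The main obstacle is the Markovian dependence: the increments $\funcAt{Z_k}$ are not independent, so the classical i.i.d.\ Rosenthal inequality is unavailable and one must use the $f$-ergodic version of \Cref{propo:application_rosenthal}. The cost is verifying its hypotheses \eqref{eq:hyp_rosenthal_fort_moulines}, namely summable convergence of $\MK^n$ to $\pi$ in the $W^{\upgamma}$- and $W^{p(\upgamma+1-\delta)}$-norms; but this is exactly what the polynomial drift estimate of \Cref{lem:poly_drift_condition} together with \Cref{propo:subgeo_convergence_fort_moulines} delivers under \Cref{assum:drift}, so that ingredient is already in hand. The only remaining delicacy is the weighted-sum bookkeeping, ensuring the variation and boundary contributions are genuinely $O(\mathcal{A}_{\ell:n+1}^{1/2})$ rather than $O(\alpha_\ell)$, which the non-increasing step-size assumption guarantees.
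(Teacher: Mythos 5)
Your proof is correct in substance, but it instantiates the key machinery differently from the paper. The paper's proof is a one-liner: it applies \Cref{propo:rosenthal_adapt_fort_moulines} directly with $f = \lyapW = V^{1/\LL}$ and $\lyapV = V^{p/\LL}$, so that the ergodicity hypotheses \eqref{eq:hyp_rosenthal_fort_moulines} are supplied by the geometric-ergodicity route in $V^{1/\uptau}$-norms (\Cref{propo:application_rosenthal_V}, applied with $\uptau = \LL$ and $\uptau = \LL/p$, which is where the restriction $p \le \LL$ enters), and the factor $V^{1/\LL}(z)$ appears immediately as $\lyapV^{1/p}(z)$ with no conversion step. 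You instead invoke \Cref{propo:application_rosenthal}-\ref{propo:application_rosenthal_b} with $\upgamma \leftarrow \beta$, i.e.\ the $W^\beta$-weighted version whose hypotheses come from the polynomial drift estimates (\Cref{lem:poly_drift_condition} plus \Cref{propo:subgeo_convergence_fort_moulines}), and then convert the resulting factor $W^{(\beta+1-\delta)+(1-\delta)/p}(z)$ into $V^{1/\LL}(z)$ via $\sup_{w\ge 1} w^{s}\rme^{-w/\LL} < \infty$. Both routes rest on the same core proposition and the same entrywise reduction, so each is legitimate; yours has the cosmetic advantage that $\VnormD{[\funcAt{\cdot}]_{i,j}}{W^\beta} \le \Const{A} + \normop{A}$ reproduces the stated constant $(\Const{A}+\normop{A})$ exactly, whereas the paper's choice would naturally produce $\Const{A}(\beta\LL/\rme)^\beta + \normop{A}$. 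The price you pay is the extra factor $(s\LL/\rme)^{s}$, $s = \beta+1-\delta+(1-\delta)/p$, from the $W$-to-$V^{1/\LL}$ conversion, together with the fact that your $\Cros{p}$ is the one of \eqref{eq:def_C_ros_ue} rather than the one implicit in the paper's choice of $f,\lyapW,\lyapV$; declaring these "absorbed into $\Cros{p}$" is a redefinition of the constant, not the constant as stated. Since the lemma is only used downstream with the constant treated as an explicit but otherwise unimportant quantity, this is a bookkeeping discrepancy rather than a gap, but you should state the final constant honestly rather than silently renaming it.
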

\begin{proof}
Proof follows from \Cref{propo:rosenthal_adapt_fort_moulines} with $f =\lyapW = V^{1/\LL},\lyapV = V^{p/\LL}$.
\end{proof}

\begin{lemma}
\label[lemma]{lem:S1est}
Under assumptions of \Cref{th:approximation_error} for any $2 \le p \le \LL$ and $z \in \Zset$,
$$
\PE_z^{1/p}[\normop{S_{k+1:n+1}}^p] \le \Const{\mathsf{S},p} \sum_{\ell=k+1}^{n+1} \alpha_\ell  \mathcal A_{\ell:n+1}^{1/2} \prod_{j = k+1}^{n+1} (1 - a \alpha_j)^{1/2} V^{1/\LL}(z),
$$
where
\begin{equation} \label{eq:cs_lemma11}
\Const{\mathsf{S},p}: =  24 \qcond d  \Cros{p}^{1/p}  (\Const{A} + \normop{A}) \normop{A}.
\end{equation}
\end{lemma}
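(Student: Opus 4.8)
The plan is to start from the representation of $S_{k+1:n+1}$ obtained in \Cref{lem:S2representation}, which rewrites this second-order-in-$\funcAt{\cdot}$ quantity as two finite sums whose only random ingredient is the centred partial sum $D_{\ell:n+1}$, the deterministic matrix products $G_{\ell+1:n+1}$, $G_{k+1:\ell-2}$ and the scalar $\normop{A}$ being non-random multipliers. First I would apply Minkowski's inequality to both sums, reducing the task to bounding, for each $\ell$, the quantities $\PE_z^{1/p}[\normop{G_{\ell+1:n+1}AD_{\ell:n+1}G_{k+1:\ell-2}}^p]$ and $\PE_z^{1/p}[\normop{G_{\ell+1:n+1}D_{\ell:n+1}AG_{k+1:\ell-2}}^p]$. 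Since the $G$-factors are deterministic, submultiplicativity of the operator norm lets me pull them (and $\normop{A}$) outside the expectation, leaving only $\PE_z^{1/p}[\normop{D_{\ell:n+1}}^p]$ to estimate.

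Next I would control the deterministic factors. By \Cref{lem:Hurwitzstability}, $\normop{G_{\ell+1:n+1}}\le \myqcond\prod_{j=\ell+1}^{n+1}\sqrt{1-a\alpha_j}$ and $\normop{G_{k+1:\ell-2}}\le \myqcond\prod_{j=k+1}^{\ell-2}\sqrt{1-a\alpha_j}$, so their product is at most $\qcond$ times a product over all $j\in\{k+1,\dots,n+1\}$ \emph{except} the two missing indices $\ell-1,\ell$. Using the step-size constraint together with \Cref{lem:Hurwitzstability}, which give $1-a\alpha_j\ge 1/2$, restoring these two missing factors costs at most a multiplicative $2$, whence
\begin{equation*}
\normop{G_{\ell+1:n+1}}\,\normop{G_{k+1:\ell-2}}\le 2\qcond\prod_{j=k+1}^{n+1}\sqrt{1-a\alpha_j}\eqsp.
\end{equation*}
For the random factor I would invoke \Cref{lem:D1Ros}, giving $\PE_z^{1/p}[\normop{D_{\ell:n+1}}^p]\le 4d\Cros{p}^{1/p}(\Const{A}+\normop{A})V^{1/\LL}(z)\mathcal A_{\ell:n+1}^{1/2}$. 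Multiplying by the remaining $\normop{A}$ yields, for a generic summand of the first sum, the bound $8\qcond d\,\normop{A}\Cros{p}^{1/p}(\Const{A}+\normop{A})\,\alpha_\ell\,\mathcal A_{\ell:n+1}^{1/2}\prod_{j=k+1}^{n+1}\sqrt{1-a\alpha_j}\,V^{1/\LL}(z)$.

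Finally I would treat the second sum identically; its only difference is the prefactor $\alpha_{\ell-1}$ in place of $\alpha_\ell$. Here \Cref{assum:stepsize} gives $\alpha_{\ell-1}/\alpha_\ell\le 1+\alpha_\ell\smallConst{\alpha}\le 2$ (as $\smallConst{\alpha}\le a/16$ and $\alpha_1\le\alpha_{\infty,p}\le 1/a$), so I may replace $\alpha_{\ell-1}$ by $2\alpha_\ell$ at the cost of doubling the constant. Summing the first sum ($8\qcond d\cdots$) and the second ($16\qcond d\cdots$) over $\ell$ assembles the announced constant $\Const{\mathsf{S},p}=24\qcond d\Cros{p}^{1/p}(\Const{A}+\normop{A})\normop{A}$ together with the factor $\sum_{\ell=k+1}^{n+1}\alpha_\ell\,\mathcal A_{\ell:n+1}^{1/2}\prod_{j=k+1}^{n+1}\sqrt{1-a\alpha_j}$, which is exactly the claim. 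The genuine difficulty here is already dispatched upstream by \Cref{lem:S2representation} (which linearises $S_{k+1:n+1}$ in the Rosenthal-amenable statistic $D_{\ell:n+1}$) and \Cref{lem:D1Ros}; what remains is essentially bookkeeping, the only mildly delicate points being the two missing indices $\ell-1,\ell$ in the $G$-product—handled by the uniform lower bound $1-a\alpha_j\ge 1/2$—and the boundary conventions $G_{j:k}=0$ for $j>k+1$, which only annihilate terms and so never endanger the upper bound.
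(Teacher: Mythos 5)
Your proof is correct and takes essentially the same route as the paper's: the representation from \Cref{lem:S2representation}, Minkowski's inequality, the deterministic bounds on the $G$-products via \Cref{lem:Hurwitzstability}, and the moment bound on $D_{\ell:n+1}$ from \Cref{lem:D1Ros}. Your explicit bookkeeping (a factor $2$ to restore the two indices $\ell-1,\ell$ missing from the $G$-products and a factor $2$ from $\alpha_{\ell-1}\le 2\alpha_\ell$ under \Cref{assum:stepsize}) reconstructs exactly the paper's constant $24\,\qcond d\,\Cros{p}^{1/p}(\Const{A}+\normop{A})\normop{A}$, filling in details the paper's two-line proof leaves implicit.
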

\begin{proof}
Recall that
\begin{align*}
S_{k+1:n+1} = - \sum_{\ell=k+1}^{n+1} \alpha_\ell G_{\ell+1:n+1} A D_{\ell:n+1} G_{k+1:\ell-2} + \sum_{\ell=k+1}^{n+1} \alpha_{\ell-1} G_{\ell+1:n+1}  D_{\ell:n+1} A G_{k+1:\ell-2}.
\end{align*}
Applying Minkowski's inequality and Lemma~\ref{lem:D1Ros} we get
\begin{align*}
\PE_z^{1/p}[\normop{S_{k+1:n+1}}^p] \le \Const{\mathsf{S},p} \sum_{\ell=k+1}^{n+1} \alpha_\ell \prod_{j = \ell + 1}^{n+1} \sqrt{1 - a \alpha_j} \prod_{j = k+1}^{\ell-1} \sqrt{1 - a \alpha_j}\mathcal A_{\ell:n+1}^{1/2} V^{1/\LL}(z).
\end{align*}
\end{proof}

\begin{lemma}\label[lemma]{lem: vectRos}
Denote $\mathfrak F_k: = \sigma\{Z_s, s \geq k\}, k \geq 0$. Let $A_k$ be  a sequence of $d\times d$ random matrices  such that $A_k$ is $\mathfrak F_k$-measurable. Assume that  $Z_k^{*}$ is independent of $\mathfrak F_k$. Then
\begin{align*}
 \PE_z^\frac{1}{p}\big[\big\|\sum_{k=1}^n A_k \funnoise{Z_k^{*}}\big\|_2^p \big] \le    \Const{\mathsf{B},p}  \big(\sum_{k=1}^n \PE^{\frac{2}{p}}[\normop{A_k}^{p}] \big)^{1/2} V^{1/\LL}(z),
\end{align*}
where 
\begin{equation}
    \label{eq:Burkhconst}
    \Const{\mathsf{B},p}: = d^{3/2} \bigg\{\frac{2 \Constb_V \Const{\bar \varepsilon} }{\sqrt{1-\rho}} + 2 \bConst{\bar \varepsilon} (18\sqrt{2} p)
    \bigg\}  .
\end{equation}
\end{lemma}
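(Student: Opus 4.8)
The plan is to split the noise $\funnoise{Z_k^*}$ into a deterministic bias and a conditionally centred fluctuation, and to control the two pieces by geometric ergodicity and a Burkholder martingale inequality respectively. Since $A_k$ is $\mathfrak F_k$-measurable and $Z_k^*$ is independent of $\mathfrak F_k$, we have $\PE_z[\funnoise{Z_k^*}\mid\mathfrak F_k]=\bar g_k$, where $\bar g_k:=\PE_z[\funnoise{Z_k^*}]$ is a deterministic vector. Writing $\funnoise{Z_k^*}=\bar g_k+(\funnoise{Z_k^*}-\bar g_k)$ gives
\[
\txts\sum_{k=1}^n A_k\funnoise{Z_k^*}=\underbrace{\txts\sum_{k=1}^n A_k\bar g_k}_{=:B}+\underbrace{\txts\sum_{k=1}^n A_k(\funnoise{Z_k^*}-\bar g_k)}_{=:M}\eqsp,
\]
so that by Minkowski's inequality it suffices to bound $\PE_z^{1/p}[\norm{B}^p]$ and $\PE_z^{1/p}[\norm{M}^p]$ separately and add the results.

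For the bias $B$: since the coupling yields $Z_k^*\overset{d}{=}Z_k$, we have $\bar g_k=(\updelta_z\MK^k)(\funnoisew)$, and $\pi(\funnoisew)=0$ because $\pi(\funcbw)=b$, $\pi(\funcAw)=A$. Hence the $V^{1/\LL}$-version of the geometric ergodicity \eqref{eq:drift_conseq} together with $\VnormD{\funnoisew}{V^{1/\LL}}\le\Const{\bar\varepsilon}$ gives $\norm{\bar g_k}\le d\,\Const{\bar\varepsilon}\Constb_V\rho^k V^{1/\LL}(z)$ componentwise. Applying Minkowski and then Cauchy--Schwarz on the geometric weights,
\[
\txts\PE_z^{1/p}[\norm{B}^p]\le\sum_{k=1}^n\PE_z^{1/p}[\normop{A_k}^p]\,\norm{\bar g_k}\le d\,\Const{\bar\varepsilon}\Constb_V V^{1/\LL}(z)\Big(\sum_{k=1}^n\PE^{2/p}[\normop{A_k}^p]\Big)^{1/2}\Big(\sum_{k\ge1}\rho^{2k}\Big)^{1/2}\eqsp,
\]
and $(\sum_{k\ge1}\rho^{2k})^{1/2}\le(1-\rho)^{-1/2}$ produces the first summand of $\Const{\mathsf B,p}$. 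The point is to spend Cauchy--Schwarz on the factor $\rho^k$, which simultaneously yields $(1-\rho)^{-1/2}$ and the $\ell^2$-aggregate $(\sum_k\PE^{2/p}[\normop{A_k}^p])^{1/2}$ appearing on the right-hand side.

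For the fluctuation $M$: the structural claim is that $d_k:=A_k(\funnoise{Z_k^*}-\bar g_k)$ is a \emph{reverse-time} martingale-difference sequence. Set $\mathcal G_k=\sigma(\mathfrak F_k,Z_k^*,\ldots,Z_n^*)$, which increases as $k$ decreases; using that the Berbee construction of \Cref{lem: J1est} makes $Z_k^*$ independent of $\sigma(\mathfrak F_k,Z_{k+1}^*,\ldots,Z_n^*)$, the tower property and the $\mathfrak F_k$-measurability of $A_k$ give $\PE_z[d_k\mid\mathcal G_{k+1}]=A_k\,\PE_z[\funnoise{Z_k^*}-\bar g_k]=0$. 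Each scalar component $M^{(i)}=\sum_k d_k^{(i)}$ is then controlled by Burkholder's inequality \citep[Theorem~2.10]{hallheydebook} in the form $\PE_z^{1/p}[|M^{(i)}|^p]\le18\sqrt2\,p\,(\sum_k\PE^{2/p}[|d_k^{(i)}|^p])^{1/2}$; by independence $A_k\perp Z_k^*$, the bound $|d_k^{(i)}|\le\normop{A_k}\norm{\funnoise{Z_k^*}-\bar g_k}$ and \Cref{lem: moments of vareps} give $\PE_z^{1/p}[|d_k^{(i)}|^p]\le2\bConst{\bar\varepsilon}\PE^{1/p}[\normop{A_k}^p]V^{1/\LL}(z)$. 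Reassembling via $\norm{M}\le(\sum_i|M^{(i)}|^2)^{1/2}$ and Minkowski in $L^{p/2}$ yields the second summand $2\bConst{\bar\varepsilon}(18\sqrt2\,p)$, the dimensional bookkeeping accumulating to the prefactor $d^{3/2}$; combining with the bias bound by Minkowski gives the stated inequality.

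I expect the main obstacle to be establishing the martingale structure of $(d_k)$. The coefficients $A_k$ are measurable with respect to the \emph{future} $\sigma$-field $\mathfrak F_k$, which is \emph{decreasing} in $k$, so the sum is not a forward martingale; moreover the bare hypothesis ``$Z_k^*$ independent of $\mathfrak F_k$'' must be strengthened—exactly as the coupling construction provides—to independence of $\sigma(\mathfrak F_k,Z_{k+1}^*,\ldots,Z_n^*)$ in order to verify $\PE_z[d_k\mid\mathcal G_{k+1}]=0$. The secondary technical points are extracting the $V^{1/\LL}$-geometric ergodicity from \eqref{eq:drift_conseq} and carefully tracking the dimension-dependent constants that collapse into $d^{3/2}$.
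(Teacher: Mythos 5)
Your proposal is correct and takes essentially the same route as the paper's proof: the same split of $\funnoise{Z_k^*}$ into its mean (bias) and a centred fluctuation, the same geometric-ergodicity-plus-Cauchy--Schwarz treatment of the bias giving the factor $(1-\rho)^{-1/2}$ and the $\ell^2$-aggregate of the $A_k$'s, and the same Burkholder-plus-independence argument (constant $18\sqrt{2}\,p$) with identical dimensional bookkeeping to $d^{3/2}$. The only substantive difference is that you make explicit the reverse-filtration martingale structure that the paper leaves implicit when citing Burkholder --- a point in your favour --- though your displayed identity $\PE_z[d_k\mid\mathcal G_{k+1}]=A_k\,\PE_z[\funnoise{Z_k^*}-\bar g_k]$ is an abuse (since $A_k$ is not $\mathcal G_{k+1}$-measurable, it cannot be pulled out of the conditional expectation); the correct route to the same conclusion is to check $\PE_z[d_k f]=\PE_z[A_k f]\,\PE_z[\funnoise{Z_k^*}-\bar g_k]=0$ for every bounded $\mathcal G_{k+1}$-measurable $f$, using exactly the strengthened independence you identify.
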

\begin{proof}
We first reduce the problem to univariate one. Applying Minkowski's inequality we get
\begin{align*}
\PE_z^{1/p}\big[\big\|\sum_{k=1}^n A_k \funnoise{Z_k^{*}}\big\|_2^p \big] & =  \PE_z^{1/p}\big[\big|\sum_{\ell_1=1}^d\big \{\sum_{\ell_2=1}^d \sum_{k=1}^n [A_k]_{\ell_1 \ell_2} [\funnoise{Z_k^{*}}]_{\ell_2}\big \}^2 \big|^{p/2} \big] \\
& \le \bigg\{\sum_{\ell_1=1}^d \PE_z^{2/p}\big[ \big|\sum_{\ell_2=1}^d \sum_{k=1}^n [A_k]_{\ell_1 \ell_2} [\funnoise{Z_k^{*}}]_{\ell_2}\big|^{p} \big]  \bigg \}^{1/2} \\
& \le \bigg\{\sum_{\ell_1=1}^d \bigg\{\sum_{\ell_2=1}^d \PE_z^{1/p}\big[ \big| \sum_{k=1}^n [A_k]_{\ell_1 \ell_2} [\funnoise{Z_k^{*}}]_{\ell_2}\big|^{p} \big] \bigg\}^2 \bigg \}^{1/2} .
\end{align*}
Consider
$$
I: = \PE_z^{1/p}\big[ \big| \sum_{k=1}^n [A_k]_{\ell_1 \ell_2} [\funnoise{Z_k^{*}}]_{\ell_2}\big|^{p}  \big].
$$
We decompose it into two parts, $I \le I_1 + I_2$,
\begin{align*}
I_1&:= \PE_z^{\frac{1}{p}}\big[ \big| \sum_{k=1}^n [A_k]_{\ell_1 \ell_2} ([\funnoise{Z_k^{*}}]_{\ell_2} - \PE_z[[\funnoise{Z_k^{*}}]_{\ell_2}])\big|^{p}  \big], \\
I_2&:= \PE_z^{\frac{1}{p}}\big[ \big| \sum_{k=1}^n [A_k]_{\ell_1 \ell_2}  \PE_z[[\funnoise{Z_k^{*}}]_{\ell_2}]\big|^{p}  \big]
\end{align*}
The term $I_2$ may be estimated as follows
\begin{align*}
    |I_2| &\le 2 \Constb_V \Const{\bar \varepsilon}   \PE_z^{\frac{1}{p}}\big[ \big| \sum_{k=1}^n [A_k]_{\ell_1 \ell_2}  \rho^k \big|^{p}  \big] V^{1/\LL}(z) \\
    & \le \frac{2 \Constb_V \Const{\bar \varepsilon} }{\sqrt{1-\rho}} \PE_z^{\frac{1}{p}}\big[ \big| \sum_{k=1}^n [A_k]_{\ell_1 \ell_2}^2 \big|^{p/2}  \big] V^{1/\LL}(z)\\
    & \le \frac{2 \Constb_V \Const{\bar \varepsilon} }{\sqrt{1-\rho}} \bigg\{ \sum_{k=1}^n \PE_z^{\frac{2}{p}}[|[A_k]_{\ell_1 \ell_2}|^{p}] \bigg\}^{1/2} V^{1/\LL}(z)
\end{align*}

Applying Burkholder's inequality, see~\citep[Theorem 2.10]{hallheydebook}, Minkowski's inequality and \cref{lem: moments of vareps} we obtain
\begin{align*}
\PE^{\frac{1}{p}}\big[ \big| \sum_{k=1}^n [A_k]_{\ell_1 \ell_2} [\xi_k]_{\ell_2}\big|^{p} \big]  &\le (18\sqrt{2} p) \PE^{\frac{1}{p}}\bigg[\bigg\{\sum_{k=1}^n |[A_k]_{\ell_1 \ell_2}|^{2} ([\funnoise{Z_k^{*}}]_{\ell_2} - \PE_z[[\funnoise{Z_k^{*}}]_{\ell_2}])^2 \bigg\}^{p/2} \bigg ] \\
&\le 2 \bConst{\bar \varepsilon}  (18\sqrt{2} p) \bigg\{\sum_{k=1}^n \PE^{\frac{2}{p}}[|[A_k]_{\ell_1 \ell_2}|^{p}] \bigg\}^{1/2} V^{1/\LL}(z)
\end{align*}
Finally,
\begin{align*}
\PE_z^\frac{1}{p}\big[\big\|\sum_{k=1}^n A_k \funnoise{Z_k^{*}}\big\|_2^p \big] \le    \Const{\mathsf{B},p}  \big(\sum_{k=1}^n \PE^{\frac{2}{p}}[\normop{A_k}^{p}] \big)^{1/2} V^{1/\LL}(z).
\end{align*}
\end{proof}

\subsection{Proof of \Cref{th:approximation_expansion}}
Define the following constraint on the step size
\begin{equation}
    \label{eq:alpha_infty_1_def}
    \alpha_{\infty, p}^{(1)}: =\alpha_{\infty, 2p} \wedge \rho \wedge \rme^{-1} \wedge (2 \smallConst{\alpha})^{-1},
\end{equation}
where $\alpha_{\infty, 2p}$ and $\rho$ are defined in \eqref{eq:alpha_infty_main} and \eqref{eq:drift_conseq} respectively, and $\smallConst{\alpha}$ is from \Cref{assum:stepsize_2}. Let us re-state \Cref{th:approximation_expansion} as follows.
\begin{theorem}
\label{th:approximation_expansion_supp}
Let $\LL \geq 32$ and assume \Cref{assum:drift}, \Cref{assum:almost_bounded}, \Cref{assum:Hurwitzmatrices}, and \Cref{assum:funcb}. For any $2 \le p \le \LL/16$, any non-increasing sequence $(\alpha_k)_{k \in \nset}$ satisfying $\alpha_0 \in (0, \alpha_{\infty, p}^{(1)})$ and such that $\alpha_k \equiv \alpha$ or \Cref{assum:stepsize_2} holds,
$z \in \Zset$, $n \in \nset$, it holds
\begin{equation} \label{eq:hn0_tight_supp}
\textstyle{ \PE_z^{1/p}[ \| H_{n}^{(0)} \|^p ] \leq V^{3/\LL + 9/(16p)}(z)
    \begin{cases}
     \Const{p}^{(\mathsf{f})} \alpha \sqrt{\log(1/\alpha)} & \alpha_n \equiv \alpha, \\
      \Const{p}^{(\mathsf{d})} \sqrt{\alpha_{n} {\cal A}_n \log(1/\alpha_n)} & \text{under \Cref{assum:stepsize_2},}
    \end{cases}
}
\end{equation}
where the constants $\Const{p}^{(\mathsf{f})}, \Const{p}^{(\mathsf{d)}}$ are defined as
\begin{equation}
\label{eq:ConstH0fixed_improved}
\Const{p}^{(\mathsf{f})}: =   \Const{\mathsf{H},p}^{(\mathsf{f})} + \Const{\mathsf{J},p}^{(1,\mathsf{f})}, \quad 
\Const{p}^{(\mathsf{d})}: = \Const{\mathsf{H},p}^{(\mathsf{d})} + \Const{\mathsf{J},p}^{(1,\mathsf{d})}  .
\end{equation}
\end{theorem}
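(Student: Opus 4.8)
The plan is to control $H_n^{(0)}$ through the further decomposition $H_n^{(0)} = J_n^{(1)} + H_n^{(1)}$ introduced in \eqref{eq:expansion_recur_gen}, to bound each summand separately in $\mathrm{L}^p$, and then to combine them through Minkowski's inequality, so that $\Const{p}^{(\mathsf{f})}$ and $\Const{p}^{(\mathsf{d})}$ arise as the sums in \eqref{eq:ConstH0fixed_improved}. Unrolling the first recursion of \eqref{eq:expansion_recur_gen} together with the definition of $J_n^{(0)}$ in \eqref{eq:jn0_main} yields the explicit representation \eqref{eq:Sdef_expansion_main}, namely $J_{n+1}^{(1)} = \sum_{j=1}^n \alpha_j S_{j+1:n+1} \funnoise{Z_j}$, where $S_{j+1:n+1}$ is built from the deterministic products $G$ and the centered increments $\funcAt{Z_k}$. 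The whole difficulty is concentrated in bounding $J_{n+1}^{(1)}$; once this is done, $H_{n+1}^{(1)}$ follows from a by-now-standard H\"older argument.

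For $J_{n+1}^{(1)}$, the key observation is that $S_{j+1:n+1}$ is measurable with respect to $\mathfrak F_{j+1} = \sigma\{Z_s : s \ge j+1\}$ while $\funnoise{Z_j}$ depends only on $Z_j$. In the i.i.d.\ case these are independent, $J_{n+1}^{(1)}$ is a backward martingale, and one applies the Burkholder/Rosenthal machinery directly. In the Markovian case I would first replace $Z_j$ by a coupled copy $Z_j^*$ that is independent of $\mathfrak F_{j+1}$ and agrees with $Z_j$ except on an event whose probability is governed by the $\beta$-mixing coefficients of the chain; this is the content of the Berbee coupling established in \Cref{lem: J1est}, which exploits the $V$-uniform geometric ergodicity \eqref{eq:drift_conseq}. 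Writing $\funnoise{Z_j} = \funnoise{Z_j^*} + (\funnoise{Z_j} - \funnoise{Z_j^*})$ splits $J_{n+1}^{(1)}$ into a decoupled main term and a coupling remainder. The main term has exactly the independence structure required by \Cref{lem: vectRos}, which gives $\PE_z^{1/p}[\norm{\sum_j \alpha_j S_{j+1:n+1}\funnoise{Z_j^*}}^p] \lesssim (\sum_j \alpha_j^2 \PE^{2/p}[\normop{S_{j+1:n+1}}^p])^{1/2} V^{1/\LL}(z)$; substituting the moment bound of \Cref{lem:S1est} and summing by means of \Cref{lem:bsum2} produces the target order $\sqrt{\alpha_{n+1}\mathcal A_{n+1}}$ in the decreasing step-size regime. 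The coupling remainder is handled by H\"older's inequality against the geometric mixing bound, and choosing the coupling lag of order $\log(1/\alpha)$ balances this bias against the variance. This last step is precisely where the extra $\sqrt{\log(1/\alpha)}$ factor of \eqref{eq:hn0_tight_main} enters.

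To treat $H_{n+1}^{(1)}$ I would unroll its recursion in \eqref{eq:expansion_recur_gen} into $H_{n+1}^{(1)} = -\sum_{j=1}^{n+1}\alpha_j \ProdB_{j+1:n+1}\funcAt{Z_j} J_{j-1}^{(1)}$ and apply H\"older's inequality with exponents $(2p,4p,4p)$, exactly as in the derivation of \eqref{eq:holder_Hn0}--\eqref{eq:H0_main}. The factor $\PE_z^{1/(2p)}[\normop{\ProdB_{j+1:n+1}}^{2p}]$ is controlled by the exponential-stability estimate of \Cref{th:expconvproducts}, the factor $\PE_z^{1/(4p)}[\normop{\funcAt{Z_j}}^{4p}]$ by \Cref{lem: moments of vareps}, and $\PE_z^{1/(4p)}[\normop{J_{j-1}^{(1)}}^{4p}]$ by the bound just obtained for $J^{(1)}$ (now at moment level $4p$, which, together with the accumulation of $V$-moment factors, accounts for the constraint $2 \le p \le \LL/16$ and the enlarged exponent $V^{3/\LL+9/(16p)}(z)$). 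Summing the resulting geometric-weighted series via \Cref{lem:bsum2} shows that $H_{n+1}^{(1)}$ is of the same order as $J_{n+1}^{(1)}$, so the two contributions combine into the constants of \eqref{eq:ConstH0fixed_improved}.

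The main obstacle, as anticipated in the proof sketch, is the loss of the martingale property in the Markovian setting: everything hinges on the Berbee coupling of \Cref{lem: J1est} and on tracking how the coupling lag interacts with the step sizes. The constant-step and decreasing-step cases then diverge only at the final summation: \Cref{lem:bsum2} yields the $\sqrt{\alpha_n \mathcal A_n}$ decay under \Cref{assum:stepsize_2}, whereas for $\alpha_n \equiv \alpha$ the tail $\mathcal A_n$ is infinite and the same geometric sums instead collapse to the $\alpha\sqrt{\log(1/\alpha)}$ rate once the coupling lag of order $\log(1/\alpha)$ is inserted.
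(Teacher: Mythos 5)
You follow the paper's route: the split $H_n^{(0)} = J_n^{(1)} + H_n^{(1)}$ of \eqref{eq:expansion_recur_gen}, the representation \eqref{eq:Sdef_expansion_main}, a Berbee coupling to restore independence, \Cref{lem: vectRos} together with \Cref{lem:S1est} and \Cref{lem:bsum2} for $J^{(1)}$, and for $H^{(1)}$ the H\"older argument with exponents $(2p,4p,4p)$ based on \Cref{th:expconvproducts} and \Cref{lem: moments of vareps}, with Minkowski at the end. There is, however, one genuine gap in the decoupling step, which is the heart of the proof.

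You propose to replace $Z_j$ by a coupled copy $Z_j^*$ that is independent of $\mathfrak F_{j+1} = \sigma\{Z_s : s \ge j+1\}$, with failure probability controlled by mixing. No such coupling exists with small failure probability: Berbee's lemma produces a copy whose failure probability equals the $\beta$-mixing coefficient between $\sigma(Z_j)$ and the $\sigma$-algebra one decouples from, and for $\mathfrak F_{j+1}$ this is the lag-one coefficient, of order $\Constb_V \rho\, V(z)$ --- a constant that does not tend to zero with $\alpha$ or $n$. Consequently the remainder in your splitting $\funnoise{Z_j} = \funnoise{Z_j^*} + (\funnoise{Z_j}-\funnoise{Z_j^*})$ is not small, and the ``coupling lag of order $\log(1/\alpha)$'' you invoke afterwards has no parameter to act on, because $S_{j+1:n+1}$ genuinely depends on the immediate successors $Z_{j+1},\dots,Z_{j+m}$ of $Z_j$, so no copy of $Z_j$ independent of them can coincide with $Z_j$ with high probability. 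What makes a lag available in the paper's proof of \Cref{lem: J1est} is an additional decomposition that you omit: $S_{j+1:n+1} = G_{j+m+1:n+1}S_{j+1:j+m} + S_{j+m+1:n+1}G_{j+1:j+m}$. In the far term the random coefficient multiplying $\funnoise{Z_j}$ is measurable with respect to $\sigma\{Z_s : s \ge j+m+1\}$ only, so Berbee applies across a gap of $m$ steps (after blocking the indices $j$ into residue classes modulo $m$, which is what the independence hypothesis of \Cref{lem: vectRos} requires), with failure probability $2\Constb_V\rho^m V(z)$; the near-range term $T_1$ and the edge term $T_3$ are bounded crudely through \Cref{lem:S1est} by a multiple of $\sqrt{m}\,\alpha_{n+1}$. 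These near terms are not negligible bookkeeping either: with $m \asymp \log(1/\alpha)$ they are themselves of order $\alpha\sqrt{\log(1/\alpha)}$, i.e., they contribute to the final rate on the same footing as the decoupled term, so a complete proof must carry them. Once this near/far block decomposition is inserted, the rest of your outline coincides with the paper's argument.
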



\begin{lemma}
\label[lemma]{lem: J1est}
Under conditions of  \Cref{th:approximation_expansion_supp}:
\begin{enumerate}[leftmargin=5mm,noitemsep]
\item If the step sizes are constant $\alpha_k \equiv \alpha$, then
\begin{equation}
\label{eqlem: J1 bound fixed}
\PE_z^{\frac{1}{p}}[\norm{J_{n}^{(1)}}^p] \le \Const{\mathsf{J,p}}^{(1,\mathsf{f})} \alpha\sqrt{\log(1/\alpha)} V^{\frac{2}{\LL} + \frac{1}{4p}}(z),
\end{equation}
where $\Const{\mathsf{J},p}^{(1, \mathsf{f})}$ is defined in~\eqref{eq: ConstJ1f}.
\item If the step sizes $\alpha_k, k \in \nset$, satisfy A\ref{assum:stepsize_2}, then
\begin{equation}
\label{eqlem: J1 bound decreasing}
\PE_z^{\frac{1}{p}}[\norm{J_{n}^{(1)}}^p] \le\Const{\mathsf{J},p}^{(1,\mathsf{d})} \sqrt{\alpha_{n} \mathcal A_{n}} \sqrt{\log(1/\alpha_n)} V^{\frac{2}{\LL} + \frac{1}{4p}}(z),
\end{equation}
where $\Const{\mathsf{J},p}^{(1,\mathsf{d})}$ is defined in~\eqref{eq: ConstJ1d}.
\end{enumerate}
\end{lemma}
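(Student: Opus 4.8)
The plan is to work from the explicit representation in \eqref{eq:Sdef_expansion_main}, $J_{n+1}^{(1)} = \sum_{j=1}^{n} \alpha_j S_{j+1:n+1} \funnoise{Z_j}$, in which the weight $S_{j+1:n+1}$ is measurable with respect to the strict future $\sigma\{Z_s : s \geq j+1\}$ while $\funnoise{Z_j}$ is driven by the present state $Z_j$. In the i.i.d.\ case this renders the sum a reverse martingale and the Burkholder inequality applies as in \eqref{eq:jn1_rough}; the obstruction in the Markovian regime is precisely that $\funnoise{Z_j}$ and $S_{j+1:n+1}$ are correlated, so no martingale structure is available. I would therefore begin by decoupling these two factors.

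To do so I would invoke Berbee's coupling lemma, using that under \eqref{eq:drift_conseq} the chain is $V$-uniformly geometrically ergodic, hence $\beta$-mixing with $\beta(q) \lesssim \rho^q$. For a lag $q$ to be optimized later, this produces variables $Z_j^{*}$ that agree with $Z_j$ outside an event of probability at most $\beta(q)$ and are independent of the future block, so that $\funnoise{Z_j} = \funnoise{Z_j^{*}} + (\funnoise{Z_j} - \funnoise{Z_j^{*}})$ splits $J_{n+1}^{(1)}$ into a decoupled principal term and a coupling residual.

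For the principal term $\sum_j \alpha_j S_{j+1:n+1}\funnoise{Z_j^{*}}$ I would apply \Cref{lem: vectRos} with $A_k = \alpha_k S_{k+1:n+1}$ and the decoupled copies $Z_k^{*}$, obtaining a bound of order $\Const{\mathsf{B},p}\{\sum_k \alpha_k^2\, \PE^{2/p}[\normop{S_{k+1:n+1}}^p]\}^{1/2} V^{1/\LL}(z)$. Inserting the moment estimate of \Cref{lem:S1est}, $\PE_z^{1/p}[\normop{S_{k+1:n+1}}^p] \lesssim \sum_{\ell=k+1}^{n+1}\alpha_\ell \mathcal A_{\ell:n+1}^{1/2}\prod_{j=k+1}^{n+1}(1-a\alpha_j)^{1/2}\, V^{1/\LL}(z)$, squaring, and collapsing the resulting double sum through the telescoping estimates of \Cref{lem:bsum2}, brings the whole expression down to the target order $\sqrt{\alpha_n \mathcal A_n}$ (respectively $\alpha$ when $\alpha_k\equiv\alpha$). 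The two $V^{1/\LL}$ factors---one from \Cref{lem: vectRos}, one from \Cref{lem:S1est}---combine to $V^{2/\LL}(z)$, and the conditioning in the coupling contributes the residual $V^{1/(4p)}(z)$, matching the exponent in \eqref{eqlem: J1 bound decreasing}.

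It remains to absorb the coupling residual $\sum_j \alpha_j S_{j+1:n+1}(\funnoise{Z_j} - \funnoise{Z_j^{*}})$. I would use Hölder to peel off $\normop{S_{j+1:n+1}}$, bound $\norm{\funnoise{Z_j} - \funnoise{Z_j^{*}}}$ in a higher moment by $2\bConst{\bar\varepsilon}V^{1/\LL}$ restricted to the disagreement event, and control the probability of that event by $\beta(q) \lesssim \rho^q$; choosing $q \asymp \log(1/\alpha_n)$ forces $\rho^q \lesssim \alpha_n$ so that this residual is dominated by the principal term and injects exactly the factor $\sqrt{\log(1/\alpha_n)}$, with the constant-stepsize bound \eqref{eqlem: J1 bound fixed} following by the same argument. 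The main obstacle I anticipate is executing the Berbee construction so that the reverse-filtration measurability demanded by \Cref{lem: vectRos} survives the lag-$q$ blocking, while simultaneously tracking every $V$-moment factor through the repeated Hölder steps so that both the exponent $2/\LL + 1/(4p)$ and the logarithmic factor emerge exactly as stated.
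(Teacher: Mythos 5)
Your high-level plan (Berbee coupling, then \Cref{lem: vectRos} for the decoupled principal term, \Cref{lem:S1est} and \Cref{lem:bsum2} to collapse the sums, lag $\asymp \log(1/\alpha_n)$) is the same circle of ideas as the paper's proof, but as written it has a genuine gap that the paper's extra decomposition exists precisely to fix. You propose to decouple $\funnoise{Z_j}$ from the \emph{full} weight $S_{j+1:n+1}$, i.e.\ to construct $Z_j^{*}$ independent of $\sigma\{Z_s : s \geq j+1\}$ with $\PP(Z_j^* \neq Z_j) \lesssim \rho^{q}$. This is impossible: $S_{j+1:n+1}$ depends on $Z_{j+1}$, which is adjacent to $Z_j$, so there is no time gap at all, and Berbee's lemma only gives a disagreement probability equal to the $\beta$-dependence between $\sigma(Z_j)$ and $\sigma(Z_{j+1},\ldots,Z_{n+1})$ --- a quantity of constant order for a Markov chain, not $\rho^q$ for a tunable lag $q$. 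Your two requirements are mutually exclusive as stated: either the coupled variable is independent of the weight (no gap, residual $O(1)$), or you insert a lag $q$ (residual $O(\rho^q)$) but then $Z_j^*$ is no longer independent of the part of $S_{j+1:n+1}$ built from $Z_{j+1},\ldots,Z_{j+q-1}$, and \Cref{lem: vectRos} does not apply. You flag this tension yourself in the last sentence, but resolving it is not a technicality --- it is the missing idea.

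The paper's resolution is to split the weight itself, not just the noise: writing
\begin{equation*}
S_{j+1:n+1} = G_{j+m+1:n+1}\,S_{j+1:j+m} \;+\; S_{j+m+1:n+1}\,G_{j+1:j+m},
\end{equation*}
so that $J_{n+1}^{(1)}$ decomposes into a short-range term $T_1$ (plus a boundary term $T_3$), where the random weight $S_{j+1:j+m}$ overlaps the noise in time but spans only $m$ steps, and a long-range term $T_2$, where the random factor $S_{j+m+1:n+1}$ is separated from $Z_j$ by a gap of exactly $m$. Only on $T_2$ is Berbee's coupling used (organized by residue classes mod $m$, so the decoupled variables $Z^*_{km+i}$ have the joint independence from the future $\sigma$-algebra that \Cref{lem: vectRos} requires), giving residual probability $\rho^m$; the short-range terms are handled crudely by Minkowski's inequality and \Cref{lem:S1est}, at a cost of a factor $\sqrt{m}$. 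Note also that the $\sqrt{\log(1/\alpha_n)}$ in the statement comes from this $\sqrt{m}$ with $m \asymp \log(1/\alpha_n)$ multiplying the \emph{principal} terms, not (as you suggest) from the coupling residual; the residual is made negligible, of order $\sqrt{\alpha_n}$, by the same choice of $m$. Without the splitting of $S$, your argument cannot produce either the gap needed for the coupling or the logarithmic factor in the bound.
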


\begin{lemma}
\label[lemma]{lem:H1est_notopt}
Under conditions of  \Cref{th:approximation_expansion_supp}:
\begin{enumerate}[noitemsep,leftmargin=5mm]
\item If the step sizes are constant $\alpha_k \equiv \alpha$, then
\begin{equation}
\label{eqlem: H1 bound fixed}
\PE_z^{\frac{1}{p}}[\norm{H_{n}^{(1)}}^p] \le \Const{\mathsf{H},p}^{(\mathsf{f})} \alpha \sqrt{\log(1/\alpha)} V^{\frac{3}{\LL} + \frac{9}{16p}}(z), 
\end{equation}
where $\Const{\mathsf{H},p}^{(\mathsf{f})}$ is defined in~\eqref{eq:ConstH1fixed}.
\item If the step sizes $\alpha_k, k \in \nset$, satisfy A\ref{assum:stepsize_2}, then
\begin{equation}
\label{eqlem: H1 bound decreasing}
\PE_z^{\frac{1}{p}}[\norm{H_{n}^{(1)}}^p] \le\Const{\mathsf{H},p}^{(\mathsf{d})}  \sqrt{\alpha_{n} \mathcal A_{n}} \sqrt{\log(1/\alpha_n)} V^{\frac{3}{\LL} + \frac{9}{16p}}(z),
\end{equation}
where $\Const{\mathsf{H},p}^{(\mathsf{d})}$ is defined in~\eqref{eq:ConstH1d}.
\end{enumerate}
\end{lemma}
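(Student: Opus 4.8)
The plan is to mirror the argument that produced the $H_n^{(0)}$ bound in \eqref{eq:holder_Hn0}--\eqref{eq:H0_main}, now with $J^{(1)}$ playing the role of $J^{(0)}$; the only genuinely new ingredient is the high-order control of $J^{(1)}$, which is \Cref{lem: J1est}. First I would unroll the second recursion in \eqref{eq:expansion_recur_gen} to the closed form
\begin{equation*}
    H_{n+1}^{(1)} = - \sum_{j=1}^{n+1}\alpha_j \ProdB_{j+1:n+1}\funcAt{Z_j}J_{j-1}^{(1)}\eqsp,
\end{equation*}
so that $H^{(1)}$ has exactly the multiplicative structure of $H^{(0)}$. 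Bringing the norm inside the sum by Minkowski and applying a three-way H\"older inequality with conjugate exponents $2p,4p,4p$ yields
\begin{equation*}
    \PE_z^{1/p}[\normop{H_{n+1}^{(1)}}^p] \le \sum_{j=1}^{n+1}\alpha_j\,\PE_z^{1/(2p)}[\normop{\ProdB_{j+1:n+1}}^{2p}]\,\PE_z^{1/(4p)}[\normop{\funcAt{Z_j}}^{4p}]\,\PE_z^{1/(4p)}[\normop{J_{j-1}^{(1)}}^{4p}]\eqsp.
\end{equation*}

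I would then bound the three factors separately. The random matrix product is handled by the exponential stability estimate of \Cref{th:expconvproducts} at order $2p$, giving $\Const{\mathsf{st},2p}\rme^{-(a/4)\sum_{\ell=j+1}^{n+1}\alpha_\ell}V^{1/(4p)}(z)$; the increment $\funcAt{Z_j}$ is bounded by \Cref{lem: moments of vareps}, giving $\bConst{A}V^{1/\LL}(z)$; and the leading term $J_{j-1}^{(1)}$ is controlled by \Cref{lem: J1est} applied at order $4p$, which supplies the step-size factor $\alpha\sqrt{\log(1/\alpha)}$ (constant case) or $\sqrt{\alpha_{j-1}\mathcal A_{j-1}\log(1/\alpha_{j-1})}$ (under \Cref{assum:stepsize_2}) together with a power of $V$. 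Multiplying the three $V$-powers and consolidating them via $V \ge 1$ bounds the $V$-dependence by $V^{3/\LL + 9/(16p)}(z)$, the exponent claimed in the statement. Applying \Cref{lem: J1est} at order $4p$ is where the moment budget is spent: its internal Rosenthal/Burkholder estimates require moments up to order $16p$, which is precisely why one imposes $\LL \ge 32$ and $p \le \LL/16$.

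It then remains to sum the resulting scalar sequence over $j$. In the constant-stepsize regime the logarithm is uniform and the remaining factor is geometric, $\alpha\sum_{j}\rme^{-(a/4)\alpha(n+1-j)}\le 4/a$, producing the bound proportional to $\alpha\sqrt{\log(1/\alpha)}$. In the decreasing-stepsize regime I would first extract $\sqrt{\log(1/\alpha_n)}$ using monotonicity of $(\alpha_k)$ (so that $\log(1/\alpha_{j-1})\le\log(1/\alpha_n)$), replace $\rme^{-(a/4)\alpha_\ell}$ by $1-(a/8)\alpha_\ell$, and use \Cref{assum:stepsize_2} to substitute $\alpha_{j-1}$ by $\alpha_j$ and $\mathcal A_{j-1}$ by $\mathcal A_j$ up to multiplicative constants; this reduces the sum to $\sum_j \alpha_j^{3/2}\mathcal A_j^{1/2}\prod_{\ell=j+1}^{n+1}(1-(a/8)\alpha_\ell)$, which is $\lesssim\sqrt{\alpha_{n+1}\mathcal A_{n+1}}$ by the weighted summation bound \Cref{lem:bsum2} (with $p=3/2$, $q=1/2$). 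Collecting constants then gives the stated bound proportional to $\sqrt{\alpha_n\mathcal A_n\log(1/\alpha_n)}$.

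The main obstacle is not in the present lemma but in its principal input, \Cref{lem: J1est}: since in the Markovian setting $J^{(1)}$ is no longer a martingale, its $L_{4p}$-norm cannot be estimated by the direct Burkholder/Rosenthal route available in the i.i.d. case, and one must instead decouple $\funnoise{Z_j}$ from the coefficient $S_{j+1:n+1}$ via Berbee's coupling before applying \Cref{lem: vectRos}. Granting \Cref{lem: J1est}, the remaining difficulty in proving \Cref{lem:H1est_notopt} is purely the weighted geometric summation in the diminishing-stepsize case, together with the careful tracking of the logarithmic and $V$-factors through the H\"older splitting.
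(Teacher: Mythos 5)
Your proposal is correct and follows essentially the same route as the paper's proof: the same closed-form expression for $H_{n+1}^{(1)}$, the same Minkowski plus three-way H\"older splitting with exponents $2p,4p,4p$, the same three inputs (\Cref{th:expconvproducts}, \Cref{lem: moments of vareps}, and \Cref{lem: J1est} at order $4p$), and the same summation step via a geometric sum in the constant case and \Cref{lem:bsum2} (after $\rme^{-x}\le 1-x/2$ and \Cref{assum:stepsize_2}) in the diminishing case.
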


\begin{proof}[Proof of Lemma~\ref{lem: J1est}]
For the second term $J_n^{(1)}$, solving the recursion in \eqref{eq:expansion_recur_gen} yields the double summation:
\[
J_{n+1}^{(1)}= -\sum_{k=1}^{n+1} \alpha_k G_{k+1:n+1} \funcAt{\State_{k}} J_{k-1}^{(0)} =  -\sum_{k=1}^{n+1} \alpha_k \sum_{j=1}^{k-1}\alpha_j G_{k+1:n+1} \funcAt{\State_{k}}   G_{j+1:k-1} \funnoise{\State_j}.
\]
Changing the order of summation gives
\begin{equation}
\label{eq:J1alternative}
J_{n+1}^{(1)}= -\sum_{j=1}^{n} \alpha_j \bigg\{\sum_{k=j+1}^{n+1} \alpha_k G_{k+1:n+1} \funcAt{\State_{k}}   G_{j+1:k-1} \bigg \}  \funnoise{\State_j} = \sum_{j=1}^{n} \alpha_j S_{j+1:n+1} \funnoise{\State_j},
\end{equation}
where for $j \le n$ we have defined
\[
S_{j:n}:= -\sum_{k=j}^{n} \alpha_k G_{k+1:n} \funcAt{\State_{k}}   G_{j:k-1}.
\]
Fix a constant $m \geq 1$ (to be determined later), we can further rewrite $S_{j+1:n+1}$ as
\begin{align*}
S_{j+1:n+1}&= -\sum_{k=j+1}^{j+m} \alpha_k G_{k+1:n+1} \funcAt{\State_{k}}   G_{j+1:k-1}    -  \sum_{k=j+m+1}^{n+1} \alpha_k G_{k+1:n+1} \funcAt{\State_{k}}   G_{j+1:k-1} \\
&=G_{j+m+1: n+1} S_{j+1:j+m} + S_{j+m+1:n+1} G_{j+1:j+m}.
\end{align*}
Let $N: = \round{n/m}$. In these notations, we can express $J_{n+1}^{(1)}$ as the sum of three terms:
\begin{align*}
  J_{n+1}^{(1)}&= \underbrace{\sum_{j=1}^{(m-1)N} \alpha_j G_{j+m+1: n+1} S_{j+1:j+m} \funnoise{\State_j}}_{=T_1} + \underbrace{\sum_{j=1}^{(m-1)N} \alpha_j S_{j+m+1:n+1} G_{j+1:j+m} \funnoise{\State_j}}_{=T_2} \\
  & \quad + \underbrace{\sum_{j=(m-1)N + 1}^{n} \alpha_j S_{j+1:n+1} \funnoise{\State_j}}_{=T_3}.
\end{align*}
Denote $\bConst{\bar\varepsilon}: =  \bConst{A} \norm{\theta^\star} + \bConst{b}$. By \Cref{lem: moments of vareps} for any $1 \le q \le \LL$,
\begin{equation} \PE_z^{\frac{1}{q}}[ \norm{\bar \varepsilon(\State_j)}^q]  \le \bConst{\bar\varepsilon} V^{1/\LL}(z).
\end{equation}
Let us consider the first term $T_1$. By the Minkowski inequality, \Cref{lem: moments of vareps} and \Cref{lem:S1est} (see the definition for $\Const{\sf S,p}$ in \eqref{eq:cs_lemma11})
\begin{equation}
\label{eq:T1}
\begin{split}
    \PE_z^{1/p}[\normop{T_1}^p] & \le  \sqrt{\qcond} \bConst{\bar \varepsilon}  \Const{\mathsf{S},p} \sum_{k=1}^{(m-1)N} \alpha_k \sum_{\ell=k+1}^{k+m} \alpha_\ell \mathcal A_{\ell:k+m}^{1/2} \prod_{j = k+1}^{n+1}  \sqrt{1 - a \alpha_j} V^{2/\LL}(z) \\
    & \le \sqrt{\qcond m} \bConst{\bar \varepsilon} \Const{\mathsf{S},p}   \sum_{k=1}^{(m-1)N} \alpha_k \sum_{\ell=k+1}^{k+m} \alpha_\ell^2  \prod_{j = k+1}^{n+1}  \sqrt{1 - a \alpha_j} V^{2/\LL}(z) \\
    &\le \sqrt{\qcond m} \bConst{\bar \varepsilon} \Const{\mathsf{S},p}  \sum_{\ell = 1}^{n+1} \alpha_\ell^2 \prod_{j = \ell+1}^{n+1}  \sqrt{1 - a \alpha_j} \sum_{k=1}^{\ell} \alpha_k \prod_{j = k+1}^{\ell}  \sqrt{1 - a \alpha_j} V^{2/\LL}(z)\\
    & \le \Const{1} \sqrt m \alpha_{n+1} V^{2/\LL}(z),
    \end{split}
\end{equation}
where we have defined
$$
\Const{1}: = 8 a^{-2} \bConst{\bar \varepsilon} \Const{\mathsf{S},p}  \sqrt{\qcond}.
$$
Similar bound holds for $T_3$,
\begin{equation}
\label{eq:T3}
\PE_z^{\frac{1}{p}}[\normop{T_3}^p] \le  \Const{1} \sqrt{m} \alpha_{n+1} V^{2/\LL}(z).
\end{equation}
The second term $T_2$ may be rewritten as $T_{21} + T_{22}$, where
\begin{align*}
    T_{21}&:= \sum_{k=0}^{N-1} \sum_{i=1}^{m} \alpha_{km + i} S_{(k+1)m + i + 1:n+1}^{(1)} G_{km+i+1:(k+1)m+i} \funnoise{\State_{km+i}^{*}}, \\
    T_{22}&:=\sum_{k=0}^{N-1} \sum_{i=1}^{m} \alpha_{km + i} S_{(k+1)m + i + 1:n+1}^{(1)} G_{km+i+1:(k+1)m+i} (\funnoise{\State_{km+i}} - \funnoise{\State_{km+i}^{*}}).
\end{align*}
In the above, the set of r.v. $\State_{km+i}^{*}$ is constructed
for each $i \in [1, m]$, with
$\{\State_{km+i}^{*}\}_{k=0}^{N-1}$ and the following properties
\begin{equation} \label{eq:zstar}
    \begin{split}
    & \text{1.~~$\State_{km+i}^{*}$ is independent of $\mathfrak F_{(k+1)m + i}^{n+1}: = \sigma\{Z_{(k+1)m+i}, \ldots, Z_{n+1}\}$}; \\
    & \text{2.~~}\P_z(\State_{km+i}^{*} \neq \State_{km+i}) \le 2 \Constb_V \rho^m  V(z); \\
    & \text{3.~~$\State_{km+i}^{*}$ and $\State_{km+i}$ have the same distribution},
    \end{split}
\end{equation}
where $\Constb_V, \rho$ are defined in~\eqref{eq:drift_conseq}.
The existence of the r.v.s $Z_{km+i}^*$ is guaranteed by Berbee's lemma, see e.g~\citep[Lemma 5.1]{riobook}.
We also exploit the fact the $V$-uniformly ergodic Markov chains are a special instance of $\beta$-mixing processes. We control $\beta$-mixing coefficient via total variation distance; see \citep[Theorem F.3.3]{douc:moulines:priouret:2018}.

To analyze $T_{21}$ we use   \Cref{lem: vectRos}
\begin{align*}
\PE_z^{1/p}[\norm{T_{21}}^p]  &\le \sum_{i=1}^{m}  \PE_z^{\frac{1}{p}}\bigg[ \bigg \|\sum_{k=0}^{N-1} \alpha_{km + i} S_{(k+1)m + i + 1:n+1} G_{km+i+1:(k+1)m+i} \funnoise{\State_{km+i}^{*}} \bigg\|^p  \bigg] \\
&\le \Const{\mathsf{B},p}  \sqrt{\qcond} \sum_{i=1}^{m} \bigg(\sum_{k=0}^{N-1} \alpha_{km + i }^2 \PE^{\frac{2}{p}}[\|S_{(k+1)m + i + 1:n+1} \|^{p}] \prod_{\ell=km+i+1}^{(k+1)m+i} (1 - a \alpha_\ell) \bigg )^{1/2} V^{1/\LL}(z)\\
&\le \Const{\mathsf{B},p}  \sqrt{\qcond m} \bigg(  \sum_{k=1}^{n+1} \alpha_{k}^2 \PE^{\frac{2}{p}}[\|S_{k+m + 1:n+1} \|^{p}] \prod_{\ell=k+1}^{k+m} (1 - a \alpha_\ell) \bigg )^{1/2} V^{1/\LL}(z),
\end{align*}
where $\Const{\mathsf{B}}$ is defined in~\eqref{eq:Burkhconst}.
Applying Lemma~\ref{lem:S1est} we may estimate the term in the brackets by
\begin{align*}
&\frac{4(\Const{\mathsf{S},p})^2}{a^2}  \sum_{k=1}^{n+1} \alpha_{k}^2 \sum_{\ell=k+1}^{n+1} \alpha_\ell \mathcal A_{\ell:n+1} \prod_{j=\ell+1}^{n+1} \sqrt{1 - a \alpha_j}   \prod_{j=k+1}^{\ell} (1 - a \alpha_j) V^{2/\LL}(z)\\
&\qquad\qquad \le \frac{4 (\Const{\mathsf{S},p})^2}{a^2}  \sum_{\ell=1}^{n+1} \alpha_\ell \mathcal A_{\ell:n+1}  \prod_{j=\ell+1}^{n+1} \sqrt{1 - a \alpha_j} \sum_{k=1}^{\ell} \alpha_k^2 \prod_{j=\ell+1}^{n+1} (1 - a \alpha_j) V^{2/\LL}(z) \\
&\qquad\qquad \le\frac{16 (\Const{\mathsf{S},p})^2}{a^2}    \sum_{\ell=1}^{n+1} \alpha_\ell^2 \mathcal A_{\ell:n+1}  \prod_{j=\ell+1}^{n+1} \sqrt{1 - a \alpha_\ell}  V^{2/\LL}(z).
\end{align*}
Finally
\begin{equation}
\label{eq:T21}
\PE_z^{1/p}[\norm{T_{21}}^p] \le \Const{2} \sqrt{m} \bigg\{  \sum_{k=1}^{n+1} \alpha_{k}^2 {\cal A}_{k:n+1} \prod_{\ell=k+1}^{\ell} \sqrt{1 - a \alpha_\ell} \bigg \}^{1/2} V^{2/\LL}(z),
\end{equation}
where
$$
\Const{2}: = 4 a^{-1} \Const{\mathsf{B},p}  \Const{\mathsf{S},p} \sqrt{\qcond}.
$$
For the term $T_{22}$ we use Minkowski's inequality
\begin{align*}
\PE_z^{1/p}[\norm{T_{22}}^p] &\le \sqrt{\qcond}\sum_{k=0}^{N-1} \sum_{i=1}^{m} \alpha_{km + i} \PE^{\frac{1}{2p}}[\normop{S_{(k+1)m + i + 1:n+1}^{(1)}}^{2p}] \\
&\qquad\qquad\qquad \times \prod_{\ell = km+i+1}^{(k+1)m+i} \sqrt{1-a \alpha_\ell} \PE_z^{\frac{1}{2p}}[\norm{\funnoise{\State_{km+i}} - \funnoise{\State_{km+i}^{*}})}^{2p}].
\end{align*}
Using definition of $\State_{km+i}^*$ and
and the Cauchy-Schwartz inequality
\begin{equation}
\label{eq: zminuszstar}
\begin{split}
&\PE_z^{1/(2p)}[\norm{\funnoise{\State_{km+i}} - \funnoise{\State_{km+i}^{*}})}^{2p}] \\
&\qquad\qquad =  \PE_z^{1/(2p)}[\norm{\funnoise{\State_{km+i}} - \funnoise{\State_{km+i}^{*}}) \mathbbm{1} \{\State_{km+i}^{*} \neq \State_{km+i}  \}}^{2p}] \\
&\qquad\qquad\le 2 \PE_z^{1/(4p)}[\norm{\funnoise{\State_{km+i}}}^{4p} \P_z^{1/(4p)}(\State_{km+i}^{*} \neq \State_{km+i}) \\
&\qquad\qquad\le 4\bConst{\bar \varepsilon} \Constb_V^{1/(4p)} \rho^{m/(4p)}  V^{1/(4p) + 1/\LL}(z),
\end{split}
\end{equation}
where we used \eqref{eq:zstar}.
The last two inequalities, \Cref{lem:S1est} and \Cref{lem:bsum} imply
\begin{equation}
\label{eq:T22}
\begin{split}
\PE_z^{1/p}[\norm{T_{22}}^p] &\le 4\bConst{\bar \varepsilon} \Constb_V^{1/(4p)} \bar\rho^{m} \sum_{k=1}^{n+1} \alpha_{k} \sum_{\ell=k+1}^{n+1} \alpha_\ell \mathcal A_{\ell:n+1}^{1/2} \prod_{\ell=k+1}^{n+1} \sqrt{1 -  a \alpha_\ell} V^{1/(4p) + 2/\LL}(z)  \\
& \le \Const{3} \bar \rho^m \sum_{\ell=1}^{n+1} \alpha_\ell \mathcal A_{\ell:n+1}^{1/2} \prod_{j=\ell+1}^{n+1} \sqrt{1 -  a \alpha_j} V^{1/(4p) + 2/\LL}(z),
\end{split}
\end{equation}
where
$$
\Const{3}: = 8 a^{-1} \qcond^{1/2}  \Const{\mathsf{S},p}
\bConst{\bar \varepsilon} \Constb_V^{1/(4p)}, \quad
\bar \rho : = \rho^{1/(4p)}.
$$
Bounds~\eqref{eq:T1}, \eqref{eq:T3}, \eqref{eq:T21}, \eqref{eq:T22} together imply
\begin{align*}
\PE_z^{1/p}[\norm{J_{n+1}^{(1)}}^p] &\le 3 \Const{1} \sqrt{m} \alpha_{n+1} V^{2/\LL}(z)  + \Const{2} \sqrt{m} \bigg \{ \sum_{k=0}^{n+1} \alpha_k^2 \mathcal A_{k:n+1} \prod_{\ell=k+1}^{n+1} \sqrt{1 -  a \alpha_\ell } \bigg\}^{1/2} V^{2/\LL}(z)  \\
&+ \Const{3} \bar \rho^{m}  \sum_{k=1}^{n+1} \alpha_{k} \mathcal A_{k:n+1}^{1/2} \prod_{\ell=k+1}^{n+1} \sqrt{1 -  a \alpha_\ell} V^{1/(4p) + 2/\LL}(z).
\end{align*}
We distinguish two cases:
\begin{enumerate}[leftmargin=5mm]
\item $\alpha_k \equiv \alpha$ for any $k \in \nset$. Then
\begin{align*}
\PE_z^{\frac{1}{p}}[\norm{J_{n+1}^{(1)}}^p] &\le  2 \Const{1} \sqrt m \alpha V^{2/\LL}(z) +  \Const{2} \sqrt{m} \alpha^2 \bigg \{ \sum_{k=0}^{n+1}  (n-k+2)  (1 -  a \alpha )^{(n-k+1)/2} \bigg\}^{1/2} V^{2/\LL}(z) \\
&+ \Const{3} \alpha^2 \bar \rho^{m}  \sum_{k=1}^{n+1} \sqrt{n-k+2} (1 -  a \alpha)^{(n-k+1)/2} V^{1/(4p) + 2/\LL}(z) \\
& \le \Const{4} \sqrt{m} \alpha V^{1/(4p) + 2/\LL}(z),
\end{align*}
where
$$
\Const{4}:= 2 \Const{1}+ 2 \sqrt{\rme}\Const{2}/a + \sqrt{2\pi} \rme \Const{3} / a^{3/2}
$$
and we took $m$ such that
$$
\bar \rho^m \le \sqrt{\alpha}, \text{ i.e. } m = \bigg \lceil \frac{1}{2} \frac{\log(1/\alpha)}{\log (1/\bar \rho)} \bigg \rceil.
$$
We obtain
\begin{equation}
\label{eq: J1 bound fixed}
\PE_z^{\frac{1}{p}}[\norm{J_{n+1}^{(1)}}^p] \le \Const{\mathsf{J},p}^{(1,\mathsf{f})} \alpha \log^{1/2} (1/\alpha) V^{1/(4p) + 2/\LL}(z),
\end{equation}
where
\begin{equation}
\label{eq: ConstJ1f}
    \Const{\mathsf{J},p}^{(1,\mathsf{f})}:= 2 \sqrt{p} \Const{4} \log^{-1/2} (1/ \rho).
\end{equation}
\item Assume that A\ref{assum:stepsize_2} is satisfied. Then we apply \Cref{lem:bsum2} and obtain
\begin{align*}
\PE_z^{\frac{1}{p}}[\norm{J_{n+1}^{(1)}}^p] &\le \Const{5} \sqrt{m} \sqrt{\alpha_{n+1} \mathcal A_{n+1}} V^{1/(4p) + 2/\LL}(z),
\end{align*}
where
$$
\Const{5}:= (3 \Const{1} + 2 \Const{2}/\sqrt{a} + 4\Const{3}/a) (\sqrt{\smallConst{\alpha}}+1)
$$
and
$$
m = \bigg \lceil \frac{1}{2} \frac{\log(1/\alpha_{n+1})}{\log (1/\bar \rho)} \bigg \rceil.
$$
\end{enumerate}
In both cases, we have
\begin{equation}
\label{eq: J1 bound decreasing}
\PE_z^{\frac{1}{p}}[\norm{J_{n+1}^{(1)}}^p] \le\Const{\mathsf{J},p}^{(1,\mathsf{d})} \sqrt{\alpha_{n+1} \mathcal A_{n+1}} \sqrt{\log(1/\alpha_{n+1})} V^{1/(4p) + 2/\LL}(z),
\end{equation}
where
\begin{equation}
    \label{eq: ConstJ1d}
    \Const{\mathsf{J},p}^{(1,\mathsf{f})}:= 2\sqrt{p} \Const{5} \log^{-1} (1/ \rho).
\end{equation}
\end{proof}

\begin{proof}[Proof of Lemma~\ref{lem:H1est_notopt}]
To estimate $H_n^{(1)}$ we rewrite it as follows
\[
H_{n+1}^{(1)} = -\sum_{\ell=1}^{n+1} \alpha_\ell \ProdB_{\ell+1:n+1} \funcAt{\State_{\ell}} J_{\ell-1}^{(1)}.
\]
Using Minkowski's and Cauchy-Schwarz inequality, 
$$
\PE_z^{1/p}[\norm{H_{n+1}^{(1)}}^p] \le \sum_{\ell = 1}^{n+1} \alpha_\ell \PE_z^{1/(2p)}[\normop{\ProdB_{\ell+1:n+1}}^{2p}] \PE_z^{1/(4p)}[\normop{\funcAt{\State_{\ell}}}^{4p}] \PE_z^{1/4p}[\norm{J_{\ell-1}^{(1)}}^{4p}].
$$
We apply \Cref{th:expconvproducts} to estimate $\PE_z^{1/(2p)}[\normop{\ProdB_{\ell+1:n+1}}^{2p}]$ and \Cref{lem: moments of vareps} to estimate $\PE_z^{1/(4p)}[\normop{\funcAt{\State_{\ell}}}^{4p}]$. These bounds lead
$$
\PE_z^{1/p}[\norm{H_{n+1}^{(1)}}^p] \le \bConst{A} \Const{\mathsf{st},2p} \sum_{\ell = 1}^{n+1} \alpha_\ell \rme^{ - (a/4) \sum_{k=\ell+1}^{n+1}   \alpha_{k}} \PE_z^{1/(4p)}[\norm{J_{k-1}^{(1)}}^{4p}]  V^{1/\LL+1/(4p)}(z).
$$
We again consider two cases:
\begin{enumerate}[noitemsep,leftmargin=5mm]
\item $\alpha_k \equiv \alpha$ for any $k \in \nset$. Then applying~\eqref{eq: J1 bound fixed} we get
$$
\PE_z^{\frac{1}{p}}[\norm{H_{n+1}^{(1)}}^p] \le \Const{\mathsf{J},p}^{(1,\mathsf{f})}  \bConst{A} \Const{\mathsf{st},2p} \alpha^2 \log^{1/2}(1/\alpha) \sum_{k = 1}^{n+1}  \rme^{ - \alpha a (n-k+1)/4}  V^{3/\LL + 9/(16p)}(z).
$$
This expression may be simplified. We come to the inequality
\begin{equation}
\label{eq: H1 bound fixed}
\PE_z^{\frac{1}{p}}[\norm{H_{n+1}^{(1)}}^p] \le \Const{\mathsf{H},p}^{(\mathsf{f})}    \alpha \sqrt{\log(1/\alpha)} V^{3/\LL + 9/(16p)}(z),
\end{equation}
where
\begin{equation}
\label{eq:ConstH1fixed}
\Const{\mathsf{H},p}^{(\mathsf{f})} : = 8 \Const{\mathsf{J}}^{(1,\mathsf{f})}   \bConst{A} \Const{\mathsf{st},2p} / a .  
\end{equation}
\item Assume A\ref{assum:stepsize_2}, then we use~\eqref{eq: J1 bound decreasing} and inequality $\rme^{-x} \le 1 - x/2$ valid for $0 \le x \le 1$,
\begin{align*}
\PE_z^{\frac{1}{p}}[\norm{H_{n+1}^{(1)}}^p] &\le \Const{\mathsf{J},p}^{(1,\mathsf{d})}  \bConst{A} \Const{\mathsf{st},2p} (1 + \smallConst{\alpha} \alpha_{\infty,p}^{(2)}) \sqrt{\log(1/\alpha_{n+1})} \\
&\qquad\qquad\times \sum_{k = 1}^{n+1} \alpha_k \rme^{ - (a/4) \sum_{\ell=k+1}^{n+1}   \alpha_{\ell}} \sqrt{\alpha_{k} \mathcal A_{k}} V^{3/\LL + 9/(16p)}(z) \\
&\le \Const{\mathsf{J},p}^{(1,\mathsf{d})}  \bConst{A} \Const{\mathsf{st},2p} (1 + \smallConst{\alpha} \alpha_{\infty,p}^{(2)}) \sqrt{\log(1/\alpha_{n+1})} \\
& \qquad\qquad\times \sum_{k = 1}^{n+1} \alpha_k  \sqrt{\alpha_{k} \mathcal A_{k}}  \prod_{\ell=k+1}^n(1 - (a/8)   \alpha_{\ell})  V^{3/\LL + 9/(16p)}(z).
\end{align*}
Applying \Cref{lem:bsum2} we get
\begin{equation}
 \PE_z^{\frac{1}{p}}[\norm{H_{n+1}^{(1)}}^p] \le \Const{\mathsf{H},p}^{(\mathsf{d})}  \sqrt{\alpha_{n+1} \mathcal A_{n+1}}  \sqrt{\log(1/\alpha_{n+1})}  V^{3/\LL + 9/(16p)}(z),
\end{equation}
where
\begin{equation}
\label{eq:ConstH1d}
\Const{\mathsf{H},p}^{(\mathsf{d})}: = 16 \Const{\mathsf{J}}^{(1,\mathsf{d})}  \bConst{A} \Const{\mathsf{st},2p} (1 + \smallConst{\alpha} \alpha_{\infty,p}^{(2)})/a .
\end{equation}
\end{enumerate}
\end{proof}

\section{Temporal-Difference Learning}
We preface the proof by a a well-known elementary sufficient condition for a matrix $-A$ to be Hurwitz. We give the proof for completeness.
\begin{lemma}
\label[lemma]{lem:useful-algebraic-0}
Let $A$ be a $d \times d$ matrix. Assume that for all $x \in \rset^d$, $x^\top A x > 0$, then for any $\ell \in \{1, \dots, d\}$,  $\operatorname{Re}\lambda_\ell(A)  > 0$,
where $\lambda_\ell(A)$, $\ell \in \{ 1 , \ldots, d\}$ are the eigenvalues of $A$.
\end{lemma}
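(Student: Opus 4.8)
The plan is to work with a (possibly complex) eigenvalue--eigenvector pair and reduce the claim about the real part of the spectrum to the positivity hypothesis, which concerns only the symmetric part of $A$. First I would fix an eigenvalue $\lambda \in \cset$ of $A$ together with a nonzero eigenvector $v \in \cset^d$, so that $Av = \lambda v$. Pairing with the conjugate transpose $v^*$ gives $v^* A v = \lambda \, v^* v = \lambda \normop{v}^2$, where $\normop{v}^2 = v^* v > 0$ is real. Taking real parts, and using that $A$ is real so that $\overline{v^* A v} = v^* A^\top v$, I obtain
\begin{equation*}
\operatorname{Re}(\lambda) \normop{v}^2 = \operatorname{Re}(v^* A v) = \tfrac{1}{2}\, v^*(A + A^\top) v \eqsp.
\end{equation*}

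The next step is to show the right-hand side is strictly positive. Writing $v = u + \rmi w$ with $u, w \in \rset^d$ and expanding, the symmetry of $A + A^\top$ forces the imaginary cross terms $u^\top (A+A^\top) w$ and $w^\top (A+A^\top) u$ to coincide and hence cancel, leaving
\begin{equation*}
v^*(A + A^\top) v = u^\top(A+A^\top)u + w^\top(A+A^\top)w = 2\,u^\top A u + 2\, w^\top A w \eqsp.
\end{equation*}
Since $v \neq 0$, at least one of $u, w$ is nonzero; the hypothesis $x^\top A x > 0$ for every nonzero $x \in \rset^d$ (the case $x = 0$ being vacuous) then yields $u^\top A u \geq 0$ and $w^\top A w \geq 0$ with at least one of the two strictly positive. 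Therefore $v^*(A+A^\top)v > 0$, and combining with the displayed identity gives $\operatorname{Re}(\lambda) \normop{v}^2 > 0$, i.e. $\operatorname{Re}(\lambda) > 0$. As $\lambda$ was an arbitrary eigenvalue, this proves the claim for all $\lambda_\ell(A)$.

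I do not expect any serious obstacle here: the only point requiring care is the passage from the real quadratic-form hypothesis to the complex Hermitian form, which is handled cleanly by the real/imaginary decomposition of $v$ and the symmetry of $A + A^\top$. The key ideas are (i) that $\operatorname{Re}(v^* A v)$ depends on $A$ only through its symmetric part, and (ii) that a nonzero complex vector has a nonzero real or imaginary part, so that the positive-definiteness of the symmetric part can be invoked. No spectral decomposition or continuity argument is needed, and the proof is purely algebraic.
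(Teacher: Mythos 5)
Your proof is correct and follows essentially the same route as the paper: both decompose the eigenvector into real and imaginary parts $u,w$ and arrive at the identity $\operatorname{Re}(\lambda)\,(\norm{u}^2+\norm{w}^2) = u^\top A u + w^\top A w$, from which the conclusion follows since at least one of $u,w$ is nonzero. Your passage through the Hermitian form $v^* A v$ and the symmetric part $A+A^\top$ is merely a slicker packaging of the paper's componentwise pairing of the eigenvalue equation with $x^\top$ and $y^\top$.
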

\begin{proof}
Fix $\ell = 1, \ldots, d$ and let $\lambda = \lambda_\ell(A) = \mu + \rmi \nu$ and $z = x + \rmi y$ be the eigenvector of $A$ corresponding to $\lambda$. Then
\[
       (A - \lambda \Id) (x + \rmi y) = (A - \mu \Id) x + \nu y + \rmi(-\nu x + (A - \mu \Id) y) = 0.
\]
This implies that
    \begin{equation*}
        \begin{cases}
            x^\top (A - \mu \Id) x = - \nu x^\top y \\
            y^\top (A - \mu \Id) y = \nu y^\top x .
        \end{cases}
    \end{equation*}
    Taking the sum of these equations we get $x^\top (A - \mu \Id) x + y^\top (A - \mu \Id) y = 0$, or
    $$
    \mu = \frac{x^\top A x + y^\top A y}{x^\top x + y^\top y} > 0.
    $$
\end{proof}
Recall that,
\begin{equation}
\label{eq:definition-A}
A = \sum_{\ell=0}^{\tau-1}\PE_{\pi_0}[\psi(X_{\tau-1-\ell}) \{ \psi(X_{\tau-1}) - \gamma \psi(X_{\tau}) \}^{\top}] \eqsp,
\end{equation}
for $\tau \in \nset^*$,
\begin{lemma}
\label[lemma]{lem: hurwitz TD}
Assume \Cref{assum:markov2}. Then for any $\ell = 1 , \ldots, d$
\begin{equation}
    \label{eq: hurwitz cond}
       \operatorname{Re}\lambda_\ell(A)  > 0.
    \end{equation}
\end{lemma}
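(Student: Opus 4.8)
The plan is to invoke the preceding \Cref{lem:useful-algebraic-0}: since that lemma reduces the Hurwitz property $\operatorname{Re}\lambda_\ell(A) > 0$ to positivity of the quadratic form, it suffices to show that $x^\top A x > 0$ for every $x \in \rset^d \setminus \{0\}$. Fix such an $x$ and introduce the scalar feature $f_x : \Xset \to \rset$, $f_x(y) = \psi(y)^\top x$. Recalling from \eqref{eq: TD0 matrix} that $A = \PE_\pi[\funcA{Z_0}]$ inherits from the eligibility trace $\phi_\tau$ the geometric weights $(\lambda\gamma)^\ell$, \Cref{assum:markov2} gives $\|f_x\|_{\pi_0}^2 = \PE_{\pi_0}[f_x(X_0)^2] = x^\top \pi_0(\psi\psi^\top) x > 0$, which is the quantity we must keep strictly positive at the end.

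First I would recast the quadratic form in operator language. Let $P$ denote the Markov operator associated with $\MKQ$, i.e. $Pu(y) = \int u(y')\MKQ(y,\rmd y')$, and let $\langle u, v\rangle_{\pi_0} = \PE_{\pi_0}[u(X_0)v(X_0)]$ be the inner product on $L^2(\pi_0)$. Using stationarity of $\pi_0$, the lag-$k$ correlation collapses to $\PE_{\pi_0}[f_x(X_i) f_x(X_{i+k})] = \langle f_x, P^k f_x\rangle_{\pi_0}$, so that
\begin{equation*}
x^\top A x = \Big\langle f_x, \sum_{\ell=0}^{\tau-1}(\lambda\gamma)^\ell P^\ell (\Id - \gamma P) f_x \Big\rangle_{\pi_0} \eqsp.
\end{equation*}
The second ingredient is the contraction bound $\normop{P}[L^2(\pi_0)] \leq 1$, which follows from Jensen's inequality together with stationarity, $\|Pu\|_{\pi_0}^2 \leq \PE_{\pi_0}[\PE[u(X_1)^2\mid X_0]] = \|u\|_{\pi_0}^2$. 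Consequently $|\langle f_x, P^k f_x\rangle_{\pi_0}| \leq \|f_x\|_{\pi_0}^2$ for all $k \geq 0$.

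The crux --- and the step I expect to be most delicate --- is to convert the bound $|\langle f_x, P^k f_x\rangle_{\pi_0}| \leq \|f_x\|_{\pi_0}^2$ into a \emph{strictly positive} lower bound, because the individual correlations $\langle f_x, P^k f_x\rangle_{\pi_0}$ may be negative and a naive term-by-term estimate is too lossy (it degrades linearly in $\tau$). I would instead regroup the sum by powers of $P$,
\begin{equation*}
x^\top A x = \|f_x\|_{\pi_0}^2 + \gamma(\lambda-1)\sum_{\ell=1}^{\tau-1}(\lambda\gamma)^{\ell-1}\langle f_x, P^\ell f_x\rangle_{\pi_0} - \gamma(\lambda\gamma)^{\tau-1}\langle f_x, P^\tau f_x\rangle_{\pi_0} \eqsp,
\end{equation*}
and observe that, since $\lambda \leq 1$, every coefficient other than that of $\|f_x\|_{\pi_0}^2$ is non-positive. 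Replacing each correlation by its upper bound $\|f_x\|_{\pi_0}^2$ and summing the resulting geometric series then yields, after the telescoping simplification, the clean estimate
\begin{equation*}
x^\top A x \geq \frac{(1-\gamma)\,\bigl(1-(\lambda\gamma)^\tau\bigr)}{1-\lambda\gamma}\,\|f_x\|_{\pi_0}^2 \eqsp.
\end{equation*}
Since $\gamma \in (0,1)$ and $\lambda\gamma < 1$, the prefactor is strictly positive, and $\|f_x\|_{\pi_0}^2 > 0$ by \Cref{assum:markov2}; hence $x^\top A x > 0$, and \Cref{lem:useful-algebraic-0} finishes the proof. The only place where the dynamics enter is the contraction $\normop{P}[L^2(\pi_0)] \leq 1$, which holds for any stationary kernel, so no reversibility of $\MKQ$ is needed.
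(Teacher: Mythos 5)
Your proposal is correct and takes essentially the same route as the paper: both reduce the claim to $x^\top A x>0$ via \Cref{lem:useful-algebraic-0}, express the quadratic form as a $(\lambda\gamma)$-weighted sum of lag-correlations of $f_x(X_k)=\psi(X_k)^\top x$, regroup so that every coefficient except that of $\|f_x\|_{\pi_0}^2$ is non-positive, and bound each correlation by $\|f_x\|_{\pi_0}^2$, arriving at the identical lower bound $\tfrac{(1-\gamma)(1-(\lambda\gamma)^\tau)}{1-\lambda\gamma}\|f_x\|_{\pi_0}^2>0$. Your $L^2(\pi_0)$-contraction of the Markov operator is just a repackaging of the paper's Cauchy--Schwarz estimate $|\rho(\ell)|\le 1$, so the two arguments coincide step for step.
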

\begin{proof}
We show that $x^\top A x >0$ for any $x \in \rset^d$ and then apply \Cref{lem:useful-algebraic-0}. Fix $x \in \rset^d$ and denote
$$
\rho(\ell) = \PE_{\pi_0}[x^\top \psi(X_0) \psi^\top(X_\ell) x] \{\PE_{\pi_0}[x^\top \psi(X_0) \psi(X_0)^\top x] \}^{-1}.
$$
Then
\begin{align*}
    x^\top A x = \PE_{\pi_0}[x^\top \psi(X_0) \psi(X_0)^\top x] \biggl \{ \sum_{\ell=0}^{\tau-1}  (\lambda \gamma)^{\ell} (\rho(\ell) - \gamma \rho(\ell+1)) \biggr\}.
\end{align*}
The sum in the brackets could be rewritten as
\begin{align*}
  \sum_{\ell=0}^{\tau-1} (\lambda \gamma)^{\ell} (\rho(\ell) - \gamma \rho(\ell+1)) = 1 - \gamma \bigl\{ (1 - \lambda) \sum_{\ell=1}^{\tau-1} (\lambda \gamma)^{\ell-1} \rho(\ell) + (\lambda \gamma)^{\tau-1} \rho(\tau) \bigr\}.
\end{align*}
Since by the Cauchy-Schwartz inequality $|\rho(\ell)| \le 1$, we obtain
\begin{align*}
  \sum_{\ell=0}^{\tau-1} (\lambda \gamma)^{\ell}  (\rho(\ell) - \gamma \rho(\ell+1)) &\geq 1 - \gamma \bigl\{ (1 - \lambda) \sum_{\ell=1}^{\tau-1} (\lambda \gamma)^{\ell-1}+ (\lambda \gamma)^{\tau-1}  \bigr\} \\
  & = 1 - \gamma \bigl\{ (1 - \lambda) (1 - (\lambda \gamma)^{\tau-1}) (1 -\lambda \gamma)^{-1}  + (\lambda \gamma)^{\tau-1}  \bigr\} \\
  & = \frac{1-\gamma}{1- \lambda \gamma} \{ 1 - (\lambda \gamma)^{\tau} \}.
\end{align*}
Finally,
$$
    x^\top A x \geq \frac{1-\gamma}{1- \lambda \gamma} \{ 1 - (\lambda \gamma)^{\tau} \} \PE_{\pi_0}[x^\top \psi(X_0) \psi(X_0)^\top x] > 0,
    $$
    where we applied \Cref{assum:markov2}.
 \end{proof}

\begin{lemma}
\label[lemma]{lem: tdlambda irredu}
Assume \Cref{assum:markovQ}. Then the Markov kernel $\MK$ defined in \eqref{eq: trans kernel for TD}, is irreducible and aperiodic.
\end{lemma}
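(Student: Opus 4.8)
The plan is to expose the sliding-window structure hidden in \eqref{eq: trans kernel for TD}: the Dirac factors $\prod_{\ell=1}^\tau\delta_{x_\ell}(\rmd x'_{\ell-1})$ encode a deterministic shift, while $\MKQ(x_\tau,\rmd x'_\tau)$ appends a fresh sample. Consequently, under $\MK$ the canonical chain is $Z_k=(X_k,\ldots,X_{k+\tau})$, where $(X_j)_{j\ge0}$ is the $\MKQ$-chain initialized at $(x_0,\ldots,x_\tau)$. Because the first $\tau$ coordinates are \emph{deterministic} given the current state, no one-step minorization can hold; the key observation is that after $\tau+1$ steps the initial window is entirely flushed, so that for every $z=x_{0:\tau}$ and every $n\ge\tau+1$,
\begin{equation}\label{eq:window-kernel}
\MK^n(z,\rmd x'_{0:\tau})=\MKQ^{n-\tau}(x_\tau,\rmd x'_0)\prod_{\ell=1}^\tau\MKQ(x'_{\ell-1},\rmd x'_\ell)\eqsp,
\end{equation}
which depends on $z$ only through $x_\tau$. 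I would first prove \eqref{eq:window-kernel} by reading off the law of the window $(X_n,\ldots,X_{n+\tau})$ of fresh samples (equivalently, by a one-line induction on $n$).

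Next I would establish irreducibility by exhibiting an accessible small set. Let $C=\{x:\tilde W(x)\le R\}$ with $R$ large enough that $\nu(C)>0$ (possible since $\nu$ is a probability measure and $C\uparrow\Xset$); by \Cref{assum:markovQ}, $C$ is then an accessible $(1,\varepsilon_R\nu)$-small set for $\MKQ$, i.e. $\MKQ(x,\cdot)\ge\varepsilon_R\nu$ for $x\in C$. Set $S=\Xset^\tau\times C$ and let $\tilde\nu(\rmd x'_{0:\tau})=\nu(\rmd x'_0)\prod_{\ell=1}^\tau\MKQ(x'_{\ell-1},\rmd x'_\ell)$, a probability measure on $\Zset$. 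Taking $n=\tau+1$ in \eqref{eq:window-kernel} and using the minorization on $C$ gives $\MK^{\tau+1}(z,\cdot)\ge\varepsilon_R\tilde\nu$ for every $z\in S$, so $S$ is $(\tau+1,\varepsilon_R\tilde\nu)$-small. Accessibility of $S$ follows again from the window representation: for $n\ge1$, $\MK^n(z,S)=\MKQ^n(x_\tau,C)$, whence $\sum_{n\ge1}\MK^n(z,S)=\sum_{n\ge1}\MKQ^n(x_\tau,C)>0$ because $C$ is accessible for $\MKQ$. Thus $\MK$ admits an accessible small set and is irreducible.

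For aperiodicity I would show that the set $E_S$ of times at which $S$ admits a minorization by a positive multiple of $\tilde\nu$ contains two consecutive integers. For $x_\tau\in C$ the elementary bound $\MKQ^2(x_\tau,\rmd x'_0)\ge\varepsilon_R\int_C\nu(\rmd y)\MKQ(y,\rmd x'_0)\ge\varepsilon_R^2\nu(C)\,\nu(\rmd x'_0)$, combined with \eqref{eq:window-kernel} at $n=\tau+2$, yields $\MK^{\tau+2}(z,\cdot)\ge\varepsilon_R^2\nu(C)\,\tilde\nu$ for $z\in S$; hence both $\tau+1$ and $\tau+2$ lie in $E_S$. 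The same iteration gives $\MKQ^k(x,C)\ge(\varepsilon_R\nu(C))^k>0$ for $x\in C$, so that $\tilde\nu(S)=\int\nu(\rmd x'_0)\MKQ^\tau(x'_0,C)\ge\nu(C)\,(\varepsilon_R\nu(C))^\tau>0$, making $(S,\tilde\nu)$ a legitimate reference pair for the period. Since $\gcd(\tau+1,\tau+2)=1$, the period equals $1$ and $\MK$ is aperiodic.

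The main obstacle is conceptual rather than computational: the deterministic shift blocks any single-step regeneration, so the entire argument must be lifted to the $(\tau+1)$-step kernel \eqref{eq:window-kernel}, after which both properties reduce cleanly to those of $\MKQ$. The only point requiring genuine care is the verification that $\tilde\nu(S)>0$ — needed to license the period computation — which is precisely where strong aperiodicity of $\MKQ$, guaranteeing $\MKQ^k(x,C)>0$ for all $k\ge1$ when $x\in C$, is used.
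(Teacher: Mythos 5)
Your proof is correct, and it reaches the conclusion through the same basic device as the paper (an accessible small set of the form $\Xset^\tau\times C$, with everything reduced to the marginal kernel $\MKQ$), but it differs in how the minorization is built and, more substantially, in how aperiodicity is deduced. For the minorization, the paper bounds $\MK^{\tau+1}(x_{0:\tau}, D_1\times\cdots\times D_{\tau+1})$ from below by forcing \emph{every} coordinate into $C$, which costs the constant $(\varepsilon\nu(C))^{\tau+1}$ and produces the normalized product measure $\nu_C$; your sliding-window identity for $\MK^n$, $n\ge\tau+1$, lets you apply the one-step minorization of $\MKQ$ only at the first fresh coordinate, giving the larger constant $\varepsilon_R$ and the reference probability measure $\tilde{\nu}(\rmd x'_{0:\tau})=\nu(\rmd x'_0)\prod_{\ell=1}^{\tau}\MKQ(x'_{\ell-1},\rmd x'_\ell)$. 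For aperiodicity, the paper invokes \cite[Lemma~9.3.3]{douc:moulines:priouret:2018} to upgrade strong aperiodicity of $\MKQ$ into $(k,\varepsilon_k\nu)$-smallness of $C$ for \emph{all} large $k$, and then checks $\inf_{z\in\tilde{C}}\MK^k(z,\tilde{C})>0$ for every $k\ge n_0+\tau$; you instead exhibit minorizations of $\MK^{\tau+1}$ and $\MK^{\tau+2}$ by positive multiples of the \emph{same} measure $\tilde{\nu}$, verify $\tilde{\nu}(S)>0$ (which is indeed needed for the period of the pair $(S,\tilde{\nu})$ to be well defined, and which you establish via $\MKQ^k(x,C)\ge(\varepsilon_R\nu(C))^k$ on $C$), and conclude from $\gcd(\tau+1,\tau+2)=1$. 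Your route is more self-contained (it uses only the one-step minorization of \Cref{assum:markovQ}, applied twice, and yields explicit constants), while the paper's is shorter because it outsources the ``all large $k$'' step to the textbook. The one imprecision is at the very start: accessibility of the level set $C=\{x:\tilde W(x)\le R\}$ is not literally asserted by \Cref{assum:markovQ}; it needs one extra line (for $R$ large enough a maximal irreducibility measure charges $C$, or note that $C\supseteq\msc_0$ and the drift \eqref{eq:drift-condition-improv_Q} forces the chain to hit $\msc_0$), or it can be bypassed entirely by taking the accessible $(1,\varepsilon\nu)$-small set with $\nu(C)>0$ furnished by the definition of strong aperiodicity, which is exactly what the paper does.
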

\begin{proof} Recall that the Markov kernel $\MK$ is irreducible if it admits an accessible small set. We are going to construct such set.

Since the Markov kernel $\MKQ$ is strongly aperiodic, it admits an accessible $(1,\varepsilon \nu)$-small set $C$ with $\nu(C) > 0$ (see \cite[Definition~9.3.5]{douc:moulines:priouret:2018}). Let us take $\tilde{C} = \Xset \times \dots \times \Xset \times C$ and check that it is accessible and small for $\MK$. Note that, for $k \geq \tau$,
\begin{align*}
\PP_{(x_{-\tau},\dots,x_0)}(\State_{k} \in \tilde{C}) = \PP_{x_0}(X_{k-\tau} \in C) = \MKQ^{k-\tau}(x_0,C)\eqsp.
\end{align*}
Since $C$ is accessible for $\MKQ$, for any $x_0 \in \Xset$ we can choose $k$, such that $\MKQ^{k-\tau}(x_0,C) > 0$, showing that $\tilde{C}$ is accessible for $\MK$. To check that $\tilde{C}$ is small, note that for any $D_1,\dots,D_{\tau+1} \in \mathcal{X}$, and $(x_{-\tau},\dots,x_0) \in \tilde{C}$,
\begin{multline*}
\MK^{\tau+1}((x_{-\tau},\dots,x_0),D_1 \times \dots \times D_{\tau+1}) = \int \prod_{k=0}^{\tau} \MKQ(x_k,\rmd x_{k+1}) \prod_{k=1}^{\tau+1}\indi{D_k}(x_k) \\
\geq \int \prod_{k=0}^{\tau} \MKQ(x_k,\rmd x_{k+1}) \prod_{k=1}^{\tau+1}\indi{D_k \cap C}(x_k)
\overset{(a)}{\geq} \varepsilon^{\tau+1} \prod_{k=1}^{\tau+1}\nu(D_k \cap C)
\geq \bigl(\varepsilon\nu(C)\bigr)^{\tau+1}\nu_{C}(D_1 \times \dots \times D_{\tau+1}) \eqsp,
\end{multline*}
where (a) follows from $(\tau + 1)$ applications of the fact that $C$ is $(1,\varepsilon \nu)$ small for $\MKQ$ and
\begin{equation}
\label{eq:gamma_C_definition}
\nu_{C}(D_1 \times \dots \times D_{\tau+1}) = \prod_{k=1}^{\tau+1}\nu(D_k \cap C) / \nu(C) \eqsp. \
\end{equation}
Hence, $\tilde{C}$ is $\bigl(\tau+1, (\varepsilon\nu(C))^{\tau+1}\nu_C\bigr)$-small and accessible. This implies that the Markov kernel $\MK$ is irreducible. To check that $\MK$ is aperiodic, we first note that, due to \cite[Lemma~9.3.3]{douc:moulines:priouret:2018}), there exists such $n_0 \in \nset$, that for any $k \geq n_0$, set $C$ is $(k,\varepsilon_{k}\nu)$-small for $\MKQ$ with $\varepsilon_{k} > 0$. Hence, for any $k \geq n_0 + \tau$, 
\begin{equation*}
\inf_{x_{-\tau:0} \in \tilde{C}}\MK^{k}(x_{-\tau:0}, \tilde{C}) = \inf_{x_{0} \in C}\PP_{x_0}(X_{k-\tau} \in C) \geq \varepsilon_{k - \tau}\nu(C) > 0\eqsp, 
\end{equation*}
yielding that the Markov kernel $\MK$ is aperiodic.
\end{proof}

\begin{lemma}
\label[lemma]{lem: tdlambda drift}
Assume \Cref{assum:markovQ}. For $\tau \in \nset^*$ let us set
\begin{equation}
\label{eq:V_func_def_td_lambda}
V(x_{0:\tau}) = \exp \left(c_0 \sum_{i=0}^{\tau-1}(i+1)\tilde{W}^{\delta}(x_i) + \tilde{W}(x_\tau) \right)\eqsp,
\end{equation}
and $W(x_{0:\tau}) := \log{V(x_{0:\tau})}$, where $c_0$ is defined in \eqref{eq:const_c_0_def_td_lambda}. Then the Markov  kernel $\MK$ (see \eqref{eq: trans kernel for TD}) satisfies the drift condition
\begin{equation}
\MK V(x_{0:\tau}) \leq \rme^{-c_{\MK}W^{\delta}(x_{0:\tau})} V(x_{0:\tau})\1_{\{W(x_{0:\tau}) \geq R_{\MK}\}} + \bb_{\MK}\1_{\{W(x_{0:\tau}) < R_{\MK}\}}\eqsp,
\end{equation}
with the constants $c_{\MK}, \bb_{\MK}$, and $R_{\MK}$ defined in \eqref{eq:b_R_prime_prime_def_td_lambda}. Moreover, for any $R \geq 1$ the sublevel sets $\{x_{0:\tau}: W(x_{0:\tau}) \leq R\}$ are $\bigl(\tau+1, (\varepsilon_R\nu(C_R))^{\tau+1}\nu_{C_R}\bigr)$-small, where the measure $\nu_{C_R}$ is defined in \eqref{eq:gamma_C_R_definition}.
\end{lemma}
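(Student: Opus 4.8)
The plan is to exploit the deterministic shift structure of the kernel $\MK$ in \eqref{eq: trans kernel for TD}: a transition sends $(x_0,\dots,x_\tau)$ to $(x_1,\dots,x_\tau,x')$ with $x'\sim\MKQ(x_\tau,\cdot)$, so only the last coordinate is genuinely random. First I would substitute this into $V$ from \eqref{eq:V_func_def_td_lambda}, reindex the sum, and factor out everything independent of $x'$, obtaining
\begin{equation*}
\MK V(x_{0:\tau}) = \exp\Bigl(c_0\textstyle\sum_{j=1}^{\tau} j\,\tilde W^{\delta}(x_j)\Bigr)\,\MKQ\tilde V(x_\tau)\eqsp,
\end{equation*}
using $\tilde V=\rme^{\tilde W}$. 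Applying the one-step drift \eqref{eq:drift-condition-improv_Q} in the global form $\MKQ\tilde V(x_\tau)\le \rme^{-c\tilde W^{\delta}(x_\tau)}\tilde V(x_\tau)+\bb$ then splits $\MK V$ into a \emph{contracting} term $T$ and a \emph{residual} term $\bb\exp\bigl(c_0\sum_{j=1}^\tau j\,\tilde W^\delta(x_j)\bigr)$.

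The second step is an exponent bookkeeping computation. Writing $T=V(x_{0:\tau})\rme^{-\Psi(x_{0:\tau})}$ and cancelling the common $\tilde W(x_\tau)$, the telescoping induced by the weights $(i+1)$ yields the clean identity
\begin{equation*}
\Psi(x_{0:\tau}) = c_0\textstyle\sum_{i=0}^{\tau-1}\tilde W^{\delta}(x_i) + (c-c_0\tau)\,\tilde W^{\delta}(x_\tau)\eqsp,
\end{equation*}
where $c_0$ in \eqref{eq:const_c_0_def_td_lambda} is chosen so that $c-c_0\tau>0$. To convert this into super-Lyapunov form I would bound $W^{\delta}$ from above: applying subadditivity of $t\mapsto t^{\delta}$ twice together with $\tilde W\ge 1$ (hence $\tilde W^{\delta^2}\le\tilde W^{\delta}$ and $(i+1)^\delta\le\tau^\delta$) gives $W^{\delta}\le c_0^{\delta}\tau^{\delta}\sum_{i=0}^{\tau-1}\tilde W^{\delta}(x_i)+\tilde W^{\delta}(x_\tau)$. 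Matching coefficients coordinate by coordinate shows $\Psi\ge c_{\MK}W^{\delta}$ pointwise for $c_{\MK}=\tfrac12\min\{c_0^{1-\delta}\tau^{-\delta},\,c-c_0\tau\}$; this is exactly where positivity of $c-c_0\tau$ is essential, and it generalizes the scalar $\beta_0$-calibration of the TD$(0)$ computation.

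The third step is the standard device of absorbing the residual into the drift outside a large sublevel set. Dividing the residual by $V$ and reusing the exponent identity, this ratio equals $\bb\exp\bigl(-c_0\sum_{i<\tau}\tilde W^{\delta}(x_i)+c_0\tau\,\tilde W^{\delta}(x_\tau)-\tilde W(x_\tau)\bigr)$, and I would show it is at most $\tfrac12\rme^{-c_{\MK}W^{\delta}}$ once $W\ge R_{\MK}$. Because $c_{\MK}$ was taken as half the threshold, a spare negative coefficient remains on every coordinate, and the argument proceeds by a case split on which block carries the growth of $W$: if $\tilde W(x_\tau)$ is large the concave-minus-linear term $c\,\tilde W^{\delta}(x_\tau)-\tilde W(x_\tau)\to-\infty$ dominates, while if some early $\tilde W(x_i)$ is large the term $-c_0\sum_{i<\tau}\tilde W^{\delta}(x_i)$ dominates. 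This fixes $R_{\MK}$, and on $\{W<R_{\MK}\}$ both $T$ and the residual are bounded by constants, giving the additive constant $\bb_{\MK}$ of \eqref{eq:b_R_prime_prime_def_td_lambda}. Finally, smallness of $\{W\le R\}$ follows as in \Cref{lem: tdlambda irredu}: the set is contained in a product of $\MKQ$-sublevel sets $\{\tilde W\le R_i\}$, each $(1,\varepsilon\nu)$-small by \Cref{assum:markovQ}, so the $(\tau+1)$-fold composition argument of that lemma produces the claimed $\bigl(\tau+1,(\varepsilon_R\nu(C_R))^{\tau+1}\nu_{C_R}\bigr)$-small set with $\nu_{C_R}$ as in \eqref{eq:gamma_C_R_definition}. \emph{The main obstacle} is this residual-absorption case analysis: since $W$ mixes a linear term in the newest coordinate with $\delta$-powers of the older ones, no single uniform estimate of the ratio decays, and one must argue separately according to which coordinates drive $W$ to be large.
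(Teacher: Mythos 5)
Your proposal is correct, and its first two steps coincide in substance with the paper's own proof: you exploit the shift structure of $\MK$ to factor $\MK V(x_{0:\tau}) = \rme^{c_0\sum_{j=1}^{\tau}j\tilde{W}^{\delta}(x_j)}\,\MKQ \tilde V(x_\tau)$, apply the one-step drift of $\MKQ$, and the telescoped exponent you obtain, $\Psi = c_0\sum_{i=0}^{\tau-1}\tilde{W}^{\delta}(x_i) + (c-c_0\tau)\tilde{W}^{\delta}(x_\tau)$, is exactly the exponent in the paper (with $c_0=\beta_0 c$ and $c-c_0\tau=(1-\beta_0\tau)c$); your coefficient-matching choice $c_{\MK}=\tfrac12\min\{c_0^{1-\delta}\tau^{-\delta},\,c-c_0\tau\}$ plays the same role as the paper's calibration of $\beta_0$ in \eqref{eq:beta_equation_td_lambda}, and your smallness argument is the same $(\tau+1)$-fold composition as in \Cref{lem: tdlambda irredu}.

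Where you genuinely diverge is the residual-absorption step, and there your route is the more careful one. The paper disposes of the residual with the single uniform bound $\bb\,\rme^{c_0\sum_{i=1}^{\tau}i\tilde{W}^{\delta}(x_i)} \le \bb\,\rme^{c_0\tau W^{\delta}(x_{0:\tau})}$ (second line of \eqref{eq:preliminary_drift_td_lambda}), but that inequality is false in general: take $\tau=2$, $\delta\in(1/2,1)$, $\tilde{W}(x_1)=M$ and $\tilde{W}(x_0)=\tilde{W}(x_2)=1$; then the left exponent grows like $M^{\delta}$ while $W^{\delta}$ grows only like $M^{\delta^2}$, so the bound fails for large $M$. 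This is precisely the phenomenon you flag as ``the main obstacle'': $W$ mixes a linear term in the newest coordinate with $\delta$-powers of the older ones, so no single uniform estimate of the ratio decays. Your case split --- when $\tilde{W}(x_\tau)$ drives $W$, the concave-minus-linear term supplies the decay (the relevant coefficient is actually $c_0\tau$, or $(c+c_0\tau)/2$ after spending half of $\Psi$, not $c$, but since $c_0\tau<c$ your bound via $\sup_r\{cr^{\delta}-r\}$ still dominates); when the early coordinates drive $W$, the unmatched $-c_0\sum_{i<\tau}\tilde{W}^{\delta}(x_i)$, of which half is left spare by your choice of $c_{\MK}$, supplies it --- is exactly what is needed to make this step rigorous. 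It yields the lemma with different but equally valid constants $c_{\MK}$, $R_{\MK}$, $\bb_{\MK}$ in place of \eqref{eq:b_R_prime_prime_def_td_lambda}; on $\{W<R_{\MK}\}$ one bounds the residual by $\bb\,\rme^{R_{\MK}+c_0\tau R_{\MK}^{\delta}}$ using $c_0\sum_{i=1}^{\tau-1}i\tilde{W}^{\delta}(x_i)\le W$ and $\tilde{W}^{\delta}(x_\tau)\le W^{\delta}$, which is where $\bb_{\MK}$ gets its finite value. In short: same decomposition and drift extraction as the paper, but your treatment of the residual repairs a genuine gap in the paper's Step 3.
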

\begin{proof}
Let us introduce the function $V_{\beta}(x_0,\dots,x_\tau) = \rme^{\beta c\sum_{i=0}^{\tau-1}(i+1)\tilde{W}^{\delta}(x_i) + \tilde{W}(x_\tau)}$ where $c$ is defined in \eqref{eq:drift-condition-improv_Q} and  $\beta \in \ooint{0,1/\tau}$ is a parameter to be chosen later. Then
\begin{align*}
\MK V_{\beta}(x_{0:\tau}) &= \idotsint \rme^{\beta c \sum_{i=0}^{\tau-1}(i+1)\tilde{W}^{\delta}(x_i^{\prime})} \rme^{\tilde W(x_\tau^{\prime})} \biggl\{\prod_{i=0}^{\tau-1}\delta_{x_{i+1}}(\rmd x_i^{\prime})\biggr\} \MKQ(x_\tau,\rmd x_\tau^{\prime}) \\
&\overset{(a)}{\leq}
\idotsint \rme^{\beta c \sum_{i=0}^{\tau-1}(i+1)\tilde{W}^{\delta}(x_i^{\prime})} \biggl(\rme^{-c\tilde W^{\delta}(x_\tau)}\tilde V(x_\tau) + \bb \biggr)\biggl\{\prod_{i=0}^{\tau-1}\delta_{x_{i+1}}(\rmd x_i^{\prime})\biggr\} \\
&=
\rme^{\beta c \sum_{i=1}^{\tau-1}i \tilde{W}^{\delta}(x_i)}\rme^{-(1-\beta\tau)c \tilde W^{\delta}(x_\tau)}\tilde{V}(x_\tau) + \bb \rme^{\beta c \sum_{i=1}^{\tau}i\tilde{W}^{\delta}(x_i)} \\
&=
\rme^{-\beta c \sum_{i=0}^{\tau-1}\tilde{W}^{\delta}(x_i)-(1-\beta\tau)c \tilde W^{\delta}(x_\tau)}V_{\beta}(x_{0:\tau}) + \bb \rme^{\beta c \sum_{i=1}^{\tau}i\tilde{W}^{\delta}(x_i)}\eqsp,\end{align*}
where (a) follows from \Cref{assum:drift}. The next step is to show how to select $\beta$ and $\tilde{c}$ in order to ensure that
\[
\rme^{-\beta c \sum_{i=0}^{\tau-1}\tilde{W}^{\delta}(x_i)-(1-\beta\tau)c \tilde W^{\delta}(x_\tau)} \leq \rme^{-\tilde{c}W^{\delta}_{\beta}(x_{0:\tau})}
\]
where $ W_{\beta}(x_{0:\tau}) = \log V_{\beta}(x_{0:\tau})$. For this purpose, we first notice that, we have
\begin{align*}
W^{\delta}_{\beta}(x_{0:\tau}) &\leq \sum_{i=0}^{\tau-1}((i+1)\beta c)^{\delta}\tilde W^{\delta}(x_i) + \tilde W^{\delta}(x_\tau) \\
&=
\frac{\sum_{i=0}^{\tau-1}(1-\beta\tau)c((i+1)\beta c)^{\delta}\tilde W^{\delta}(x_i) + (1-\beta\tau)c \tilde W^{\delta}(x_\tau)}{(1-\beta\tau)c}\,.
\end{align*}
Let us select $\beta = \beta_0$, where
\begin{equation}
\label{eq:beta_equation_td_lambda}
\beta_0 = \inf\{\beta \in (1/(2\tau),1/\tau) \Big| (1-\tau\beta)(\tau \beta c)^{\delta} \leq \beta \}\,.
\end{equation}
Then, setting $\tilde{c} = (1-\tau \beta_0)c$, we get
\begin{equation*}
\tilde{c}W^{\delta}_{\beta_0}(x_{0:\tau}) \leq \beta_0 c \sum_{i=0}^{\tau-1}\tilde W^{\delta}(x_i) + (1-\tau \beta_0)c\tilde W^{\delta}(x_\tau) \eqsp.
\end{equation*}
Define now
\begin{equation}
\label{eq:const_c_0_def_td_lambda}
c_0 = \beta_0 c\eqsp,
\end{equation}
and put $V(x_{0:\tau}) = V_{\beta_0}(x_{0:\tau}),\,W(x_{0:\tau}) = W_{\beta_0}(x_{0:\tau})$ . Then
\begin{equation}
\label{eq:preliminary_drift_td_lambda}
\begin{split}
\MK V(x_{0:\tau})
&\leq
\rme^{-\tilde{c}W^{\delta}(x_{0:\tau})} V(x_{0:\tau}) + \bb \rme^{c_0 \sum_{i=1}^{\tau}i\tilde{W}^{\delta}(x_i)} \\
&\leq
\rme^{-\tilde{c}W^{\delta}(x_{0:\tau})} V(x_{0:\tau}) + \bb \rme^{c_0\tau W^{\delta}(x_{0:\tau})}\,.
\end{split}
\end{equation}
Let us fix $R_1 = \inf\bigl\{r > 0 \big\vert r - (\tilde{c} + \tau c_0)r^{\delta} - \ln{\bb} > 0)\bigr\}$.
Note that, for $(x_{0:\tau}) \in \{W(x_{0:\tau}) > R_1\}$,
\begin{align*}
\bb \rme^{c_0 W^{\delta}(x_{0:\tau})} = \rme^{\ln{\bb} + c_0\tau W^{\delta}(x_{0:\tau})} \leq \rme^{-\tilde{c}W^{\delta}(x_{0:\tau})}V(x_{0:\tau})\eqsp.
\end{align*}
Hence,
{\small \begin{align*}
\biggl(\rme^{-\tilde{c}W^{\delta}(x_{0:\tau})} V(x_{0:\tau}) + \bb \rme^{c_0\tau W^{\delta}(x_{0:\tau})}\biggr)\1_{\{W(x_{0:\tau}) \geq R_1 \vee R_2\}} &\leq 2\rme^{-\tilde{c}W^{\delta}(x_{0:\tau})} V(x_{0:\tau})\1_{\{W(x_{0:\tau}) \geq R_1 \vee R_2\}} \\
&\leq \rme^{-(\tilde{c}/2)W^{\delta}(x_{0:\tau})} V(x_{0:\tau})\1_{\{W(x_{0:\tau}) \geq R_1 \vee R_2\}}\eqsp,
\end{align*}
}
where $R_2 = (2\log{2}/\tilde{c})^{1/\delta}$. Now \eqref{eq:preliminary_drift_td_lambda} implies
\begin{align*}
\MK V(x_{0:\tau}) \leq \rme^{-c_{\MK}W^{\delta}(x_{0:\tau})} V(x_{0:\tau})\1_{\{W(x_{0:\tau}) \geq R_{\MK}\}} + \bb_{\MK}\1_{\{W(x_{0:\tau}) < R_{\MK}\}}\eqsp,
\end{align*}
where we have defined
\begin{equation}
\label{eq:b_R_prime_prime_def_td_lambda}
c_{\MK} = \tilde{c}/2 = (1-\beta_0\tau )c/2, \quad
\bb_{\MK} = \sup_{0 < r < R_1 \vee R_2}\bigl\{\rme^{-\tilde{c}r^{\delta} + r} + \bb\rme^{\beta_0 c r^{\delta}}\bigr\}, \quad R_{\MK} = R_1 \vee R_2 \eqsp.
\end{equation}
Now let us define, for $R \geq 1$, the sublevel sets $C_R = \{x \in \Xset: \tilde{W}(x) \leq R\},\, \tilde{C}_R = \{x_{0:\tau} \in \Xset^{\tau+1}: W(x_{0:\tau}) \leq R\}$. To check that $\tilde{C}_R$ is small, we proceed similarly to \Cref{lem: tdlambda irredu}. For any $D_1,\dots,D_{\tau+1} \in \mathcal{X}$, and $x_{0:\tau} \in \tilde{C}_R$,
\begin{align*}
&\MK^{\tau+1}(x_{0:\tau},D_1 \times \dots \times D_{\tau+1}) = \int \prod_{k=0}^{\tau} \MKQ(x_{\tau + k},\rmd x_{\tau + k+1}) \prod_{k=1}^{\tau+1}\indi{D_k}(x_{\tau + k}) \\
&\quad \geq \int \prod_{k=0}^{\tau} \MKQ(x_{\tau + k},\rmd x_{\tau + k+1}) \prod_{k=1}^{\tau+1}\indi{D_k \cap C_{R}}(x_{\tau + k})
\overset{(a)}{\geq} \varepsilon_{R}^{\tau+1} \prod_{k=1}^{\tau+1}\nu(D_k \cap C_R) \\
&\quad \geq \bigl(\varepsilon_R \nu(C_R)\bigr)^{\tau+1}\nu_{C_R}(D_1 \times \dots \times D_{\tau+1}) \eqsp,
\end{align*}
where (a) follows from $(\tau + 1)$ applications of the fact that $C_R$ is $(1,\varepsilon_R \nu)$-small for $\MKQ$ and
\begin{equation}
\label{eq:gamma_C_R_definition}
\gamma_{C_R}(D_1 \times \dots \times D_{\tau+1}) = \prod_{k=1}^{\tau+1}\nu(D_k \cap C_R) / \nu(C_R) \eqsp. \
\end{equation}
Hence, $\tilde{C}_R$ is $\bigl(\tau+1, (\varepsilon_R\nu(C_R))^{\tau+1}\nu_{C_R}\bigr)$-small for the Markov kernel $\MK$.
\end{proof}

\end{document}